\documentclass{article}

\usepackage{microtype}
\usepackage{graphicx}
\usepackage{subfigure}
\usepackage{booktabs} 

\usepackage{algorithmic}
\usepackage[utf8]{inputenc}
\usepackage{fullpage}
\usepackage[round]{natbib}
\usepackage{amsfonts,amssymb,amsmath,amsthm}
\usepackage{algorithm}
\usepackage[colorlinks=true,allcolors=blue,hyperfootnotes=false]{hyperref}
\usepackage{enumitem}
\usepackage[italic,defaultmathsizes]{mathastext}

\usepackage{hyperref}
\usepackage[capitalise,nameinlink]{cleveref}

\newtheorem{theorem}{Theorem}[section]
\newtheorem{lemma}[theorem]{Lemma}
\newtheorem{corollary}[theorem]{Corollary}

\newtheorem*{lemma*}{Lemma}
\newtheorem*{theorem*}{Theorem}

\theoremstyle{definition}
\newtheorem{assumption}{Assumption}
\newtheorem{definition}{Definition}
\newtheorem{observation}[theorem]{Observation}

\newcommand{\regret}{R_K}
\newcommand{\tregret}[1]{\wt R_{#1}}
\newcommand{\bbE}{\mathbb{E}}
\newcommand{\bbR}{\mathbb{R}}
\newcommand{\bbV}{\mathbb{V}}
\newcommand{\calF}{\mathcal{F}}

\newcommand{\wt}{\widetilde}

\newcommand{\tO}{\wt O}

\renewcommand{\Pr}{\mathbb{P}}
\newcommand{\ind}{\mathbb{I}}
\newcommand{\TV}[2]{\text{TV}(#1, #2)}
\newcommand{\KL}[2]{\text{KL}(#1 \;\|\; #2)}

\newcommand{\sinit}{s_\text{init}}
\newcommand{\ssink}{g}
\newcommand{\ctg}[1]{J^{#1}}
\newcommand{\ctgopt}{\ctg{\piopt}}
\newcommand{\optimisticctg}[1]{\wt{J}^{#1}}
\newcommand{\costbound}{B_\star}
\newcommand{\costboundestimate}{\wt{B}}
\newcommand{\timebound}{T_\star}
\newcommand{\piopt}{\pi^\star}
\newcommand{\policytime}[1]{T^{#1}}

\newcommand{\timeopt}{\policytime{\piopt}}
\newcommand{\propset}{\Pi_\text{proper}}
\newcommand{\geventi}[1]{\Omega^{#1}}

\newcommand{\indgeventi}[1]{\ind \{ \geventi{#1} \}}
\newcommand{\numintervals}{M}
\newcommand{\numepochs}{E}
\newcommand{\cmin}{c_{\text{min}}}
\newcommand{\highprobcostbound}[1]{24 \costbound \log \frac{4 #1}{\delta}}
\newcommand{\highprobcostboundestimate}[1]{24 \costboundestimate \log \frac{4 #1}{\delta}}

\newcommand{\totaltime}{T}

\newcommand{\Hm}{H^m}
\newcommand{\Ik}{I^k}

\newcommand{\numvisitsuntilknownhoff}{\frac{5000 \costbound^2 |S|}{\cmin^2}
\log \frac{\costbound |S| |A|}{\delta \cmin} }

\newcommand{\numvisitsuntilknownhoffest}{\frac{5000 \costboundestimate^2 |S|}{\cmin^2}
\log \frac{\costboundestimate |S| |A|}{\delta \cmin} }

\newcommand{\numvisitsuntilknownbern}{\frac{\costbound |S|}{\cmin} \log \frac{\costbound |S| |A|}{\delta \cmin}}

\newcommand{\cshoff}[3]{5 \sqrt{ \frac{ |S| \log \bigl(|S| |A| N_+^{#1}(#2,#3) / \delta \bigr)}{N_+^{#1}(#2,#3)}}}

\newcommand{\csbern}[4]{8 \sqrt{\frac{P(#4 \mid #2,#3) \log \bigl(|S| |A| N_+^{#1}(#2,#3) / \delta \bigr) }{N_+^{#1}(#2,#3)}}
+ 
\frac{136 \log \bigl(|S| |A| N_+^{#1}(#2,#3) / \delta \bigr)}{N_+^{#1}(#2,#3)}}

\newcommand{\traj}[1]{U^{#1}}
\newcommand{\trajconcat}[1]{\bar U^{#1}}

\newcommand{\logfactor}{L}
\newcommand{\logfactort}{\wt L}
\newcommand{\logfactorcminfull}{\log (K \costbound |S| |A| / \delta \cmin)}
\newcommand{\logfactortfull}{\log(K \costbound \timebound |S| |A|/\delta)}


\DeclareMathOperator*{\argmin}{arg\,min}

\title{Near-optimal Regret Bounds for Stochastic Shortest Path}
\author{
    Alon Cohen
    \thanks{Google Research, Tel Aviv; \href{mailto:aloncohen@google.com}{\texttt{aloncohen@google.com}}.}
    \and
    Haim Kaplan
    \thanks{Tel-Aviv University and Google Research, Tel Aviv; \href{mailto:haimk@post.tau.ac.il}{\texttt{haimk@post.tau.ac.il}}.}
    \and
    Yishay Mansour
    \thanks{Tel-Aviv University and Google Research, Tel Aviv; Supported in part by a grant from the ISF. \href{mailto:mansour@tau.ac.il}{\texttt{mansour@tau.ac.il}}.}
    \and
    Aviv Rosenberg
    \thanks{Tel-Aviv University; \href{mailto:avivros007@gmail.com}{\texttt{avivros007@gmail.com}}.}
}

\begin{document}
\maketitle

\begin{abstract}
    Stochastic shortest path (SSP) is a well-known problem in planning and control, in which an agent has to reach a goal state in minimum total expected cost.
    In the learning formulation of the problem, the agent is unaware of the environment dynamics (i.e., the transition function) and has to repeatedly play for a given number of episodes, while learning  the problem's optimal solution. 
    Unlike other well-studied models in reinforcement learning (RL), the length of an episode is not predetermined (or bounded) and is influenced by the agent's actions. 
    Recently, \citet{tarbouriech2019noregret} studied this problem in the context of regret minimization, and provided an algorithm whose regret bound is inversely proportional to the square root of the minimum instantaneous cost.
    In this work we  remove this dependence on the minimum cost---we give an algorithm that guarantees a regret bound of $\widetilde{O}(\costbound |S| \sqrt{|A| K})$, where $\costbound$ is an upper bound on the expected cost of the optimal policy, $S$ is the set of states, $A$ is the set of actions and $K$  is the number of episodes. We additionally show that any learning algorithm must have at least $\Omega(\costbound \sqrt{|S| |A| K})$ regret in the worst case.
\end{abstract}

\section{Introduction}

Stochastic shortest path (SSP) is one of the most basic models in reinforcement learning (RL). 
It includes the discounted return model and the finite-horizon model as special cases. 
In SSP the goal of the agent is to reach a predefined goal state in minimum expected cost. 
This setting captures a wide variety of realistic scenarios, such as car navigation,  game playing and drone flying;
i.e., tasks carried out in episodes that eventually terminate.

%

The focus of this work is on regret minimization in SSP. It builds on extensive literature on theoretical aspects of online RL, and in particular on the copious works about regret minimization in either the average cost model or the finite-horizon model.
A major contribution to this literature is the UCRL2 algorithm \cite{AuerUCRL} that gives a general framework to achieve optimism in face of uncertainty for these settings. 
The main methodology is to define a confidence set that includes the true model parameters with high probability. The algorithm periodically computes an optimistic policy that minimizes the overall expected cost simultaneously over all policies and over all parameters within the confidence set, and proceeds to play this policy.

The only  regret minimization algorithm specifically designed for SSP is that of \citet{tarbouriech2019noregret} that assumes that all costs are bounded away from zero (i.e., there is a $\cmin>0$ such that all costs are in the range $[\cmin,1]$). They show a regret bound that scales as $O(D^{3/2} |S| \sqrt{|A| K/\cmin})$ where $D$ is the minimum expected time of reaching the goal state from any state, $S$ is the set of states, $A$ is the set of actions and $K$ is the number of episodes. 
In addition, they show that the algorithm's regret is $\tO (K^{2/3})$ when the costs are arbitrary (namely, may be zero).

Here we improve upon the work of \citet{tarbouriech2019noregret} in several important aspects.
First, we remove the dependency on $\cmin^{-1}$ and allow for zero costs while maintaining  regret of $\tO(\sqrt{K})$. Second, we give a much simpler algorithm in which the computation of the optimistic policy has a simple solution.
Our main regret term is 
$
    \tO(\costbound |S| \sqrt{|A| K}),
$
where $\costbound$ is an upper bound on the expected cost of the optimal policy (note that $\costbound \le D$). 
We show that this is almost optimal by giving a lower bound of
$
    \Omega(\costbound \sqrt{|S| |A| K}).
$

Our technical contribution is as follows. We start by assuming that the costs are lower bounded by $\cmin$ and give an algorithm that is simple to analyze and achieves a regret bound of $\tO(\costbound^{3/2} |S| \sqrt{|A| K /\cmin})$. Note that this bound is comparable to the one of \citet{tarbouriech2019noregret}, yet our algorithm and its analysis are significantly simpler and more intuitive. 
We subsequently improve our algorithm by utilizing better confidence sets based on the Bernstein concentration inequality \citep{azar2017minimax}. This algorithm is even simpler than our first one mainly since picking the parameters of the optimistic model is particularly easy.
The analysis, however, is somewhat more delicate.
We achieve our final bound by perturbing the instantaneous costs to be at least $\epsilon > 0$. 
The additional cost due to this perturbation has a small effect since
the dependency of our regret on $\cmin^{-1}$ is additive and does not multiply any term depending on $K$. 

\subsection{Related work.}

Early work by \citet{bertsekas1991analysis} studied the problem of planning in SSPs, that is, computing the optimal strategy efficiently in a known SSP instance.
They established that, under certain assumptions, the optimal strategy is a deterministic stationary policy (a mapping from states to actions) and can be computed efficiently using standard planning algorithms, e.g., Value Iteration or Policy Iteration.

The extensive literature about regret minimization in RL focuses on the the average-cost infinite-horizon model \cite{bartlett2009regal,AuerUCRL} and on the finite-horizon model \cite{osband2016generalization,azar2017minimax,dann2017unifying,zanette2019tighter}.
These recent works give algorithms with near-optimal regret bounds using  Bernstein-type concentration bounds.

Another related model is that of loop-free SSP with adversarial costs \cite{neu2010loopfree,neu2012adversarial,zimin2013online,rosenberg2019bandit,rosenberg2019full}.
This model eliminates the challenge of avoiding policies that never terminate, but the adversarial costs pose a different, unrelated, challenge.

\section{Preliminaries and Main Results}

An instance of the SSP problem is a Markov decision process (MDP) $M = (S,A,P,c,\sinit)$ where $S$ is the state space and $A$ is the action space. The agent begins at the initial state $\sinit$, and ends her interaction with $M$ by arriving at the goal state $\ssink$ (where $\ssink\not\in S$).
Whenever she plays action $a$ in state $s$, she pays a cost $c(s,a) \in [0,1]$ and the next state $s' \in S$ is chosen with probability $P(s' \mid s,a)$. 
Note that to simplify the  presentation we avoid addressing the goal state $\ssink$ explicitly -- we assume that the probability of reaching the goal state by playing action $a$ at state $s$   is $1-\sum_{s'\in S}P(s'\mid s,a)$.

We now review planning in a known SSP instance.
Under certain assumptions that we shall briefly discuss, the optimal behaviour of the agent, i.e., the policy that minimizes the expected total cost of reaching the goal state from \emph{any} state, is  a stationary, deterministic and proper policy. 
A stationary and deterministic policy $\pi : S \mapsto A$ is a mapping that selects action $\pi(s)$ whenever the agent is at state $s$.
A proper policy is defined as follows.

\begin{definition}[Proper and Improper Policies]
    A policy $\pi$ is \emph{proper} if playing $\pi$ reaches the goal state with probability $1$ when starting from any state.
    A policy is \emph{improper} if it is not proper.
\end{definition}

Any policy $\pi$ induces a \emph{cost-to-go function} $\ctg{\pi} : S \mapsto [0, \infty]$ defined as 
$
    \ctg{\pi}(s) = \lim_{T \rightarrow \infty} \bbE_\pi \bigl[ \sum_{t=1}^T c(s_t,a_t) \mid s_1 = s \bigr],
$
where the expectation is taken w.r.t the random sequence of states generated by playing according to $\pi$ when the initial state is $s$. For a proper policy $\pi$, since the number of states $|S|$ is finite, it follows that $\ctg{\pi}(s)$ is finite for all $s \in S$. However, note that $\ctg{\pi}(s)$ may be finite even if $\pi$ is improper. We additionally denote by $\policytime{\pi}(s)$ the expected time it takes for $\pi$ to reach $\ssink$ starting at $s$; in particular, if $\pi$ is proper then $\policytime{\pi}(s)$ is finite for all $s$, and if $\pi$ is improper there must exist some $s$ such that $\policytime{\pi}(s) = \infty$.
In this work we assume the following about the SSP model.

\begin{assumption}
    \label{ass:ex-prop}
    There exists at least one proper policy.
\end{assumption}

With \cref{ass:ex-prop}, we have the following important properties of proper policies. In particular, the first result shows that a policy is proper if and only if its cost-to-go function satisfies the Bellman equations. The second result proves that a policy is optimal if and only if it satisfies the Bellman optimality criterion. Note that they assume that every improper policy has high cost.

\begin{lemma}[{\citealp[Lemma 1]{bertsekas1991analysis}}]
\label{lem:bertsekas-proper}
Suppose that \cref{ass:ex-prop} holds and that for every improper policy $\pi'$ there exists at least one state $s \in S$ such that $\ctg{\pi'}(s) = \infty$. 
        Let $\pi$ be any policy, then
        \begin{enumerate}[nosep,label=(\roman*)]
            \item 
                If there exists some $\ctg{} : S \mapsto \bbR$ such that 
                $
                    \ctg{}(s) 
                    \ge 
                    c \bigl(s, \pi(s) \bigr) 
                    + 
                    \sum_{s'\in S} P \bigl(s' \mid s, \pi(s) \bigr) \ctg{}(s')
                $
                for all $s \in S$, then $\pi$ is proper. 
                Moreover, it holds that 
                $
                    \ctg{\pi}(s) \leq \ctg{}(s), \; \forall s \in S.
                $
            \item If $\pi$ is proper then $\ctg{\pi}$ is the unique solution to the equations 
            $
                \ctg{\pi}(s) 
                = 
                c \bigl(s, \pi(s) \bigr) 
                + 
                \sum_{s'\in S} P \bigl(s' \mid s, \pi(s) \bigr) \ctg{\pi}(s')
            $
            for all $s \in S$.
        \end{enumerate}
        
\end{lemma}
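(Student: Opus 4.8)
The plan is to recast everything in the linear-algebraic form of the Bellman operator. For the policy $\pi$ in the statement (the formula $c(s,\pi(s))$ presupposes it is stationary and deterministic), write $P_\pi$ for the substochastic matrix with entries $P_\pi(s,s')=P(s'\mid s,\pi(s))$ and $c_\pi$ for the vector with entries $c(s,\pi(s))$; the row deficit $1-\sum_{s'}P_\pi(s,s')$ is the probability of stepping to $\ssink$. Unrolling the definition of $\ctg{\pi}$ and invoking monotone convergence (costs are non-negative) gives the identity $\ctg{\pi}=\sum_{n\ge0}P_\pi^n c_\pi$ entrywise in $[0,\infty]$, valid for \emph{any} such $\pi$, together with the probabilistic reading $(P_\pi^n\mathbf 1)(s)=\Pr_\pi[\,\ssink\text{ not reached within }n\text{ steps}\mid s_1=s\,]$. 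The one genuinely analytic ingredient I would isolate first is the equivalence \emph{$\pi$ proper $\iff$ $P_\pi^n\to 0$ entrywise}. One direction is trivial; for the other, properness forces that probability to vanish from every state, so each state has a finite horizon past which it is $<1$, and --- crucially using that $S$ is finite --- these can be merged into a single horizon $n_0$ and a single rate $\rho<1$, whence the Markov property gives $(P_\pi^{kn_0}\mathbf 1)(s)\le\rho^k\to0$ and hence $P_\pi^n\to0$.

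With this in hand, part (ii) is short. If $\pi$ is proper then $P_\pi^n\to0$, so $I-P_\pi$ is invertible, $\sum_{n\ge0}P_\pi^n$ converges, and $\ctg{\pi}=(I-P_\pi)^{-1}c_\pi$ is finite and solves $\ctg{\pi}=c_\pi+P_\pi\ctg{\pi}$. For uniqueness I would take any real solution $J=c_\pi+P_\pi J$, iterate it to $J=\sum_{k=0}^{n-1}P_\pi^kc_\pi+P_\pi^nJ$ for every $n$, and let $n\to\infty$: since $J$ is a fixed finite vector and $P_\pi^n\to0$, the tail $P_\pi^nJ\to0$, so $J=\sum_{k\ge0}P_\pi^kc_\pi=\ctg{\pi}$.

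For part (i), given $J\in\bbR^{|S|}$ with $J\ge c_\pi+P_\pi J$, I would use that $x\mapsto c_\pi+P_\pi x$ is monotone (entries of $P_\pi$ are non-negative) to propagate the inequality: $J\ge c_\pi+P_\pi J\ge\dots\ge\sum_{k=0}^{n-1}P_\pi^kc_\pi+P_\pi^nJ$ for every $n$. Because $P_\pi$ is substochastic, $|(P_\pi^nJ)(s)|\le\|J\|_\infty$, so $J(s)\ge\sum_{k=0}^{n-1}(P_\pi^kc_\pi)(s)-\|J\|_\infty$; by monotone convergence of the non-negative partial sums this yields $\ctg{\pi}(s)\le J(s)+\|J\|_\infty<\infty$ for \emph{every} $s$. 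The hypothesis that every improper policy has $\ctg{\pi'}(s)=\infty$ at some state then forces $\pi$ to be proper. Once $\pi$ is known proper, $P_\pi^n\to0$ applies again, so the dominating chain above converges to $\ctg{\pi}$; since $J$ lies above every term of that chain, $J\ge\ctg{\pi}$, which is the ``moreover'' claim.

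The step I expect to be the real obstacle --- essentially the only non-mechanical point --- is the equivalence ``$\pi$ proper $\iff P_\pi^n\to0$'', where finiteness of $S$ is indispensable for turning per-state absorption into a uniform geometric decay; everything else (monotonicity of the operator, unrolling the recursion, bounding $P_\pi^nJ$ by $\|J\|_\infty$ when $J$ is only known to be real-valued rather than non-negative, and passing to the limit) is routine once that decay is available.
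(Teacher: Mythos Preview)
The paper does not prove this lemma; it is quoted verbatim from \citet[Lemma 1]{bertsekas1991analysis} and used as a black box. So there is no in-paper argument to compare your proposal against.

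That said, your argument is correct and is essentially the standard route (and close to what Bertsekas does): recast the Bellman recursion as $J\mapsto c_\pi+P_\pi J$, exploit that properness on a finite state space is equivalent to $P_\pi^n\to 0$ via a uniform-rate argument, and then unroll the inequality $J\ge c_\pi+P_\pi J$ to trap $\ctg{\pi}$ between the partial sums and $J$. The only place to be slightly careful is the lower bound on $P_\pi^n J$ when $J$ may have negative entries, which you handle correctly via $|(P_\pi^n J)(s)|\le\|J\|_\infty$ (row-substochasticity), and the use of non-negativity of costs so that monotone convergence applies to the series $\sum_n P_\pi^n c_\pi$. Both of those hypotheses are in force in this paper, so nothing is missing.
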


\begin{lemma}[{\citealp[Proposition 2]{bertsekas1991analysis}}]
    \label{lem:bertsekas-optimal}
    Under the conditions of \cref{lem:bertsekas-proper}
    the optimal policy $\piopt$ is stationary, deterministic, and proper. Moreover, a policy $\pi$ is optimal if and only if it satisfies the Bellman optimality equations for all $s \in S$:
    \begin{alignat}{2} \label{eq:bellman}
        &\ctg{\pi}(s) 
        &&= 
        \min_{a \in A} c \bigl(s, a \bigr) + \sum_{s'\in S} P \bigl(s' \mid s, a \bigr) \ctg{\pi}(s'), \\
        &\pi(s) 
        &&\in 
        \argmin_{a \in A} c \bigl(s, a \bigr) + \sum_{s'\in S} P \bigl(s' \mid s, a \bigr) \ctg{\pi}(s').
        \nonumber
    \end{alignat}
\end{lemma}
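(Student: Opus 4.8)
The plan is to follow the classical route for stochastic shortest path: first construct a stationary, deterministic, proper policy $\pi$ satisfying \cref{eq:bellman}; then show that any solution of the Bellman optimality equations in $\bbR^{|S|}$ is unique and equals the optimal cost-to-go over \emph{all} policies; and finally read off the ``if and only if'' statement. Throughout, write $L$ for the Bellman optimality operator, $(LJ)(s) = \min_{a}\{ c(s,a) + \sum_{s'} P(s'\mid s,a) J(s') \}$, and let $\bar J(s) := \inf_\mu \ctg{\mu}(s)$ denote the optimal cost-to-go over all (history-dependent, randomized) policies. To build $\pi$ I would run policy iteration anchored on \cref{lem:bertsekas-proper}. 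Start at a proper \emph{stationary deterministic} policy $\pi_0$ with finite cost-to-go -- such a policy exists by \cref{ass:ex-prop} (for instance, greedify the finite cost-to-go of any proper policy and invoke \cref{lem:bertsekas-proper}(i)). Given a proper stationary deterministic $\pi_i$, set $\pi_{i+1}(s) \in \argmin_{a} \bigl\{ c(s,a) + \sum_{s'} P(s'\mid s,a)\, \ctg{\pi_i}(s') \bigr\}$; by the equality in \cref{lem:bertsekas-proper}(ii) for $\pi_i$, the vector $\ctg{\pi_i}$ satisfies the Bellman inequality of \cref{lem:bertsekas-proper}(i) for $\pi_{i+1}$, so $\pi_{i+1}$ is proper and $\ctg{\pi_{i+1}} \le \ctg{\pi_i}$ pointwise. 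Since there are finitely many stationary deterministic policies and the cost-to-go vectors are nonincreasing along the iteration, it stabilizes at a proper $\pi$ with $\ctg{\pi}(s) = \min_{a} \bigl\{ c(s,a) + \sum_{s'} P(s'\mid s,a)\, \ctg{\pi}(s') \bigr\}$ and with $\pi(s)$ attaining the minimum; that is, $\pi$ satisfies \cref{eq:bellman}, and in particular $\ctg{\pi} = L\ctg{\pi}$.

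Next I would show $\ctg{\pi} \le \ctg{\mu}$ pointwise for every policy $\mu$. Setting $\ctg{\pi}(\ssink) = 0$, the identity $\ctg{\pi} = L\ctg{\pi}$ together with $c \ge 0$ gives, by induction on $N$ and conditioning on the history at each step, $\ctg{\pi}(s) \le \bbE_\mu\bigl[ \sum_{t=1}^{N} c(s_t,a_t) + \ctg{\pi}(s_{N+1}) \mid s_1 = s \bigr]$. Letting $N \to \infty$, the cost sum increases to $\ctg{\mu}(s)$ by monotone convergence, while the boundary term is at most $\max_{s} \ctg{\pi}(s)\cdot\Pr_\mu[ s_{N+1} \neq \ssink ]$, which vanishes whenever $\mu$ reaches $\ssink$ almost surely from $s$, giving $\ctg{\mu}(s) \ge \ctg{\pi}(s)$; the remaining case, where $\mu$ does not reach $\ssink$ with probability one, is handled using the standing hypothesis that every improper policy has infinite cost-to-go at some state, so again $\ctg{\mu}(s) \ge \ctg{\pi}(s)$. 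Hence $\ctg{\pi} = \bar J$, so $\pi$ attains the optimum simultaneously from every state and is an optimal policy; taking $\piopt = \pi$ proves that the optimal policy is stationary, deterministic, and proper, and that \cref{eq:bellman} holds for $\piopt$. Note this argument used only that $\ctg{\pi}$ is an $L$-fixed point, so it also delivers uniqueness: if $J \in \bbR^{|S|}$ satisfies $J = LJ$, pick a minimizing action at each state to define a stationary deterministic $\pi'$ for which $J$ satisfies the Bellman inequality; then $\pi'$ is proper and $J = \ctg{\pi'}$ by \cref{lem:bertsekas-proper}, $\ctg{\pi'}$ is itself an $L$-fixed point (since $\pi'$ attains the min), and rerunning the lower-bound argument with $\pi'$ gives $J = \ctg{\pi'} = \bar J$.

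Finally, the equivalence. If $\pi$ satisfies \cref{eq:bellman} then $\ctg{\pi} = L\ctg{\pi}$, hence $\ctg{\pi} = \bar J$ by the uniqueness just shown, so $\pi$ is optimal. Conversely, if $\pi$ is optimal then $\ctg{\pi} = \bar J$ is finite, so $\pi$ is not improper (improper policies have infinite cost-to-go at some state by hypothesis), whence $\ctg{\pi}(s) = c(s,\pi(s)) + \sum_{s'} P(s'\mid s,\pi(s))\, \ctg{\pi}(s')$ by \cref{lem:bertsekas-proper}(ii); combining with $\bar J = L\bar J$ forces $\ctg{\pi}(s) = \min_a\{ c(s,a) + \sum_{s'} P(s'\mid s,a)\, \bar J(s')\}$ with $\pi(s)$ attaining the minimum at every $s$, i.e.\ \cref{eq:bellman}.

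The step I expect to be the main obstacle is the improper / history-dependent case in the second paragraph: excluding a policy that fails to reach $\ssink$ with probability one yet incurs small finite cost, so that the boundary term $\bbE_\mu[\ctg{\pi}(s_{N+1})\,\ind\{s_{N+1}\neq\ssink\}]$ cannot linger in the limit. The clean way to settle it -- and simultaneously obtain uniqueness of the Bellman fixed point -- is to establish value-iteration convergence, $L^{N} J \to \bar J$ for every $J \in \bbR^{|S|}$, for instance by exhibiting a weighted sup-norm under which the Bellman operator of every proper policy is a contraction; modulo this, everything else is routine bookkeeping with \cref{lem:bertsekas-proper}.
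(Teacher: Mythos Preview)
The paper does not prove this lemma; it is quoted as Proposition~2 of \citet{bertsekas1991analysis} and used as a black box throughout. So there is no ``paper's own proof'' to compare against. Your sketch is essentially the classical argument from that reference: policy iteration from a proper seed (leaning on \cref{lem:bertsekas-proper}) to produce a stationary deterministic proper fixed point of $L$, then a lower-bound/rollout argument to show optimality and uniqueness, and finally the routine two-way implication.

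One remark on the gap you already flag. The sentence ``the remaining case \ldots\ is handled using the standing hypothesis that every improper policy has infinite cost-to-go at some state, so again $\ctg{\mu}(s) \ge \ctg{\pi}(s)$'' does not work as written: the hypothesis only guarantees infinite cost at \emph{some} state, not at the particular $s$ under consideration, and it is stated for stationary policies rather than arbitrary history-dependent $\mu$. So as a one-line dismissal it is a genuine gap. Your proposed remedy---establishing $L^N J \to \bar J$ via a weighted sup-norm contraction for proper policies---is exactly how \citet{bertsekas1991analysis} close this hole, so the overall plan is sound once that piece is supplied; just do not present the improper case as already handled before invoking that machinery.
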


In this work we are not interested in approximating the optimal policy overall, but rather the best \emph{proper} policy. In this case the second requirement in the lemmas above, that for every improper policy $\pi$ there exists some state $s \in S$ such that $\ctg{\pi}(s) = \infty$, can be circumvented in the following way \citep{bertsekas2013stochastic}.
First, note that this requirement is trivially satisfied when all instantaneous costs are strictly positive. 
 Then, one can perturb the instantaneous costs by adding a small positive cost $\epsilon \in [0,1]$, i.e., the new cost function is $c_\epsilon(s,a) = \max\{c(s,a),\epsilon\}$. After this perturbation, all proper policies remain proper, and every improper policy has infinite cost-to-go from some state (as all costs are positive). 
In the modified MDP, we apply \cref{lem:bertsekas-optimal}  and obtain an optimal policy $\piopt_\epsilon$ that is stationary, deterministic and proper and has a cost-to-go function ${\ctg{}}^\star_\epsilon$. Taking the limit as $\epsilon \rightarrow 0$, we have that $\piopt_\epsilon \rightarrow \piopt$ and ${\ctg{}}^\star_\epsilon \rightarrow \ctgopt$, where $\piopt$ is the optimal \emph{proper} policy in the original model that is also stationary and deterministic, and $\ctgopt$ denotes its cost-to-go function. 
We use this observation to obtain  
\cref{corr:hoeffdingbound,corr:bernsteinbound} below that only require \cref{ass:ex-prop} to hold.

\paragraph{Learning formulation.}

We assume that the costs are deterministic and known to the learner, and the transition probabilities $P$ are fixed but unknown to the learner.
The learner interacts with the model in episodes: each episode starts at the initial state $\sinit$, and ends when the learner reaches the goal state $\ssink$ (note that she might \emph{never} reach the goal state). 
Success is measured by the learner's regret over $K$ such episodes, that is the difference between her total cost over the $K$ episodes and the total expected cost of the optimal proper policy:
\[
    \regret 
    = 
 \sum_{k=1}^K\sum_{i=1}^{\Ik} c(s^k_i,a^k_i) 
    - 
    K \cdot \min_{\pi \in \propset} \ctg{\pi} (\sinit),
\]
where $\Ik$ is the time it takes the learner to complete episode $k$ (which may be infinite),
$\propset$ is the set of all stationary, deterministic and proper policies (that is not empty by \cref{ass:ex-prop}), and $(s_i^k,a^k_i)$ is the $i$-th state-action pair at episode $k$. 
In the case that $\Ik$ is infinite for some $k$, we define $\regret = \infty$.

We denote the optimal proper policy by $\piopt$, i.e., $\ctgopt(s) = \argmin_{\pi \in \propset} \ctg{\pi}(s)$ for all $s \in S$. 
Moreover, let $\costbound > 0$ be an upper bound on the values of $\ctgopt$ and let $\timebound > 0$ be an upper bound on the times $\timeopt$, i.e., $\costbound \geq \max_{s \in S} \ctgopt (s)$ and $\timebound \geq \max_{s \in S} \timeopt (s)$.



\subsection{Summary of our results}

In \cref{sec:algorithm} we present our Hoeffding-based algorithms (\cref{alg:knownb,alg:unkownb}) and their analysis. 
In \cref{sec:bernstein} we show our Bernstein-based algorithm (\cref{alg:bern-o-ssp}) for which we prove improved regret bounds. 
In addition, we give a lower bound on the learner's regret showing that \cref{alg:bern-o-ssp} is near-optimal (see \cref{sec:lowerbound}). 

The learner must reach the goal state otherwise she has infinite regret. Therefore, she has to trade-off two objectives, one is to reach the goal state and the other is to minimize the cost. Under the following assumption, the two objectives essentially coincide.


\begin{assumption}
    \label{ass:c-min}
    All costs are positive, i.e., there exists $\cmin > 0$ such that $c(s,a) \geq \cmin$ for every $(s,a) \in S \times A$.
\end{assumption}

This assumption allows us to upper bound the running time of the algorithm by its total cost up to a factor of $\cmin^{-1}$.
In particular, it guarantees that any policy that does not reach the goal state has infinite cost, so any bounded regret algorithm has to reach the goal state.
We eventually relax \cref{ass:c-min} by a technique similar to that of  \citet{bertsekas2013stochastic}. We add a small positive perturbation to the instantaneous costs and run our algorithms on the model with the perturbed costs.
This provides a regret bound that scales with the expected running time of the optimal policy. 

We now summarize our results.
For ease of comparison, we first present our regret bounds for both the Hoeffding and Bernstein-based algorithms for when \cref{ass:c-min} holds, and subsequently show the regret bounds of both algorithms for the general case.
%
%
In order to simplify the presentation of our results, we assume that $|S| \ge 2$, $|A| \ge 2$ and $K \ge |S|^2 |A|$ throughout. 
In addition, we denote $\logfactor = \logfactorcminfull$.
The complete proof of all statements is found in the supplementary material.

\paragraph{Positive costs.} 
The following results hold when \cref{ass:c-min} holds (recall that we always assume \cref{ass:ex-prop}). In particular, when this assumption holds the optimal policy overall is  proper (\cref{lem:bertsekas-optimal}) hence the regret bounds below are with respect  to the best overall policy.

\begin{theorem}
    \label{thm:reg-bound-unknownb}
    Suppose that \cref{ass:c-min} holds.
    With probability at least $1 - \delta$ the regret of \cref{alg:unkownb} is bounded as follows:
    \begin{align*}
        \regret
        &=
        O \Biggl( 
        \sqrt{\frac{\costbound^3 |S|^2 |A| K}{\cmin}} \logfactor
        + 
        \frac{\costbound^3 |S|^2 |A|}{\cmin^2} \logfactor^2
         \Biggr).
    \end{align*}
\end{theorem}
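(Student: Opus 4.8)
The plan is to run the UCRL2-style optimism-in-the-face-of-uncertainty argument, adapted to the three features special to SSP: $\costbound$ is unknown (handled by the doubling guess $\costboundestimate$ maintained by \cref{alg:unkownb}), an episode may in principle never end (so one cannot even assume the total number of steps is finite), and there is no fixed horizon (handled by a self-bounding step that, via \cref{ass:c-min}, converts the unbounded episode length into the extra factor $\sqrt{\costbound/\cmin}$). Concretely, I would first fix the Hoeffding confidence sets of \cref{corr:hoeffdingbound}: the run is split into epochs --- a new epoch beginning whenever some visit count doubles --- and within epoch $m$ the algorithm plays the policy $\wt\pi_m$ and the transition function $\wt P_m$ from the confidence set that jointly minimize the cost-to-go, with (possibly truncated) optimistic cost-to-go $\optimisticctg{m}$. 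Let $\gevent$ be the event that the true $P$ lies in every confidence set used during the run, together with the Azuma-type concentration events invoked below; a union bound over state-action pairs, dyadic counts and episodes gives $\Pr[\gevent]\ge 1-\delta$, and everything below is argued on $\gevent$. The consequence of optimism we need is $0\le\optimisticctg{m}(s)\le\ctgopt(s)\le\costbound$ for all $s$, since the true model is feasible in the minimization; a companion fact --- that $\costboundestimate$ never needs to exceed $\tO(\costbound)$, so there are only $\tO(1)$ doublings adding $\tO(\costbound)$ to the regret, and that $\totaltime=\sum_k\Ik$ is in fact finite --- will fall out of the self-bounding argument itself.

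\paragraph{Regret decomposition.}
On $\gevent$, rewrite the cost of a step at time $t$ lying in epoch $m$ through the optimistic Bellman inequality (with $s_{t+1}:=\ssink$ and $\optimisticctg{m}(\ssink):=0$ when $t$ ends an episode):
\[
c(s_t,a_t)\le\bigl[\optimisticctg{m}(s_t)-\optimisticctg{m}(s_{t+1})\bigr]+\bigl[\optimisticctg{m}(s_{t+1})-\langle P(\cdot\mid s_t,a_t),\optimisticctg{m}\rangle\bigr]+\bigl\langle P(\cdot\mid s_t,a_t)-\wt P_m(\cdot\mid s_t,a_t),\optimisticctg{m}\bigr\rangle .
\]
Summing over all $\totaltime$ steps and subtracting $K\ctgopt(\sinit)$: the first bracket telescopes within each maximal block of steps contained in a single episode and a single epoch, contributing $\optimisticctg{m}(s)-\optimisticctg{m}(s')\le\optimisticctg{m}(s)$ with $s$ the block's first state --- this is $\le\ctgopt(\sinit)$ for the $K$ blocks that open an episode, whose sum cancels the subtracted baseline, and $\le\costbound$ for the $\tO(|S||A|)$ blocks that open at an epoch change, for a total of $K\ctgopt(\sinit)+\tO(\costbound|S||A|)$. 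The second bracket is a martingale-difference sum with increments in $[-\costbound,\costbound]$, hence $\tO(\costbound\sqrt{\totaltime})$ by Azuma--Hoeffding. The third bracket --- the estimation error --- is at most $\sum_t 2\costbound\,\cshoff{m}{s_t}{a_t}$, since on $\gevent$ both $P$ and $\wt P_m$ lie in the confidence set. Split this sum by whether $(s_t,a_t)$ is \emph{known}, i.e. visited at least $\numvisitsuntilknownhoff$ times (a monotone property): over known pairs the radii are small and the pigeonhole bound $\sum_t 1/\sqrt{N_+^m(s_t,a_t)}\le\tO(\sqrt{|S||A|\totaltime})$ gives $\tO(\costbound|S|\sqrt{|A|\,\totaltime})$, whereas over unknown pairs there are at most $|S||A|\cdot\numvisitsuntilknownhoff=\tO(\costbound^2|S|^2|A|/\cmin^2)$ steps, each contributing $\tO(\costbound)$. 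Collecting terms (all lower-order pieces being dominated by the last quantity), on $\gevent$
\[
\regret\le\tO\bigl(\costbound|S|\sqrt{|A|\,\totaltime}\bigr)+\tO\bigl(\costbound^3|S|^2|A|/\cmin^2\bigr).
\]

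\paragraph{Closing the loop.}
To eliminate $\totaltime$, use \cref{ass:c-min}: every step costs at least $\cmin$, so $\cmin\,\totaltime\le\sum_{k,i}c(s^k_i,a^k_i)=\regret+K\ctgopt(\sinit)\le\regret+K\costbound$, i.e. $\totaltime\le(\regret+K\costbound)/\cmin$. Substituting into the last display yields a self-referential inequality $\regret\le a\sqrt{\regret+b}+c$ with $a=\tO(\costbound|S|\sqrt{|A|/\cmin})$, $b=K\costbound$ and $c=\tO(\costbound^3|S|^2|A|/\cmin^2)$; since $x\le a\sqrt{x+b}+c$ implies $x=O(a\sqrt{b}+a^2+c)$ and $a^2=\tO(\costbound^2|S|^2|A|/\cmin)$ is dominated by $c$, we obtain $\regret=\tO\bigl(\sqrt{\costbound^3|S|^2|A|K/\cmin}+\costbound^3|S|^2|A|/\cmin^2\bigr)$; tracking the logarithmic factors (arising from the union bound defining $\gevent$, the confidence radii, the $\tO(|S||A|)$ epochs, and the doubling of $\costboundestimate$) this is exactly the bound claimed in \cref{thm:reg-bound-unknownb}.

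\paragraph{Main obstacle.}
The crux is that nothing bounds $\totaltime$ a priori: if some episode never ends then $\regret=\infty$ and the statement is vacuous, so finiteness of $\totaltime$ must itself be derived. The standard device is to run the decomposition above on just the first $T_0$ steps of the run for an \emph{arbitrary} $T_0$ (the telescoping and the Bellman inequality are unaffected by stopping mid-episode, and on $\gevent$ the truncation of $\optimisticctg{m}$ is inert once $\costboundestimate=\tO(\costbound)$), obtaining $\cmin T_0\le K\costbound+\tO(\costbound|S|\sqrt{|A|T_0})+\tO(\costbound^3|S|^2|A|/\cmin^2)$, which forces $T_0$ below a bound independent of $T_0$; hence $\totaltime<\infty$ and the argument applies verbatim. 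Carrying this out rigorously --- invoking the Azuma-type inequalities at a random, a-priori-unbounded number of steps, reconciling the truncation of $\optimisticctg{m}$ with the Bellman inequality, and using the same induction to certify $\costboundestimate=\tO(\costbound)$ --- is the delicate part, and is the source of both the additive $\cmin^{-2}$ term and the extra logarithmic factors in the bound.
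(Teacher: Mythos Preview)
Your three-term decomposition (telescoping, martingale, estimation error) matches the paper's \cref{eq:hoff-reg-decomp-1,eq:hoff-reg-decomp-2,eq:hoff-reg-decomp-3}, and the self-bounding step $\cmin\,\totaltime\le \regret+K\costbound$ is a legitimate device---indeed the paper uses exactly this closure for the Bernstein algorithm in \cref{sec:proof-bern-reg-bound}. But two things are off.

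First, you have the wrong algorithm. \Cref{alg:unkownb} does \emph{not} run UCRL2-style doubling epochs; it recomputes the optimistic policy at the end of every \emph{interval}, where an interval terminates at the goal, at an unknown state-action (threshold $\numvisitsuntilknownhoffest$), or when the interval cost reaches $\highprobcostboundestimate{m}$. Consequently there are $O\bigl(K+\costbound^2|S|^2|A|\cmin^{-2}\logfactor^2\bigr)$ policy switches (\cref{obs:num-intervals-unknownb}), not $\tO(|S||A|)$, and the telescoping term already carries the full additive $\tO(\costbound^3|S|^2|A|/\cmin^2)$---see \cref{lem:hoff-switching-cost}. Your ``$\tO(|S||A|)$ blocks that open at an epoch change'' is simply not a bound for this algorithm.

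Second, and more seriously, your claim that $\costboundestimate=\tO(\costbound)$ ``will fall out of the self-bounding argument'' is unjustified. Self-bounding controls the \emph{total} cost across all steps, whereas $\costboundestimate$ doubles based on \emph{per-interval} cost; nothing prevents a single interval from being long enough to drive $\costboundestimate$ far beyond $\costbound$. And if $\costboundestimate$ is uncontrolled then so is the known threshold $\numvisitsuntilknownhoffest$, hence the number of visits to unknown pairs, hence the number of intervals and the telescoping term. The paper closes this gap via \cref{lem:hoff-num-time-steps} (extended to unknown $\costbound$ in \cref{lem:hoff-num-steps-unknownb}): it builds the contracted MDP $M^{\mathrm{know}}$ in which all unknown state-actions are merged into the goal, uses \cref{lem:close-mdps-proper} to show that the optimistic policy has true cost-to-go at most $2\costbound$ there, and then amplifies via \cref{lem:proper-not-run-log} to a high-probability per-interval cost bound of $\highprobcostbound{m}$. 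This per-interval bound is precisely what certifies that $\costboundestimate$ never doubles past $2\costbound$, and it is the step your proposal is missing; the self-bounding route does not replace it.
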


The main issue with the regret bound in \cref{thm:reg-bound-unknownb} is that it scales with $\sqrt{K / \cmin}$ which cannot be avoided regardless of how large $K$ is with respect to $\cmin^{-1}$. This problem is alleviated in \cref{alg:bern-o-ssp} that uses the tighter Bernstein-based confidence bounds.

\smallskip
\begin{theorem}
\label{thm:bern-reg-bound}
    Assume  that \cref{ass:c-min} holds. 
    With probability at least $1 - \delta$ the regret of \cref{alg:bern-o-ssp} is bounded as follows:
    \begin{align*}
        \regret
        &=
        O \Biggl(
        \costbound |S| \sqrt{|A| K} \logfactor
        + 
        \sqrt{\frac{\costbound^3 |S|^4 |A|^2}{\cmin}} \logfactor^2
         \Biggr).
    \end{align*}
\end{theorem}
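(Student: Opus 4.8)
The plan is to analyze Algorithm \ref{alg:bern-o-ssp} via the optimism-in-the-face-of-uncertainty paradigm, following the same skeleton as the Hoeffding-based analysis of Theorem \ref{thm:reg-bound-unknownb} but substituting Bernstein-type confidence sets for the transition probabilities. First I would define, for each epoch, a confidence region around the empirical transition estimate $\wh P$ built from the Bernstein bound $\csbern{}{s}{a}{s'}$, and establish the ``good event'' $\gevent$ on which the true $P$ lies in all confidence sets simultaneously across all epochs; a union bound over state-action pairs and the (geometrically growing) number of epochs shows $\Pr[\gevent] \ge 1 - \delta/2$ or so. On $\gevent$, the optimistic cost-to-go computed by the algorithm lower-bounds $\ctgopt$, so the regret decomposes into the sum over episodes and steps of the per-step Bellman error, i.e., the gap between the optimistic value and the true value propagated one step.

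The core of the argument is bounding $\sum_k \sum_i \bigl[\optimisticctg{}(s^k_i) - c(s^k_i,a^k_i) - \sum_{s'} P(s' \mid s^k_i, a^k_i)\optimisticctg{}(s')\bigr]$, which telescopes against the confidence-set width. With Bernstein bounds this width is controlled by $\sqrt{\bbV(P(\cdot\mid s,a), \optimisticctg{})/N_+}$ plus a lower-order $1/N_+$ term, where $\bbV$ is the variance of the optimistic value under the true next-state distribution. I would then invoke a variance-to-value argument: by the law-of-total-variance / Bellman-type identity for SSP, the sum of these one-step variances along a trajectory is bounded by (roughly) $\costbound$ times the value itself, so $\sum_{k,i} \sqrt{\bbV_{k,i}/N_+} \le \sqrt{\bigl(\sum_{k,i} \bbV_{k,i}\bigr)\bigl(\sum_{k,i} 1/N_+\bigr)}$ by Cauchy--Schwarz, with $\sum \bbV_{k,i} = \tO(\costbound \cdot \text{total cost})$ and $\sum 1/N_+ = \tO(|S||A|)$ via the standard pigeonhole/doubling counting over epochs. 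This yields a self-bounding inequality of the form $\regret \le \tO(\costbound\sqrt{|S|\cdot|A|\cdot(\text{total cost})}) + (\text{lower order})$, and since the total cost is at most $K\costbound + \regret$, solving for $\regret$ gives the leading term $\tO(\costbound|S|\sqrt{|A|K})$.

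Two technical wrinkles must be handled. First, the total running time $\sum_k \Ik$ is a priori unbounded; under Assumption \ref{ass:c-min} one uses $\Ik \le \cmin^{-1}\sum_i c(s^k_i,a^k_i)$ to convert time-based sums into cost-based sums, which is exactly where the additive $\cmin^{-1}$-dependent second term enters — in particular the $1/N_+$ lower-order terms in the Bernstein bound, summed over all visits, contribute the $\sqrt{\costbound^3|S|^4|A|^2/\cmin}\,\logfactor^2$ term. Second, one needs the ``known state'' machinery: a state-action pair is declared known once $N_+$ exceeds $\numvisitsuntilknownbern$ (or the Hoeffding analogue), ensuring the optimistic transition estimate is close enough in $\ell_1$ that the optimistic policy is proper and its expected completion time is comparable to $\timeopt$; before that, visits to unknown pairs are charged to the lower-order term.

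The main obstacle I anticipate is making the variance sum bound rigorous in the SSP setting: unlike the finite-horizon case, trajectory lengths are random and state-dependent, so the law-of-total-variance telescoping must be carried out carefully with respect to the optimistic value function (which is itself only an approximation of a proper policy's value), and one must argue that the optimistic value stays bounded by $\tO(\costbound)$ throughout — this relies on the confidence sets being tight enough on known pairs and on a careful treatment of epochs where the optimistic model changes.
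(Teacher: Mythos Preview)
Your high-level plan is right --- optimism, Bernstein widths controlled by next-state variance, Cauchy--Schwarz, then a self-bounding inequality --- but the step you flag as the main obstacle is in fact a genuine gap: the variance sum bound $\sum_{k,i} \bbV_{k,i} = \tO(\costbound \cdot \text{total cost})$ does not follow from a global law-of-total-variance argument. Over any trajectory segment, the martingale identity gives $\bbE\bigl[\sum_t \bbV_t\bigr] = \bbE\bigl[(\sum_t Z_t)^2\bigr]$ with $Z_t = \optimisticctg{}(s_{t+1}) - \bbE[\optimisticctg{}(s_{t+1}) \mid s_t]$; telescoping together with the optimistic Bellman equations yields $\bigl|\sum_t Z_t\bigr| \lesssim \costbound + C + (\text{model error})$ where $C$ is the cost accumulated on that segment, so squaring produces $C^2$, not $\costbound \cdot C$. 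Applied per episode (whose cost is not a priori bounded) this is too weak to close the self-bounding inequality, and applied over the whole run it is hopeless.

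The paper's device is to partition each epoch further into \emph{intervals} that terminate as soon as the accumulated cost within the interval reaches $\costbound$ (or an unknown state-action pair is hit, or the epoch/episode ends). Each interval then has cost at most $2\costbound$ by construction, so the same martingale-square computation gives $\bbE\bigl[\sum_h \bbV_h^m \mid \trajconcat{m-1}\bigr] = O(\costbound^2)$ per interval; combined with the count $M \lesssim C_M/\costbound + K + \tO(\costbound |S|^2|A|/\cmin)$ this recovers exactly the $\costbound \cdot C_M$ scaling you asserted but did not prove. Cauchy--Schwarz is then applied per interval and aggregated via Jensen. One further structural difference: the paper never argues that the optimistic policy is proper or that episode lengths are controlled. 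Instead it establishes the self-bounding inequality for $C_M$ uniformly over $M$, solves it, and infers from the resulting bound on $C_M$ together with $\cmin > 0$ that all $K$ episodes must eventually terminate --- so your ``known-state machinery ensures properness'' step is replaced by a purely cost-based argument.
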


Note that when $K \gg \costbound |S|^2 |A| / \cmin$, the regret bound above scales as $\tO(\costbound |S| \sqrt{|A| K})$ thus obtaining a near-optimal rate.

{\bf Arbitrary costs.} Recall that in this case we can no longer assume that the optimal policy is proper. Therefore, the regret bounds below are with comparison to the best \emph{proper} policy. 
\cref{ass:c-min} can be easily alleviated by adding a small fixed cost to the cost of all state-action pairs. Following the perturbation of the costs, we obtain regret bounds from \cref{thm:reg-bound-unknownb,thm:bern-reg-bound} with $\cmin \gets \epsilon$ and $\costbound \gets \costbound + \epsilon \timebound$, and the learner also suffers an additional cost of $\epsilon \timebound K$ due to the misspecification of the model caused by the perturbation. By picking
$\epsilon$ to balance these terms we get the following corollaries (letting $\logfactort = \logfactortfull$).

\begin{corollary} \label{corr:hoeffdingbound} 
    Running \cref{alg:unkownb} using costs $c_\epsilon(s,a) = \max\{c(s,a),\epsilon\}$ for $\epsilon = (|S|^2 |A| / K)^{1/3}$ gives the following regret bound with probability at least $1-\delta$:
    \begin{align*}
        \regret 
        & =
        O \Biggl(
        \timebound^3 |S|^{2/3} |A|^{1/3} K^{2/3} \logfactort
        +
        \timebound^3 |S|^2 |A| \logfactort^2
        \Biggr).
    \end{align*}
\end{corollary}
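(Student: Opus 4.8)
The plan is to run \cref{alg:unkownb} on the \emph{perturbed} instance $M_\epsilon = (S,A,P,c_\epsilon,\sinit)$ with $c_\epsilon(s,a) = \max\{c(s,a),\epsilon\}$, and then translate the guarantee of \cref{thm:reg-bound-unknownb} back to the original instance. Since the transition function is unchanged, a policy is proper in $M_\epsilon$ if and only if it is proper in $M$, so \cref{ass:ex-prop} still holds; moreover every improper policy has cost-to-go $\infty$ from some state in $M_\epsilon$, because all costs are now at least $\epsilon > 0$. Hence $M_\epsilon$ satisfies \cref{ass:c-min} with $\cmin = \epsilon$, the optimal policy of $M_\epsilon$ is proper by \cref{lem:bertsekas-optimal}, and the learner reaches the goal in every episode, so \cref{thm:reg-bound-unknownb} legitimately applies to the run of \cref{alg:unkownb} on $M_\epsilon$.

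Next I would control the two sources of slack introduced by the perturbation. First, since $c_\epsilon \le c + \epsilon$, for any proper policy $\pi$ we get $\ctg{\pi}_\epsilon(s) \le \ctg{\pi}(s) + \epsilon\,\policytime{\pi}(s)$ for every $s$; applying this to the optimal proper policy $\piopt$ of $M$ gives $\min_{\pi\in\propset}\ctg{\pi}_\epsilon(s) \le \ctgopt(s) + \epsilon\timebound$ for all $s$, so in $M_\epsilon$ one may take the optimal-cost upper bound to be $\costbound' := \costbound + \epsilon\timebound$. Second, writing $\regret^\epsilon$ for the regret of the run in $M_\epsilon$ (measured against the optimal proper policy of $M_\epsilon$ and with costs $c_\epsilon$), the learner's true cost on every step is at most its perturbed cost while the true comparator satisfies $\ctgopt(\sinit) \ge \min_{\pi\in\propset}\ctg{\pi}_\epsilon(\sinit) - \epsilon\timebound$; subtracting, the true regret obeys $\regret \le \regret^\epsilon + \epsilon\timebound K$ deterministically.

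Combining these, I would invoke \cref{thm:reg-bound-unknownb} with $\costbound \gets \costbound'$ and $\cmin \gets \epsilon$ to obtain, with probability at least $1-\delta$,
\[
    \regret
    \le
    O\!\left(\sqrt{\tfrac{(\costbound')^3 |S|^2 |A| K}{\epsilon}}\,\logfactor + \tfrac{(\costbound')^3 |S|^2 |A|}{\epsilon^2}\,\logfactor^2\right) + \epsilon \timebound K,
\]
where the log factor is now $\log(K\costbound'|S||A|/(\delta\epsilon)) = O(\logfactort)$. We may assume $\costbound \le \timebound$ (each cost is at most $1$, so $\ctgopt(s) \le \timeopt(s)$) and $\epsilon \le 1$, hence $\costbound' \le 2\timebound$, and it remains only to pick $\epsilon$. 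The governing tension is between the $\sqrt{\timebound^3 |S|^2 |A| K/\epsilon}$ term, which decreases in $\epsilon$, and the bias $\epsilon\timebound K$, which increases in $\epsilon$; balancing them yields $\epsilon = (|S|^2|A|/K)^{1/3}$, and substituting this value (using $K \ge |S|^2|A|$ to check that the remaining contributions are no larger, up to the stated powers of $\timebound$ and log factors) gives the claimed bound after routine algebra.

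The main obstacle is getting the two-way comparison in the second paragraph exactly right: verifying that the hypotheses of \cref{lem:bertsekas-optimal} genuinely hold in $M_\epsilon$, that $\costbound'$ is a valid optimal-cost bound there, and — most importantly — that the perturbation introduces only an \emph{additive} bias of $\epsilon\timebound K$ rather than a term multiplying $\sqrt K$, which is precisely what lets the $\epsilon^{-1/2}$ blow-up be tamed by the choice of $\epsilon$. The remaining simplification of the three terms is elementary and is omitted here.
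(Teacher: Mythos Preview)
Your proposal is correct and follows exactly the approach the paper sketches in the paragraph preceding the corollary: perturb the costs, apply \cref{thm:reg-bound-unknownb} with $\cmin \gets \epsilon$ and $\costbound \gets \costbound + \epsilon\timebound$, add the $\epsilon\timebound K$ bias, and substitute the stated $\epsilon$. One small remark: the term that actually forces the $\timebound^3$ dependence after substitution is the lower-order $(\costbound')^3|S|^2|A|/\epsilon^2$ term rather than the $\sqrt{K/\epsilon}$ term you highlight as the ``governing tension,'' but since $\epsilon$ is prescribed in the statement this does not affect the argument.
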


\begin{corollary}\label{corr:bernsteinbound} 
    Running \cref{alg:bern-o-ssp} using costs $c_\epsilon(s,a) = \max\{c(s,a),\epsilon\}$ for $\epsilon = |S|^2 |A| / K$ gives the following regret bound with probability at least $1-\delta$:
    \begin{align*}
        \regret 
        &=
        O \Biggl(
        \costbound^{3 / 2} |S| \sqrt{|A| K} \logfactort
        +
        \timebound^{3 / 2} |S|^2 |A| \logfactort^2
        \Biggr).
    \end{align*}
    Moreover, when the algorithm knows $\costbound$ and $K \gg |S|^2 |A| \timebound^2$, then choosing $\epsilon = \costbound |S|^2 |A| / K$ gets a near-optimal regret bound of $\tO(\costbound |S| \sqrt{|A| K})$.
\end{corollary}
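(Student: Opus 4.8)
The plan is to run \cref{alg:bern-o-ssp} on the perturbed instance $M_\epsilon = (S,A,P,c_\epsilon,\sinit)$ with $c_\epsilon(s,a)=\max\{c(s,a),\epsilon\}$, invoke \cref{thm:bern-reg-bound} there, and then pay a small penalty for the mismatch with $M$. First I would note that $M_\epsilon$ has the same transition function as $M$, so a policy is proper in $M_\epsilon$ if and only if it is proper in $M$; hence $\propset$ is unchanged, and \cref{ass:c-min} holds for $M_\epsilon$ with $\cmin\gets\epsilon$ (and $\epsilon\le1$ for both choices of $\epsilon$, using $K\ge|S|^2|A|$, and for the second choice also $K\gg|S|^2|A|\timebound^2$). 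Writing $\ctg{\pi}_\epsilon$ for the cost-to-go of $\pi$ in $M_\epsilon$, every proper $\pi$ satisfies $\ctg{\pi}(s)\le\ctg{\pi}_\epsilon(s)\le\ctg{\pi}(s)+\epsilon\policytime{\pi}(s)$; applying this to $\piopt$ and using optimality of $\piopt_\epsilon$ in $M_\epsilon$ (which is proper by \cref{lem:bertsekas-optimal} applied to $M_\epsilon$, whose costs are all positive) gives $\ctg{\piopt_\epsilon}_\epsilon(s)\le\ctg{\piopt}_\epsilon(s)\le\costbound+\epsilon\timebound$ for all $s\in S$, so $\costbound+\epsilon\timebound$ is a valid cost bound when applying \cref{thm:bern-reg-bound} to $M_\epsilon$; note also $\costbound+\epsilon\timebound\le2\timebound$ since $\costbound\le\timebound$ and $\epsilon\le1$.

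Next I would relate the two regrets. On every step the algorithm pays $c(s,a)\le c_\epsilon(s,a)$, so its total true cost over the $K$ episodes is at most its total cost measured with $c_\epsilon$; and the optimal proper comparator on $M_\epsilon$ exceeds the one on $M$ by at most $\epsilon\timebound$ by the bound above. Hence $\regret$ on $M$ is at most $\regret$ on $M_\epsilon$ plus $\epsilon\timebound K$. Substituting $\cmin\gets\epsilon$ and $\costbound\gets\costbound+\epsilon\timebound$ in \cref{thm:bern-reg-bound} and adding this penalty, with probability at least $1-\delta$,
\begin{align*}
    \regret
    &=
    O\left(
    (\costbound+\epsilon\timebound)\,|S|\sqrt{|A|K}\,\logfactort
    + \sqrt{\frac{(\costbound+\epsilon\timebound)^3|S|^4|A|^2}{\epsilon}}\,\logfactort^2
    + \epsilon\timebound K
    \right).
\end{align*}

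It remains to choose $\epsilon$ and simplify (the $O(\cdot)$ below hide logarithmic factors). For the first statement I would take $\epsilon=|S|^2|A|/K$: the middle term then equals $(\costbound+\epsilon\timebound)^{3/2}|S|\sqrt{|A|K}$ up to logs, and $(\costbound+\epsilon\timebound)^{3/2}\le2^{3/2}\bigl(\costbound^{3/2}+(\epsilon\timebound)^{3/2}\bigr)$ splits it into $\costbound^{3/2}|S|\sqrt{|A|K}$ and, after using $K\ge|S|^2|A|$, into $O(\timebound^{3/2}|S|^2|A|)$; one then checks (again via $K\ge|S|^2|A|$ and $\timebound\ge1$) that the leading term $(\costbound+\epsilon\timebound)|S|\sqrt{|A|K}$ and the penalty $\epsilon\timebound K=|S|^2|A|\timebound$ are absorbed into these two, giving the stated bound. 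For the ``moreover'' part I would instead take $\epsilon=\costbound|S|^2|A|/K$; since $K\gg|S|^2|A|\timebound^2\ge|S|^2|A|\timebound$ we get $\costbound+\epsilon\timebound\le2\costbound$, so the leading term is $O(\costbound|S|\sqrt{|A|K})$, the middle term equals $\sqrt{(\costbound+\epsilon\timebound)^3|S|^2|A|K/\costbound}=O(\costbound|S|\sqrt{|A|K})$, and the penalty $\epsilon\timebound K=\costbound|S|^2|A|\timebound$ is $o(\costbound|S|\sqrt{|A|K})$ exactly because $K\gg|S|^2|A|\timebound^2$, so $\regret=\tO(\costbound|S|\sqrt{|A|K})$.

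I expect the main obstacle to be controlling the lower-order $\sqrt{(\costbound+\epsilon\timebound)^3/\epsilon}$ term: one must peel off the $(\epsilon\timebound)^{3/2}$ contribution and verify, using the specific value of $\epsilon$ together with the standing assumption $K\ge|S|^2|A|$, that it collapses to $\timebound^{3/2}|S|^2|A|$ with no leftover factor of $\sqrt{K}$. That this goes through is precisely because in \cref{thm:bern-reg-bound} the factor $\cmin^{-1}=\epsilon^{-1}$ enters additively and never multiplies a $\sqrt{K}$ term, which is the property the Bernstein-based algorithm was designed to have; the rest is routine bookkeeping.
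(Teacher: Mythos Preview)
Your proposal is correct and follows exactly the approach the paper sketches in the text preceding \cref{corr:hoeffdingbound,corr:bernsteinbound}: perturb the costs, apply \cref{thm:bern-reg-bound} with $\cmin\gets\epsilon$ and $\costbound\gets\costbound+\epsilon\timebound$, add the $\epsilon\timebound K$ misspecification penalty, and then choose $\epsilon$ to balance. The paper gives no further detail than this one-sentence outline, so your write-up actually supplies more justification (the properness-preservation argument, the bound $\ctg{\piopt_\epsilon}_\epsilon(s)\le\costbound+\epsilon\timebound$, and the term-by-term simplification) than the paper does; nothing in your plan diverges from the intended proof.
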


{\bf Lower bound.}
In \cref{sec:lowerbound} we show that \cref{corr:bernsteinbound} is nearly-tight using the following theorem.

\begin{theorem}
    \label{thm:lowerbound}
    There exists an SSP problem instance $M = (S, A, P, c, \sinit)$ in which $\ctgopt(s) \le \costbound$ for all $s \in S$, $|S| \ge 2$, $|A| \ge 16$, $\costbound \ge 2$, $K \ge |S| |A|$, and $c(s,a)=1$ for all $s \in S, a \in A$, such the expected regret of any learner after $K$ episodes satisfies
    \[
        \bbE[\regret] \ge \frac{1}{1024} \costbound \sqrt{|S| |A| K}.
    \]
\end{theorem}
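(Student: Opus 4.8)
The plan is to build a family of hard SSP instances in which, at each of $\Theta(|S|)$ states, there is a hidden ``good'' action, and to argue information-theoretically that no learner can identify all of these good actions, so it must pay a suboptimality penalty of order $\epsilon\costbound$ on a constant fraction of its steps, for a gap $\epsilon$ tuned so that learning is impossible. Concretely, take states $s_1,\dots,s_m$ with $m=|S|-1$ and $\sinit=s_1$, a goal $\ssink$, all costs equal to $1$, a gap $\epsilon\in(0,1/\costbound)$ to be chosen, and a vector $\theta=(\theta_1,\dots,\theta_m)\in A^m$; in the instance $M_\theta$, from $s_i$ under action $a$ the chain goes to $\ssink$ with probability $1/\costbound$ if $a=\theta_i$ and $1/\costbound-\epsilon$ otherwise, and with the remaining probability to a uniformly random state among $s_1,\dots,s_m$. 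A short Bellman computation shows the optimal policy plays $\theta_i$ at $s_i$, is proper (so, since costs are positive, it is the overall optimum), satisfies $\ctgopt(s_i)=\costbound$ for all $i$, and has per-step suboptimality gap $Q^\star(s_i,a)-\ctgopt(s_i)=\epsilon\costbound$ for every $a\ne\theta_i$.

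Since under any learner each episode reaches $\ssink$ after a number of steps stochastically dominated by a sum of geometrics, the episode length has finite expectation, and the standard ``regret equals expected sum of advantages'' identity applies: $\bbE_{M_\theta}[\regret]=\epsilon\costbound\cdot\sum_{i=1}^m\bbE_{M_\theta}[N_i^{\neq}]$, where $N_i^{\neq}=N_i-N_i(\theta_i)$ is the number of visits to $s_i$ at which the learner did not play $\theta_i$, $N_i$ the total number of visits, and $N_i(a)$ those on which action $a$ was played. Drawing $\theta$ uniformly from $A^m$, it then suffices to prove $\bbE_\theta\bbE_{M_\theta}[N_i^{\neq}]=\Omega(\costbound K/m)$ for each $i$ in a set of $\Theta(m)$ states (the states $s_2,\dots,s_m$; the degenerate case $|S|=2$ uses $s_1$ and is handled separately), whenever $\epsilon\le c\sqrt{m|A|}/(\costbound\sqrt K)$ for a small absolute constant $c$: summing over these states, using $\sum_i\bbE_{M_\theta}[N_i]=\bbE_{M_\theta}[\text{\# decision-state visits}]=\Theta(\costbound K)$, and plugging in the chosen $\epsilon$, yields $\bbE_\theta\bbE_{M_\theta}[\regret]=\Omega(\costbound\sqrt{m|A|K})\ge\tfrac1{1024}\costbound\sqrt{|S||A|K}$ for a suitable constant, and hence the existence of a single instance attaining the bound (this is the usual minimax reading of the statement).

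The crux is the bound $\bbE_\theta\bbE_{M_\theta}[N_i(\theta_i)]\le\tfrac34\,\bbE_{M_\theta}[N_i]$, i.e.\ that the learner cannot play $\theta_i$ much more often than a $1/|A|$ fraction of its visits to $s_i$. The key structural observation is that every visit to $s_i$ (apart from the episode start at $s_1$) is produced by a routing transition whose target is uniform and independent of the learner's actions; consequently $N_i$, conditioned on the number of routing steps, is binomial with mean $\Theta(\costbound K/m)$ and is concentrated there under every instance of the family and every learner. Fix a truncation level $L=\Theta(\costbound K/m)$ (inflated by a logarithmic term if needed) and work on the event $\{N_i\le L\ \forall i\}$, whose complement contributes negligibly. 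Comparing $M_\theta$ with the instance $M_\theta^{(i)}$ that reverts $s_i$ to goal-probability $1/\costbound$ for all actions (so it differs from $M_\theta$ only in the action-$\theta_i$ transition at $s_i$ and does not depend on $\theta_i$), the chain rule for relative entropy gives $\KL{\Pr_{M_\theta^{(i)}}}{\Pr_{M_\theta}}\le O(\epsilon^2\costbound)\,\bbE_{M_\theta^{(i)}}[N_i(\theta_i)]$; Pinsker plus $N_i(\theta_i)\le L$ on the good event yields $\bbE_{M_\theta}[N_i(\theta_i)]\le\bbE_{M_\theta^{(i)}}[N_i(\theta_i)]+L\cdot O(\epsilon\sqrt\costbound)\sqrt{\bbE_{M_\theta^{(i)}}[N_i(\theta_i)]}+(\text{negligible})$. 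Averaging over $\theta_i$, using $\tfrac1{|A|}\sum_{\theta_i}\bbE_{M_\theta^{(i)}}[N_i(\theta_i)]=\bbE_{M_\theta^{(i)}}[N_i]/|A|=\Theta(\costbound K/(m|A|))$ and Cauchy--Schwarz on the square-root term, and recalling $|A|\ge16$, $L=\Theta(\costbound K/m)$, and $\epsilon\le c\sqrt{m|A|}/(\costbound\sqrt K)$, the right-hand side is at most $\tfrac34\,\bbE_{M_\theta}[N_i]$, as required.

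The main obstacle is precisely this last step. One must \emph{not} bound the effective horizon of the bandit at $s_i$ by the global number of steps $\Theta(\costbound K)$: doing so would force $\epsilon$ to be a factor $\sqrt m$ too small and cost a $\sqrt{|S|}$ in the final bound. The horizon must be the per-state visit count $\Theta(\costbound K/m)$, which is legitimate only because the routing—and hence $N_i$—is action-independent and concentrated, so the truncation at $L=\Theta(\costbound K/m)$ is not wasteful. Once that is set up, matching the stated thresholds ($|A|\ge16$, $\costbound\ge2$, the factor $1/1024$), handling the truncation cleanly, disposing of the over-visited state $s_1$ (and the $|S|=2$ corner), and carrying out the Bellman computation and the finite-expectation check for the advantage identity are all routine.
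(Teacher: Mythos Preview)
Your approach is correct in spirit and would yield the stated bound, but it differs from the paper's proof in a nontrivial way and contains one inconsistency worth flagging.

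\textbf{Comparison with the paper.} The paper builds a \emph{disconnected} instance: each non-goal state $s$ self-loops (with probability $1-1/\costbound$ under $a_s^\star$, else $1-(1-\epsilon)/\costbound$) and the initial state is drawn uniformly from $S$. Because states never interact, the analysis factorises: a single one-state bandit lemma (the paper's Theorem~C.4) is proved with a crude time cap $T=2K\costbound$, then summed over states using only $\bbE[K_s^{3/2}]$ and Cauchy--Schwarz. Your construction instead couples all states through uniform routing and keeps a fixed $\sinit$; this forces you to control the \emph{per-state} horizon $N_i$, which is the step you correctly identify as the crux. What you gain is a fixed initial state, which matches the theorem statement literally; what the paper gains is that it never has to argue concentration of $N_i$ at all, since the episodes at different states are genuinely independent.

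\textbf{Two issues in the sketch.} First, the definition of $M_\theta^{(i)}$ is inconsistent with the KL bound you state: if you ``revert $s_i$ to goal-probability $1/\costbound$ for all actions,'' then $M_\theta^{(i)}$ and $M_\theta$ differ on every action $a\neq\theta_i$, so the chain rule gives KL proportional to $\bbE_{M_\theta^{(i)}}[N_i-N_i(\theta_i)]$, not to $\bbE_{M_\theta^{(i)}}[N_i(\theta_i)]$. You want the reference model where all actions at $s_i$ have goal-probability $1/\costbound-\epsilon$; then the two instances differ only at action $\theta_i$, the KL bound you wrote is correct, and the rest of the Pinsker/averaging argument goes through. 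Second, the claim that ``$N_i$, conditioned on the number of routing steps, is binomial'' is not literally true: the number of routing steps is a stopping time that depends on the routed states through the learner's actions, so the conditional law need not be binomial. What does hold is that on the high-probability event $\{\tau\le R_0\}$ (with $R_0=\Theta(K\costbound)$, obtained since every step reaches $\ssink$ with probability at least $1/\costbound-\epsilon$), $N_i$ is dominated by $\mathrm{Binom}(R_0,1/m)$, which gives the truncation level $L=\Theta(K\costbound/m)$ you need. With these two fixes the argument is sound.
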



\section{Hoeffding-type Confidence Bounds} \label{sec:algorithm}

We start with a simpler case in which $\costbound$ is known to the learner. 
In \cref{sec:unkownb} we alleviate this assumption with a penalty of an additional log-factor in the regret bound.
For now, we prove the following bound on the learner's regret.

\begin{theorem}
\label{thm:reg-bound-knownb}
    Suppose that \cref{ass:c-min} holds.
    With probability at least $1 - \delta$ the regret of \cref{alg:knownb} is bounded as follows:
    \begin{align*}
        \regret
        &=
        O \Biggl( \sqrt{ \frac{ \costbound^3 |S|^2 |A| K}{\cmin}} \logfactor
        + 
        \frac{\costbound^3 |S|^2 |A|}{\cmin^2} \logfactor^{3/2}
        \Biggr).
    \end{align*}
\end{theorem}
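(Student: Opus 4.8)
The plan is to analyze a UCRL2-style algorithm that proceeds in epochs, where a new epoch begins whenever the visit count to some state-action pair doubles. Within each epoch the algorithm computes an optimistic cost-to-go by solving the Bellman optimality equations over the confidence set of transition models (using Hoeffding-type confidence radii of order $\sqrt{|S|\log(\cdot)/N_+}$ around the empirical transitions), and plays the induced optimistic policy. The first key step is to show that with probability at least $1-\delta$ the true transition model $P$ lies in every confidence set simultaneously; this is a standard empirical-Bernstein/Hoeffding union bound over all $(s,a)$ and all values of the counter. On this good event, an optimism argument shows the optimistic cost-to-go $\optimisticctg{}$ lower-bounds $\ctgopt$ pointwise, and moreover the optimistic policy is proper in the true MDP (via \cref{lem:bertsekas-proper}(i), using that $\optimisticctg{}$ nearly satisfies the true Bellman inequalities up to the confidence width), so every episode terminates and $\regret < \infty$.

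Next I would set up the regret decomposition. Under \cref{ass:c-min} the total running time $T$ is at most $\cmin^{-1}$ times the total cost, and one first argues that with high probability the cost (hence time) accumulated in any single episode is $O(\costbound \log(K/\delta))$ by a martingale argument on the optimistic value function (each step the "potential" $\optimisticctg{}(s_t)$ decreases in expectation by roughly the instantaneous cost, so a long episode would accumulate negative drift — this is where $\costbound$ enters in place of $D$). Telescoping the optimistic Bellman equations along each trajectory, the regret is bounded by the sum over all time steps of (i) the per-step confidence width times the range of $\optimisticctg{}$, which is $O(\costbound \sqrt{|S|\log(\cdot)/N_+(s_t,a_t)})$, plus (ii) a martingale difference term $\sum_t (P(\cdot\mid s_t,a_t) - \ind_{s_{t+1}})^\top \optimisticctg{}$ controlled by Azuma with range $\costbound$, plus (iii) lower-order terms from epoch changes and from the gap between $N_+$ at decision time and the true count.

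The final step is the standard pigeonhole/Cauchy-Schwarz bookkeeping: $\sum_t 1/\sqrt{N_+(s_t,a_t)} = O(\sqrt{|S||A|T})$, so term (i) contributes $O(\costbound |S|\sqrt{|A| T}\,\logfactor)$ and term (ii) contributes $O(\sqrt{T}\,\costbound\,\sqrt{\log(1/\delta)})$; substituting the high-probability bound $T = O(\cmin^{-1}(\regret + K\ctgopt(\sinit))) = O(\cmin^{-1}(\regret + K\costbound))$ and solving the resulting self-bounding inequality for $\regret$ produces the leading term $\sqrt{\costbound^3 |S|^2|A|K/\cmin}\,\logfactor$, with the epoch-count and clipping terms collapsing into the additive $\costbound^3|S|^2|A|\cmin^{-2}\logfactor^{3/2}$. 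I expect the main obstacle to be the self-referential nature of the time bound: $T$ appears inside the square roots that bound $\regret$, while $T$ itself is controlled only through $\regret$; carefully disentangling this (bounding $T$ first by a crude $O(K\costbound/\cmin)$ plus the regret, then re-substituting) and simultaneously verifying the per-episode length bound holds on the same good event is the delicate part, together with ensuring the optimistic policy is genuinely proper so that none of these sums is vacuously infinite.
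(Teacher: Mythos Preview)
Your proposal has a genuine gap in the properness argument, which you correctly flag as ``the delicate part'' but do not actually resolve. You claim the optimistic policy is proper in the true MDP via \cref{lem:bertsekas-proper}(i), arguing that $\optimisticctg{}$ ``nearly satisfies the true Bellman inequalities up to the confidence width.'' But \cref{lem:bertsekas-proper}(i) requires the inequality $\ctg{}(s) \ge c(s,\pi(s)) + \sum_{s'} P(s'\mid s,\pi(s))\ctg{}(s')$ to hold at \emph{every} state, and for state-action pairs with few visits the Hoeffding confidence width is $\Theta(\sqrt{|S|})$, which can exceed $c(s,a)/\costbound$ by an arbitrary factor. So the inequality simply fails at under-sampled states, the policy need not be proper, the episode need not terminate, and your per-episode martingale bound on cost (which relies on the optimistic drift being close to the true drift) breaks down for exactly the same reason. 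The self-bounding argument $T \le \cmin^{-1}(\regret + K\costbound)$ is then vacuous because both sides can be infinite.

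The paper's fix is structural and is built into \cref{alg:knownb} itself, which is \emph{not} an epoch-doubling algorithm as you describe. The algorithm declares a pair $(s,a)$ \emph{known} once it has been visited roughly $\costbound^2|S|/\cmin^2$ times, and it recomputes the optimistic policy every time it reaches an unknown pair (this defines an \emph{interval}). The analysis then works in a modified MDP $M^{\text{know}}$ in which all unknown states are contracted to the goal; on this MDP every remaining pair is known, the confidence width is at most $c(s,a)/2\costbound$, and \cref{lem:bertsekas-proper}(i) legitimately yields $\ctg{m}_{\text{know}} \le 2\costbound$. A Markov amplification then bounds the cost per \emph{interval} (not per episode) by $O(\costbound\log(m/\delta))$ with high probability, and the number of intervals is bounded deterministically by $K$ plus the total number of visits to unknown pairs, which is $O(\costbound^2|S|^2|A|/\cmin^2 \cdot L)$. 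This gives $T$ outright, with no self-bounding inequality needed. Your decomposition into confidence-width, martingale, and switching terms is otherwise in the right spirit, but without the known/unknown mechanism and the per-interval (rather than per-episode) cost bound, the argument does not close.
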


Our algorithm follows the known concept of optimism in face of uncertainty. 
That is, it maintains confidence sets that contain the true transition function with high probability and picks an optimistic optimal policy---a policy that minimizes the expected cost over all policies and all transition functions in the current confidence set.
The computation of the optimistic optimal policy can be done efficiently as shown by \citet{tarbouriech2019noregret}.
Construct an augmented MDP whose states are $S$ and its action set consists of tuples $(a, \wt P)$ where $a \in A$ and $\wt P$ is any transition function such that

\begin{equation} \label{eq:hoff-confidence-set}
    \bigl\| \wt P (\cdot | s,a) - \Bar{P} (\cdot | s,a) \bigr\|_1 
    \leq 
    5 \sqrt{\frac{|S| \log (|S| |A| N_+(s,a) / \delta )}{N_+(s,a)}}
\end{equation}

where $\Bar{P}$ is the empirical estimate of $P$.
It can be shown that the optimistic policy and the optimistic model, i.e., those that minimize the expected total cost over all policies and feasible transition functions, correspond to the optimal policy of the augmented MDP. 

\begin{algorithm}[t]
    \caption{\sc Hoeffding-type confidence bounds and known $\costbound$}
    \label{alg:knownb}
    \begin{algorithmic} 
        
        \STATE {\bfseries input:} state space $S$, action space $A$, bound on cost-to-go of optimal policy $\costbound$, confidence parameter $\delta$.
        
         \STATE {\bfseries initialization:} $\forall (s,a,s') \in S \times A \times S: N(s,a,s') \leftarrow 0 , N(s,a) \leftarrow 0$, an arbitrary policy $\tilde{\pi}$, $t \leftarrow 1$.
         
         \FOR{$k=1,2,\ldots$}
         
         \STATE set $s_t \leftarrow \sinit$.
        
        \WHILE{$s_t \neq \ssink$}
        
        \STATE follow optimistic optimal policy:
         $a_t \leftarrow \Tilde{\pi}(s_t)$.
         
        \STATE observe next state $s_{t+1} \sim P(\cdot \mid s_t,a_t)$.
         
        \STATE {\bf update}:
        $N(s_t,a_t,s_{t+1}) \leftarrow N(s_t,a_t,s_{t+1})+1$, 
        $N(s_t,a_t) \leftarrow N(s_t,a_t)+1$.
         
        \IF{$N(s_{t+1}, \tilde{\pi}(s_{t+1})) \!\le\! \numvisitsuntilknownhoff$ \!or\! $s_{t+1} \!=\! \ssink$}
            
            \STATE \# start new interval
             
            \STATE {\bf compute} empirical transition function $\Bar{P}$ as $\Bar{P} (s' | s,a) = N(s,a,s') / N_+(s,a)$ where $N_+(s,a) = \max\{N(s,a),1\}$.
             
            \STATE {\bf compute} optimistic policy $\tilde{\pi}$ by minimizing expected cost over transition functions $\wt P$ that satisfy~\cref{eq:hoff-confidence-set}.
        \ENDIF
        \STATE {\bf set} $t \leftarrow t+1$.
        \ENDWHILE
        \ENDFOR
    \end{algorithmic}
\end{algorithm}

To ensure that the algorithm reaches the goal state in every episode, we define a state-action pair $(s,a)$ as {\em known} if the number of visits to this pair is at least $\numvisitsuntilknownhoff$ and as {\em unknown} otherwise.
We show with high probability the optimistic policy chosen by the algorithm will be proper once all state-action pairs are known.
However, when some pairs are still unknown, our chosen policies may be improper.
This implies that the strategy of keeping the policy fixed throughout an episode, as done usually in episodic RL, will fail.
Consequently, our algorithm changes policies at the start of every episode and also every time we reach an unknown state-action pair.

Formally, we split the time into {\em intervals}.
The first interval begins at the first time step, and every interval ends by reaching the goal state or a state $s$ such that $(s, \tilde{\pi}(s))$ is unknown (where $\tilde{\pi}$ is the current policy followed by the learner).
Recall that once all state-action pairs are known, the optimistic policy will eventually reach the goal state. 
Therefore, recomputing the optimistic policy at the end of every interval ensures that the algorithm will eventually reach the goal state with high probability. 
Note that the total number of intervals is at most the number of visits to an unknown state-action pair plus the number of episodes.

\begin{observation}
\label{obs:num-intervals}
The total number of intervals, $\numintervals$, is 
\[
    O \biggl( K + \frac{\costbound^2 |S|^2 |A|}{\cmin^2} \log \frac{ \costbound |S| |A|}{\delta \cmin} \biggr).
\]
\end{observation}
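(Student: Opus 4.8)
The plan is to bound the number of intervals by splitting them into two groups according to the reason the interval terminated: either (i) the interval ended because the learner reached the goal state $\ssink$, or (ii) the interval ended because the learner entered a state $s$ with $(s,\tilde\pi(s))$ unknown. The number of intervals of type~(i) is exactly $K$, since each episode contributes exactly one interval that ends at the goal. For type~(ii), each such interval is charged to the visit to an unknown state-action pair that caused it to end; distinct type-(ii) intervals are charged to distinct visits (each interval ends at exactly one such visit), so the count is at most the total number of visits to state-action pairs while they are still unknown, over the whole run.

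The key quantitative step is then to bound that total number of ``unknown visits.'' A pair $(s,a)$ stops being unknown once $N(s,a)$ reaches the threshold $\numvisitsuntilknownhoff$, so each individual pair can be visited at most
\[
    \numvisitsuntilknownhoff
    =
    \frac{5000 \costbound^2 |S|}{\cmin^2}\log\frac{\costbound|S||A|}{\delta\cmin}
\]
times while unknown. Summing over all $|S||A|$ pairs gives a bound of $O\!\bigl(\costbound^2 |S|^2 |A| \cmin^{-2}\log(\costbound|S||A|/\delta\cmin)\bigr)$ on the number of type-(ii) intervals. Adding the $K$ intervals of type~(i) yields the stated bound
\[
    \numintervals = O\!\biggl(K + \frac{\costbound^2 |S|^2 |A|}{\cmin^2}\log\frac{\costbound|S||A|}{\delta\cmin}\biggr).
\]

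I do not expect any real obstacle here: the argument is a pure counting argument and uses only the definition of an interval and of a known pair, both already fixed in the text. The one point requiring a little care is the claim that type-(ii) intervals inject into the set of unknown visits — one must note that the visit triggering the end of an interval is to a pair that was unknown \emph{at that moment} (the algorithm checks $N(s_{t+1},\tilde\pi(s_{t+1})) \le \numvisitsuntilknownhoff$ after the update), and that no two intervals are triggered by the same time step, so the map from type-(ii) intervals to time steps at which an unknown pair is visited is injective; the per-pair cap then applies. Everything else is immediate.
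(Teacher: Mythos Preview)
Your proposal is correct and takes essentially the same approach as the paper's own (one-sentence) justification preceding the observation: split the intervals by termination cause, charge the at most $K$ goal-terminating intervals to the episodes, and charge the remaining intervals to visits to unknown state-action pairs, of which there are at most $|S||A|\cdot \numvisitsuntilknownhoff$ since each pair stops being unknown after that many visits. Your write-up simply spells out the counting that the paper leaves implicit.
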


\subsection{Analysis} \label{sec:analysis}
The proof of \cref{thm:reg-bound-knownb} begins by defining the ``good event'' in which our confidence sets contain the true transition function and the total cost in every interval is bounded. This in turn implies that all episodes end in finite time. We prove that the good event holds with high probability.

Then, independently, we give a high-probability bound on the regret of the algorithm when the good event holds. 
To do so, recall that at the beginning of every interval $m$, the learner computes an optimistic policy by minimizing over all policies and over all transition functions within the current confidence set. We denote the chosen policy by  $\tilde{\pi}^m$ and let $\wt P_m$ be the minimizing transition function (i.e., the optimistic model). 
A key observation is that by
the definition of our confidence sets, $\wt P_m$ is such that there is always some positive probability to transition to the goal state directly from any state-action. This implies that all policies are proper in the optimistic model and that the cost-to-go function of $\tilde \pi^m$ defined with respect to $\wt P_m$, and denoted by $\optimisticctg{m}$, is finite. By \cref{lem:bertsekas-proper}, the following Bellman optimality equations hold for all $s \in S$,
\begin{align} \label{eq:hoff-optimisitc-bellman}
    \optimisticctg{m}(s) 
    & = 
    \min_{a \in A} c(s,a) + \sum_{s' \in S} \wt P_m(s' \mid s,a) \optimisticctg{m}(s').
\end{align}


%
%

\paragraph{High probability events.}
For every interval $m$, we let  $\geventi{m}$ denote the event that the confidence set for interval $m$ contains the true transition function $P$.
Formally, let $\bar{P}_m$ denote the empirical estimate of the transition function  at the beginning of interval $m$, let $N_m(s,a)$ denote the number of visits to state-action pair $(s,a)$ up to interval $m$ (not including), and let $n_m(s,a)$ be the number of visits to $(s,a)$ during interval $m$. Then we say that $\geventi{m}$ holds if for all $(s,a) \in S \times A$, we have ($N_+^m (s,a) = \max \{ 1 , N_m(s,a) \}$)
\begin{equation}
    \label{eq:hoff-conf-set}
    \lVert P (\cdot | s,a) - \Bar P_{m} (\cdot | s,a) \rVert_1
    \le
    \cshoff{m}{s}{a}.
\end{equation}

In the following lemma we show that, with high probability, the events $\geventi{m}$ hold and that the total cost in each interval is bounded.
Combining this with \cref{obs:num-intervals} we get that all episodes terminate within a finite number of steps, with high probability.

\begin{lemma}
\label{lem:hoff-num-time-steps}
With probability at least $1-\delta/2$, for all intervals $m$ simultaneously, we have that $\geventi{m}$ holds and that $\sum_{h=1}^{\Hm} c(s_h^m,a_h^m) \le \highprobcostbound{m}$, where $\Hm$ denotes the length of interval $m$, $s_h^m$ is the observed state at time $h$ of interval $m$ and $a_h^m  = \tilde{\pi}^m(s_h^m)$ is the chosen action.
This implies that the total number of steps of the algorithm is
\begin{align*}
    \totaltime 
    = 
    O \biggl( \frac{K \costbound}{\cmin} \logfactor + \frac{\costbound^3 |S|^2 |A|}{\cmin^3} \logfactor^2 \biggr).
\end{align*}
\end{lemma}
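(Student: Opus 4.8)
The plan is to prove the two claims separately: first the high-probability statement that every $\geventi{m}$ holds and every interval has bounded cost, and then deduce the bound on the total number of steps $\totaltime$ as a deterministic consequence.

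For the confidence-set part, I would fix a state-action pair $(s,a)$ and apply a standard $\ell_1$ concentration bound for empirical distributions over $|S|$ outcomes (e.g.\ the Weissman/Bretagnolle--Huber-type inequality) together with a union bound over all possible values of the visit count $N_+^m(s,a)$ and over all $(s,a)\in S\times A$. The factor $\log(|S||A|N_+^m(s,a)/\delta)$ inside the radius in \eqref{eq:hoff-conf-set} is precisely what is needed so that, after the union bound over counts (a $\log$ in the count, absorbed using the standard ``peeling'' or a crude union over $N_+^m\le\totaltime$ trick that still only costs a $\log$), the total failure probability is at most $\delta/4$. This gives that $\geventi{m}$ holds for all $m$ simultaneously with probability at least $1-\delta/4$.

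For the per-interval cost bound, condition on all the $\geventi{m}$ holding. The key point is that on $\geventi{m}$ the optimistic cost-to-go $\optimisticctg{m}$ is dominated by $\costbound$ (since the true $P$ lies in the confidence set and $\piopt$ is feasible, optimism gives $\optimisticctg{m}(s)\le\ctgopt(s)\le\costbound$), and $\optimisticctg{m}$ satisfies the Bellman equations \eqref{eq:hoff-optimisitc-bellman} with respect to $\wt P_m$. Using the true dynamics $P$ and the closeness of $P$ to $\wt P_m$ (both are within the confidence radius of $\bar P_m$), one shows that $\optimisticctg{m}(s_h^m)$ is a supermartingale along the interval up to a controlled drift: $\bbE[\optimisticctg{m}(s_{h+1}^m)\mid\calF_h]\le\optimisticctg{m}(s_h^m)-c(s_h^m,a_h^m)+(\text{small error from } \|P-\wt P_m\|_1\cdot\|\optimisticctg{m}\|_\infty)$. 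Since inside an interval every visited state-action pair is \emph{unknown}, its visit count is below $\numvisitsuntilknownhoff$, which makes the confidence radius large enough that naively this error term could be of order $\costbound$ — so one instead argues directly that the interval is short by a different route: the minimum cost $\cmin>0$ forces $\sum_h c(s_h^m,a_h^m)\ge \cmin\cdot\Hm$, while an optional-stopping / Azuma argument on the (nonnegative, bounded-by-$\costbound$-up-to-confidence-slack) process $\optimisticctg{m}$ gives that with probability $\ge1-\delta/(4m^2)$ the accumulated cost in the interval is $O(\costbound\log(m/\delta))$, i.e.\ at most $\highprobcostbound{m}$. A union bound over intervals $m$ (using $\sum_m 1/m^2<\infty$, and $\numintervals$ finite once we have the cost bound) closes this at total failure $\delta/4$; combined with the confidence-set event the overall failure probability is $\le\delta/2$.

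Finally, on this good event each interval has length $\Hm\le \cmin^{-1}\sum_h c(s_h^m,a_h^m)=O(\costbound\cmin^{-1}\logfactor)$ by \cref{ass:c-min}, and by \cref{obs:num-intervals} the number of intervals is $\numintervals=O(K+\costbound^2|S|^2|A|\cmin^{-2}\logfactor)$; multiplying gives
\[
    \totaltime=O\!\left(\frac{K\costbound}{\cmin}\logfactor+\frac{\costbound^3|S|^2|A|}{\cmin^3}\logfactor^2\right),
\]
as claimed. The main obstacle I expect is the per-interval cost bound: making the supermartingale/optional-stopping argument rigorous requires carefully handling that $\optimisticctg{m}$ is only \emph{approximately} a supermartingale under $P$ (the drift error scales with $\|P-\wt P_m\|_1\|\optimisticctg{m}\|_\infty$, which is not small in an interval full of unknown pairs), so the clean argument must lean on $\optimisticctg{m}$ being bounded by $\costbound$ plus the slack, together with the fact that an interval \emph{terminates} as soon as a known state is entered or the goal is reached — bounding instead the number of steps until either event and invoking $\cmin>0$. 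The confidence-set union bound and the final arithmetic are routine.
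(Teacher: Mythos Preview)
Your confidence-set argument and the final arithmetic (bound each interval's length by its cost over $\cmin$, then multiply by \cref{obs:num-intervals}) are fine and match the paper. The per-interval cost bound, however, rests on a reversed picture of what happens inside an interval, and the argument breaks because of it.

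You write that ``inside an interval every visited state-action pair is \emph{unknown}'' and that the interval ``terminates as soon as a known state is entered.'' It is the other way around: an interval ends when the learner is about to enter an \emph{unknown} state-action (or the goal). Hence every state-action visited during the interval except possibly the first one is \emph{known}. This is exactly what makes the cost bound go through: at known pairs the confidence radius is tiny (by choice of the threshold $\numvisitsuntilknownhoff$, one gets $\|\wt P_m(\cdot\mid s,a)-P(\cdot\mid s,a)\|_1\le c(s,a)/(2\costbound)$), so the Bellman inequality transfers from $\wt P_m$ to $P$ with only a factor-of-two blowup, and $\tilde\pi^m$ is proper with cost-to-go at most $2\costbound$ in the modified MDP where unknown states are contracted to the goal. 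The first state-action may be unknown, but its instantaneous cost is at most $\costbound$ because $\optimisticctg{m}\le\costbound$.

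Your proposed Azuma/optional-stopping route cannot be made to work from your premises: if the visited pairs were unknown, the drift error $\|P-\wt P_m\|_1\cdot\|\optimisticctg{m}\|_\infty$ would indeed be of order $\costbound$ per step, and with no a priori bound on $\Hm$ there is nothing for Azuma to bite on. The paper instead uses the contracted-MDP argument above to get an \emph{expected} cost bound of $2\costbound$ for the interval, and then upgrades it to the high-probability bound $24\costbound\log(4m/\delta)$ by iterating Markov's inequality (restart the argument every $4\costbound$ units of cost, each restart halves the survival probability), not by a martingale concentration inequality.
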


\begin{proof}[Proof sketch.]
 The events $\geventi{m}$ hold with high probability due to standard concentration inequalities, 
 and thus it remains to address the high probability bound on the total cost within each interval. 
    
    This proof consists of three parts. 
    In the first, we show that when $\geventi{m}$ occurs we have that $\optimisticctg{m}(s) \le \ctgopt(s) \le \costbound$ for all $s \in S$ due to the optimistic nature of the computation of $\tilde \pi^m$.
    In the second part, we postulate that had all state-action pairs been known, then having $\geventi{m}$ hold implies that $\ctg{m}(s) \le 2 \costbound$ for all $s \in S$. That is, when all state-action pairs are known, not only $\tilde \pi^m$ is proper in the true model, but its expected cumulative cost is at most $2 \costbound$.
    
    The third part of the proof deals with the general case when  not all state-action pairs are known.
    Fix some interval $m$. 
    Since the interval ends when we reach an unknown state-action, it must be that all but the first state-action pair visited during the interval are known.
    For this unknown first state-action pair, it follows from the Bellman equations (\cref{eq:hoff-optimisitc-bellman}) and from $\optimisticctg{m}(s) \le \costbound$ for all $s \in S$ that $\tilde \pi^m$ never picks an action whose instantaneous cost is larger than $\costbound$. 
    Therefore, the cost of this first unknown state-action pair is at most $\costbound$, and we focus on bounding the total cost in the remaining time steps with high probability.

    To that end, we define the following modified MDP $M^\text{know} = (S^\text{know}, A, P^\text{know}, c, \sinit)$ in which every state $s \in S$ such that $(s,\tilde{\pi}^m(s))$ is unknown is contracted to the goal state.
    Let $P^\text{know}$ be the transition function induced in $M^\text{know}$ by $P$, and let $\ctg{m}_\text{know}$ be the cost-to-go of $\tilde{\pi}^m$ in $M^\text{know}$ w.r.t $P^\text{know}$.
    Similarly, define $\wt P^\text{know}_m$ as the transition function induced in $M^\text{know}$ by $\wt P_m$, and $\optimisticctg{m}_\text{know}$ as the cost-to-go of $\tilde{\pi}^m$ in $M^\text{know}$ w.r.t $\wt P^\text{know}_m$.
    It is clear that $\optimisticctg{m}_\text{know} (s) \leq \optimisticctg{m} (s)$ for every $s \in S$ from whence  $\optimisticctg{m}_\text{know} (s) \leq \costbound$.
    Moreover, since all states $s \in S$ for which $(s,\tilde{\pi}^m(s))$ is unknown were contracted to the goal state, in $M^\text{know}$ all remaining states-action pairs are known. Therefore, by the second part of the proof, $\ctg{m}_\text{know}(s) \le 2\costbound$ for all $s \in S$.
    Note that reaching the goal state in $M^\text{know}$ is equivalent to reaching either the goal state or an unknown state-action pair in the true model hence the latter argument shows that the total expected cost in doing so is at most $2 \costbound$.
    We further obtain the high probability bound by a probabilistic amplification argument using the Markov property of the MDP.
\end{proof}

\paragraph{Regret analysis.}

In what follows, instead of bounding $R_K$, we bound
$\tregret{K} = \sum_{m=1}^\numintervals \sum_{h=1}^{\Hm} c(s_h^m, a_h^m) \indgeventi{m} - K \cdot \ctgopt(\sinit)$, where $\ind$ is the indicator function.
Note that according to \cref{lem:hoff-num-time-steps}, we have that $\tregret{K} = \regret$ with high probability. 

The definition of $\tregret{K}$ allows the analysis to disentangle two dependent probabilistic events. The first is the intersection of the events $\geventi{m}$ which is dealt with in \cref{lem:hoff-num-time-steps}. The second holds when, for a fixed policy, the costs suffered by the learner do not deviate significantly from their expectation. 
In the following lemma we bound $\tregret{K}$.
\begin{lemma} \label{lem:hoff-tilde-regret}
    With probability at least $1 - \delta/2$, we have
    \begin{align*}
        \tregret{K}
        \le
        O \biggl(\underbrace{
        \frac{\costbound^3 |S|^2 |A|}{\cmin^2} \log \frac{\costbound |S| |A|}{\cmin \delta}}_{(1)}
        +
        \costbound \sqrt{\totaltime \log \frac{\totaltime}{\delta}}
        +
        \costbound \sqrt{|S| \log \frac{|S| |A| \totaltime}{\delta}} \underbrace{\sum_{s,a} \sum_{m=1}^\numintervals \frac{n_m(s,a)}{\sqrt{N_+^{m}(s,a)}}}_{(2)}\biggr).
    \end{align*}
\end{lemma}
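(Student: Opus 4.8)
The plan is to carry out the standard optimism-in-face-of-uncertainty regret decomposition, adapted to the interval structure of the algorithm. I would fix an interval $m$ with policy $\tilde\pi^m$, optimistic model $\wt P_m$, and optimistic cost-to-go $\optimisticctg{m}$ (with the convention $\optimisticctg{m}(\ssink)=0$), and denote the visited state-action pairs by $(s_h^m,a_h^m)_{h=1}^{\Hm}$ with $s_{\Hm+1}^m$ the state in which the interval ends. Applying the optimistic Bellman equations \cref{eq:hoff-optimisitc-bellman} at each $s_h^m$ gives $c(s_h^m,a_h^m)=\optimisticctg{m}(s_h^m)-\sum_{s'\in S}\wt P_m(s'\mid s_h^m,a_h^m)\optimisticctg{m}(s')$; adding and subtracting $\optimisticctg{m}(s_{h+1}^m)$ and summing over $h$, the telescoping part collapses to $\optimisticctg{m}(s_1^m)-\optimisticctg{m}(s_{\Hm+1}^m)$ and the remainder equals $\sum_{h=1}^{\Hm}\bigl(\optimisticctg{m}(s_{h+1}^m)-\sum_{s'\in S}\wt P_m(s'\mid s_h^m,a_h^m)\optimisticctg{m}(s')\bigr)$. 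Multiplying this identity by $\indgeventi{m}$, summing over all intervals, and recalling the definition of $\tregret{K}$, I would thereby write $\tregret{K}+K\ctgopt(\sinit)$ as the sum of a ``telescoping'' contribution $\sum_{m}\indgeventi{m}\bigl(\optimisticctg{m}(s_1^m)-\optimisticctg{m}(s_{\Hm+1}^m)\bigr)$ and a ``deviation'' contribution.

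First I would bound the telescoping contribution term by term, using $\optimisticctg{m}\ge0$ and the fact that on $\geventi{m}$ one has $\optimisticctg{m}(s)\le\ctgopt(s)\le\costbound$ (the optimism established in the first part of the proof of \cref{lem:hoff-num-time-steps}). For the interval that begins each episode, $s_1^m=\sinit$, so the term is at most $\ctgopt(\sinit)$, and summing over the $K$ episodes yields $K\ctgopt(\sinit)$, which cancels the quantity added in the decomposition. For every other interval the term is at most $\costbound$, and the number of such intervals is $\numintervals-K=O\bigl(\costbound^2|S|^2|A|/\cmin^2\cdot\log\tfrac{\costbound|S||A|}{\delta\cmin}\bigr)$ by \cref{obs:num-intervals}; multiplying by $\costbound$ produces exactly term (1).

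Next I would bound the deviation contribution $\sum_{m}\indgeventi{m}\sum_{h=1}^{\Hm}\bigl(\optimisticctg{m}(s_{h+1}^m)-\sum_{s'\in S}\wt P_m(s'\mid s_h^m,a_h^m)\optimisticctg{m}(s')\bigr)$ by subtracting and adding $\sum_{s'\in S}P(s'\mid s_h^m,a_h^m)\optimisticctg{m}(s')$. The first resulting sum, $\sum_{m}\indgeventi{m}\sum_{h}\bigl(\optimisticctg{m}(s_{h+1}^m)-\sum_{s'\in S}P(s'\mid s_h^m,a_h^m)\optimisticctg{m}(s')\bigr)$, is a sum of martingale differences: $\optimisticctg{m}$ and $\indgeventi{m}$ are determined at the start of interval $m$ and hence predictable, the conditional expectation of $\optimisticctg{m}(s_{h+1}^m)$ given the past equals $\sum_{s'\in S}P(s'\mid s_h^m,a_h^m)\optimisticctg{m}(s')$ because $\optimisticctg{m}(\ssink)=0$, and each increment lies in $[-\costbound,\costbound]$ when $\indgeventi{m}=1$ and vanishes otherwise; Azuma's inequality (with a union bound over the possible number of steps, which is finite on the event of \cref{lem:hoff-num-time-steps}) then bounds it by $O\bigl(\costbound\sqrt{\totaltime\log(\totaltime/\delta)}\bigr)$. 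The second sum, $\sum_{m}\indgeventi{m}\sum_{h}\sum_{s'\in S}\bigl(P-\wt P_m\bigr)(s'\mid s_h^m,a_h^m)\optimisticctg{m}(s')$, is controlled on $\geventi{m}$ since both $P(\cdot\mid s,a)$ and $\wt P_m(\cdot\mid s,a)$ lie within $\cshoff{m}{s}{a}$ of $\bar P_m(\cdot\mid s,a)$ in $\ell_1$; using $\lVert\optimisticctg{m}\rVert_\infty\le\costbound$, each summand is at most $2\costbound\cdot\cshoff{m}{s_h^m}{a_h^m}$, and collecting the $n_m(s,a)$ visits to each pair and bounding the logarithm by $\log\tfrac{|S||A|\totaltime}{\delta}$ gives term (2). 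A union bound over the two high-probability events then yields the claim.

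I expect the main obstacle to be the careful handling of the random, data-dependent interval structure: verifying that $\optimisticctg{m}$ and $\indgeventi{m}$ are indeed predictable with respect to the natural filtration, that the number of intervals is finite on the event of \cref{lem:hoff-num-time-steps} so the martingale sum and the sums over $(s,a)$ are well defined, and that the telescoping ``leaks'' only across interval boundaries, whose count is governed by \cref{obs:num-intervals}. The concentration steps themselves are routine once this scaffolding is in place.
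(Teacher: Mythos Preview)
Your proposal is correct and matches the paper's proof essentially step for step: the paper performs the same Bellman-based decomposition into a telescoping term (bounded exactly as you describe via \cref{obs:num-intervals} and optimism, yielding term~(1) after the $K\ctgopt(\sinit)$ cancellation), a confidence-set term (bounded via $\lVert\optimisticctg{m}\rVert_\infty\le\costbound$ and the triangle inequality on \cref{eq:hoff-conf-set}, yielding term~(2)), and a martingale term (bounded by anytime Azuma). The only cosmetic difference is that the paper bounds the telescoping contribution by pairing the endpoint of interval $m$ with the start of interval $m{+}1$, whereas you bound each $\optimisticctg{m}(s_1^m)-\optimisticctg{m}(s_{\Hm+1}^m)$ individually; both yield the same count of at most $K$ terms bounded by $\ctgopt(\sinit)$ and $M-K$ terms bounded by $\costbound$.
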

Here we only explain how to interpret the resulting bound.
The term (1) bounds the total cost spent in intervals that ended in unknown state-action pairs (it does not depend on $K$).
The term $(2)$ is at most 
$
  O(\sqrt{|S| |A| \totaltime})
$
when \cref{lem:hoff-num-time-steps} holds,
and then the dominant term in \cref{lem:hoff-tilde-regret} becomes $\tO(B |S| \sqrt{|A| T})$.
\cref{thm:reg-bound-knownb} is finally obtained by applying a union bound on \cref{lem:hoff-num-time-steps,lem:hoff-tilde-regret} and using \cref{lem:hoff-num-time-steps} to bound $\totaltime$. 

\subsection{Unknown Cost Bound} 
\label{sec:unkownb}

In this section we relax the assumption that $\costbound$ is known to the learner.
Instead, we keep an estimate $\costboundestimate$ that is initialized to $\cmin$ and doubles every time the cost in interval $m$ (denoted as $C_m$) reaches $\highprobcostboundestimate{m}$.
By \cref{lem:hoff-num-time-steps}, with high probability, $\costboundestimate \leq 2 \costbound$.
We end an interval as before (once the goal state is reached or an unknown state-action pair is reached), but also when $\costboundestimate$ is doubled.
The algorithm for this case is presented in \cref{sec:appendix-alg} (\cref{alg:unkownb}).
Since $\costboundestimate$ changes, every state-action pair can become known once for every different value of $\costboundestimate$.

\begin{observation}
\label{obs:num-intervals-unknownb}
When $\costbound$ is unknown to the learner, the number of times a state-action pair can become known is at most $\log_2(\costbound / \cmin)$. The number of intervals $\numintervals$ is 
\[
    O \biggl( K + \frac{\costbound^2 |S|^2 |A|}{\cmin^2} \log^2 \frac{ \costbound |S| |A|}{\delta \cmin} \biggr).
\]
\end{observation}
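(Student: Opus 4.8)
The plan is to extend the counting argument behind \cref{obs:num-intervals} to account for the fact that the known/unknown status of a state-action pair now depends on the running estimate $\costboundestimate$, which changes over the course of the execution of \cref{alg:unkownb}.

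First I would bound the number of distinct values that $\costboundestimate$ can take. Since $\costboundestimate$ is initialized to $\cmin$ and only ever doubled, after $j$ doublings it equals $2^j \cmin$. On the good event of \cref{lem:hoff-num-time-steps} (invoked with the current estimate playing the role of $\costbound$, which is legitimate once $\costboundestimate \ge \costbound$), the cost accumulated in any interval started with $\costboundestimate \ge \costbound$ is at most $24 \costbound \log \frac{4m}{\delta} \le \highprobcostboundestimate{m}$, so the doubling rule is never triggered again; hence $\costboundestimate \le 2\costbound$ throughout, and $\costboundestimate$ takes at most $1 + \log_2(\costbound/\cmin)$ distinct values. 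This already yields the first claim: while $\costboundestimate$ is held fixed at some value $v$, the threshold $\numvisitsuntilknownhoffest$ evaluated at $v$ is a fixed number and the visit count $N(s,a)$ is nondecreasing, so once $(s,a)$ crosses this threshold it stays known until the next doubling; thus $(s,a)$ can go from unknown to known at most once per value of $\costboundestimate$, i.e., at most $O(\log_2(\costbound/\cmin))$ times.

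For the interval count, I would classify each interval by its termination cause: reaching $\ssink$, reaching an unknown state-action pair, or a doubling of $\costboundestimate$. There are at most $K$ intervals of the first type (one per episode) and at most $1 + \log_2(\costbound/\cmin) = O(\log \frac{\costbound |S||A|}{\delta \cmin})$ of the third type. An interval of the second type terminates only upon visiting a pair that is currently unknown; for each fixed value $v \le 2\costbound$ of $\costboundestimate$ and each $(s,a)$, the number of visits to $(s,a)$ while $(s,a)$ is unknown is at most $\numvisitsuntilknownhoffest$ evaluated at $v$. Summing over the $|S||A|$ pairs and over the $O(\log_2(\costbound/\cmin))$ values of $\costboundestimate$, and using $\costboundestimate \le 2\costbound$ together with $\log(\costbound/\cmin) \le \log \frac{\costbound |S||A|}{\delta\cmin}$, gives $O\bigl(\frac{\costbound^2 |S|^2|A|}{\cmin^2}\log^2\frac{\costbound|S||A|}{\delta\cmin}\bigr)$ intervals of the second type; adding the three contributions proves the stated bound on $\numintervals$.

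The main obstacle is the first step: one must verify that the doubling procedure is self-terminating once $\costboundestimate$ overshoots $\costbound$, which requires care in invoking \cref{lem:hoff-num-time-steps} with the running estimate in place of $\costbound$ and in checking that its good event still holds with the right probability across all the (at most logarithmically many) values of $\costboundestimate$. Once $\costboundestimate \le 2\costbound$ is established, the remaining counting is a routine adaptation of the argument behind \cref{obs:num-intervals}, the only new ingredient being the extra factor of $O(\log_2(\costbound/\cmin))$ coming from the number of times a pair can re-enter the unknown set.
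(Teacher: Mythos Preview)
Your proposal is correct and is exactly the argument the paper has in mind. The paper states the observation without proof, having already noted in the preceding sentence that ``by \cref{lem:hoff-num-time-steps}, with high probability, $\costboundestimate \le 2\costbound$,'' and leaves the counting you spell out (at most $K$ goal-terminations, at most $O(\log(\costbound/\cmin))$ doublings, and at most the threshold times $|S||A|$ unknown-visit terminations per value of $\costboundestimate$) to the reader. One small clarification on your ``main obstacle'': you do not need to re-invoke \cref{lem:hoff-num-time-steps} with $\costboundestimate$ in place of $\costbound$; rather, once $\costboundestimate \ge \costbound$ the threshold for ``known'' in \cref{alg:unkownb} is at least the one used in \cref{alg:knownb} with the true $\costbound$, so the per-interval cost bound of \cref{lem:hoff-interval-costbound} applies verbatim and gives cost at most $\highprobcostbound{m} \le \highprobcostboundestimate{m}$, preventing further doubling.
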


\begin{lemma}
\label{lem:hoff-num-steps-unknownb}
When $\costbound$ is unknown, with probability at least $1-\delta/2$, for all intervals $m$ simultaneously, we have that $\geventi{m}$ holds and that $\sum_{h=1}^{\Hm} c(s_h^m,a_h^m) \le \highprobcostbound{m}$.
This implies that the total number of steps of the algorithm is
\[
    \totaltime 
    = 
    O \biggl( \frac{K \costbound}{\cmin} \logfactor + \frac{\costbound^3 |S|^2 |A|}{\cmin^3} \logfactor^3 \biggr).
\]
\end{lemma}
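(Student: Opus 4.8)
The plan is to essentially replay the proof of \cref{lem:hoff-num-time-steps}, tracking the one place where the unknown-$\costbound$ case differs, namely the inflation of the number of intervals recorded in \cref{obs:num-intervals-unknownb}. First I would establish that $\geventi{m}$ holds for all $m$ simultaneously with probability at least $1-\delta/4$: this is a pure concentration statement about the empirical transition function and is entirely insensitive to how intervals are delimited, so the same Hoeffding/$L_1$-deviation union bound used before applies verbatim (the extra factor of $\log_2(\costbound/\cmin)$ in the number of intervals is harmless inside the logarithm). Next I would argue that $\costboundestimate \le 2\costbound$ with high probability: as long as $\costboundestimate \ge \costbound$, the three-part argument in the proof sketch of \cref{lem:hoff-num-time-steps} (optimism $\Rightarrow \optimisticctg{m}\le\costbound\le\costboundestimate$; known pairs $\Rightarrow$ true-model cost-to-go $\le 2\costboundestimate$; contraction to $M^{\text{know}}$ plus Markov amplification) shows the interval cost is at most $\highprobcostboundestimate{m} = 24\costboundestimate\log(4m/\delta)$ with high probability, so the doubling is never triggered once $\costboundestimate$ has grown past $\costbound$; hence $\costboundestimate$ stops at a value in $[\costbound, 2\costbound)$. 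Conditioning on this, the per-interval cost bound $\sum_{h=1}^{\Hm} c(s_h^m,a_h^m) \le \highprobcostbound{m}$ follows by re-running the same three-part argument with the genuine $\costbound$ in place of $\costboundestimate$ (the doubling events that delimit intervals only shorten intervals, which cannot increase the per-interval cost).

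With the per-interval cost bound in hand, the total-time bound is pure bookkeeping: on the good event each interval contributes at most $\highprobcostbound{m} / \cmin$ steps by \cref{ass:c-min} (each unit of cost is at least $\cmin$), so $\totaltime \le \sum_{m=1}^{\numintervals} 24\costbound\log(4m/\delta)/\cmin = O\bigl(\numintervals \cdot (\costbound/\cmin)\log(\numintervals/\delta)\bigr)$. Substituting the bound on $\numintervals$ from \cref{obs:num-intervals-unknownb}, namely $\numintervals = O\bigl(K + (\costbound^2|S|^2|A|/\cmin^2)\log^2(\costbound|S||A|/\delta\cmin)\bigr)$, and absorbing the $\log(\numintervals/\delta) = O(\logfactor)$ factor gives
\[
    \totaltime
    =
    O\Biggl( \frac{K\costbound}{\cmin}\logfactor + \frac{\costbound^3|S|^2|A|}{\cmin^3}\logfactor^3 \Biggr),
\]
where the cube on the logarithm (versus the square in \cref{lem:hoff-num-time-steps}) comes from the extra $\log_2(\costbound/\cmin)$ factor already present in $\numintervals$ multiplied by the $\logfactor$ from the per-interval cost. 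A final union bound over the $\geventi{m}$ event, the $\costboundestimate \le 2\costbound$ event, and the Markov-amplification events (each at confidence $\delta/4$ or summing to $\delta/2$) delivers the claimed probability $1-\delta/2$.

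The only genuinely new issue relative to \cref{lem:hoff-num-time-steps} is the interplay between the doubling rule and the definition of ``known'': because $\costboundestimate$ changes, a pair that was known can revert to unknown, so I must check that the contraction-to-$M^{\text{know}}$ step of the cost argument still goes through within a single interval — but this is fine, since within any fixed interval $m$ the set of known pairs (defined relative to the current $\costboundestimate$) is fixed, and the argument of the proof sketch only ever uses that set at the interval's start and that the interval terminates upon hitting an unknown pair, both of which still hold. Hence I expect the main obstacle to be purely notational: carefully threading the estimate $\costboundestimate$ through the three-part cost bound and verifying that each inequality there is monotone in $\costboundestimate$ so that replacing $\costboundestimate$ by its final value (or by $\costbound \le \costboundestimate$) never breaks the chain. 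Everything else is a transcription of the known-$\costbound$ proof with $\numintervals$ and the logarithmic power updated.
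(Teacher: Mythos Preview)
Your proposal is correct and follows the same approach the paper takes. The paper does not give a standalone proof of this lemma; it simply remarks that the analysis follows that of \cref{alg:knownb} and that \cref{lem:hoff-num-time-steps} implies $\costboundestimate \le 2\costbound$ with high probability, then invokes \cref{obs:num-intervals-unknownb} for the inflated interval count. Your plan fleshes this out exactly: reuse the concentration argument for $\geventi{m}$ verbatim, run the three-part cost argument once $\costboundestimate \ge \costbound$ to show doubling halts, observe that intervals terminated by the doubling rule have cost at most $24\costboundestimate\log(4m/\delta) < 24\costbound\log(4m/\delta)$ anyway, and then multiply the per-interval cost by the interval count from \cref{obs:num-intervals-unknownb} to pick up the extra $\logfactor$ factor.
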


The analysis follows that of \cref{alg:knownb}. In particular, \cref{lem:hoff-tilde-regret} still holds (with $2 \costbound$ instead of $\costbound$), and jointly with \cref{lem:hoff-num-steps-unknownb} imply \cref{thm:reg-bound-unknownb}.

\section{Bernstein-type Confidence Bounds} \label{sec:bernstein}

\begin{algorithm}[t]
    \caption{\sc Bernstein-type confidence bounds}
    \label{alg:bern-o-ssp}
    \begin{algorithmic} 
        \STATE {\bfseries input:} state space $S$, action space $A$ and confidence parameter $\delta$.
        
        \STATE {\bfseries initialization:} $i \gets 1$, $t \gets 1$, arbitrary policy $\tilde \pi_1$ , 
        $\forall (s,a,s') \in S \times A \times S: \; N_1(s,a,s') \gets 0 , N_1(s,a) \gets 0$, $n_1(s,a,s') \gets 0$, $n_1(s,a) \gets 0$.
        
        \FOR{$k=1,2,\ldots$}
            
            \STATE {\bf set} $s_t \gets \sinit$. 
                       
            \WHILE{$s_t \neq \ssink$}
                \STATE follow optimistic optimal policy: $a_t \gets \Tilde{\pi}_i(s_t)$.
                \STATE observe next state $s_{t+1} \sim P(\cdot \mid s_t,a_t)$.
                \STATE {\bf set}:
                $n_i(s_t, a_t) \gets n_i(s_t, a_t) + 1$, $n_i(s_t, a_t, s_{t+1}) \gets n_i(s_t, a_t, s_{t+1}) + 1$.
                \IF{$n_i(s_{t+1}, \tilde{\pi}_i(s_{t+1})) < N_i(s_{t+1},\tilde{\pi}_i(s_{t+1}))$}
                    \STATE {\bf set} $t \gets t + 1$ and {\bf continue}.
                \ENDIF
        
            \STATE \# start new epoch
            \STATE {\bf set}:
                $N_{i+1}(s,a,s') \gets N_i(s,a,s') + n_i(s,a,s')$,
                $N_{i+1}(s,a) \gets N_i(s,a) + n_i(s,a)$, 
                $n_{i+1}(s,a) \gets 0$,
                $n_{i+1}(s,a,s') \gets 0$
                for all $(s,a,s') \in S \times A \times S$.
                
                \STATE {\bf compute} empirical transition function $\Bar{P}$ as $\Bar{P} (s' \mid s,a) = N(s,a,s') / N_+(s,a)$ for every $(s,a,s') \in S \times A \times S$ where $N_+(s,a) = \max\{N(s,a),1\}$.
                
                \STATE {\bf compute} optimistic transition function $\wt P$ using \cref{eq:bern-optimistic-transition}.
                
                \STATE {\bf compute} optimal policy $\Tilde{\pi}$ w.r.t $\wt P$.
  
                \STATE $i \gets i + 1$, $t \gets t + 1$.
            \ENDWHILE
        \ENDFOR
    \end{algorithmic}
\end{algorithm}

\cref{alg:knownb} has two drawbacks. 
The first one is the use of Hoeffding-style confidence bounds which we improve with Bernstein-style confidence bounds.
The second is the number of times the optimistic optimal policy is computed.
In this section we propose to compute it in a way similar to UCRL2, i.e., once the number of visits to some state-action pair is doubled.
Note that this change also eliminates the need to know or to estimate $\costbound$.

The algorithm is presented in \cref{alg:bern-o-ssp}.
It consists of {\em epochs}.
The first epoch starts at the first time step, and each epoch ends once the number of visits to some state-action pair is doubled.
An optimistic policy is computed at the end of every epoch using (empirical) Bernstein confidence bounds.
In contrast to \cref{alg:knownb},
\cref{alg:bern-o-ssp} 
defines a confidence range for
each state, action, and next state, separately, around its empirical estimate (i.e., we use an $L_\infty$ ``ball'' rather than an $L_1$ ``ball'' around the empirical estimates).
This allows us to disentangle the computation of the optimistic policy from the computation of the optimistic model. 
Indeed, the computation of the optimistic model becomes very easy: one simply has to maximize the probability of transition directly to the goal state at every state-action pair which means minimizing the probability of transition to all other states and setting them at the lowest possible value of their confidence range. This results in the following formula for $\wt P (s' | s,a)$: 
\begin{align} 
    \max \{\bar P (s' | s,a ) - 28A(s,a) 
    -
    4 \sqrt{\bar{P}(s' | s,a) A(s,a)} , 0 \},
    \label{eq:bern-optimistic-transition}
\end{align}
where $A(s,a) = \log (|S||A| N_{+}(s,a) / \delta) / N_{+}(s,a)$.
The optimistic policy is then the optimal policy in the SSP model defined by the transition function $\wt P$.

\subsection{Analysis} \label{sec:bern-analysis}

In this section we prove \cref{thm:bern-reg-bound}.
We start by showing that our new confidence sets contain $P$ with high probability which implies that each episode ends in finite time with high probability.
Consequently, we are able to bound the regret through summation of our confidence bounds.

We once again distinguish between \emph{known} and \emph{unknown} state-action pairs similarly to \cref{alg:knownb}. 
A state-action pair $(s,a)$ becomes \emph{known} at the end of an epoch if the total number of visits to $(s,a)$ has passed $\alpha \cdot \numvisitsuntilknownbern$ at some time step during the epoch (for some constant $\alpha > 0$).
Note that at the end of the epoch, the visit count of $(s,a)$ may be strictly larger than $\alpha \cdot \numvisitsuntilknownbern$ but at most twice as much by the definition of our algorithm. 
Furthermore, we split each epoch into {\em intervals} similar to what did in \cref{sec:algorithm}.
The first interval starts at the first time step and each interval ends once (1) the total cost in the interval accumulates to at least $\costbound$; (2) an unknown state-action pair is reached; (3) the current episode ends; or (4) the current epoch ends. We have the following observation.

\begin{observation}\label{obs:cost-bounds-intervals}
    Let $C_M$ denote the cost of the learner after $M$ intervals.
    Observe that the total cost in each interval is at least $\costbound$ unless the interval ends in the goal state, in an unknown state-action pair or the epoch ends. Thus the total number of intervals satisfies
    \[
        M
        \le
        \frac{C_M}{\costbound} + 2 |S| |A| \log T + K + O \biggl(\frac{\costbound |S|^2 |A|}{\cmin} \log \frac{\costbound |S| |A|}{\delta \cmin} \biggr),
    \]
    and the total time satisfies
    $
        T \le C_M / \cmin.
    $
\end{observation}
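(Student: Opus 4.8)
The plan is to bound the number of intervals by classifying each interval according to which of the four termination conditions caused it to end, and to count each class separately. Every interval falls into at least one of these categories: (a) the total cost within it accumulated to at least $\costbound$; (b) it ended because an unknown state-action pair was reached; (c) it ended because the episode ended (the goal state was reached); or (d) it ended because the epoch ended (a visit count doubled). We count the intervals in each class and sum.

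For class (a), each such interval contributes at least $\costbound$ to the total cost $C_M$, so there can be at most $C_M/\costbound$ of them. For class (c) there is exactly one per episode, hence at most $K$ of them. For class (d), the number of epochs is the number of times some visit count $N(s,a)$ doubles; since the final count of any pair is at most $T$, each pair doubles at most $\log_2 T + 1 = O(\log T)$ times, and there are $|S||A|$ pairs, so there are at most $2|S||A|\log T$ epochs and hence at most that many class-(d) intervals (this is where the $2|S||A|\log T$ term comes from). For class (b), I would argue that the number of visits to unknown state-action pairs is bounded: a pair $(s,a)$ becomes known once its visit count exceeds $\alpha\cdot\numvisitsuntilknownbern$, so across all pairs the total number of visits to pairs while they are still unknown is $O\bigl(|S||A|\cdot\numvisitsuntilknownbern\bigr) = O\bigl(\frac{\costbound|S|^2|A|}{\cmin}\log\frac{\costbound|S||A|}{\delta\cmin}\bigr)$, and each such visit can trigger at most one interval-termination of type (b). Summing the four bounds gives the stated inequality (an interval counted in several classes is only overcounted, which is fine for an upper bound). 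The bound $T \le C_M/\cmin$ is immediate from \cref{ass:c-min}: every time step incurs cost at least $\cmin$, so the number of time steps is at most the total cost divided by $\cmin$.

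The only mildly delicate point is the class-(b) count: one must be careful that ``number of visits to unknown pairs'' is the right quantity, and that the threshold $\alpha\cdot\numvisitsuntilknownbern$ together with the epoch structure (which may let the count overshoot the threshold by at most a factor of two before a pair is declared known) does not change the order of magnitude. This is the main obstacle, though it is a routine counting argument once the definitions of ``known'' and ``interval'' are pinned down; everything else is elementary bookkeeping.
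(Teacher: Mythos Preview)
Your proposal is correct and matches the reasoning the paper intends. The paper presents this as an \emph{Observation} with no separate proof; the justification is precisely the case split you give over the four interval-termination conditions, together with the standard epoch-doubling count $2|S||A|\log T$ for class~(d), the threshold count for unknown pairs (with the factor-two overshoot absorbed into the $O(\cdot)$) for class~(b), and $T\le C_M/\cmin$ from \cref{ass:c-min}.
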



Recall that in the analysis of \cref{alg:knownb} we show that once all state-action pairs are known, the optimistic policies generated by the algorithm are proper in the true MDP. The same holds true for \cref{alg:bern-o-ssp}, yet we never prove this directly.
Instead, our proof goes as follows.\footnote{We neglect low order terms here.} We prove that $C_M$, the cost accumulated by the learner during the first $M$ intervals, is at most  $K \cdot \ctgopt(\sinit) + \costbound \sqrt{M}$ with high probability as long as no more than $K$ episodes have been completed during these $M$ intervals. We notice that once all state-action pairs are known, the total cost in each interval is at least $\costbound$  (ignoring intervals that end with the end of an epoch or an episode), which implies that the total number of intervals $M$ is bounded by  $C_M/\costbound$. This allows us to  get a bound on $C_M$ that is independent of the number of intervals by solving the inequality $C_M \lesssim K \cdot \ctgopt(\sinit) + \costbound \sqrt{M} \lesssim K \cdot \ctgopt(\sinit) + \sqrt{\costbound \cdot C_M}$. From this, and since the instantaneous costs are strictly positive (by \cref{ass:c-min}), it must be that the learner eventually completes all $K$ episodes; i.e., there must be a time from which \cref{alg:bern-o-ssp} generates only proper policies.

\vspace{-1em}
\paragraph{Notation.}
The epoch that interval $m$ belongs to is denoted by $i(m)$, other notations are as in \cref{sec:analysis}.
Note that since the optimistic policy is computed at the end of an epoch and not at the end of an interval, it follows that $\tilde{\pi}^m = \tilde{\pi}^{i(m)}$ and $\optimisticctg{m} = \optimisticctg{i(m)}$.
The trajectory visited in interval $m$ is denoted by $\traj{m} = ( s_1^m, a_1^m , \ldots , s_{\Hm}^m, a_{\Hm}^m, s_{\Hm+1}^m)$, where $a_h^m$ is the action taken in $s_h^m$, and $\Hm$ is the length of the interval.
In addition, the concatenation of the trajectories of the intervals up to and including interval $m$ is denoted by $\trajconcat{m}$, that is 
$\trajconcat{m} = \cup_{m'=1}^m \traj{m'}$.

\paragraph{High probability events.}

Throughout the analysis we denote $S^+ = S \cup \{\ssink\}$.
For every interval $m$ we let $\geventi{m}$ denote the event that the confidence set for epoch $i = i(m)$ contains the actual transition function $P$.
Formally, if $\geventi{m}$ holds then for all $(s,a,s') \in S \times A \times S^+$, we have (denote $N_+^{m}(s,a) = \max\{1, N^{m}(s,a)\}$, 
$
    A_h^m
    = 
    A(s_{h}^m,a_{h}^m)
$)
\begin{align}
    | P (s' | s,a) - \Bar P_{m} (s' | s,a) |
    \le  
    28 A_h^m
    +
    4 \sqrt{\bar{P}_{m}(s' | s,a) A_h^m}.
    \label{eq:bernsteinconcentration}
\end{align}

In the following lemma 
we show that the events $\geventi{m}$ hold with high probability.
\begin{lemma}
\label{lem:bern-num-time-steps}
With probability at least $1-\delta/2$, $\Omega^m$ holds for all intervals $m$ simultaneously.
\end{lemma}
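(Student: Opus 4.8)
The plan is to show that, with probability at least $1-\delta/2$, the empirical Bernstein inequality \cref{eq:bernsteinconcentration} holds simultaneously for every epoch, every state-action pair $(s,a)$, and every next state $s' \in S^+$. First I would fix a single triple $(s,a,s')$ and a ``target'' value $n$ of the visit count $N(s,a)$. The key point is that although the visits to $(s,a)$ are interleaved with the rest of the trajectory in a data-dependent way, the successive outcomes $\ind\{s_{t+1} = s'\}$ observed at the $\ell$-th visit to $(s,a)$ form a sequence of i.i.d.\ Bernoulli$(P(s'\mid s,a))$ random variables (they do not depend on anything else in the process), so $\bar P_n(s'\mid s,a)$ is an average of $n$ such i.i.d.\ bits. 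Applying the standard empirical Bernstein inequality (e.g.\ the anytime/self-bounding version used by \citet{azar2017minimax}), for a fixed $n$ we get that with probability at least $1 - \delta'$,
\[
    |P(s'\mid s,a) - \bar P_n(s'\mid s,a)|
    \le
    c_1 \sqrt{\frac{\bar P_n(s'\mid s,a)\log(1/\delta')}{n}}
    +
    \frac{c_2 \log(1/\delta')}{n}
\]
for absolute constants $c_1,c_2$; by the choice $\delta' = \delta / (\text{poly}(|S||A|) \cdot n)$ (or by using a peeling/stopping-time argument directly in $n$), the inequality holds simultaneously for all $n \ge 1$ with the $\log(|S||A|N_+(s,a)/\delta)$ factor appearing in \cref{eq:bernsteinconcentration}, and choosing the constants $28$ and $4$ generously absorbs $c_1,c_2$ and the switch from population to empirical variance proxy.

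Next I would take a union bound over the at most $|S|\,|A|\,|S^+| = O(|S|^2|A|)$ triples $(s,a,s')$; this leaves a single clean event $\gevent$ on which \cref{eq:bernsteinconcentration} holds for every $(s,a,s')$ at every possible value of the visit counter. Since the empirical estimate $\bar P_m$ used in epoch $i(m)$ is exactly $\bar P_{N_m}$ for the count $N_m(s,a)$ frozen at the start of that epoch, and $N_+^m(s,a) = \max\{1,N_m(s,a)\}$, the event $\gevent$ implies $\geventi{m}$ for every interval $m$ simultaneously. The probability accounting is arranged so that the total failure probability is at most $\delta/2$ (the remaining $\delta/2$ being reserved for the concentration-of-costs argument elsewhere). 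A minor technical point to handle is the case $N_m(s,a) = 0$, where $N_+^m(s,a) = 1$ and $\bar P_m(s'\mid s,a) = 0$: then \cref{eq:bernsteinconcentration} reads $P(s'\mid s,a) \le 28\log(|S||A|/\delta)$, which is vacuously true since probabilities are at most $1$; so the bound is trivially satisfied and needs no separate concentration statement.

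The main obstacle I anticipate is making the ``fixed $(s,a,s')$, all $n$'' step fully rigorous: the number of visits to $(s,a)$ is itself a random stopping time determined by the policy and the whole history, so one cannot naively apply a concentration bound at ``the current count.'' The standard fixes are either (i) to prove the bound for every fixed $n$ and union-bound over $n$ up to the (deterministic, by \cref{obs:cost-bounds-intervals} under the good cost event) horizon $T$, paying only an extra logarithmic factor, or (ii) to invoke a time-uniform empirical Bernstein inequality via a martingale/peeling argument so that no union over $n$ is needed. Either way the i.i.d.\ structure of the per-visit outcomes is what rescues the argument. The rest---the union bound over triples, the reduction of $\geventi{m}$ to $\gevent$, and the constant bookkeeping so that the displayed coefficients $28$ and $4$ (equivalently $28A(s,a)$ and $4\sqrt{\bar P A(s,a)}$) come out right---is routine.
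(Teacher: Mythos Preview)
Your proposal is correct and follows essentially the same approach as the paper's proof: fix a triple $(s,a,s')$, exploit the i.i.d.\ structure of the per-visit Bernoulli outcomes to apply an empirical Bernstein bound uniformly in the visit count, then union-bound over all $(s,a,s') \in S \times A \times S^+$. The paper implements your option (i), taking $\delta_t = \delta/(4|S|^2|A|t^2)$ so that the sum over $t$ converges---note your suggested $\delta' \propto 1/n$ would not sum, but you already flag the time-uniform/peeling alternative, so this is only a bookkeeping detail.
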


\paragraph{Regret analysis.}

In the following section, instead of bounding $\regret$, we bound $\tregret{M} = \sum_{m=1}^\numintervals \sum_{h=1}^{\Hm} c(s_h^m, a_h^m) \indgeventi{m} - K \ctgopt(\sinit)$ for any number of intervals $M$. 
This implies \cref{thm:bern-reg-bound} by the following argument.
\cref{lem:bern-num-time-steps} implies that $\tregret{M} = R_M$ with high probability for any number of intervals $M$ ($R_M$ is the true regret within the first $M$ intervals). In particular, when $M$ is the number of intervals in which the first $K$ episodes elapse, this implies \cref{thm:bern-reg-bound} (we show that the learner indeed completes these $K$ episodes).

To bound $\tregret{M}$, we use the next lemma to decompose $\tregret{M}$ into two terms which we bound independently.

\begin{lemma}
\label{lem:bern-reg-decomp}
It holds that
$
    \tregret{M} 
    =
    \sum_{m=1}^\numintervals \tregret{m}^1
    +
    \sum_{m=1}^\numintervals \tregret{m}^2
     - 
     K \cdot \ctgopt(\sinit),
$
where
\begin{alignat*}{2}
    &\tregret{m}^1 &&= \bigl( \optimisticctg{m}(s_1^m) - \optimisticctg{m}(s_{\Hm+1}^m) \bigr) \indgeventi{m}, 
    \quad \text{and} \\
    &\tregret{m}^2 &&= \Biggl( \sum_{h=1}^{\Hm} \optimisticctg{m}(s_{h+1}^m) - \sum_{s' \in S} \wt P_{m}(s' \mid s_h^m,a_h^m) \optimisticctg{m}(s') \Biggr) \indgeventi{m}.
\end{alignat*}
\end{lemma}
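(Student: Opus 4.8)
The plan is to prove the identity by a direct algebraic manipulation of $\tregret{M}$, inserting and cancelling the optimistic cost-to-go values along each trajectory in a telescoping fashion. Recall that $\tregret{M} = \sum_{m=1}^{\numintervals}\sum_{h=1}^{\Hm} c(s_h^m,a_h^m)\indgeventi{m} - K\cdot\ctgopt(\sinit)$. The first step is to rewrite the per-interval cost $\sum_{h=1}^{\Hm} c(s_h^m,a_h^m)\indgeventi{m}$ using the optimistic Bellman equations \cref{eq:hoff-optimisitc-bellman} (which hold here for epoch $i(m)$, since all policies are proper in the optimistic model defined by $\wt P_m$, and $\tilde\pi^m = \tilde\pi^{i(m)}$). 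Concretely, for each time step $h$ within interval $m$, since $a_h^m = \tilde\pi^m(s_h^m)$ is the action chosen by the optimistic optimal policy, the Bellman equation gives
\[
    c(s_h^m,a_h^m) = \optimisticctg{m}(s_h^m) - \sum_{s'\in S}\wt P_m(s'\mid s_h^m,a_h^m)\,\optimisticctg{m}(s').
\]

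Second, I would substitute this into the sum over $h$ and add and subtract $\sum_{h=1}^{\Hm}\optimisticctg{m}(s_{h+1}^m)$. This splits the per-interval cost (times $\indgeventi{m}$) into exactly the two pieces in the statement: a telescoping piece $\sum_{h=1}^{\Hm}\bigl(\optimisticctg{m}(s_h^m) - \optimisticctg{m}(s_{h+1}^m)\bigr)\indgeventi{m} = \bigl(\optimisticctg{m}(s_1^m) - \optimisticctg{m}(s_{\Hm+1}^m)\bigr)\indgeventi{m} = \tregret{m}^1$, and the ``martingale-type'' piece $\tregret{m}^2 = \bigl(\sum_{h=1}^{\Hm}\optimisticctg{m}(s_{h+1}^m) - \sum_{s'}\wt P_m(s'\mid s_h^m,a_h^m)\optimisticctg{m}(s')\bigr)\indgeventi{m}$. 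Summing over all intervals $m=1,\dots,\numintervals$ and subtracting $K\cdot\ctgopt(\sinit)$ yields the claimed identity. The one bookkeeping point to check is that $\optimisticctg{m}$ is well-defined and finite on all of $S$, which follows because the optimistic transition function $\wt P_m$ from \cref{eq:bern-optimistic-transition} always leaves positive probability of going directly to $\ssink$, so every policy is proper in the optimistic model and \cref{lem:bertsekas-proper}(ii) applies; also one should use the convention $\optimisticctg{m}(\ssink) = 0$ so the $h=\Hm$ terms (where $s_{\Hm+1}^m$ may be $\ssink$) are handled consistently.

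There is essentially no hard step here — this is a purely formal rearrangement — but the part that requires the most care is making sure the indicator $\indgeventi{m}$ is carried through correctly (it multiplies the entire per-interval expression, not individual summands in a way that would break the telescoping) and that the decomposition is stated for an arbitrary number of intervals $M$ rather than a fixed $K$, so that it can later be combined with \cref{obs:cost-bounds-intervals} to solve the self-bounding inequality $C_M \lesssim K\ctgopt(\sinit) + \sqrt{\costbound\,C_M}$. The subsequent work (not part of this lemma) is to bound $\sum_m \tregret{m}^1$ by telescoping across intervals within an episode and using $\optimisticctg{m}(\sinit) \le \ctgopt(\sinit)$ by optimism, and to bound $\sum_m \tregret{m}^2$ via Freedman's inequality together with the Bernstein confidence bound \cref{eq:bernsteinconcentration} on $\geventi{m}$; but the lemma itself is just the clean decomposition above.
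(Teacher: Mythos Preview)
Your proposal is correct and follows essentially the same route as the paper: invoke the Bellman equations for $\tilde\pi^m$ in the optimistic model (the paper isolates this as \cref{lem:bellman-optimistic}, using exactly the observation you make that $\wt P_m$ puts positive mass on $\ssink$ so every policy is proper), substitute $c(s_h^m,a_h^m)\indgeventi{m}=\bigl(\optimisticctg{m}(s_h^m)-\sum_{s'}\wt P_m(s'\mid s_h^m,a_h^m)\optimisticctg{m}(s')\bigr)\indgeventi{m}$, add and subtract $\optimisticctg{m}(s_{h+1}^m)$, and telescope. Your bookkeeping remarks about $\optimisticctg{m}(\ssink)=0$ and carrying the indicator through are exactly the points the paper relies on implicitly.
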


The lemma breaks down $\tregret{M}$ into two terms.
The first term accounts for the number of times in which the learner changes her policy in the middle of an episode which is at most the number of epochs.
The second term sums the errors between the cost-to-go of the observed next state and its estimated expectation. 

Indeed, $\sum_{m=1}^M \tregret{m}^1$ is related to the total number of epochs which is at most $|S| |A| \log_2 T$ due to the following lemma.

%

\begin{lemma}
\label{lem:bern-first-reg-term-bound}
    It holds that
    $
        \sum_{m=1}^\numintervals \tregret{m}^1 \le 2 \costbound |S| |A| \log \totaltime + K \ctgopt(\sinit).
    $
\end{lemma}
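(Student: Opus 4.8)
The plan is to expand the telescoping sum $\sum_{m=1}^{\numintervals} \tregret{m}^1 = \sum_{m=1}^{\numintervals}\bigl(\optimisticctg{m}(s_1^m) - \optimisticctg{m}(s_{\Hm+1}^m)\bigr)\indgeventi{m}$ and exploit the fact that consecutive intervals are glued together: the last state $s_{\Hm+1}^m$ of interval $m$ equals the first state $s_1^{m+1}$ of interval $m+1$, unless interval $m$ ended an episode (in which case the next interval restarts at $\sinit$) or the algorithm has finished all $K$ episodes. So I would split the index set of intervals according to which epoch they belong to, using $\optimisticctg{m} = \optimisticctg{i(m)}$ (noted in the paper): within a single epoch the optimistic cost-to-go function is \emph{fixed}, so the terms telescope cleanly. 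The only places the telescoping "breaks" and leaves a residual $+\optimisticctg{i}(s)$ term are (i) at the boundary between two epochs, where $\optimisticctg{i}$ changes to $\optimisticctg{i+1}$, and (ii) at the end of an episode, where the trajectory jumps back to $\sinit$.

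Concretely, I would write $\sum_m \tregret{m}^1 \le \sum_m \optimisticctg{i(m)}(s_1^m)\indgeventi{m} - \sum_m \optimisticctg{i(m)}(s_{\Hm+1}^m)\indgeventi{m}$ and pair up the $-\optimisticctg{i(m)}(s_{\Hm+1}^m)$ term of interval $m$ with the $+\optimisticctg{i(m+1)}(s_1^{m+1})$ term of interval $m+1$. When $i(m+1) = i(m)$ and interval $m$ did not end an episode, $s_1^{m+1} = s_{\Hm+1}^m$ and the two cancel. Otherwise we are left with at most one extra $\optimisticctg{i(m+1)}(s_1^{m+1})$ term, which is bounded by $\max_{s\in S^+}\optimisticctg{i(m+1)}(s) \le \costbound$ (here I would invoke the first part of the proof of \cref{lem:hoff-num-time-steps}: under $\geventi{m}$, optimism gives $\optimisticctg{m}(s) \le \ctgopt(s) \le \costbound$ for all $s$, and $\optimisticctg{\cdot}(\ssink) = 0$). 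The number of epoch-boundary breaks is the number of epochs minus one, which is at most $|S||A|\log_2 T$ since each epoch ends when some visit count doubles and each count starts at $1$ and never exceeds $T$. The number of episode-ending breaks is exactly $K$, but each of these contributes $\optimisticctg{i}(\sinit)$, and I would account for these $K$ terms separately rather than by the crude $\costbound$ bound — indeed $\optimisticctg{m}(\sinit) \le \ctgopt(\sinit)$ under $\geventi{m}$, giving the $+K\ctgopt(\sinit)$ term in the statement. Combining: the epoch boundaries give $\le 2\costbound|S||A|\log T$ (the factor $2$ absorbs the $\log_2$-vs-$\log$ constant and any off-by-one), and the episode boundaries give $\le K\ctgopt(\sinit)$, and the very first $+\optimisticctg{1}(s_1^1)$ term and the very last $-\optimisticctg{\numintervals}(s_{H^{\numintervals}+1}^{\numintervals})$ term are handled within these counts (the first by $\le \ctgopt(\sinit) \le \costbound$, the last is nonpositive so can be dropped).

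The main obstacle I expect is bookkeeping rather than a deep idea: carefully matching which residual boundary terms should be charged as $\costbound$ (epoch changes) versus $\ctgopt(\sinit)$ (episode restarts), making sure the indicator $\indgeventi{m}$ is handled consistently when intervals $m$ and $m+1$ lie in different epochs (so one might have $\indgeventi{m}$ and the other $\indgeventi{m+1}$), and verifying that dropping the final term is legitimate because $\optimisticctg{}$ is nonnegative. One clean way to sidestep the indicator subtlety is to note that $\indgeventi{m}$ is determined by the epoch $i(m)$, so within an epoch all intervals share the same indicator value, and across an epoch boundary we simply bound the leftover term by $\costbound$ regardless of the indicator (using $0 \le \optimisticctg{}(s) \le \costbound$). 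The bound on the number of epochs is the standard doubling argument and requires nothing beyond $T < \infty$ on the good event, which is guaranteed once all state-action pairs become known as discussed before the lemma.
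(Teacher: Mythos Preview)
Your proposal is correct and follows essentially the same approach as the paper: a case analysis on consecutive interval boundaries, distinguishing (i) episode ends (charged $\ctgopt(\sinit)$ via optimism), (ii) same-epoch non-episode ends (exact cancellation since $s_1^{m+1}=s_{H^m+1}^m$ and $\optimisticctg{i(m)}=\optimisticctg{i(m+1)}$), and (iii) epoch changes (charged $\costbound$, at most $2|S||A|\log T$ times by the doubling argument). The only minor remark is that the optimism bound $\optimisticctg{m}(s)\le\ctgopt(s)$ you need is the Bernstein-section analogue (\cref{lem:bern-opt-val-bound}), not the Hoeffding one you cite, but the content is the same.
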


The next lemma shows that $\sum_{m=1}^M \tregret{m}^2$ does not deviate from $\sum_{m=1}^\numintervals \bbE[ \tregret{m}^2 \mid \trajconcat{m-1}]$ significantly.

\begin{lemma}
    \label{lem:optimistic-val-func-diff-to-expected}
    With probability at least $1 - \delta / 4$, 
    \begin{align}
        \sum_{m=1}^\numintervals \tregret{m}^2
        \le
        \sum_{m=1}^\numintervals \bbE \bigl[ \tregret{m}^2 \mid \trajconcat{m-1} \bigr]
        +
        3 \costbound \sqrt{\numintervals \log \frac{8 \numintervals}{\delta}}.
        \nonumber
    \end{align}
\end{lemma}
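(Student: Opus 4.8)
The plan is to exhibit $\sum_{m=1}^{\numintervals}\tregret{m}^2$ as essentially a sum of bounded martingale differences and then invoke the Azuma--Hoeffding inequality; almost all of the work is in bounding the contribution of a single interval by $O(\costbound)$. Fix an interval $m$ and work on $\geventi{m}$ (off $\geventi{m}$ we have $\tregret{m}^2=0$). Recall that every policy is proper in the optimistic model $\wt P_m$ of \cref{eq:bern-optimistic-transition} (it keeps strictly positive probability of transitioning to $\ssink$ from every state-action pair), so $\optimisticctg{m}$ is finite and obeys the Bellman equations already used in \cref{lem:bern-reg-decomp}; in particular $\sum_{s'\in S}\wt P_m(s'\mid s_h^m,a_h^m)\optimisticctg{m}(s')=\optimisticctg{m}(s_h^m)-c(s_h^m,a_h^m)$ for each $h$. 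Substituting this into the definition of $\tregret{m}^2$ and telescoping over $h=1,\dots,\Hm$ yields
\[
    \tregret{m}^2=\Bigl(\optimisticctg{m}(s_{\Hm+1}^m)-\optimisticctg{m}(s_1^m)+\sum_{h=1}^{\Hm}c(s_h^m,a_h^m)\Bigr)\indgeventi{m}.
\]
On $\geventi{m}$ the optimism of the construction gives $0\le\optimisticctg{m}(s)\le\ctgopt(s)\le\costbound$ for every $s\in S$ (exactly as in the first part of the proof of \cref{lem:hoff-num-time-steps}) and $\optimisticctg{m}(\ssink)=0$, while the interval-termination rule together with $c(\cdot,\cdot)\in[0,1]$ bounds the cost accumulated inside the interval by $\costbound+1$. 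Hence $|\tregret{m}^2|=O(\costbound)$, and a careful accounting of the additive constant gives $|\tregret{m}^2|\le 3\costbound$. (Note also that every epoch, hence every interval, terminates in finitely many steps, so $\Hm<\infty$ and $\tregret{m}^2$ is a well-defined, bounded random variable.)

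Next, let $\mathcal{F}_m:=\sigma(\trajconcat{m})$ be the $\sigma$-algebra generated by the trajectories of the first $m$ intervals. The policy $\tilde\pi^m$, the optimistic kernel $\wt P_m$ and the value function $\optimisticctg{m}$ are recomputed at an epoch boundary that precedes interval $m$, and the confidence event $\geventi{m}$ depends only on the empirical counts at the start of epoch $i(m)$; all of these are therefore $\mathcal{F}_{m-1}$-measurable, while $\traj{m}$, and hence $\tregret{m}^2$, is $\mathcal{F}_m$-measurable. Consequently
\[
    X_m:=\tregret{m}^2-\bbE\bigl[\tregret{m}^2\mid\trajconcat{m-1}\bigr]
\]
is a martingale difference sequence with respect to $\{\mathcal{F}_m\}$, and by the previous paragraph $|X_m|=O(\costbound)$ (indeed $X_m=0$ off $\geventi{m}$). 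The subtraction of $\bbE[\cdot\mid\trajconcat{m-1}]$ is essential and not a mere technicality: $\tregret{m}^2$ is itself \emph{not} a martingale difference, because its inner sum averages $\optimisticctg{m}$ against the optimistic kernel $\wt P_m$ rather than against the true kernel $P$ that actually generates $s_{h+1}^m$, and it is precisely this true-kernel centering that produces the term $\sum_m\bbE[\tregret{m}^2\mid\trajconcat{m-1}]$ on the right-hand side of the lemma.

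It remains to apply Azuma--Hoeffding to $\sum_{m=1}^M X_m$. For each fixed $M$ this gives $\sum_{m=1}^M X_m\le O\bigl(\costbound\sqrt{M\log(M/\delta)}\bigr)$ with probability at least $1-\delta/(8M^2)$; since the weights $\delta/(8M^2)$ sum to less than $\delta/4$, a union bound over all $M\ge1$ makes the inequality hold for every $M$ simultaneously, in particular for $M=\numintervals$, and tracking constants delivers the stated $3\costbound\sqrt{\numintervals\log(8\numintervals/\delta)}$. The main obstacle is the very first step: a priori $\tregret{m}^2$ is a sum of $\Hm$ terms each of magnitude $\Theta(\costbound)$ with $\Hm$ having no deterministic bound, so without the telescoping identity above — which relies on the optimistic Bellman equations, the optimism bound $\optimisticctg{m}\le\costbound$, and the interval's built-in cost cap — there is no $O(\costbound)$ per-interval range bound and hence no route to a $\tO(\costbound\sqrt{\numintervals})$ deviation; the measurability bookkeeping and the Azuma step are routine once the range bound is in hand.
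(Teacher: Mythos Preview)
Your proposal is correct and follows essentially the same route as the paper: use the optimistic Bellman equations to telescope $\tregret{m}^2$ into $\optimisticctg{m}(s_{\Hm+1}^m)-\optimisticctg{m}(s_1^m)+\sum_h c(s_h^m,a_h^m)$, bound this by $3\costbound$ via optimism and the interval cost cap, and then apply anytime Azuma to the centered sequence. The paper's proof is identical up to cosmetic details (it bounds the interval cost by $2\costbound$ rather than your $\costbound+1$, and it cites \cref{lem:bern-opt-val-bound} rather than the Hoeffding-section argument for $\optimisticctg{m}\le\costbound$).
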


The key property of the lemma is that the deviations between $\sum_{m=1}^\numintervals \tregret{m}^2$ and its corresponding expectation is of order $\sqrt{M}$ and do not scale with $T$. 

To prove the lemma, we recall that an interval ends at most at the first time step in which the accumulated cost in the interval surpasses $\costbound$. We show in our analysis that $\optimisticctg{m}(s) \le \ctgopt(s) \le \costbound$ for all $s \in S$ due to the optimistic  computation of $\tilde \pi^m$. Therefore, $\tilde \pi^m$ never picks an action whose instantaneous cost is more than $\costbound$. This implies that the total cost within each interval is at most $2 \costbound$. 
Then, we use the Bellman equations to bound $\tregret{m}^2$ by order of the total cost in the interval, 
and the lemma follows by an application of Azuma's concentration inequality.

\cref{lem:sum-bern-bounds} below bounds $\bbE \bigl[ \tregret{m}^2 \mid \trajconcat{m-1} \bigr]$ for every interval $m$ by a sum of the confidence bounds used in \cref{alg:bern-o-ssp}. 
\begin{lemma}
    \label{lem:sum-bern-bounds}
    For every interval $m$, 
    \begin{align}
        \bbE \bigl[ \tregret{m}^2 \mid \trajconcat{m-1} \bigr] 
        \le 
        16 \bbE \Biggl[ 
        \sum_{h=1}^{\Hm} \sqrt{|S| \bbV_h^m A_h^m} \indgeventi{m} 
        \biggm| \trajconcat{m-1} \Biggr]
        + 
        272 \bbE \Biggl[ \sum_{h=1}^{\Hm} \costbound |S| A_h^m \indgeventi{m} 
        \biggm| \trajconcat{m-1} \Biggr],
         \label{eq:bern-cb-bound}
    \end{align}
    where $\bbV_h^m$ is the empirical variance defined as
    $
        \bbV_h^m 
        = 
        \sum_{s' \in S^+} P(s' \mid s_{h}^m,a_{h}^m) \bigl(\optimisticctg{m}(s') - 
        \mu_h^m \bigr)^2,
    $
    and $\mu_h^m = \sum_{s' \in S^+} P(s' \mid s_{h}^m,a_{h}^m) \optimisticctg{m}(s')$.
\end{lemma}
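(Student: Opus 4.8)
The plan is to expand $\bbE[\tregret{m}^2 \mid \trajconcat{m-1}]$ term-by-term, replacing each summand $\optimisticctg{m}(s_{h+1}^m) - \sum_{s'} \wt P_m(s'\mid s_h^m,a_h^m)\optimisticctg{m}(s')$ by its conditional expectation along the trajectory. Conditioning on the state-action pair $(s_h^m, a_h^m)$ (using the Markov property and the fact that $\wt P_m$ is $\trajconcat{m-1}$-measurable, since the optimistic model is fixed within an epoch), the expectation of $\optimisticctg{m}(s_{h+1}^m)$ is $\sum_{s'\in S^+} P(s'\mid s_h^m,a_h^m)\optimisticctg{m}(s')$. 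Hence each summand becomes $\sum_{s'\in S^+}\bigl(P(s'\mid s_h^m,a_h^m) - \wt P_m(s'\mid s_h^m,a_h^m)\bigr)\optimisticctg{m}(s')$, and I would bound this pointwise by $\sum_{s'} |P - \wt P_m|\cdot \optimisticctg{m}(s')$.

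The next step is to control $|P(s'\mid s,a) - \wt P_m(s'\mid s,a)|$. On the event $\geventi{m}$ both $P$ and $\wt P_m$ lie within the Bernstein confidence range around $\bar P_m$ given by~\cref{eq:bernsteinconcentration} and~\cref{eq:bern-optimistic-transition}, so their difference is at most twice the confidence width: $|P - \wt P_m| \le 56 A_h^m + 8\sqrt{\bar P_m(s'\mid s,a) A_h^m}$. I then need to trade the empirical $\bar P_m$ for the true $P$ inside the square root; a standard manipulation (e.g.\ $\bar P_m \le 2P + O(A_h^m)$, which again follows from~\cref{eq:bernsteinconcentration}) converts this into a bound of the form $O\bigl(A_h^m + \sqrt{P(s'\mid s,a) A_h^m}\bigr)$. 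Plugging back in, $\bbE[\tregret{m}^2\mid\trajconcat{m-1}]$ is bounded by $\bbE\bigl[\sum_h \sum_{s'\in S^+}\bigl(O(A_h^m) + O(\sqrt{P(s'\mid s,a)A_h^m})\bigr)\optimisticctg{m}(s')\,\indgeventi{m}\bigr]$.

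It remains to re-express this in terms of the empirical variance $\bbV_h^m$. For the square-root term I would apply Cauchy--Schwarz over $s'\in S^+$: $\sum_{s'}\sqrt{P(s'\mid s,a)A_h^m}\,\optimisticctg{m}(s') \le \sqrt{A_h^m}\sqrt{|S^+|}\sqrt{\sum_{s'}P(s'\mid s,a)\optimisticctg{m}(s')^2}$, and then use the identity $\sum_{s'}P(s'\mid s,a)\optimisticctg{m}(s')^2 = \bbV_h^m + (\mu_h^m)^2$ together with $\mu_h^m = \optimisticctg{m}(s_h^m) - c(s_h^m,a_h^m) \le \costbound$ (from the Bellman equation~\cref{eq:hoff-optimisitc-bellman} and the bound $\optimisticctg{m}(s)\le\costbound$ that holds on $\geventi{m}$ by the optimism argument used earlier). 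This splits the square-root contribution into a $\sqrt{|S|\bbV_h^m A_h^m}$ piece and a $\costbound\sqrt{|S|}\,\sqrt{A_h^m}$ piece; the latter, since $A_h^m \le 1$ so $\sqrt{A_h^m}\ge A_h^m$ up to the known lower bound on $N_+$, folds into the $\costbound|S|A_h^m$ term. For the linear term, $\sum_{s'\in S^+}\optimisticctg{m}(s') \le |S|\costbound$ directly, giving another $\costbound|S|A_h^m$ contribution. Collecting constants yields the stated $16$ and $272$.

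The main obstacle is the careful bookkeeping in replacing $\bar P_m$ by $P$ inside the square roots while keeping the constants explicit, and in verifying that $\mu_h^m \le \costbound$ and $\optimisticctg{m}(s)\le\costbound$ genuinely hold on $\geventi{m}$ — this last point relies on the optimistic-computation argument (that $\wt P_m$ admits a direct transition to the goal from every state-action, making every policy proper in the optimistic model, so $\optimisticctg{m}$ solves the Bellman equations and is dominated by $\ctgopt \le \costbound$). Everything else is a routine, if delicate, chain of Cauchy--Schwarz and the variance decomposition.
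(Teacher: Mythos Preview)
Your approach has a genuine gap at the step where you claim the $\costbound\sqrt{|S|}\,\sqrt{A_h^m}$ piece ``folds into'' the $\costbound|S|A_h^m$ term. You correctly observe that $A_h^m\le 1$ gives $\sqrt{A_h^m}\ge A_h^m$, but this is the wrong direction: you need $\sqrt{A_h^m}$ bounded \emph{above} by a constant multiple of $A_h^m$, and that fails because $A_h^m\to 0$ as $N_+^m(s_h^m,a_h^m)\to\infty$. No universal constant absorbs $\costbound\sqrt{|S|A_h^m}$ into $272\,\costbound|S|A_h^m$ (nor into the variance term, since $\bbV_h^m$ can vanish). The culprit is your variance decomposition $\sum_{s'}P\,\optimisticctg{m}(s')^2=\bbV_h^m+(\mu_h^m)^2$: the $(\mu_h^m)^2$ term is precisely what generates the uncontrollable remainder.

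The paper avoids this by centering \emph{before} bounding $|P-\wt P_m|$. Since both $P(\cdot\mid s_h^m,a_h^m)$ and $\wt P_m(\cdot\mid s_h^m,a_h^m)$ are probability distributions over $S^+$, one has for free
\[
\sum_{s'\in S^+}\bigl(P(s'\mid s_h^m,a_h^m)-\wt P_m(s'\mid s_h^m,a_h^m)\bigr)\,\optimisticctg{m}(s')
=
\sum_{s'\in S^+}\bigl(P-\wt P_m\bigr)\bigl(\optimisticctg{m}(s')-\mu_h^m\bigr).
\]
Only then is $|P-\wt P_m|\le 8\sqrt{P(s'\mid\cdot)A_h^m}+136A_h^m$ applied (\cref{lem:relbernstein}). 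Cauchy--Schwarz over $s'\in S^+$ now yields $8\sqrt{|S^+|\,\bbV_h^m A_h^m}\le 16\sqrt{|S|\,\bbV_h^m A_h^m}$ directly, with no leftover second-moment term, and the linear piece uses $|\optimisticctg{m}(s')-\mu_h^m|\le\costbound$ and $|S^+|\le 2|S|$ to give $272\,\costbound|S|A_h^m$.

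Two smaller points. First, $\mu_h^m$ is defined via the true $P$, not $\wt P_m$, so your formula $\mu_h^m=\optimisticctg{m}(s_h^m)-c(s_h^m,a_h^m)$ from the optimistic Bellman equation is incorrect; the bound $\mu_h^m\le\costbound$ still holds, but simply because $\mu_h^m$ is a convex combination of values in $[0,\costbound]$. Second, ``replace each summand by its conditional expectation'' glosses over the fact that $\Hm$ is a random stopping time; the paper handles this by applying the optional stopping theorem to the bounded martingale differences $Z_h^m=\bigl(\optimisticctg{m}(s_{h+1}^m)-\sum_{s'}P(s'\mid s_h^m,a_h^m)\optimisticctg{m}(s')\bigr)\indgeventi{m}$, using $\Hm\le 2\costbound/\cmin$.
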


The next step is the part of our proof in which our analysis departs from that of \cref{alg:knownb}. 
Note that when $\geventi{m}$ holds, $\bbV_h^m \le \costbound^2$.
Using this bound for each time step separately will result in a bound similar to that of \cref{thm:reg-bound-unknownb}. 
However, this bound is loose due to the following intuitive argument. Suppose that we replace $\optimisticctg{m}$ with the true cost-to-go function of $\tilde \pi^{m}$, $\ctg{m}$, in the definition of $\bbV_h^m$. Note that from the Bellman equations (\cref{eq:bellman}) we have $\ctg{m}(s_h^m) > \ctg{m}(s_{h+1}^m)$ in expectation on consecutive time steps $h$ and $h+1$ hence we surmise that in expectation $\bbV_h^m$ would also decrease on consecutive time steps.
A similar argument holds when in reality we use $\optimisticctg{m}$ because all-but-one of the state-action pairs in the interval are known, and $\optimisticctg{m}$ is a ``close enough'' approximation of $\ctg{m}$ on known state-action pairs since they have been sampled sufficiently many times.
Indeed, in \cref{lem:bounded-V-h-i} we use the technique of \citet{azar2017minimax} to show that (up to a constant) $\costbound^2$ bounds the expected sum of the variances over the time steps of an interval. 

\begin{lemma}
\label{lem:bounded-V-h-i}
    $
        \bbE \bigl[\sum_{h=1}^{\Hm} \bbV_h^m \indgeventi{m} \mid \trajconcat{m-1} \bigr]
        \le
        44 \costbound^2.
    $
\end{lemma}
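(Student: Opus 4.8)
The plan is to adapt the ``law of total variance'' argument of \citet{azar2017minimax} to the SSP setting. The key identity is that for a fixed (proper) policy $\pi$ with true cost-to-go $\ctg{\pi}$, along a trajectory $s_1, a_1, s_2, a_2, \dots$ one has the telescoping relation
\[
    \sum_{h} \Bigl( \ctg{\pi}(s_h) - \ctg{\pi}(s_{h+1}) \Bigr)
    =
    \ctg{\pi}(s_1) - \ctg{\pi}(s_{\text{last}}),
\]
while by the Bellman equations (\cref{lem:bertsekas-proper}(ii)) the \emph{expected} one-step drop equals $c(s_h,a_h)$, so the martingale differences $\ctg{\pi}(s_h) - c(s_h,a_h) - \ctg{\pi}(s_{h+1})$ have conditional variance exactly $\bbV^{\pi}_h := \sum_{s'} P(s'\mid s_h,a_h)\bigl(\ctg{\pi}(s') - \sum_{s''}P(s''\mid s_h,a_h)\ctg{\pi}(s'')\bigr)^2$. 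Squaring the telescoping sum and taking expectations then shows $\bbE[\sum_h \bbV^{\pi}_h] \le \bbE[(\ctg{\pi}(s_1))^2] + (\text{cross terms that vanish by the martingale property}) \lesssim \costbound^2$, since $\ctg{\pi}(s_1) \le \max_s \ctg{\pi}(s)$ and, crucially, on the event $\geventi{m}$ the optimistic policy $\tilde\pi^m$ is proper in the true model once all relevant state-action pairs are known, with $\ctg{m}(s) \le 2\costbound$ for all $s$ (established in the second part of the proof of \cref{lem:hoff-num-time-steps}).

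The main technical complication — and what I expect to be the hard part — is that $\bbV_h^m$ is defined using the \emph{optimistic} value function $\optimisticctg{m}$, not the true $\ctg{m}$, and the trajectory may begin at an unknown state-action pair before all subsequent pairs in the interval are known. So first I would bound $\bbV_h^m$ in terms of a ``true'' variance using $\ctg{m}$ plus an approximation error: since all state-action pairs visited after the first in interval $m$ are known, they have been sampled at least $\alpha\cdot\numvisitsuntilknownbern$ times, and the Bernstein confidence bounds (\cref{eq:bernsteinconcentration}) together with the Bellman equations for $\optimisticctg{m}$ (\cref{eq:hoff-optimisitc-bellman}) give $|\optimisticctg{m}(s) - \ctg{m}(s)| \le \cmin/(\text{large constant})$, or more precisely small relative to $\costbound$, on those states; using $(a-b)^2 \le 2a^2 + 2b^2$ I can replace $\optimisticctg{m}$ by $\ctg{m}$ in the variance at the cost of an additive $O(1)$ per step, which over an interval of length $O(\costbound/\cmin)$ contributes only $O(\costbound)$ in expectation — dominated by the $44\costbound^2$ bound. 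The single unknown first step contributes at most $O(\costbound^2)$ to the variance (since $\optimisticctg{m} \le \costbound$ pointwise, each term $(\optimisticctg{m}(s') - \mu_h^m)^2 \le \costbound^2$) and can be absorbed as well.

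Second, I would carry out the telescoping/martingale computation cleanly. Define $X_h = \ctg{m}(s_h^m) - c(s_h^m,a_h^m) - \ctg{m}(s_{h+1}^m)$ for $h = 2,\dots,\Hm$ (skipping the first step, whose variance was bounded separately); restricted to $\geventi{m}$ and conditioned on $\trajconcat{m-1}$ and the trajectory up to step $h$, $\bbE[X_h] = 0$ by the Bellman equations for $\ctg{m}$, and $\bbE[X_h^2] = \bbV^{m}_h$ (the true-value variance). Then
\[
    \bbE\Bigl[\sum_{h\ge 2} \bbV^{m}_h \indgeventi{m}\Bigr]
    =
    \bbE\Bigl[\Bigl(\sum_{h\ge 2} X_h\Bigr)^2 \indgeventi{m}\Bigr]
    \le
    \bbE\Bigl[\bigl(\ctg{m}(s_2^m) - \ctg{m}(s_{\Hm+1}^m) - \textstyle\sum_{h\ge 2} c(s_h^m,a_h^m)\bigr)^2 \indgeventi{m}\Bigr],
\]
and since on $\geventi{m}$ we have $0 \le \ctg{m}(s_2^m) \le 2\costbound$, $\ctg{m}(s_{\Hm+1}^m) \ge 0$, and $\sum_{h\ge 2} c(s_h^m,a_h^m) \le 2\costbound$ (because the interval ends once accumulated cost reaches $\costbound$ and no single step costs more than $\costbound$), each of the three summands inside the square is bounded in $[-2\costbound, 2\costbound]$, so the square is at most $(6\costbound)^2 = 36\costbound^2$; combining with the $O(\costbound^2)$ from the first step and the $O(\costbound)$ approximation error, and tracking the constants carefully, yields the claimed $44\costbound^2$. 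The one subtlety to watch is that the cross-terms $\bbE[X_h X_{h'}]$ for $h \ne h'$ vanish only because of the martingale structure — I need to condition appropriately and note that $\indgeventi{m}$ is measurable with respect to the information at the end of epoch $i(m)$, hence does not disturb the martingale differences within the interval; I would state this carefully (or, alternatively, apply the identity $\bbE[(\sum X_h)^2] = \sum \bbE[X_h^2]$ for martingale difference sequences and then use that $\geventi{m}$ can only shrink the expectation of a nonnegative quantity).
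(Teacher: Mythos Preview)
Your high-level plan---martingale variance identity plus telescoping, with the first (possibly unknown) step handled separately---matches the paper's. But the proposed switch from $\optimisticctg{m}$ to the true cost-to-go $\ctg{m}$ has a real gap: you invoke $\ctg{m}(s) \le 2\costbound$ by citing the proof of \cref{lem:hoff-num-time-steps}, yet that argument is for the \emph{Hoeffding} algorithm, whose ``known'' threshold is $\Theta(\costbound^2 |S|/\cmin^2)$. The Bernstein algorithm's threshold is only $\Theta(\costbound |S|/\cmin)$, and with that many samples the per-state Bernstein bound (\cref{lem:bern-known-state}) yields $\|\wt P_m(\cdot\mid s,a) - P(\cdot\mid s,a)\|_1 = \Theta(\sqrt{\cmin/\costbound})$, which is \emph{not} $\le c(s,a)/(2\costbound)$ in general, so \cref{lem:close-mdps-proper} does not apply and you cannot conclude $\ctg{m} \le 2\costbound$---or even that $\tilde\pi^m$ is proper in the true model. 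Without this, both your variance-replacement step and your final $(6\costbound)^2$ bound collapse. (The paper explicitly notes in \cref{sec:bern-analysis} that properness of $\tilde\pi^m$ is never proved directly in the Bernstein analysis.) The same issue undermines your claimed pointwise bound $|\optimisticctg{m}(s)-\ctg{m}(s)|\le\cmin/(\text{const})$, which would require a simulation-lemma argument you have not supplied.

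The paper's proof avoids $\ctg{m}$ entirely and works with $\optimisticctg{m}$ throughout. The martingale differences are $Z_h^m = \bigl(\optimisticctg{m}(s_{h+1}^m) - \sum_{s'} P(s'\mid s_h^m,a_h^m)\optimisticctg{m}(s')\bigr)\indgeventi{m}$, whose conditional variance is exactly $\bbV_h^m$; \cref{lem:martingalevariance} gives $\bbE[\sum_{h\ge 2}\bbV_h^m\indgeventi{m}\mid\trajconcat{m-1}]=\bbE[(\sum_{h\ge 2}Z_h^m)^2\mid\trajconcat{m-1}]$. To bound $|\sum_{h\ge 2} Z_h^m|$ the paper splits it into a telescope in $\optimisticctg{m}$ ($\le \costbound$), the \emph{optimistic} Bellman equations which give back the interval's cost ($\le 2\costbound$), and an error term $\sum_h \sum_{s'} (P-\wt P_m)(s'\mid s_h^m,a_h^m)\bigl(\optimisticctg{m}(s')-\mu_h^m\bigr)$. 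On this last piece the per-state Bernstein bound for known pairs (\cref{lem:bern-known-state}) combined with Cauchy--Schwarz returns a term $\tfrac{1}{4}\sqrt{2\sum_h \bbV_h^m}$ plus $\costbound$. This produces the self-bounding inequality $\bbE[\sum_{h\ge 2} \bbV_h^m\indgeventi{m}] \le \bbE\bigl[(4\costbound + \tfrac{1}{4}\sqrt{2\sum_{h\ge 2} \bbV_h^m\indgeventi{m}})^2\bigr]$, which resolves to $\le 43\costbound^2$; adding the first step's $\costbound^2$ gives $44\costbound^2$. The key point is that the Bernstein known-threshold is calibrated precisely so that the error term becomes a strict contraction in $\sqrt{\sum \bbV_h^m}$---no appeal to properties of $\ctg{m}$ is needed.
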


Armed with \cref{lem:bounded-V-h-i}, we upper bound $\sum_{m=1}^\numintervals \bbE \bigl[ \tregret{m}^2 \mid \trajconcat{m-1} \bigr]$ by applying some algebraic manipulation on \cref{eq:bern-cb-bound}, and summing over all intervals which gives the next lemma.

\begin{lemma}
\label{lem:sum-bern-bounds-cont}
With probability at least $1-\delta/4$, 
\begin{align*}
    \sum_{m=1}^\numintervals \bbE \bigl[ \tregret{m}^2 \mid \trajconcat{m-1} \bigr]
    \le
    614 \costbound \sqrt{\numintervals |S|^2 |A| \log^2 \frac{\totaltime |S| |A|}{\delta}}
    + 
    8160 \costbound |S|^2 |A| \log^2 \frac{\totaltime |S| |A|}{\delta}.
\end{align*}
\end{lemma}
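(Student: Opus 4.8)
The plan is to start from the per-interval bound in \cref{lem:sum-bern-bounds} and sum it over all intervals $m = 1, \dots, \numintervals$. The right-hand side of \cref{eq:bern-cb-bound} has two pieces: a ``variance'' term involving $\sum_{h=1}^{\Hm} \sqrt{|S| \bbV_h^m A_h^m} \indgeventi{m}$ and a ``lower-order'' term involving $\sum_{h=1}^{\Hm} \costbound |S| A_h^m \indgeventi{m}$. First I would handle the lower-order term: since $A_h^m = \log(|S||A|N_+^m(s_h^m,a_h^m)/\delta)/N_+^m(s_h^m,a_h^m)$, summing $\sum_m \sum_h A_h^m$ over the trajectory is, by the standard ``visit-counting'' argument (each state-action pair $(s,a)$ contributes $\sum_{j} 1/j \le O(\log T)$ over its visits, and there are $|S||A|$ pairs), at most $O(|S||A| \log^2(T|S||A|/\delta))$. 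Multiplying by $\costbound |S|$ gives the second term $O(\costbound |S|^2 |A| \log^2(\cdots))$.

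Next I would bound the variance term. Here I would use Cauchy--Schwarz over the double sum (intervals $\times$ time steps within an interval): $\sum_{m,h} \sqrt{|S| \bbV_h^m A_h^m} \le \sqrt{|S|} \cdot \sqrt{\sum_{m,h} \bbV_h^m} \cdot \sqrt{\sum_{m,h} A_h^m}$, but to make the $\bbV_h^m$ sum controllable I would instead first move to expectations via \cref{lem:optimistic-val-func-diff-to-expected}'s companion (we are already inside $\bbE[\tregret{m}^2 \mid \trajconcat{m-1}]$). Applying \cref{lem:bounded-V-h-i}, $\bbE[\sum_{h=1}^{\Hm} \bbV_h^m \indgeventi{m} \mid \trajconcat{m-1}] \le 44 \costbound^2$, so summing over intervals gives $\sum_{m=1}^{\numintervals} \bbE[\sum_h \bbV_h^m \indgeventi{m} \mid \trajconcat{m-1}] \le 44 \costbound^2 \numintervals$. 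Then Cauchy--Schwarz on the expectation of the variance term yields roughly $\bbE[\sum_{m,h} \sqrt{|S| \bbV_h^m A_h^m}] \le \sqrt{|S|} \sqrt{\bbE[\sum_{m,h}\bbV_h^m]} \sqrt{\sum_{m,h} A_h^m} \le \sqrt{|S|} \sqrt{44 \costbound^2 \numintervals} \sqrt{O(|S||A|\log^2(\cdots))} = O(\costbound \sqrt{\numintervals |S|^2 |A| \log^2(\cdots)})$, which is the leading term. One subtlety is that $\sum_{m,h} A_h^m$ is itself a random quantity, so I would bound it deterministically on the good event (it is a function of the realized counts, which are always at most $T$), or absorb it by the fact that the total number of time steps is $T$ and splitting into a high-probability clause accounts for the remaining $\delta/4$.

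The main obstacle I anticipate is being careful with the interleaving of the conditional expectations and the Cauchy--Schwarz step: \cref{lem:bounded-V-h-i} controls $\bbE[\sum_h \bbV_h^m \indgeventi{m} \mid \trajconcat{m-1}]$ \emph{per interval}, not the unconditional sum, so to apply Cauchy--Schwarz across all $(m,h)$ I need to either (a) push the $A_h^m$ factors outside as a deterministic bound depending only on the counts and then use linearity of expectation and the tower property interval by interval, or (b) set up a martingale and use a Freedman/Azuma-type argument to move from $\sum_m \bbE[\cdot \mid \trajconcat{m-1}]$ back to realized sums --- but note the lemma statement is about $\sum_m \bbE[\tregret{m}^2 \mid \trajconcat{m-1}]$ directly, so option (a) is cleaner and avoids an extra concentration step. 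The $\sqrt{\numintervals}$ (rather than $\sqrt{T}$) dependence is exactly what \cref{lem:bounded-V-h-i} buys us, and threading that through the Cauchy--Schwarz correctly --- pairing each $\sqrt{\bbV_h^m A_h^m}$ contribution so that the $\bbV$'s group by interval while the $A$'s group by state-action pair --- is the delicate bookkeeping. Finally, I would collect the two terms and crudely bound the accumulated constants to get the stated $614$ and $8160$, and take a union bound over the $\delta/4$ events used, completing the proof.
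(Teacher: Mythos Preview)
Your overall plan matches the paper's: start from \cref{lem:sum-bern-bounds}, use \cref{lem:bounded-V-h-i} to control the variance term via Cauchy--Schwarz, and bound the $A_h^m$ sums by visit-counting. The per-interval Cauchy--Schwarz step in the paper is exactly
\[
\bbE\Bigl[\sum_{h} \sqrt{\bbV_h^m A_h^m}\,\indgeventi{m}\Bigm|\trajconcat{m-1}\Bigr]
\le \sqrt{\bbE\Bigl[\sum_h \bbV_h^m \indgeventi{m}\Bigm|\trajconcat{m-1}\Bigr]}\sqrt{\bbE\Bigl[\sum_h A_h^m \indgeventi{m}\Bigm|\trajconcat{m-1}\Bigr]}
\le 7\costbound\sqrt{\bbE\Bigl[\sum_h A_h^m \indgeventi{m}\Bigm|\trajconcat{m-1}\Bigr]},
\]
followed by Jensen over $m$ to pull out $\sqrt{M}$. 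Your global Cauchy--Schwarz over all $(m,h)$ would need the realized (not just expected) variance sum, so the per-interval version is what you want.

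The genuine gap is your preference for ``option (a)'' over ``option (b)''. The quantity you must bound is $\sum_m \bbE[\sum_h A_h^m \indgeventi{m}\mid \trajconcat{m-1}]$, a sum of \emph{conditional} expectations. This is not the realized sum $\sum_{m,h} A_h^m$ (which the visit-counting argument bounds deterministically), nor is it an unconditional expectation. You cannot ``push $A_h^m$ outside as a deterministic bound depending only on the counts'': the only $\trajconcat{m-1}$-measurable upper bound on $A_h^m$ is the trivial $\log(|S||A|/\delta)$, since $(s_h^m,a_h^m)$ is within-interval randomness, and summing that over $T$ steps is far too loose. What is actually needed is precisely your option (b): a martingale concentration inequality that converts $\sum_m \bbE[X^m\mid\trajconcat{m-1}]$ to $O(\sum_m X^m)$ plus a logarithmic correction, with high probability. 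The paper does this via an auxiliary lemma (a one-sided multiplicative Bernstein/Freedman-type bound) showing $\sum_m \bbE[X^m\mid\trajconcat{m-1}]\le 2\sum_m X^m + O(B\log(M/\delta))$; this is where the $\delta/4$ goes. Only after that step does the deterministic visit-counting argument on the realized $\sum_{m,h} A_h^m$ apply. So: keep option (b), not (a).
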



\cref{thm:bern-reg-bound} is obtained by first applying a union bound on \cref{lem:bern-num-time-steps,lem:sum-bern-bounds-cont,lem:optimistic-val-func-diff-to-expected}, plugging in the bounds of \cref{lem:bern-first-reg-term-bound,lem:optimistic-val-func-diff-to-expected,lem:sum-bern-bounds-cont} into \cref{lem:bern-reg-decomp}, and bounding $\totaltime$ and $\numintervals$ using \cref{obs:cost-bounds-intervals}. This results in a quadratic inequality in $\sqrt{C_M}$ and solving it yields the theorem. 

\subsection*{Acknowledgements}

HK is supported by the Israeli Science Foundation (ISF) grant 1595/19.

\bibliographystyle{plainnat}
\bibliography{bib}

\appendix

\section{Algorithm}
\label{sec:appendix-alg}

\begin{algorithm}[H]
    \caption{\sc Hoeffding-type confidence bounds}
    \label{alg:unkownb}
    \begin{algorithmic}
    
    \STATE {\bfseries input:} state space $S$, action space $A$ and confidence parameter $\delta$.

     \STATE {\bfseries initialization:} arbitrary policy $\tilde{\pi}$, $m \gets 1, \costboundestimate \gets \cmin, C_1 \gets 0,
     \forall (s,a,s') \in S \times A \times S: \quad N(s,a,s') \gets 0 , N(s,a) \gets 0$.

     \FOR{$k=1,2,\ldots$}
     
     \STATE set $s \gets \sinit$.
    
    \WHILE{$s \neq \ssink$}

    \STATE follow optimistic optimal policy:
     $a \gets \Tilde{\pi}(s)$.
     
     \STATE suffer cost: $C_m \gets C_m + c(s,a)$.
     
     \STATE observe next state $s' \sim P(\cdot \mid s,a)$.
     
     \STATE update visit counters:
     $
     N(s,a,s') \gets N(s,a,s')+1 , N(s,a) \gets N(s,a)+1
     $.

     \IF{$N(s',\tilde{\pi}(s')) \leq \numvisitsuntilknownhoffest$ or $s' = \ssink$ or $C_m \geq \highprobcostboundestimate{m}$}
     
     \STATE \# start new interval
     
     \IF{$C_m \geq \highprobcostboundestimate{m}$}
     
     \STATE update $\costbound$ estimate: $\costboundestimate \gets 2 \costboundestimate$.
     
     \ENDIF
     
     \STATE advance intervals counter: $m \gets m+1$.
     
     \STATE initialize cost suffered in interval: $C_m \gets 0$.
     
     \STATE {\bf compute} empirical transition function $\Bar{P}$ for every $(s,a,s') \in S \times A \times S:$ $$\Bar{P} (s' \mid s,a) = \frac{N(s,a,s')}{\max\{N(s,a), 1\} }.$$
     
     \STATE {\bf compute} policy $\Tilde{\pi}$ that minimizes the expected cost with respect to a transition function $\wt P$, such that for every $(s,a) \in S \times A$:
    \[
    	\bigl\lVert \wt P (\cdot \mid s,a) - \Bar{P} (\cdot \mid s,a) \bigr\rVert_1 
	    \leq 
	    \cshoff{}{s}{a}.
    \]
     
     \ENDIF
     
     \STATE set $s \gets s'$.
     
     \ENDWHILE
     
     \ENDFOR
    
    \end{algorithmic}
\end{algorithm}

\section{Proofs}

\subsection{Proofs for \cref{sec:analysis}}
\label{sec:hoff-proofs}

\subsubsection{Proof of \cref{lem:hoff-num-time-steps}} \label{sec:proof-hoff-num-time-steps}

\begin{lemma*}[restatement of \cref{lem:hoff-num-time-steps}]
With probability at least $1-\delta/2$, $\geventi{m}$ holds and $\sum_{h=1}^{\Hm} c(s_h^m,a_h^m) \le \highprobcostbound{m}$ for all intervals $m$ simultaneously.
This implies that the total number of steps of the algorithm is
\[
    \totaltime 
    = 
    O \biggl( \frac{K \costbound}{\cmin} \log \frac{K \costbound |S| |A|}{\delta \cmin } + \frac{\costbound^3 |S|^2 |A|}{\cmin^3} \log^2 \frac{K \costbound |S| |A|}{\delta \cmin } \biggr).
\]
\end{lemma*}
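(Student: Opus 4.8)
The plan is to separate the two claimed high-probability statements — that $\geventi{m}$ holds for all $m$, and that $\sum_{h=1}^{\Hm} c(s_h^m,a_h^m) \le \highprobcostbound{m}$ for all $m$ — and then combine them with \cref{obs:num-intervals} to bound $\totaltime$. For the first statement, I would apply a standard $L_1$-deviation concentration inequality (Weissman-type, or the empirical-Bernstein/$L_1$ bound used in UCRL2) to each fixed state-action pair $(s,a)$ and each possible value of the visit count $N$, then take a union bound over $S \times A$ and over $N$; the factor $\log(|S||A|N_+^m(s,a)/\delta)$ inside the confidence radius~\cref{eq:hoff-conf-set} is exactly what makes this union bound go through with total failure probability at most, say, $\delta/4$. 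This part is routine and essentially quoted from prior work.

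The substantive part is the per-interval cost bound, and I would follow the three-part structure already outlined in the proof sketch in the body. First, conditioned on $\geventi{m}$, optimism gives $\optimisticctg{m}(s) \le \ctgopt(s) \le \costbound$ for all $s$: since $P$ lies in the confidence set, $\piopt$ together with $P$ is a feasible policy/model pair in the augmented MDP, so the optimistic value can only be smaller. Second, I would show that if \emph{all} state-action pairs were known, then $\geventi{m}$ implies $\ctg{m}(s) \le 2\costbound$ for all $s$: here one uses that on known pairs the confidence radius $\cshoff{m}{s}{a}$ is at most $\cmin/(c_1\costbound)$ for a suitable constant (this is the whole point of the threshold $\numvisitsuntilknownhoff$), so the Bellman operator for $\tilde\pi^m$ under the \emph{true} $P$ is dominated by the one under $\wt P_m$ plus an error term of size $\|\wt P_m(\cdot|s,a)-P(\cdot|s,a)\|_1 \cdot \max_{s'}\optimisticctg{m}(s') \le (\cmin/(c_1\costbound))\cdot\costbound$, which via \cref{lem:bertsekas-proper}(i) and a Bellman-fixed-point argument upgrades $\optimisticctg{m}\le\costbound$ to $\ctg{m}\le 2\costbound$. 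Third, for the general case I would use the contracted MDP $M^{\text{know}}$ defined in the sketch: the interval's first pair costs at most $\costbound$ (since $\tilde\pi^m$ never selects an action of instantaneous cost exceeding $\optimisticctg{m}(s)\le\costbound$, by~\cref{eq:hoff-optimisitc-bellman}), and the remaining steps are a trajectory of $\tilde\pi^m$ in $M^{\text{know}}$, where all pairs are known and hence the expected cost to absorption is at most $2\costbound$ by part two. Converting this expectation bound into a high-probability bound of $\highprobcostbound{m}$ is done by a Markov-property amplification: break the interval into blocks of expected cost $\le 2\costbound$, so that the probability a block fails to terminate is bounded away from $1$ (Markov's inequality), and $\Theta(\log(m/\delta))$ such blocks suffice to terminate except with probability $\delta/(4m^2)$; union-bounding over $m$ gives total failure $\delta/4$.

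Combining the two parts by a union bound gives the first assertion with probability $\ge 1-\delta/2$. For the time bound, on this event the cost of every interval is $O(\costbound\log(m/\delta))$, and under \cref{ass:c-min} the length of interval $m$ is at most its cost divided by $\cmin$, so $\Hm = O((\costbound/\cmin)\log(\numintervals/\delta))$; summing over the $\numintervals$ intervals of \cref{obs:num-intervals} (with $\numintervals = O(K + \costbound^2|S|^2|A|\cmin^{-2}\log(\costbound|S||A|/\delta\cmin))$ and $\log\numintervals = O(\log(K\costbound|S||A|/\delta\cmin))$) yields
\[
    \totaltime = O\!\biggl( \frac{K\costbound}{\cmin}\logfactor + \frac{\costbound^3|S|^2|A|}{\cmin^3}\logfactor^2 \biggr),
\]
as claimed.

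The main obstacle I anticipate is the second part — proving $\ctg{m}\le 2\costbound$ on the all-known instance. This requires pinning down the exact constant in the "known" threshold $\numvisitsuntilknownhoff$ so that the accumulated $L_1$-error over a full trajectory (not just one step) stays bounded; the clean way is to run the error-propagation through the Bellman equation for $\tilde\pi^m$ under $P$, writing $\ctg{m} = (I - P_{\tilde\pi^m})^{-1} c_{\tilde\pi^m}$ and bounding the perturbation of this resolvent by the perturbation of $P_{\tilde\pi^m}$ scaled by $\|\optimisticctg{m}\|_\infty$, which in turn needs a uniform bound on the expected hitting time under $\tilde\pi^m$ — itself a consequence of $\ctg{m}$ being finite and $\cmin>0$. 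Making this circular-looking argument rigorous (finite hitting time $\Rightarrow$ bounded resolvent $\Rightarrow$ $\ctg{m}\le 2\costbound$) is the delicate step; everything else is bookkeeping.
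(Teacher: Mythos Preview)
Your approach is correct and essentially matches the paper's: Weissman-type concentration plus union bound for $\geventi{m}$, optimism for $\optimisticctg{m}\le\costbound$, the contracted MDP $M^{\text{know}}$, and Markov amplification for the tail bound, then summing via \cref{obs:num-intervals}.

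The obstacle you anticipate in the final paragraph is not real, and the resolvent route is unnecessary. The paper's \cref{lem:close-mdps-proper} handles the ``all-known'' step in one line via \cref{lem:bertsekas-proper}(i), which you already invoked but apparently did not trust enough. The point is that \cref{lem:bertsekas-proper}(i) delivers properness \emph{and} the cost-to-go bound simultaneously from a Bellman \emph{inequality}, so there is no circularity. Concretely: on known pairs the confidence radius is at most $c(s,\tilde\pi^m(s))/(4\costbound)$ (this is what the threshold $\numvisitsuntilknownhoff$ buys), so by the triangle inequality $\|\wt P_m(\cdot\mid s,\tilde\pi^m(s))-P(\cdot\mid s,\tilde\pi^m(s))\|_1\le c(s,\tilde\pi^m(s))/(2\costbound)$. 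H\"older then gives that the perturbation term in the Bellman equation for $\optimisticctg{m}$ is at most $c(s,\tilde\pi^m(s))/2$, hence
\[
    \optimisticctg{m}(s)\ \ge\ \tfrac{1}{2}c(s,\tilde\pi^m(s))+\sum_{s'}P(s'\mid s,\tilde\pi^m(s))\,\optimisticctg{m}(s').
\]
Setting $J'=2\optimisticctg{m}$ yields $J'(s)\ge c(s,\tilde\pi^m(s))+\sum_{s'}P(s'\mid s,\tilde\pi^m(s))J'(s')$, and \cref{lem:bertsekas-proper}(i) immediately gives that $\tilde\pi^m$ is proper with $\ctg{m}(s)\le J'(s)\le 2\costbound$. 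No hitting-time bound, no $(I-P_{\tilde\pi^m})^{-1}$, no bootstrap. The key you may have missed is that the error bound is stated as a \emph{fraction of the local cost} $c(s,a)/(2\costbound)$ rather than a uniform $\cmin/(c_1\costbound)$; this is what lets the halved cost stay strictly positive at every state and makes the doubling trick go through cleanly.
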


\begin{lemma}
\label{lem:hoff-CS-hold}
The event $\geventi{m}$ holds for all intervals $m$ simultaneously with probability at least $1 - \delta/4$.
\end{lemma}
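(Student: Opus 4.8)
The plan is to show that for a fixed state-action pair $(s,a)$ and a fixed value $n$ of the visit count, the empirical estimate $\bar P(\cdot \mid s,a)$ built from the first $n$ samples of transitions out of $(s,a)$ concentrates around $P(\cdot\mid s,a)$ in $L_1$ at the rate $\sqrt{|S|\log(\cdot)/n}$, and then to take a union bound over all $(s,a)$ and all possible values of $n$. First I would fix $(s,a)$ and condition on the $i$-th visit to $(s,a)$; the next-state samples $s'_1,s'_2,\dots$ drawn i.i.d.\ from $P(\cdot\mid s,a)$ form a martingale-difference-type sequence (more precisely, the transitions out of $(s,a)$ are i.i.d.\ regardless of the algorithm's behavior, since the policy only selects \emph{which} $(s,a)$ to visit, not the outcome). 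This is the standard reduction that lets us treat the counts as if they were fixed in advance.

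The core estimate is the $L_1$ deviation bound for empirical distributions: for a fixed distribution $P(\cdot\mid s,a)$ over $|S|$ outcomes and $n$ i.i.d.\ samples with empirical distribution $\bar P_n$, one has $\Pr\bigl[\lVert \bar P_n(\cdot\mid s,a) - P(\cdot\mid s,a)\rVert_1 \ge \varepsilon\bigr] \le 2^{|S|} e^{-n\varepsilon^2/2}$ (Weissman et al.; or equivalently via the bounded-differences inequality applied to the $L_1$ distance as a function of the samples, combined with $\bbE\lVert\bar P_n - P\rVert_1 \le \sqrt{(|S|-1)/n}$). Setting $\varepsilon = \cshoff{}{s}{a}$ with $N_+^m(s,a)=n$, i.e.\ $\varepsilon = 5\sqrt{|S|\log(|S||A|n/\delta)/n}$, makes the right-hand side at most $2^{|S|} e^{-25|S|\log(|S||A|n/\delta)/2} \le \delta/(4|S||A|n^2)$ once one checks that $25|S|/2 - |S|\log 2 \ge 2$ (which holds comfortably for $|S|\ge 1$) and absorbs the $\log 2$ and the extra $\log n$ into the exponent. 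Then I would union-bound over $(s,a)\in S\times A$ and over $n\ge 1$: $\sum_{(s,a)}\sum_{n\ge1}\delta/(4|S||A|n^2) = \frac{\delta}{4}\cdot\frac{\pi^2}{6} \le \delta/4$ — actually to be safe one wants the per-term bound to be $\delta/(8|S||A|n^2)$ so the $\pi^2/6$ factor is swallowed; this only changes constants inside the logarithm and is harmless. Finally, on this event, for every interval $m$ the count $N_m(s,a)$ is \emph{some} value of $n$, so $\geventi{m}$ holds for all $m$ simultaneously.

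The main obstacle — really the only subtlety — is the one flagged above: the visit counts $N_m(s,a)$ are random and depend on the (adaptive) choices of the optimistic policies, so one cannot directly invoke an i.i.d.\ concentration bound with $n = N_m(s,a)$ plugged in. The resolution is the standard "union bound over all possible counts" trick: prove the bound for each deterministic $n$ separately and union over $n$, paying only an extra $\log n$ (hidden in the $\log(|S||A|N_+^m(s,a)/\delta)$ already present in the confidence radius) and a $\sum_n n^{-2}$ factor. One should also handle the $n=0$ / $N_+^m(s,a)=\max\{1,N_m(s,a)\}$ edge case trivially, since when $(s,a)$ has never been visited the confidence radius is $5\sqrt{|S|\log(|S||A|/\delta)}\ge 2$ and the $L_1$ distance between any two distributions is at most $2$, so \eqref{eq:hoff-conf-set} holds vacuously. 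Putting these together gives the claimed probability $1-\delta/4$.
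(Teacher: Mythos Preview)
Your proposal is correct and follows essentially the same approach as the paper: invoke the Weissman $L_1$ concentration bound for each fixed $(s,a)$ and each fixed sample count $n$, then union-bound over all $(s,a)\in S\times A$ and all $n\ge 1$ with per-term failure probability $\delta/(8|S||A|n^2)$. Your additional remarks on the $n=0$ edge case and the verification that the constant $5$ suffices are more explicit than the paper's proof, but the argument is the same.
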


\begin{proof}
Fix a state $s$ and an action $a$. 
Consider an infinite sequence $\{Z_i\}_{i=1}^\infty$ of draws from the distribution
$P(\cdot \mid s,a)$. 
By \cref{thm:weissman} we get that for a prefix of length $t$ of this sequence (that is $\{ Z_i \}_{i=1}^t$)
\[
    \bigl\lVert P (\cdot \mid s,a) - \Bar{P}_{\{Z_i\}_{i=1}^t} (\cdot \mid s,a) \bigr\rVert_1 
    \le
    2 \sqrt{\frac{ |S| \log (\delta_t^{-1})}{t}}\ ,
\]
holds with probability $1-\delta_t$,
where $\Bar{P}_{\{Z_i\}_{i=1}^t} (\cdot \mid s,a)$
is the empirical distribution defined by the draws $\{Z_i\}_{i=1}^t$. 
We repeat this argument for every prefix $\{Z_i\}_{i=1}^t$ of
$\{Z_i\}_{i=1}^\infty$
and for every state-action pair, with $\delta_t = \delta / 8 |S| |A| t^2$. Then from the union bound we get that $\geventi{m}$ holds for all intervals $m$ simultaneously with probability at least $1 - \delta/4$.
\end{proof}

\begin{lemma}
\label{lem:opt-val-bound}
Let $m$ be an interval. If $\geventi{m}$ holds then  $\optimisticctg{m}(s) \leq \ctgopt(s) \leq \costbound$ for every $s \in S$.
\end{lemma}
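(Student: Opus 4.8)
The plan is to exploit the optimistic construction of $\tilde\pi^m$. Recall that $(\tilde\pi^m,\wt P_m)$ is the optimal policy of the augmented MDP whose actions are all pairs $(a,\wt P)$ with $\wt P$ a transition function obeying \cref{eq:hoff-confidence-set} for the empirical estimate $\bar P_m$ and visit counts $N_+^m(s,a)$ frozen at the start of interval $m$ --- this is precisely the confidence set appearing in the definition of $\geventi{m}$ (\cref{eq:hoff-conf-set}) --- and that $\optimisticctg{m}$ is the cost-to-go of $\tilde\pi^m$ with respect to $\wt P_m$. On the event $\geventi{m}$ the true transition function $P$ belongs to this set, so the ``genie'' policy of the augmented MDP that at every state $s$ plays the augmented action $(\piopt(s),P)$ is feasible; executing it in the augmented MDP is indistinguishable from executing $\piopt$ under the true dynamics, so its cost-to-go equals $\ctgopt$, which is finite since $\piopt\in\propset$. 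Optimality of $(\tilde\pi^m,\wt P_m)$ then yields $\optimisticctg{m}(s)\le\ctgopt(s)$ for every $s\in S$, and $\ctgopt(s)\le\max_{s'\in S}\ctgopt(s')\le\costbound$ by the choice of $\costbound$.

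The only point requiring care is that ``optimal in the augmented MDP'' indeed means ``pointwise minimal cost-to-go among all its policies''. For this I would invoke the observation already established before \cref{eq:hoff-optimisitc-bellman}: every feasible $\wt P$ assigns strictly positive probability to a direct transition to $\ssink$ at every state-action pair (the radius in \cref{eq:hoff-confidence-set} is positive), so \emph{every} policy of the augmented MDP is proper. Hence the augmented MDP satisfies the hypotheses of \cref{lem:bertsekas-optimal} vacuously, its optimal policy $(\tilde\pi^m,\wt P_m)$ attains the pointwise-minimal cost-to-go $\optimisticctg{m}$ among all its policies, and the comparison with the genie policy of the previous paragraph gives $\optimisticctg{m}\le\ctgopt$.

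An alternative that sidesteps the augmented MDP's value function: by the (augmented) Bellman optimality equations \cref{eq:hoff-optimisitc-bellman} together with the optimistic choice of $\wt P_m$, and using that $P$ is feasible on $\geventi{m}$, one gets $\optimisticctg{m}(s)\le c(s,\piopt(s))+\sum_{s'\in S}P(s'\mid s,\piopt(s))\optimisticctg{m}(s')$ for all $s$; that is, $\optimisticctg{m}$ is a sub-solution of the (monotone) Bellman operator of $\piopt$ in the true MDP, so repeatedly applying this operator only increases $\optimisticctg{m}$, and since $\piopt$ is proper the iterates converge to its cost-to-go $\ctgopt$ (consistent with the uniqueness statement of \cref{lem:bertsekas-proper}(ii)), whence $\optimisticctg{m}\le\ctgopt\le\costbound$. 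I do not expect a genuine obstacle here; the main bookkeeping is simply to note that the confidence set used by \cref{alg:knownb} during interval $m$ is literally the one in the definition of $\geventi{m}$, which holds by construction since both are built from the empirical transition function and visit counts available at the start of the interval.
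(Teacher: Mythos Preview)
Your proposal is correct and follows essentially the same approach as the paper: both argue that the optimistic policy is optimal in the augmented MDP, that on $\geventi{m}$ the true transition function $P$ lies in the confidence set so the pair $(\piopt,P)$ is a feasible competitor, and hence $\optimisticctg{m}(s)\le\ctgopt(s)\le\costbound$. The paper's proof is terser (citing \citet{tarbouriech2019noregret} and \citet{bertsekas1991analysis} for the augmented-MDP optimality without spelling out the genie policy or the properness argument), while you supply more justification and an alternative via monotonicity of the Bellman operator, but the core idea is identical.
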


\begin{proof}
\citet{tarbouriech2019noregret} show that all the transition functions in the confidence set of \cref{eq:hoff-conf-set} can be combined into a single augmented MDP.
The optimal policy of the augmented MDP can be found efficiently, e.g., with Extended Value Iteration.
The optimistic policy is the optimal policy in the augmented MDP.
It minimizes $\optimisticctg{m} (s)$ over all policies and feasible transition functions, for all states $s \in S$ simultaneously (following \citealp{bertsekas1991analysis}).
Since $\geventi{m}$ holds, it follows that the real transition function is in the confidence set therefore it is also considered in the minimization.
Thus $\optimisticctg{m}(s) \le \ctgopt(s)$ for all $s \in S$.
Finally, $\ctgopt(s) \le \costbound$ by the definition of $\costbound$.
\end{proof}

\begin{lemma}
    \label{lem:known-state}
    Let $m$ be an interval and $(s,a)$ be a known state-action pair. If $\geventi{m}$ holds then 
    \[
        \lVert \wt P_m(\cdot \mid s,a) - P(\cdot \mid s,a) \rVert_1 \leq \frac{c(s,a)}{2 \costbound}\ .
    \]
\end{lemma}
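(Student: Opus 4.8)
The plan is to combine the definition of ``known'' — namely that $(s,a)$ has been visited at least $\numvisitsuntilknownhoff$ times, so that $N_+^m(s,a) \ge \numvisitsuntilknownhoff$ — with the Hoeffding-type confidence bound that holds under $\geventi{m}$. On the event $\geventi{m}$, the empirical transition function $\bar P_m$ satisfies the $L_1$ bound in~\cref{eq:hoff-conf-set}, i.e., $\lVert P(\cdot\mid s,a) - \bar P_m(\cdot\mid s,a)\rVert_1 \le \cshoff{m}{s}{a}$. Moreover, the optimistic transition function $\wt P_m$ lies in the same confidence set around $\bar P_m$ (that is the confidence set the algorithm minimizes over), so it too satisfies $\lVert \wt P_m(\cdot\mid s,a) - \bar P_m(\cdot\mid s,a)\rVert_1 \le \cshoff{m}{s}{a}$. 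By the triangle inequality,
\[
    \lVert \wt P_m(\cdot\mid s,a) - P(\cdot\mid s,a)\rVert_1
    \le
    2 \cshoff{m}{s}{a}
    =
    10 \sqrt{\frac{|S|\log\bigl(|S||A|N_+^m(s,a)/\delta\bigr)}{N_+^m(s,a)}}.
\]

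Next I would substitute the ``known'' lower bound $N_+^m(s,a) \ge \numvisitsuntilknownhoff = \frac{5000\costbound^2|S|}{\cmin^2}\log\frac{\costbound|S||A|}{\delta\cmin}$ into this expression. The right-hand side is decreasing in $N_+^m(s,a)$, so it suffices to evaluate it at this threshold value. The numerator's logarithm $\log(|S||A|N_+^m(s,a)/\delta)$ must itself be bounded in terms of $\log\frac{\costbound|S||A|}{\delta\cmin}$; here one uses that $N_+^m(s,a)$ can be crudely upper bounded (polynomially in the relevant parameters, e.g.\ via the total time bound of~\cref{lem:hoff-num-time-steps}, or simply by noting the log is at most a constant times $\log\frac{\costbound|S||A|}{\delta\cmin}$ once the $5000$-factor is absorbed). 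After this substitution one gets a bound of the form $10\sqrt{|S|\cdot \Theta(\log\frac{\costbound|S||A|}{\delta\cmin})\big/\bigl(\frac{5000\costbound^2|S|}{\cmin^2}\log\frac{\costbound|S||A|}{\delta\cmin}\bigr)}$; the $|S|$ factors cancel, the logarithmic factors cancel up to a constant, and one is left with something like $\frac{10}{\sqrt{5000}}\cdot\frac{\cmin}{\costbound}\cdot(\text{small constant})$, which is at most $\frac{\cmin}{2\costbound}$.

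Finally, to obtain the stated form $\frac{c(s,a)}{2\costbound}$ rather than merely $\frac{\cmin}{2\costbound}$, I would invoke \cref{ass:c-min}, which gives $c(s,a)\ge\cmin$, so $\frac{\cmin}{2\costbound}\le\frac{c(s,a)}{2\costbound}$. This is where the assumption that costs are bounded away from zero enters, and it is the reason the threshold for ``known'' was chosen with a $\cmin^{-2}$ dependence and a large numerical constant. The main (and only mildly delicate) obstacle is the bookkeeping of the logarithmic factor: one must argue that $\log(|S||A|N_+^m(s,a)/\delta)$ does not blow up relative to $\log\frac{\costbound|S||A|}{\delta\cmin}$, which follows because $N_+^m(s,a)$ is polynomially bounded in $K,\costbound,|S|,|A|,\cmin^{-1}$ (using the crude total-time bound), so the extra $\log K$ and similar terms are absorbed by the generous constant $5000$ and a standard $\log$-of-$\log$ argument. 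Everything else is a direct triangle-inequality-plus-substitution calculation.
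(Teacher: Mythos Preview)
Your approach is essentially the paper's: bound both $\lVert \wt P_m - \bar P_m\rVert_1$ and $\lVert P - \bar P_m\rVert_1$ by the confidence radius, combine via the triangle inequality, and then use that the radius is decreasing in $N_+^m(s,a)$ to substitute the ``known'' threshold. The paper phrases the last step as ``$\log(x)/x$ is decreasing,'' evaluates each term at the threshold to get $c(s,a)/(4\costbound)$, and then sums.

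There is one genuine issue in your write-up, though. After you correctly say ``the right-hand side is decreasing in $N_+^m(s,a)$, so it suffices to evaluate it at this threshold value,'' the quantity $N_+^m(s,a)$ in the numerator is no longer random --- it is the deterministic threshold $N_0 = \numvisitsuntilknownhoff$. So there is nothing to ``crudely upper bound via the total time bound.'' Invoking \cref{lem:hoff-num-time-steps} here would in fact be circular: that lemma is proved using \cref{lem:hoff-interval-costbound}, which in turn relies on the present lemma. It would also introduce a $\log K$ factor that the threshold does not contain and that the constant $5000$ cannot absorb. Your second alternative is the correct one: once you substitute $N_0$, the numerator is $\log(|S||A|N_0/\delta)$, which is a fixed polynomial-in-parameters quantity and is at most a small constant times $\log\frac{\costbound|S||A|}{\delta\cmin}$; this is exactly what the paper's ``$\log(x)/x$ decreasing'' argument delivers. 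Drop the appeal to the total time bound and the proof is clean.
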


\begin{proof}
By the definition of the confidence set
\[
    \lVert \wt P_m (\cdot \mid s,a) - \Bar{P}_m (\cdot \mid s,a) \rVert_1 
    \le
    \cshoff{m}{s}{a}
    \le
    \frac{c(s,a)}{4 \costbound},
\]
where the last inequality follows because $\log (x) / x$ is decreasing, and $N_+^m(s,a) \ge \numvisitsuntilknownhoff$ since $(s,a)$ is known.
Similarly, since $\geventi{m}$ holds we also have that
\[
\lVert P (\cdot \mid s,a) - \Bar{P}_m (\cdot \mid s,a) \rVert_1 
\le
\cshoff{m}{s}{a}
\le 
\frac{c(s,a)}{4 \costbound},
\]
and the lemma follows by the triangle inequality.
\end{proof}

\begin{lemma}
    \label{lem:close-mdps-proper}
    Let $\Tilde{\pi}$ be a policy and $\wt P$ be a transition function.
    Denote the cost-to-go of $\Tilde{\pi}$ with respect to $\wt P$ by $\optimisticctg{}$.
    Assume that for every $s \in S$, $\optimisticctg{} (s) \leq \costbound$ and that
    \[
        \bigl\lVert \wt P(\cdot \mid s,\Tilde{\pi}(s)) 
        - 
        P(\cdot \mid s,\Tilde{\pi}(s)) \bigr\rVert_1 
        \leq 
        \frac{c(s,\Tilde{\pi}(s))}{2 \costbound}.
    \]
    Then, $\Tilde{\pi}$ is proper (with respect to $P$), and it holds that $ \ctg{\tilde{\pi}} (s) \leq 2 \costbound$ for every $s \in S$.
\end{lemma}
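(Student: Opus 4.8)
The plan is to apply \cref{lem:bertsekas-proper}(i) with the test function $\ctg{}(s) = 2\optimisticctg{}(s)$. First I would establish that $\optimisticctg{}$ satisfies the Bellman recursion with respect to $\wt P$: for every $s \in S$,
\[
    \optimisticctg{}(s) = c(s, \tilde\pi(s)) + \sum_{s' \in S} \wt P(s' \mid s, \tilde\pi(s)) \optimisticctg{}(s').
\]
This follows by conditioning on the first transition in the definition $\optimisticctg{}(s) = \lim_{T\to\infty} \bbE_{\tilde\pi}[\sum_{t=1}^T c(s_t,a_t) \mid s_1 = s]$ taken with respect to $\wt P$: the finite-horizon values satisfy the one-step recursion exactly, and since the costs are nonnegative and the state space is finite, monotone convergence lets us pass to the limit inside the (finite) sum, so the identity holds in $[0,\infty]$; by the hypothesis $\optimisticctg{}(s) \le \costbound < \infty$ it is a genuine equality. (In the intended application $\wt P = \wt P_m$ assigns positive probability to the goal from every state-action pair, so every policy is proper under $\wt P_m$ and the above is just \cref{lem:bertsekas-proper}(ii); but the lemma as stated needs no such assumption.)

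Next I would change the measure from $\wt P$ to $P$. Using $\optimisticctg{}(\ssink)=0$ and $0 \le \optimisticctg{}(s) \le \costbound$ on $S$, so $\|\optimisticctg{}\|_\infty \le \costbound$, together with H\"older's inequality and the hypothesis $\|\wt P(\cdot\mid s,\tilde\pi(s)) - P(\cdot\mid s,\tilde\pi(s))\|_1 \le c(s,\tilde\pi(s))/(2\costbound)$, I would bound
\[
    \sum_{s' \in S} \bigl( P(s' \mid s,\tilde\pi(s)) - \wt P(s' \mid s,\tilde\pi(s)) \bigr) \optimisticctg{}(s')
    \le
    \frac{c(s,\tilde\pi(s))}{2\costbound}\cdot \costbound
    =
    \frac{c(s,\tilde\pi(s))}{2}.
\]
Combining this with the Bellman recursion from the first step gives $\sum_{s'\in S} P(s'\mid s,\tilde\pi(s))\optimisticctg{}(s') \le \optimisticctg{}(s) - c(s,\tilde\pi(s)) + c(s,\tilde\pi(s))/2 = \optimisticctg{}(s) - c(s,\tilde\pi(s))/2$ for every $s\in S$.

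Finally I would take $\ctg{}(s) = 2\optimisticctg{}(s)$. The previous inequality yields, for every $s \in S$,
\[
    c(s,\tilde\pi(s)) + \sum_{s'\in S} P(s'\mid s,\tilde\pi(s))\ctg{}(s')
    = c(s,\tilde\pi(s)) + 2\sum_{s'\in S} P(s'\mid s,\tilde\pi(s))\optimisticctg{}(s')
    \le 2\optimisticctg{}(s) = \ctg{}(s),
\]
so $\ctg{}$ satisfies the hypothesis of \cref{lem:bertsekas-proper}(i). That lemma then gives that $\tilde\pi$ is proper with respect to $P$ and that $\ctg{\tilde\pi}(s) \le \ctg{}(s) = 2\optimisticctg{}(s) \le 2\costbound$ for all $s\in S$, which is the claim. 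The only step that requires any care is the justification of the Bellman recursion for $\optimisticctg{}$ without assuming $\tilde\pi$ is proper under $\wt P$; the rest is a one-line perturbation estimate followed by the invocation of \cref{lem:bertsekas-proper}.
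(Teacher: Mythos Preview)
Your proposal is correct and follows essentially the same argument as the paper: establish the Bellman equation for $\optimisticctg{}$ under $\wt P$, use H\"older with $\|\optimisticctg{}\|_\infty \le \costbound$ to pass from $\wt P$ to $P$ at the cost of $c(s,\tilde\pi(s))/2$, and then apply \cref{lem:bertsekas-proper}(i) to the test function $2\optimisticctg{}$. The only difference is that you are slightly more explicit about why the Bellman recursion for $\optimisticctg{}$ under $\wt P$ holds without first knowing $\tilde\pi$ is proper there, whereas the paper simply invokes \cref{lem:bertsekas-proper} (which is justified in the intended application since $\wt P_m$ puts positive mass on the goal from every state-action pair).
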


\begin{proof}
Consider the Bellman  equations of $\Tilde{\pi}$ with respect to transition function $\wt P$ at some state $s \in S$ (see \cref{lem:bertsekas-proper}), defined as
\begin{align}
    \nonumber
    \optimisticctg{} (s) 
    & = 
    c(s,\Tilde{\pi}(s)) + \sum_{s' \in S} \wt P (s' \mid s,\tilde{\pi}(s)) \optimisticctg{}(s')
    \\
    \label{eq:opt-bell}
    & =
    c(s,\Tilde{\pi}(s)) + \sum_{s' \in S} P (s' \mid s,\tilde{\pi}(s)) \optimisticctg{}(s')
    +
    \sum_{s' \in S} \optimisticctg{}(s') \left( \wt P (s' \mid s,\tilde{\pi}(s)) - P(s' \mid s,\tilde{\pi}(s)) \right) \ .
\end{align}

Notice that by our assumptions and using H\"{o}lder inequality,
\begin{align*}
    \left| \sum_{s' \in S} \optimisticctg{}(s') \left( \wt P (s' \mid s,\tilde{\pi}(s)) - P(s' \mid s,\tilde{\pi}(s)) \right) \right| 
    & \leq 
    \lVert \wt P(\cdot \mid s,\Tilde{\pi}(s)) - P(\cdot \mid s,\Tilde{\pi}(s)) \rVert_1 \cdot \lVert \optimisticctg{} \rVert_\infty
    \\
    & \leq 
    \frac{c(s,\Tilde{\pi}(s))}{2 \costbound} \cdot \costbound 
    = 
    \frac{c(s,\Tilde{\pi}(s))}{2}\ .
\end{align*}

Plugging this into \cref{eq:opt-bell}, we obtain
\begin{align*}
    \optimisticctg{} (s) 
    \geq 
    c(s,\Tilde{\pi}(s)) + \sum_{s' \in S}  P (s' \mid s,\tilde{\pi}(s)) \optimisticctg{}(s')
    -
    \frac{c(s,\Tilde{\pi}(s))}{2} 
    = 
    \frac{c(s,\Tilde{\pi}(s))}{2} + \sum_{s' \in S}  P (s' \mid s,\tilde{\pi}(s)) \optimisticctg{}(s').
\end{align*}

Therefore, defining ${\ctg{}}' = 2\optimisticctg{}$,
then ${\ctg{}}'(s) \geq c(s,\Tilde{\pi}(s)) + \sum_{s' \in S} P (s' \mid s,\tilde{\pi}(s)) {\ctg{}}'(s')$ for all $s \in S$. 
The statement now follows by \cref{lem:bertsekas-proper}.
\end{proof}

\begin{lemma}
\label{lem:proper-not-run-log}
Let $\pi$ be a proper policy such that for some $v > 0$, $\ctg{\pi} (s) \leq v$ for every $s \in S$.  
Then, the probability that the cost of $\pi$ to reach the goal state from any state $s$ is more than $m$, is at most $2e^{-m/4v}$ for all $m \ge 0$.
Note that a cost of at most $m$ implies that the number of steps is at most $\tfrac{m}{\cmin}$.
\end{lemma}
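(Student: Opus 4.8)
The plan is to prove the statement via a supermartingale / stopping-time argument based on the Bellman equations for $\pi$. Fix a starting state $s$ and let $s_0 = s, s_1, s_2, \dots$ be the random trajectory generated by running $\pi$ under $P$, and let $C_t = \sum_{j=0}^{t-1} c(s_j, \pi(s_j))$ be the cost accumulated after $t$ steps (with the convention that once the goal is reached, costs are $0$). Since $\pi$ is proper with $\ctg{\pi}(s') \le v$ for all $s'$, \cref{lem:bertsekas-proper}(ii) gives the identity $\ctg{\pi}(s') = c(s',\pi(s')) + \sum_{s''} P(s'' \mid s', \pi(s')) \ctg{\pi}(s'')$ for every $s'$. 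The key step is to use this to show that the process $Y_t = C_t + \ctg{\pi}(s_t)$ is a martingale (with $\ctg{\pi}(\ssink) = 0$), hence $\bbE[C_t + \ctg{\pi}(s_t)] = \ctg{\pi}(s) \le v$ for all $t$, and more importantly that an exponential transform of it is a supermartingale.

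Concretely, I would consider $W_t = \exp\bigl(\lambda (C_t + \ctg{\pi}(s_t))\bigr)$ for a suitable $\lambda > 0$ (something like $\lambda = 1/(4v)$). The goal is to show $\bbE[W_{t+1} \mid \calF_t] \le W_t$. Dividing out $\exp(\lambda C_t)$, this reduces to showing $\bbE\bigl[\exp\bigl(\lambda(c(s_t,\pi(s_t)) + \ctg{\pi}(s_{t+1}))\bigr) \mid s_t\bigr] \le \exp(\lambda \ctg{\pi}(s_t))$. Writing $X = c(s_t,\pi(s_t)) + \ctg{\pi}(s_{t+1})$, we have $\bbE[X \mid s_t] = \ctg{\pi}(s_t) =: \mu \le v$ by the Bellman identity, and $X \in [0, 1 + v] \subseteq [0, 2v]$ (using $c \le 1 \le v$; if $v < 1$ a trivial adjustment handles it, but since costs can be up to $1$ and $\ctg{\pi} \ge c \ge \cmin$, one typically has $v \ge \cmin$, and the bound $2e^{-m/4v}$ is vacuous for small $m$ anyway). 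For a nonnegative random variable bounded by $b = 2v$ with mean $\mu$, one has $\bbE[e^{\lambda X}] \le e^{\lambda \mu} \cdot (\text{something} \le e^{\lambda \mu})$ when $\lambda \le 1/b$; more carefully, using $e^{\lambda x} \le 1 + \lambda x + (\lambda x)^2$ for $\lambda x \le 1$ and $1 + y \le e^y$, one gets $\bbE[e^{\lambda X}] \le \exp(\lambda \mu + \lambda^2 \bbE[X^2]) \le \exp(\lambda \mu + \lambda^2 b \mu) = \exp(\lambda \mu (1 + \lambda b))$. Choosing $\lambda = 1/(2b) = 1/(4v)$ makes $1 + \lambda b = 3/2$, so this is $\exp(\tfrac{3}{2}\lambda \mu)$, which is \emph{not} quite $\le \exp(\lambda \mu)$ — so I'd instead absorb the slack by noting $\ctg{\pi}(s_{t+1})$ contributes a deterministic shift; the cleaner route is to directly bound $\bbE[e^{\lambda(C_t + \ctg{\pi}(s_t))}]$ by induction, carrying the extra factor, and this is exactly where the constant $4$ (rather than $2$) in the exponent comes from.

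So the concrete steps, in order: (1) set up the trajectory, costs, and filtration, and record the Bellman identity for $\ctg{\pi}$; (2) show that $C_t + \ctg{\pi}(s_t)$ has conditional expectation $\ctg{\pi}(s)$, i.e. the martingale property, mainly to build intuition; (3) establish the one-step exponential inequality $\bbE[e^{\lambda(c(s_t,\pi(s_t)) + \ctg{\pi}(s_{t+1}))} \mid s_t] \le e^{2\lambda v}$ (or a comparably clean bound) for $\lambda = 1/(4v)$, using boundedness of the increment and the second-moment trick above; (4) iterate to get $\bbE[e^{\lambda C_t}] \le \bbE[e^{\lambda(C_t + \ctg{\pi}(s_t))}] \le e^{O(\lambda v)} = O(1)$ uniformly in $t$ — more precisely $\le 2$ with the right constants; (5) apply Markov's inequality to $e^{\lambda C_t}$ and take $t \to \infty$ (the total cost to reach the goal is $\lim_t C_t$, which is finite a.s. since $\pi$ is proper), yielding $\Pr[\text{total cost} > m] \le 2 e^{-\lambda m} = 2 e^{-m/4v}$; (6) the final remark on the number of steps is immediate from \cref{ass:c-min}: cost at most $m$ with every instantaneous cost at least $\cmin$ forces at most $m/\cmin$ steps.

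The main obstacle is step (3)/(4): getting the constants to line up so that the accumulated exponential factor telescopes into a clean $O(1)$ bound with exponent rate $1/(4v)$ rather than blowing up geometrically in $t$. The naive per-step bound $\exp(\tfrac{3}{2}\lambda\mu)$ is multiplicative across steps and would diverge; the resolution is that $\mu = \ctg{\pi}(s_t)$ is itself the quantity appearing (with opposite sign) in $Y_t$, so the ``excess'' $\tfrac{1}{2}\lambda \ctg{\pi}(s_t)$ at step $t$ is exactly cancelled by the drop in $\ctg{\pi}$ along the trajectory in expectation — formally, one proves $\bbE[W_{t+1} \mid \calF_t] \le W_t$ directly with $\lambda = 1/(4v)$ by combining the Bellman identity with the second-moment bound, so that $W_t$ is a genuine supermartingale and no geometric accumulation occurs. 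Handling the edge case $v < 1$ and confirming $X$'s range is a minor but necessary bookkeeping point.
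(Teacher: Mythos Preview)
Your supermartingale plan breaks at the key step. You want $W_t = \exp\bigl(\lambda(C_t + \ctg{\pi}(s_t))\bigr)$ to be a supermartingale, i.e.\ $\bbE[e^{\lambda Z} \mid s_t] \le 1$ where $Z = c(s_t,\pi(s_t)) + \ctg{\pi}(s_{t+1}) - \ctg{\pi}(s_t)$. But the Bellman identity gives exactly $\bbE[Z \mid s_t] = 0$, and since $\exp$ is convex, Jensen's inequality forces $\bbE[e^{\lambda Z} \mid s_t] \ge 1$, with strict inequality whenever $Z$ is non-degenerate. So $W_t$ is a \emph{sub}martingale, not a supermartingale, and the ``cancellation'' you invoke in your last paragraph is precisely the martingale property of $Y_t = C_t + \ctg{\pi}(s_t)$ --- it does not survive the convex exponential. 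Iterating your per-step bound $\bbE[e^{\lambda X} \mid s_t] \le \exp(\tfrac{3}{2}\lambda\,\ctg{\pi}(s_t))$ therefore leaves a multiplicative factor $\exp\bigl(\tfrac{1}{2}\lambda \sum_t \ctg{\pi}(s_t)\bigr)$ that nothing in your outline controls.

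The paper's proof avoids exponential moments entirely and is essentially three lines. Since $\ctg{\pi}(s') \le v$ for \emph{every} state $s'$, Markov's inequality gives $\Pr[\text{cost to reach }\ssink > 2v] \le 1/2$ from any start; by the Markov property of the MDP, conditional on having already accumulated $2v$ cost, the \emph{remaining} cost again has expectation at most $v$ from wherever you now stand, so the same bound applies again; iterating, $\Pr[\text{cost} > 2kv] \le 2^{-k}$, and for general $m$ one takes $k = \lfloor m/2v \rfloor$ to get $2^{-\lfloor m/2v\rfloor} \le 2 \cdot 2^{-m/2v} \le 2e^{-m/4v}$. The idea you are missing is this \emph{restart} argument: the uniform bound $\ctg{\pi} \le v$ over all states is exactly what lets Markov's inequality be re-applied after each block of $2v$ cost, and no moment control beyond the first is needed.
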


\begin{proof}
    By Markov inequality, the probability that  $\pi$ accumulates cost of  more than $2v$ before reaching the goal state is at most $1/2$. 
    Iterating this argument, we get that the probability that $\pi$ accumulates cost of more than $2kv$ before reaching the goal state is at most $2^{-k}$ for every integer $k \geq 0$.
    In general, for any $m \ge 0$, the probability that $\pi$ suffers a cost of more than $m$ is at most
    $
        2^{-\lfloor m/2v \rfloor} \le 2 \cdot 2^{-m/2v} \le 2 e^{-m/4v}.
    $
\end{proof}

For the next lemma we will need the following definitions.
The trajectory visited in interval $m$ is denoted by $\traj{m} = ( s_1^m, a_1^m , \ldots , s_{\Hm}^m, a_{\Hm}^m, s_{\Hm+1}^m)$ where $a_h^m$ is the action taken in $s_h^m$, and $\Hm$ is the length of the interval.
In addition, the concatenation of the trajectories in the intervals up to and including interval $m$ is denoted by $\trajconcat{m} = \cup_{m'=1}^m \traj{m'}$.

\begin{lemma} \label{lem:hoff-interval-costbound}
    Let $m$ be an interval.
    For all $r \ge 0$, we have that
    \[
        \Pr \Biggl[\sum_{h=1}^{\Hm} c \bigl(s_h^m, a_h^m \bigr) \indgeventi{m} > r \mid \trajconcat{m-1} \Biggr] 
        \le 
        3e^{-r/8 \costbound}.
    \]
\end{lemma}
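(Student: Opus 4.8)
The plan is to split the cost accumulated during interval $m$ into the cost incurred at the first (possibly unknown) state-action pair and the cost incurred at all subsequent state-action pairs, which are necessarily known because an interval terminates the moment an unknown pair is reached. For the first pair, I would invoke the Bellman equations \cref{eq:hoff-optimisitc-bellman} together with \cref{lem:opt-val-bound}: since $\optimisticctg{m}(s)\le\costbound$ for all $s$ and $\optimisticctg{m}(s)$ equals the minimum over actions of $c(s,a)+\sum_{s'}\wt P_m(s'\mid s,a)\optimisticctg{m}(s')\ge c(s,a)$, the policy $\tilde\pi^m$ never selects an action of instantaneous cost exceeding $\costbound$. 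Hence the cost of the first step is at most $\costbound$ deterministically (on the event $\geventi{m}$; when $\geventi{m}$ fails the indicator kills everything and the bound is trivial). It then suffices to bound the tail of the cost incurred from the second step onward by $3e^{-r/8\costbound}$ after shifting $r$ by $\costbound$, which only costs a constant factor.

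The core of the argument is to control the cost over the known portion of the interval. Here I would introduce the contracted MDP $M^{\text{know}}$ in which every state $s$ with $(s,\tilde\pi^m(s))$ unknown is merged into the goal $\ssink$ — this is exactly the device used in the proof sketch of \cref{lem:hoff-num-time-steps}. In $M^{\text{know}}$ all remaining state-action pairs are known, and the optimistic cost-to-go $\optimisticctg{m}_{\text{know}}$ of $\tilde\pi^m$ satisfies $\optimisticctg{m}_{\text{know}}(s)\le\optimisticctg{m}(s)\le\costbound$ (contracting states to the goal can only decrease cost-to-go). Applying \cref{lem:known-state} to each known pair gives $\lVert\wt P^{\text{know}}_m(\cdot\mid s,\tilde\pi^m(s))-P^{\text{know}}(\cdot\mid s,\tilde\pi^m(s))\rVert_1\le c(s,\tilde\pi^m(s))/(2\costbound)$, so \cref{lem:close-mdps-proper} applies in $M^{\text{know}}$ and yields that $\tilde\pi^m$ is proper there with $\ctg{m}_{\text{know}}(s)\le 2\costbound$ for all $s$. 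Reaching $\ssink$ in $M^{\text{know}}$ is exactly the event of reaching either the true goal or an unknown pair, i.e. exactly the event that interval $m$ ends, so this bounds the \emph{expected} cost over the post-first-step portion of the interval by $2\costbound$.

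Finally, I would upgrade this bound in expectation to a tail bound via \cref{lem:proper-not-run-log} applied to $\tilde\pi^m$ in $M^{\text{know}}$ with $v=2\costbound$: the probability that the cost from the second step onward exceeds $m'$ is at most $2e^{-m'/8\costbound}$. Conditioning on $\trajconcat{m-1}$ is harmless because, once the interval starts, the evolution inside $M^{\text{know}}$ under the fixed policy $\tilde\pi^m$ is Markov and independent of the past given the starting state — this is the ``probabilistic amplification argument using the Markov property'' alluded to in the proof sketch. Combining the deterministic $\costbound$ bound on the first step with the tail bound on the rest: for $r\ge\costbound$, $\Pr[\sum_h c(s_h^m,a_h^m)\indgeventi{m}>r\mid\trajconcat{m-1}]\le 2e^{-(r-\costbound)/8\costbound}\le 2e^{1/8}e^{-r/8\costbound}\le 3e^{-r/8\costbound}$, and for $r<\costbound$ the bound $3e^{-r/8\costbound}\ge 3e^{-1/8}>1$ holds trivially. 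The main obstacle I anticipate is making the reduction to $M^{\text{know}}$ fully rigorous — in particular verifying that the hypotheses of \cref{lem:close-mdps-proper} transfer correctly to the contracted model and that the tail bound conditioned on $\trajconcat{m-1}$ is legitimate despite $\tilde\pi^m$ and the set of known pairs being themselves functions of the past (they are fixed once interval $m$ begins, which is what makes it go through).
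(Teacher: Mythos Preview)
Your proposal is correct and follows essentially the same approach as the paper's own proof: both split off the first (possibly unknown) step using the Bellman equations and \cref{lem:opt-val-bound} to bound its cost by $\costbound$, then pass to the contracted MDP $M^{\text{know}}$, invoke \cref{lem:known-state} and \cref{lem:close-mdps-proper} to get $\ctg{m}_{\text{know}}\le 2\costbound$, and finish with the tail bound of \cref{lem:proper-not-run-log} at $v=2\costbound$. The only cosmetic difference is that the paper takes the triviality threshold at $r\ge 8\costbound$ rather than your $r\ge\costbound$, but both lead to the same constant $3$.
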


\begin{proof}
    Note that $\geventi{m}$ is determined given $\trajconcat{m-1}$, and suppose that $\geventi{m}$ holds otherwise $\sum_{h=1}^{\Hm} c \bigl(s_h^m, a_h^m \bigr) \indgeventi{m}$ is $0$.
    Also assume that $r \ge 8 \costbound$ or else the statement holds trivially.
    
    Define the MDP $M^\text{know} = (S^\text{know}, A, P^\text{know}, c, \sinit)$ in which every state $s \in S$ such that $(s,\tilde{\pi}^m(s))$ is unknown is contracted into  the goal state.
    Let $P^\text{know}$ be the transition function induced in $M^\text{know}$ by $P$, and let $\ctg{m}_\text{know}$ be the cost-to-go of $\tilde{\pi}^m$ in $M^\text{know}$ with respect to $P^\text{know}$.
    Similarly, define $\wt P^\text{know}_m$ as the transition function induced in $M^\text{know}$ by $\wt P_m$, and $\optimisticctg{m}_\text{know}$ as the cost-to-go of $\tilde{\pi}^m$ in $M^\text{know}$ with respect to $\wt P^\text{know}_m$.
    It is clear that $\optimisticctg{m}_\text{know} (s) \leq \optimisticctg{m}(s)$ for every $s \in S$, so by \cref{lem:opt-val-bound}, $\optimisticctg{m}_\text{know} (s) \leq \costbound$.
    Moreover, since all the states $s \in S$ for which $(s,\tilde{\pi}^m(s))$ is unknown were contracted to the goal state, we can use \cref{lem:known-state} to obtain for all $s \in S^\text{know}$:
    \begin{equation}
        \label{eq:known}
        \bigl\lVert \wt P^\text{know}_m(\cdot \mid s,\Tilde{\pi}^m(s)) 
        - 
        P^\text{know}(\cdot \mid s,\Tilde{\pi}^m(s)) \bigr\rVert_1
        \le
        \bigl\lVert \wt P_m(\cdot \mid s,\Tilde{\pi}^m(s)) 
        - 
        P(\cdot \mid s,\Tilde{\pi}^m(s)) \bigr\rVert_1 
        \le 
        \frac{c(s,\Tilde{\pi}^m(s))}{2 \costbound}.
    \end{equation}
    
    We can apply \cref{lem:close-mdps-proper} in $M^\text{know}$ and obtain that $\ctg{m}_{\text{know}} (s) \leq 2 \costbound$ for every $s \in S^\text{know}$. 
    Notice that reaching the goal state in $M^\text{know}$ is equivalent to reaching the goal state or an unknown state-action pair in $M$, and also recall that all state-action pairs in the interval are known except for the first one.
    Thus, from \cref{lem:proper-not-run-log},
    \begin{align*}
        \Pr \Biggl[\sum_{h=2}^{\Hm} c \bigl(s_h^m, a_h^m \bigr) \indgeventi{m} > r-\costbound \mid \trajconcat{m-1} \Biggr]
        \le
        2e^{-(r-\costbound)/8\costbound}
        \le
        3e^{-r / 8 \costbound}.
    \end{align*}
    
    Since $\optimisticctg{m} \le \costbound$, our algorithm will never select an action whose instantaneous cost is larger than $\costbound$. 
    Since the first state-action in the interval might not be known, its cost is at most $\costbound$, and therefore
    \begin{align*}
        \Pr \Biggl[\sum_{h=1}^{\Hm} c \bigl(s_h^m, a_h^m \bigr) \indgeventi{m} > r \mid \trajconcat{m-1} \Biggr]
        \le
        \Pr \Biggl[\sum_{h=2}^{\Hm} c \bigl(s_h^m, a_h^m \bigr) \indgeventi{m} > r-\costbound \mid \trajconcat{m-1} \Biggr]
        \le
        3 e^{-r / 8 \costbound}. \qquad \qedhere
    \end{align*}
\end{proof}

\begin{proof}[Proof of \cref{lem:hoff-num-time-steps}]
From \cref{lem:hoff-interval-costbound}, with probability at least $1 - \delta / 16 m^2$,
$\sum_{h=1}^{\Hm} c \bigl(s_h^m, a_h^m \bigr) \le \highprobcostbound{m}$, and by the union bound this holds for all intervals $m$ simultaneously with probability 
 at least $1 - \delta/4$.
 By  \cref{lem:hoff-CS-hold},
 with probability $1 - \delta / 4$, $\Omega^m$ holds for all intervals $m$.
Combining these two facts again by a union bound, we get that both $\geventi{m}$ holds and the cost of  interval $m$ is at most $\highprobcostbound{m}$ simultaneously to all intervals $m$ with probability at least $1 - \delta / 2$.

If the cost of all intervals is bounded (and therefore so is the length of the interval), we can use the bound on the number of intervals in
 \cref{obs:num-intervals} to conclude that
\begin{align*}
    T 
    & =
    O \left( \frac{\costbound}{\cmin} \log \frac{\numintervals}{\delta} \cdot \left( K + \frac{\costbound^2 |S|^2 |A|}{\cmin^2} \log \frac{ \costbound |S| |A|}{\delta \cmin} \right) \right) 
    \\
    & =
    O \left( \frac{K \costbound}{\cmin} \log \frac{K \costbound |S| |A|}{\delta \cmin} + \frac{\costbound^3 |S|^2 |A|}{\cmin^3} \log^2 \frac{K \costbound |S| |A|}{\delta \cmin} \right).  \qedhere
\end{align*}
\end{proof}

\subsubsection{Proof of \cref{lem:hoff-tilde-regret}} \label{sec:proof-hoff-tilde-regret}

\begin{lemma*}[restatement of \cref{lem:hoff-tilde-regret}]
    With probability at least $1 - \delta/2$, we have
    \begin{align*}
        \tregret{K}
        \le
        \frac{5000 \costbound^3 |S|^2 |A|}{\cmin^2} \log \frac{\costbound |S| |A|}{\cmin \delta}
        +
        \costbound \sqrt{\totaltime \log \frac{4 \totaltime}{\delta}}
        +
        10 \costbound \sqrt{|S| \log \frac{|S| |A| \totaltime}{\delta}} \sum_{s,a} \sum_{m=1}^\numintervals \frac{n_m(s,a)}{\sqrt{N_+^{m}(s,a)}}.
    \end{align*}
\end{lemma*}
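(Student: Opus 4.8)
The plan is to carry out the standard optimism decomposition, taking care that the interval structure (hence the indicators $\indgeventi{m}$) is adaptive. Fix an interval $m$ with $\indgeventi{m}=1$. Because $\wt P_m$ places positive mass on the goal from every state--action pair, all policies are proper in the optimistic model, so the Bellman equation $\optimisticctg{m}(s_h^m)=c(s_h^m,a_h^m)+\sum_{s'\in S}\wt P_m(s'\mid s_h^m,a_h^m)\optimisticctg{m}(s')$ holds for $a_h^m=\tilde\pi^m(s_h^m)$ (\cref{eq:hoff-optimisitc-bellman}). Solving for $c(s_h^m,a_h^m)$, adding and subtracting $\optimisticctg{m}(s_{h+1}^m)$, and summing over $h=1,\dots,\Hm$ gives
\[
  \sum_{h=1}^{\Hm}c(s_h^m,a_h^m)
  =
  \bigl(\optimisticctg{m}(s_1^m)-\optimisticctg{m}(s_{\Hm+1}^m)\bigr)
  +
  \sum_{h=1}^{\Hm}\Bigl(\optimisticctg{m}(s_{h+1}^m)-\sum_{s'\in S}\wt P_m(s'\mid s_h^m,a_h^m)\optimisticctg{m}(s')\Bigr).
\]
Multiplying by $\indgeventi{m}$ and summing over all intervals yields $\tregret{K}=(\mathrm{I})-K\ctgopt(\sinit)+(\mathrm{II})$, where $(\mathrm{I})=\sum_m\bigl(\optimisticctg{m}(s_1^m)-\optimisticctg{m}(s_{\Hm+1}^m)\bigr)\indgeventi{m}$ and $(\mathrm{II})$ is the sum of the second brackets times $\indgeventi{m}$; these are bounded separately. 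The whole point of working with $\tregret{K}$ rather than $\regret$ is that $\indgeventi{m}$ makes the pointwise estimate $\optimisticctg{m}(s)\le\ctgopt(s)\le\costbound$ (\cref{lem:opt-val-bound}) available in every surviving term with no further union bound.

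For $(\mathrm{I})$ I would discard the nonnegative term $-\optimisticctg{m}(s_{\Hm+1}^m)$ and split the intervals into the $K$ that begin an episode (so $s_1^m=\sinit$, contributing at most $\ctgopt(\sinit)$ each) and the at most $\numintervals-K$ that do not (each contributing at most $\optimisticctg{m}(s_1^m)\le\costbound$). Hence $(\mathrm{I})\le K\ctgopt(\sinit)+\costbound(\numintervals-K)$. Since $\numintervals-K$ is at most the number of visits to unknown state--action pairs, which is at most $|S||A|\cdot\numvisitsuntilknownhoff$ by the definition of ``known'' (\cref{obs:num-intervals}), this gives $(\mathrm{I})-K\ctgopt(\sinit)\le\frac{5000\costbound^3|S|^2|A|}{\cmin^2}\log\frac{\costbound|S||A|}{\cmin\delta}$, i.e.\ term~(1), deterministically given the indicators.

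For $(\mathrm{II})$ I split each summand as $\bigl(\optimisticctg{m}(s_{h+1}^m)-\sum_{s'}P(s'\mid s_h^m,a_h^m)\optimisticctg{m}(s')\bigr)+\sum_{s'}(P-\wt P_m)(s'\mid s_h^m,a_h^m)\optimisticctg{m}(s')$. Indexed by time step, the first part times $\indgeventi{m(t)}$ is a martingale--difference sequence for the per-step filtration — the current interval, $\optimisticctg{m(t)}$, $\indgeventi{m(t)}$, $s_t$, $a_t$ are all determined by the history before the step, while $s_{t+1}\sim P(\cdot\mid s_t,a_t)$ with $\optimisticctg{m(t)}(\ssink)=0$ — whose increments lie in $[-\costbound,\costbound]$ on $\geventi{m}$ and vanish otherwise; an anytime Azuma bound (a union bound over the random number of steps $T$) bounds its cumulative sum by $\costbound\sqrt{T\log(4T/\delta)}$ with probability at least $1-\delta/2$. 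For the second part, on $\geventi{m}$ both $P(\cdot\mid s_h^m,a_h^m)$ and the optimistic $\wt P_m(\cdot\mid s_h^m,a_h^m)$ lie within the $L_1$ confidence radius around $\bar P_m(\cdot\mid s_h^m,a_h^m)$, so by the triangle inequality and H\"older it is at most $2\cshoff{m}{s_h^m}{a_h^m}\cdot\|\optimisticctg{m}\|_\infty\le 10\costbound\sqrt{|S|\log(|S||A|N_+^m(s_h^m,a_h^m)/\delta)/N_+^m(s_h^m,a_h^m)}$; summing over $h$ within interval $m$ (where $N_+^m(s,a)$ is constant) turns the sum into $10\costbound\sum_{s,a}n_m(s,a)\sqrt{|S|\log(|S||A|N_+^m(s,a)/\delta)/N_+^m(s,a)}$, and summing over $m$ while bounding each logarithm by $\log(|S||A|T/\delta)$ produces term~(2). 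Adding the three contributions — term~(1) from $(\mathrm{I})$, the Azuma term and term~(2) from $(\mathrm{II})$ — gives the claim, with the single failure probability $\delta/2$ coming from Azuma.

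The step I expect to be the real obstacle is the martingale argument in $(\mathrm{II})$: because the algorithm re-plans in the middle of an episode, one must check that the interval index and hence $\optimisticctg{m(t)}$, $\indgeventi{m(t)}$ are previsible, and that the sum runs over an a priori unbounded random horizon $T$, which forces the anytime/union-bound form of Azuma and accounts for the $\log(4T/\delta)$ instead of a constant. Relative to standard episodic analyses, the bookkeeping in $(\mathrm{I})$ — verifying that discarding the endpoint terms and counting the non-initial intervals leaves exactly the $K$-free, $\cmin$-dependent residual — is the other place where the SSP-specific features (unbounded episode length, possibly improper intermediate policies) must be handled carefully; the remaining estimates are routine summation of confidence widths.
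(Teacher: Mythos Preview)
Your proposal is correct and follows essentially the same route as the paper: the same Bellman-based decomposition into a telescoping boundary term, a $(P-\wt P_m)$ confidence-width term handled by H\"older plus the triangle inequality on the $L_1$ ball, and a per-step martingale handled by anytime Azuma, with the single $\delta/2$ failure probability spent on Azuma. The only cosmetic difference is in the treatment of~$(\mathrm{I})$: the paper pairs consecutive intervals and bounds $\optimisticctg{m+1}(s_1^{m+1})\indgeventi{m+1}-\optimisticctg{m}(s_{\Hm+1}^m)\indgeventi{m}$ case-by-case (goal-ending vs.\ unknown-ending), whereas you simply drop the nonpositive term $-\optimisticctg{m}(s_{\Hm+1}^m)$ and count episode-starting versus non-episode-starting intervals directly; both arrive at $K\ctgopt(\sinit)+\costbound\cdot|S||A|\cdot\numvisitsuntilknownhoff$.
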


To analyze $\tregret{K}$, we begin by plugging in the Bellman optimality equation of $\Tilde{\pi}^m$ with respect to $\wt P_m$ into $\tregret{K}$. This allows us to decompose $\tregret{K}$ into three terms as follows.
\begin{align}
    \tregret{K}
    &=
    \sum_{m=1}^\numintervals \sum_{h=1}^{\Hm} \left( \optimisticctg{m}(s_h^m) - \sum_{s' \in S} \wt P_m(s' \mid s_h^m,a_h^m) \optimisticctg{m}(s') \right) \indgeventi{m} 
    -
    K \cdot \ctgopt(\sinit)
    \nonumber
    \\
    &= \sum_{m=1}^\numintervals \sum_{h=1}^{\Hm} \left( \optimisticctg{m}(s_h^m) -  \optimisticctg{m}(s_{h+1}^m) \right) \indgeventi{m} 
    -
    K \cdot \ctgopt(\sinit)
    \label{eq:hoff-reg-decomp-1}
    \\
    \label{eq:hoff-reg-decomp-2}
    & \qquad +
    \sum_{m=1}^\numintervals  \sum_{h=1}^{\Hm}  \sum_{s' \in S}  \optimisticctg{m}(s') \left( P(s' \mid s_h^m,a_h^m) -  \wt P_m(s' \mid s_h^m,a_h^m) \right) \indgeventi{m}
    \\
    \label{eq:hoff-reg-decomp-3}
    & \qquad + 
    \sum_{m=1}^\numintervals \left( \sum_{h=1}^{\Hm} \optimisticctg{m}(s_{h+1}^m) - \sum_{s' \in S} P(s' \mid s_h^m,a_h^m) \optimisticctg{m}(s') \right) \indgeventi{m}.
\end{align}

\cref{eq:hoff-reg-decomp-1} is a bound on the cost suffered from switching policies each time we visit an unknown state-action pair and is bounded by the following lemma.

\begin{lemma} \label{lem:hoff-switching-cost}
    $
        \sum_{m=1}^M
        \sum_{h=1}^{\Hm} 
        \left( \optimisticctg{m}(s_h^m) -  \optimisticctg{m}(s_{h+1}^m) \right) \indgeventi{m} 
        \le
        \costbound |S| |A| \cdot \numvisitsuntilknownhoff + K \cdot \ctgopt(\sinit).
    $
\end{lemma}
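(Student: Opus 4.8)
The plan is to bound the telescoping-like sum $\sum_{m}\sum_{h}(\optimisticctg{m}(s_h^m) - \optimisticctg{m}(s_{h+1}^m))\indgeventi{m}$ by isolating, within each interval $m$, the genuine telescoping contribution $\optimisticctg{m}(s_1^m) - \optimisticctg{m}(s_{\Hm+1}^m)$, and then summing these boundary terms across intervals. First I would note that inside a fixed interval the inner sum collapses to $(\optimisticctg{m}(s_1^m) - \optimisticctg{m}(s_{\Hm+1}^m))\indgeventi{m}$ since the policy (and hence $\optimisticctg{m}$) is fixed throughout interval $m$. So the whole quantity equals $\sum_{m=1}^{M}(\optimisticctg{m}(s_1^m) - \optimisticctg{m}(s_{\Hm+1}^m))\indgeventi{m}$.

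Next I would split the intervals into two groups according to how they ended. If interval $m$ ended because the current episode ended (i.e.\ $s_{\Hm+1}^m = \ssink$), then $\optimisticctg{m}(s_{\Hm+1}^m) = 0$ and the contribution is $\optimisticctg{m}(s_1^m)\indgeventi{m} \le \costbound$ by \cref{lem:opt-val-bound}; moreover $s_1^m = \sinit$ in this case, but more usefully, the \emph{total} over all such ``episode-closing'' intervals can be related to $K \cdot \ctgopt(\sinit)$ — this is where the $+K\cdot\ctgopt(\sinit)$ term on the right-hand side comes from, since $\optimisticctg{m}(\sinit) \le \ctgopt(\sinit)$ by optimism, and there are at most $K$ such intervals. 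If interval $m$ ended because an unknown state-action pair $(s_{\Hm+1}^m, \tilde\pi^m(s_{\Hm+1}^m))$ was reached, I would bound its contribution crudely by $\optimisticctg{m}(s_1^m)\indgeventi{m} \le \costbound$ (dropping the nonnegative subtracted term), and then count how many such intervals there can be.

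The key combinatorial step is the count of ``unknown-ending'' intervals. Each such interval ends at a visit to a state-action pair that is still unknown, i.e.\ whose visit count has not yet reached $\numvisitsuntilknownhoff$. Since there are $|S||A|$ state-action pairs and each can be visited at most $\numvisitsuntilknownhoff$ times while unknown, the number of such interval-ending events is at most $|S||A|\cdot\numvisitsuntilknownhoff$. Multiplying by the per-interval bound $\costbound$ gives the first term $\costbound|S||A|\cdot\numvisitsuntilknownhoff$, and combining with the episode-closing contribution $K\cdot\ctgopt(\sinit)$ yields the claimed bound. The main obstacle — really the only subtle point — is making the bookkeeping on interval endpoints precise: an interval can end for one of two reasons, and one must be careful that the ``episode-closing'' intervals contribute at most $K\cdot\ctgopt(\sinit)$ in aggregate (using $s_1^m=\sinit$ and optimism $\optimisticctg{m}(\sinit)\le\ctgopt(\sinit)$) rather than naively $K\costbound$, while the ``unknown-ending'' ones are controlled purely by the definition of \emph{known}. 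Everything else is a one-line application of \cref{lem:opt-val-bound} and the non-negativity of cost-to-go at the goal.
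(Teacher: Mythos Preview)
Your telescoping within each interval is correct, as is your count of the ``unknown-ending'' intervals. But your claim that $s_1^m = \sinit$ whenever interval $m$ ends at the goal is false, and your bound of $K\cdot\ctgopt(\sinit)$ for the episode-closing intervals rests entirely on it. An interval may begin at an unknown state (the endpoint of the previous interval) and still reach $\ssink$: take an episode whose first interval ends at an unknown state-action pair; the second interval starts there, and if it then reaches the goal it is episode-closing yet did not start at $\sinit$. With your partition by how intervals \emph{end}, the episode-closing contributions $\optimisticctg{m}(s_1^m)\indgeventi{m}$ can each only be bounded by $\costbound$, which yields $K\costbound$ rather than the required $K\cdot\ctgopt(\sinit)$.

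The paper avoids this by pairing \emph{consecutive} intervals: after the inner telescope it analyses the differences $\optimisticctg{m+1}(s_1^{m+1})\indgeventi{m+1} - \optimisticctg{m}(s_{\Hm+1}^m)\indgeventi{m}$. The casework is still on how interval $m$ ended, but now the $\ctgopt(\sinit)$ bound comes from the \emph{next} interval's start: if $s_{\Hm+1}^m = \ssink$ then $s_1^{m+1} = \sinit$, so the difference equals $\optimisticctg{m+1}(\sinit)\indgeventi{m+1} - 0 \le \ctgopt(\sinit)$. An equivalent repair of your own argument is to partition by how intervals \emph{start} rather than how they end: drop the nonnegative sum $\sum_m \optimisticctg{m}(s_{\Hm+1}^m)\indgeventi{m}$ entirely, and split $\sum_m \optimisticctg{m}(s_1^m)\indgeventi{m}$ into the at most $K$ intervals with $s_1^m = \sinit$ (each $\le \ctgopt(\sinit)$ by optimism) and the remaining intervals, which necessarily start where a previous interval ended at an unknown pair and hence number at most $|S||A|\cdot\numvisitsuntilknownhoff$ (each $\le \costbound$).
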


\begin{proof}
Note that per interval $\sum_{h=1}^{\Hm} (\optimisticctg{m}(s_h^m) -  \optimisticctg{m}(s_{h+1}^m))$ is a telescopic sum which equals $\optimisticctg{m}(s_1^m) -  \optimisticctg{m}(s_{\Hm+1}^m)$.
Furthermore, for every two consecutive intervals $m,m+1$ one of the following occurs:
\begin{enumerate}[label=(\roman*)]
    \item If interval $m$ ended in the goal state then
    $
        \optimisticctg{m}(s_{\Hm + 1}^m) = \optimisticctg{m}(\ssink) = 0
    $
    and 
    $
        \optimisticctg{m+1}(s_1^{m+1}) = \optimisticctg{m+1}(\sinit).
    $
    Thus, using \cref{lem:opt-val-bound} for the last inequality,
    \[
    \optimisticctg{m+1}(s_1^{m+1}) \indgeventi{m+1}  - 
    \optimisticctg{m}(s_{\Hm + 1}^m) \indgeventi{m} 
    = 
    \optimisticctg{m+1}(\sinit) \indgeventi{m+1}
    \leq 
    \ctgopt(\sinit).
    \]
    This  happens at most $K$ times.
    
    \item If interval $m$ ended in an unknown state then
    \[
        \optimisticctg{m+1}(s_1^{m+1}) \indgeventi{m+1}  
        - 
        \optimisticctg{m}(s_{\Hm + 1}^m) \indgeventi{m} 
        \leq 
        \optimisticctg{m+1}(s_1^{m+1}) \indgeventi{m+1}
        \leq 
        \costbound.
    \]
    This happens at most $|S| |A| \cdot \numvisitsuntilknownhoff$ times. \qedhere
\end{enumerate}
\end{proof}

\cref{lem:sum-conf-bounds} bounds \cref{eq:hoff-reg-decomp-2}  using techniques borrowed from \citet{AuerUCRL}. 

\begin{lemma}
\label{lem:sum-conf-bounds}
It holds that
\[
    \sum_{m=1}^\numintervals  \sum_{h=1}^{\Hm}  \sum_{s' \in S}  \optimisticctg{m}(s') \left( P(s' \mid s_h^m,a_h^m) -  \wt P_m(s' \mid s_h^m,a_h^m) \right) \indgeventi{m}
    \le
    10 \costbound \sqrt{|S| \log \frac{|S| |A| \totaltime}{\delta}} \sum_{s,a} \sum_{m=1}^\numintervals \frac{n_m(s,a)}{\sqrt{N_+^{m}(s,a)}}.
\]
\end{lemma}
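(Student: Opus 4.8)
The plan is to follow the standard UCRL2-style control of confidence-set widths (as in \citet{AuerUCRL}). Fix an interval $m$ and condition on $\geventi{m}$, since otherwise the indicator $\indgeventi{m}$ annihilates the summand. When $\geventi{m}$ holds, the true transition function obeys $\lVert P(\cdot\mid s,a) - \bar P_m(\cdot\mid s,a)\rVert_1 \le \cshoff{m}{s}{a}$ for every $(s,a)$ by \cref{eq:hoff-conf-set}, and the optimistic model $\wt P_m$ obeys the same bound by construction, as the algorithm only minimizes over transition functions satisfying \cref{eq:hoff-confidence-set}. Hence, by the triangle inequality, $\lVert P(\cdot\mid s,a) - \wt P_m(\cdot\mid s,a)\rVert_1 \le 2\cshoff{m}{s}{a}$ for all $(s,a)$.

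Next I would bound the inner sum over $s'$ by H\"older's inequality: since $\optimisticctg{m}(\ssink)=0$ and, by \cref{lem:opt-val-bound}, $\optimisticctg{m}(s)\le\costbound$ for all $s\in S$ whenever $\geventi{m}$ holds, it follows that $\sum_{s'\in S}\optimisticctg{m}(s')\bigl(P(s'\mid s_h^m,a_h^m)-\wt P_m(s'\mid s_h^m,a_h^m)\bigr) \le \lVert\optimisticctg{m}\rVert_\infty\cdot\lVert P(\cdot\mid s_h^m,a_h^m)-\wt P_m(\cdot\mid s_h^m,a_h^m)\rVert_1 \le 2\costbound\,\cshoff{m}{s_h^m}{a_h^m}$ for each time step $h$ of interval $m$.

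It then remains to reorganize the double sum. The key observation is that $N_+^m(s,a)$ is constant throughout interval $m$ --- it is the visit count recorded at the start of the interval --- so summing the per-step bound over $h=1,\dots,\Hm$ regroups the contributions according to which state-action pair was visited, producing $\sum_{(s,a)} n_m(s,a)\cdot 2\costbound\,\cshoff{m}{s}{a}$. Crudely bounding $N_+^m(s,a)\le\totaltime$ inside the logarithm and pulling the resulting constant $10\costbound\sqrt{|S|\log(|S||A|\totaltime/\delta)}$ out of the sum, then summing over all intervals $m$, yields the claimed inequality. The argument is routine; the only points that require care are (i) that $P$ and $\wt P_m$ lie in the \emph{same} $\ell_1$ ball around $\bar P_m$, so that the triangle inequality applies with exactly the advertised factor of two, and (ii) that it is $N_+^m(s,a)$ --- not the running visit count within the interval --- that stays fixed over an interval, which is precisely what lets the step-indexed sum collapse into the visit-count-weighted form $\sum_{s,a}\sum_{m=1}^\numintervals n_m(s,a)/\sqrt{N_+^m(s,a)}$ on the right-hand side.
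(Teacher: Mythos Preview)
Your proposal is correct and follows essentially the same route as the paper's proof: H\"older's inequality with the bound $\lVert\optimisticctg{m}\rVert_\infty\le\costbound$ from \cref{lem:opt-val-bound}, the triangle inequality combining \cref{eq:hoff-conf-set} (for $P$) and \cref{eq:hoff-confidence-set} (for $\wt P_m$) to get the factor of $10$, regrouping the sum over time steps into visit counts $n_m(s,a)$, and the crude bound $N_+^m(s,a)\le\totaltime$ inside the logarithm. The paper presents these steps in a slightly different order (H\"older first, then the confidence-set bound), but the argument is the same.
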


\begin{proof}
Using the definition of the confidence sets we obtain
\begin{align*}
    \sum_{m=1}^\numintervals  \sum_{h=1}^{\Hm}  \sum_{s' \in S}  \optimisticctg{m}(s') &  \left( P(s' \mid s_h^m,a_h^m) -  \wt P_m(s' \mid s_h^m,a_h^m) \right) \indgeventi{m}
    \le
    \\
    & \le
    \costbound \sum_{s \in S} \sum_{a \in A} \sum_{m=1}^\numintervals n_m(s,a)  \lVert P (\cdot \mid s,a) - \wt P_m (\cdot \mid s,a) \rVert_1 \indgeventi{m}
    \\
    & \le
    10 \costbound \sum_{s \in S} \sum_{a \in A} \sum_{m=1}^\numintervals n_m(s,a)  \sqrt{ \frac{ |S| \log \bigl(|S| |A| N_+^{m}(s,a) / \delta \bigr)}{N_+^{m}(s,a)}}
    \\
    & \le
    10 \costbound \sqrt{|S| \log \frac{|S| |A| \totaltime}{\delta}} \sum_{s \in S} \sum_{a \in A} \sum_{m=1}^\numintervals \frac{n_m(s,a)}{\sqrt{N_+^{m}(s,a)}}.
\end{align*}
where the first inequality follows from H\"{o}lder inequality and \cref{lem:opt-val-bound}, and the second because $\wt P_m$ and $P$ are both in the confidence set of \cref{eq:hoff-conf-set} when $\geventi{m}$ holds.
The third inequality follows because $N_+^m(s,a) \le \totaltime$.
\end{proof}

\cref{lem:exp-traj} bounds the term in \cref{eq:hoff-reg-decomp-3} using Azuma's concentration inequality.

\begin{lemma}
\label{lem:exp-traj}
With probability at least $1 - \delta / 2$,
\[
\sum_{m=1}^\numintervals \left( \sum_{h=1}^{\Hm} \optimisticctg{m}(s_{h+1}^m) - \sum_{s' \in S} P(s' \mid s_h^m,a_h^m) \optimisticctg{m}(s') \right) \indgeventi{m}
\leq 
\costbound \sqrt{\totaltime \log \frac{4 \totaltime}{\delta}}.
\]
\end{lemma}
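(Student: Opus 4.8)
The plan is to recognize the left-hand side as a martingale difference sequence and apply the Azuma--Hoeffding inequality. Concatenate the trajectories of all intervals into a single stream of time steps $t = 1, 2, \ldots$, and for the $t$-th step (say it lies in interval $m$ at position $h$) define
\[
    X_t = \Bigl( \optimisticctg{m}(s_{h+1}^m) - \sum_{s' \in S} P(s' \mid s_h^m,a_h^m) \optimisticctg{m}(s') \Bigr) \indgeventi{m}.
\]
Let $\calF_t$ be the $\sigma$-algebra generated by everything observed through the first $t$ steps. First I would argue that $\bbE[X_t \mid \calF_{t-1}] = 0$: conditioned on $\calF_{t-1}$ the current interval index $m$, the current state $s_h^m$, the action $a_h^m = \tilde\pi^m(s_h^m)$, the optimistic model $\wt P_m$ and cost-to-go $\optimisticctg{m}$, and the indicator $\indgeventi{m}$ are all determined, while $s_{h+1}^m$ is drawn from $P(\cdot \mid s_h^m, a_h^m)$; hence the conditional expectation of $\optimisticctg{m}(s_{h+1}^m)$ is exactly $\sum_{s'} P(s' \mid s_h^m, a_h^m)\optimisticctg{m}(s')$ (treating $\optimisticctg{m}(\ssink)=0$), so $X_t$ is a martingale difference.

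Next I would bound $|X_t|$. When $\geventi{m}$ holds we have $\optimisticctg{m}(s) \le \ctgopt(s) \le \costbound$ for all $s \in S$ by \cref{lem:opt-val-bound} (and $\optimisticctg{m}(\ssink) = 0$), so both $\optimisticctg{m}(s_{h+1}^m)$ and the average $\sum_{s'} P(s'\mid s_h^m,a_h^m)\optimisticctg{m}(s')$ lie in $[0,\costbound]$, giving $|X_t| \le \costbound$; when $\geventi{m}$ fails the indicator kills the term so $X_t = 0$. The number of summands is the total number of time steps, which is at most $\totaltime$. Azuma's inequality then gives that with probability at least $1 - \delta/2$,
\[
    \sum_{t} X_t \le \costbound \sqrt{2\totaltime \log(2/\delta')}
\]
for an appropriate tail parameter; choosing the constants so that the bound reads $\costbound\sqrt{\totaltime \log(4\totaltime/\delta)}$ yields the claim. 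The extra $\log\totaltime$ inside the root absorbs a union bound needed to handle the fact that $\totaltime$ (the number of steps, equivalently the stopping time of the sum) is itself random — I would either apply Azuma at each fixed horizon $n \le T_{\max}$ and union bound over $n$, or invoke a stopped-martingale / maximal version of the inequality.

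The main obstacle is the care required around the random length of the trajectory: the sum is over a random number of terms because episode lengths are not bounded a priori, so one cannot apply Azuma with a fixed $n$ directly. The clean fix is to observe that on the high-probability event of \cref{lem:hoff-num-time-steps} the total number of steps $\totaltime$ is finite and bounded by the explicit expression there, so we may run the martingale argument up to a deterministic truncation horizon, pay a single $\log$ factor for the union bound over horizons, and then restrict to the event where the true $\totaltime$ does not exceed the truncation. All remaining manipulations — verifying the filtration structure and assembling the constants — are routine.
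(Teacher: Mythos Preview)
Your proposal is correct and matches the paper's proof essentially line for line: the paper defines the same $X_t$, verifies it is a martingale difference sequence bounded by $\costbound$ via \cref{lem:opt-val-bound}, and applies an ``anytime'' version of Azuma (\cref{thm:azuma}) that already contains the union bound over horizons, yielding the stated bound for all $T$ simultaneously. The only cosmetic difference is that the paper does not detour through \cref{lem:hoff-num-time-steps} to control the random horizon---anytime Azuma handles that directly---so your first suggested route (union bound over horizons / maximal Azuma) is the one actually used.
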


\begin{proof}
Consider the infinite sequence of random variables
\[
    X_t
    = 
    \Biggl( \optimisticctg{m}(s_{h+1}^m) - \sum_{s' \in S} P(s' \mid s_h^m,\Tilde{\pi}^m(s_h^m)) \optimisticctg{m}(s') 
    \Biggr) \indgeventi{m},
\]
where $m$ is the interval containing time $t$, and $h$ is the index of time step $t$ within  interval $m$.
Notice that this is a martingale difference sequence, and $|X_t| \leq \costbound$ by \cref{lem:opt-val-bound}.
Now, we apply anytime Azuma's inequality (\cref{thm:azuma}) to any prefix of the sequence $\{ X_t \}_{t=1}^\infty$. Thus, with probability at least $1 - \delta / 2$, for every $T$:
\[
    \sum_{t=1}^\totaltime X_t
    \leq 
    \costbound \sqrt{\totaltime \log \frac{4 \totaltime}{\delta}}. \qedhere
\]
\end{proof}

\subsubsection{Proof of \cref{thm:reg-bound-knownb}} \label{sec:proof-reg-bound-knownb}

\begin{theorem*}[restatement of \cref{thm:reg-bound-knownb}]
    Suppose that \cref{ass:c-min} holds.
    With probability at least $1 - \delta$ the regret of \cref{alg:knownb} is bounded as follows:
    \begin{align*}
        \regret
        &=
        O \Biggl( \sqrt{ \frac{ \costbound^3 |S|^2 |A| K}{\cmin}} \log \frac{K \costbound |S| |A|}{\delta \cmin } + \frac{\costbound^3 |S|^2 |A|}{\cmin^2} \log^{3/2} \frac{K \costbound |S| |A|}{\delta \cmin } \Biggr).
    \end{align*}
\end{theorem*}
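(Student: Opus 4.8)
The plan is to combine the high-probability bounds already established in \cref{lem:hoff-num-time-steps,lem:hoff-tilde-regret} via a union bound. First I would recall that \cref{lem:hoff-num-time-steps} guarantees, with probability at least $1-\delta/2$, that every interval $m$ satisfies the good event $\geventi{m}$ and has bounded cost $\sum_{h=1}^{\Hm} c(s_h^m,a_h^m) \le \highprobcostbound{m}$; on this event all episodes terminate and, plugging \cref{obs:num-intervals} into the per-interval cost bound, the total number of steps $T$ satisfies the stated bound $T = O\bigl(\tfrac{K\costbound}{\cmin}\logfactor + \tfrac{\costbound^3|S|^2|A|}{\cmin^3}\logfactor^2\bigr)$. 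On this same event, since every $\geventi{m}$ holds, the truncated regret coincides with the true regret: $\tregret{K} = \regret$.

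Next I would invoke \cref{lem:hoff-tilde-regret}, which holds with probability at least $1-\delta/2$ and bounds $\tregret{K}$ by the sum of term $(1)$ — which is the $\tfrac{\costbound^3|S|^2|A|}{\cmin^2}$ polylog term, independent of $K$ — plus the Azuma term $\costbound\sqrt{T\log(T/\delta)}$, plus $\costbound\sqrt{|S|\log(|S||A|T/\delta)}$ times the sum $(2) = \sum_{s,a}\sum_{m=1}^{\numintervals} n_m(s,a)/\sqrt{N_+^m(s,a)}$. Union-bounding the two events gives probability at least $1-\delta$. The remaining work is to (a) bound the pigeonhole-type sum $(2)$ and (b) substitute the bound on $T$ into the Azuma term and into $\sqrt{\totaltime}$ inside $(2)$.

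For step (a), I would use the standard argument (as in \citet{AuerUCRL}) that $\sum_{s,a}\sum_{m=1}^{\numintervals} n_m(s,a)/\sqrt{N_+^m(s,a)} = O(\sqrt{|S||A|\,T})$: partitioning by state-action pair and noting that whenever $N_m(s,a)$ doubles the running ratio telescopes, while within an interval the visit count at most doubles relative to its start (this is exactly where the interval structure and the ``known'' threshold play their role — note that across the entire run $N_+^m$ only grows, so the bound $\sum_m n_m(s,a)/\sqrt{N_+^m(s,a)} \le O(\sqrt{N_T(s,a)})$ follows, and then Cauchy–Schwarz over $(s,a)$ yields $O(\sqrt{|S||A|T})$). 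Plugging this into the third term of \cref{lem:hoff-tilde-regret} turns it into $O(\costbound\sqrt{|S|^2|A|T}\,\sqrt{\logfactor})$, which together with the Azuma term $\costbound\sqrt{T\logfactor}$ is dominated by $O(\costbound\sqrt{|S|^2|A|T\logfactor})$.

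Finally, I would substitute $T = O\bigl(\tfrac{K\costbound}{\cmin}\logfactor + \tfrac{\costbound^3|S|^2|A|}{\cmin^3}\logfactor^2\bigr)$ into $O(\costbound\sqrt{|S|^2|A|T\logfactor})$. The first summand of $T$ contributes $O\bigl(\sqrt{\costbound^3|S|^2|A|K/\cmin}\cdot\logfactor\bigr)$, matching the leading term of the theorem; the second summand contributes $O\bigl(\sqrt{\costbound^6|S|^4|A|^2/\cmin^3}\cdot\logfactor^{3/2}\bigr) = O\bigl(\tfrac{\costbound^3|S|^2|A|}{\cmin^{3/2}}\logfactor^{3/2}\bigr)$, which is absorbed into the $\tfrac{\costbound^3|S|^2|A|}{\cmin^2}\logfactor^{3/2}$ term of the theorem (since $\cmin \le 1$), and term $(1)$ of \cref{lem:hoff-tilde-regret} is likewise of this lower order. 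Collecting terms gives the claimed bound. The main obstacle is making the pigeonhole bound on $(2)$ rigorous despite the nonstandard interval structure — in particular, confirming that the relevant visit counts behave monotonically and that the within-interval count does not more than constant-factor inflate the denominators — but this is a routine adaptation of the UCRL2 analysis; everything else is bookkeeping and substitution.
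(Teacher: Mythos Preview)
Your proposal is correct and follows the paper's proof essentially step for step: union bound over \cref{lem:hoff-num-time-steps,lem:hoff-tilde-regret}, identify $\tregret{K}=\regret$ on the good event, bound the pigeonhole sum $(2)$ by $O(\sqrt{|S||A|T})$, and substitute the bound on $T$.

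The one point to sharpen is your justification for $\sum_m n_m(s,a)/\sqrt{N_+^m(s,a)} = O(\sqrt{N_T(s,a)})$. You invoke a ``doubling'' argument, but \cref{alg:knownb} has no doubling trigger; the needed inequality $n_m(s,a)\le N_+^m(s,a)$ comes from a different case split (this is the paper's \cref{lem:num-visits-sum-bound}): if $(s,a)$ is unknown at the start of interval $m$ then the interval ends upon reaching it, so $n_m(s,a)\le 1\le N_+^m(s,a)$; if $(s,a)$ is known then $N_+^m(s,a)\ge \numvisitsuntilknownhoff$, while the interval length---and hence $n_m(s,a)$---is at most $\tfrac{24\costbound}{\cmin}\log\tfrac{4m}{\delta}$ by \cref{lem:hoff-num-time-steps}, which is smaller than the known threshold. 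With $n_m(s,a)\le N_+^m(s,a)$ in hand, the standard lemma from \citet{AuerUCRL} applies exactly as you say.
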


\begin{lemma}
\label{lem:num-visits-sum-bound}
Assume that the number of steps in every interval is
 is at most $\frac{24 \costbound}{\cmin} \log \frac{4m}{\delta}$.
Then
for every $s \in S$ and $a \in A$,
\[
\sum_{m=1}^\numintervals \frac{n_m(s,a)}{\sqrt{N_+^m(s,a)}}
\le
3 \sqrt{N_{\numintervals+1}(s,a)}\ .
\]
\end{lemma}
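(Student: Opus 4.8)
The plan is to reduce the claim to the classical pigeonhole estimate $\sum_{m} n_m(s,a)/\sqrt{N_+^m(s,a)} \le (\sqrt 2+1)\sqrt{N_{\numintervals+1}(s,a)}$, which is valid as soon as one has the per-interval domination $n_m(s,a) \le N_+^m(s,a)$ for every interval $m$. Thus the first and main step is to establish this domination, and here the interval-length hypothesis is exactly what is needed. Fix $(s,a)$ and an interval $m$, and split on whether $(s,a)$ is known at the start of $m$. If $(s,a)$ is still unknown during interval $m$, then by the construction of \cref{alg:knownb} an interval ends the first time the learner transitions into a state $s'$ with $(s',\tilde\pi^m(s'))$ unknown; consequently the only state-action pair visited in interval $m$ that can be unknown is the very first one, and the state $s$ cannot recur after the first step (such a recurrence would have terminated the interval one step earlier). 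Hence $n_m(s,a)\le 1\le N_+^m(s,a)$. If $(s,a)$ is known, then $N^m(s,a)\ge \numvisitsuntilknownhoff$ by definition of ``known'', while $n_m(s,a)$ is at most the length of interval $m$, which by assumption is at most $\tfrac{24\costbound}{\cmin}\log\tfrac{4m}{\delta}$; since $\costbound\ge\cmin$, $|S|\ge 2$, and $m\le \numintervals$ (so that, using \cref{obs:num-intervals} and $K\ge|S|^2|A|$, $\log\tfrac{4m}{\delta}=O(\logfactor)$ is absorbed by the extra $\tfrac{\costbound|S|}{\cmin}$ factor in the threshold), this gives $n_m(s,a)\le \numvisitsuntilknownhoff\le N_+^m(s,a)$.

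With the domination in hand I would run the telescoping argument. Let $m_0<m_1<\cdots<m_J$ be the intervals in which $(s,a)$ is actually visited (all other summands vanish). Since a pair cannot be known at its first visit, $N^{m_0}(s,a)=0$ and, by the unknown case above, $n_{m_0}(s,a)\le 1$, so the $m_0$-summand equals exactly $1$. For $j\ge 1$ one has $N_+^{m_j}(s,a)=N^{m_j}(s,a)\ge 1$ and $n_{m_j}(s,a)\le N^{m_j}(s,a)$, so from $\sqrt{x+y}-\sqrt{x}=y/(\sqrt{x+y}+\sqrt{x})\ge y/((1+\sqrt2)\sqrt{x})$ for $0\le y\le x$ we get $n_{m_j}(s,a)/\sqrt{N^{m_j}(s,a)}\le (1+\sqrt2)\bigl(\sqrt{N^{m_j}(s,a)+n_{m_j}(s,a)}-\sqrt{N^{m_j}(s,a)}\bigr)$, where $N^{m_j}(s,a)+n_{m_j}(s,a)=N^{m_{j+1}}(s,a)$ for $j<J$ and equals $N_{\numintervals+1}(s,a)$ for $j=J$. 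Summing the telescope over $j=1,\dots,J$ and adding the $m_0$-summand, and using $N^{m_1}(s,a)=n_{m_0}(s,a)=1$, yields
\[
  \sum_{m=1}^{\numintervals}\frac{n_m(s,a)}{\sqrt{N_+^m(s,a)}}
  \;\le\; 1+(1+\sqrt 2)\bigl(\sqrt{N_{\numintervals+1}(s,a)}-1\bigr)
  \;\le\;(1+\sqrt 2)\sqrt{N_{\numintervals+1}(s,a)}\;<\;3\sqrt{N_{\numintervals+1}(s,a)},
\]
which is the claim. The degenerate cases in which $(s,a)$ is visited zero or one times are checked directly.

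The only real obstacle is the per-interval domination $n_m(s,a)\le N_+^m(s,a)$ for \emph{known} pairs: a fixed policy that never revisits an unknown pair can in principle loop through already-known states for a long time, so it is precisely the assumed bound on the interval length (and the fact that the logarithmic factor $\log\tfrac{4m}{\delta}$ it carries is dominated by the ``known'' threshold $\numvisitsuntilknownhoff$, which is where $K\ge|S|^2|A|$ enters) that rescues the argument. Everything downstream is the routine telescoping computation above.
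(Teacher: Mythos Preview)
Your proposal is correct and follows essentially the same approach as the paper: both argue the per-interval domination $n_m(s,a)\le N_+^m(s,a)$ via the known/unknown case split (unknown pairs can occur only at the first step of an interval, known pairs have $N_+^m$ at least the threshold while $n_m$ is at most the interval length), and then invoke the standard pigeonhole estimate. The only difference is cosmetic: the paper cites \citet[Lemma~19]{AuerUCRL} for the final step, whereas you write out the telescoping computation explicitly.
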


\begin{proof}
We claim that, by the assumption of the lemma, for every interval $m$ we have that $n_m(s,a) \le N_+^m(s,a)$.
Indeed, if $(s,a)$ is unknown  then $n_m(s,a) = 1$
and since $N_+^m(s,a)\ge 1$ the claim follows.
If $(s,a)$ is known then $N_+^m(s,a) \ge \numvisitsuntilknownhoff$ and by our assumption the length of the interval, and in particular $n_m(s,a)$, is at most $\frac{24 \costbound}{\cmin} \log \frac{4m}{\delta}$.
Our statement then follows by \citet[Lemma 19]{AuerUCRL}.
\end{proof}

\begin{proof}[Proof of \cref{thm:reg-bound-knownb}]
    With probability at least $1 - \delta$, both \cref{lem:hoff-num-time-steps,lem:exp-traj} hold. 
    \cref{lem:hoff-num-time-steps} states that the length of every interval is at most $\frac{24 \costbound}{\cmin} \log \frac{4m}{\delta}$, and \cref{lem:num-visits-sum-bound} obtains
    \begin{align}
    \label{eq:bound-inverse-visit-sum}
        \sum_{s \in S} \sum_{a \in A} \sum_{m=1}^\numintervals \frac{n_m(s,a)}{\sqrt{N_+^{m}(s,a)}} 
        \le
        3 \sum_{(s,a) \in S \times A} \sqrt{N_{\numintervals+1}(s,a)} 
        \le
        3 \sqrt{|S| |A| \totaltime},
    \end{align}
    where the last inequality follows from Jensen's inequality and the fact that  $\sum_{(s,a) \in S \times A} N_{\numintervals+1}(s,a) \le \totaltime$.
    Next, we sum the bounds of \cref{lem:hoff-switching-cost,lem:sum-conf-bounds,lem:exp-traj} and use \cref{eq:bound-inverse-visit-sum} to obtain
    \begin{align*}
        R_K
        & \le
        5000 \frac{\costbound^3 |S|^2 |A|}{\cmin^2} \log \frac{\costbound |S| |A|}{\delta \cmin}
        +
        30 \costbound |S| \sqrt{|A| \totaltime \log \frac{|S| |A| \totaltime}{\delta}} +
        \costbound \sqrt{\totaltime \log \frac{4 \totaltime}{\delta}}.
    \end{align*}
    To finish the proof use \cref{lem:hoff-num-time-steps} to bound $\totaltime$.
\end{proof}

\subsection{Proofs for \cref{sec:bern-analysis}}
\label{sec:bern-proofs}

\subsubsection{Proof of \cref{lem:bern-num-time-steps}}
\label{sec:proof-bern-num-time-steps}

\begin{lemma*}[restatement of \cref{lem:bern-num-time-steps}]
    With probability at least $1-\delta / 2$, $\geventi{m}$ holds for all intervals $m$ simultaneously.
\end{lemma*}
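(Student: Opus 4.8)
The plan is to show that the empirical Bernstein confidence bounds used in \cref{alg:bern-o-ssp} hold uniformly over all epochs (hence all intervals), by reducing the bound in \cref{eq:bernsteinconcentration} to a standard empirical-Bernstein/Freedman-type concentration inequality for each fixed $(s,a,s')$ triple, and then taking a union bound over all triples and all possible values of the visit count $N_+^m(s,a)$.

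\textbf{Setup.} Fix a state-action pair $(s,a)$ and a target state $s' \in S^+$. Consider the infinite sequence of i.i.d. Bernoulli indicators $X_1, X_2, \ldots$ where $X_j = \ind\{$the $j$-th visit to $(s,a)$ transitions to $s'\}$, so $\bbE[X_j] = P(s' \mid s,a) =: p$. After $n$ visits, the empirical estimate is $\bar P_n(s'\mid s,a) = \tfrac1n \sum_{j=1}^n X_j$. The key step is a high-probability bound of the form
\[
    | \bar P_n(s' \mid s,a) - p |
    \le
    c_1 \sqrt{\frac{\bar P_n(s'\mid s,a) \log(1/\delta')}{n}}
    +
    c_2 \frac{\log(1/\delta')}{n},
\]
which follows from the empirical Bernstein inequality (e.g.\ the Bernstein/Freedman inequality combined with a bound relating the true variance $p(1-p) \le p$ to the empirical mean, inflating constants appropriately; this is exactly the style of bound used by \citet{azar2017minimax}). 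I would invoke a concentration lemma of this type — stated in the appendix alongside \cref{thm:weissman,thm:azuma} — with the explicit constants $8$ (or $4$) and $136$ (or $28$) tracked so as to match \cref{eq:bern-optimistic-transition} and \cref{eq:bernsteinconcentration}. I'd set $\delta' = \delta / (|S||A| n^2 |S^+|)$ or similar, so that summing over $n = 1, 2, \ldots$ and over the $|S||A||S^+|$ triples via a union bound costs only a constant factor and keeps the $\log(|S||A| N_+^m(s,a)/\delta)$ form appearing in $A(s,a)$.

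\textbf{Assembling the union bound.} Since each epoch $i$ ends the first time some visit count doubles, and $\geventi{m}$ refers to epoch $i(m)$, it suffices to control the event that \cref{eq:bernsteinconcentration} holds for the empirical transition function $\bar P_m$ evaluated at the start of epoch $i(m)$, for every epoch and every triple. Because the empirical estimate at the start of an epoch is a prefix average of the i.i.d.\ sequence above, the per-prefix concentration bound applies verbatim. Taking a union bound over all triples $(s,a,s') \in S\times A \times S^+$ and over all prefix lengths $n \in \{1, 2, \ldots\}$ (with the $n^{-2}$-type choice of $\delta'$ making the sum converge), we conclude that with probability at least $1 - \delta/2$, \cref{eq:bernsteinconcentration} holds for all intervals $m$ simultaneously, i.e.\ $\geventi{m}$ holds for all $m$. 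Note there is no circularity with $T$ here: the statement is purely about the i.i.d.\ sampling sequences and holds regardless of how the algorithm's trajectory evolves (a subtlety worth stating explicitly, analogous to the handling in \cref{lem:hoff-CS-hold}).

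\textbf{Main obstacle.} The routine part is the union bound bookkeeping; the delicate part is getting the \emph{empirical} Bernstein bound with the right shape — i.e.\ converting the true-variance Bernstein bound into one stated in terms of the observed $\bar P_m(s'\mid s,a)$, since that is what the algorithm can compute and what appears in \cref{eq:bern-optimistic-transition}. This requires either citing an off-the-shelf empirical Bernstein inequality or doing the standard two-step argument (first bound $|\bar P_n - p|$ by a true-variance Bernstein term, then show $p \le 2\bar P_n + O(\log(1/\delta')/n)$ with high probability to replace $p$ by $\bar P_n$ inside the square root), absorbing the resulting slack into the constants $28$ and $4$. I would handle this by stating the needed empirical Bernstein lemma explicitly in the appendix and then the proof of \cref{lem:bern-num-time-steps} becomes a short application plus the union bound described above.
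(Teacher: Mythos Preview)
Your proposal is correct and follows essentially the same approach as the paper: fix a triple $(s,a,s')$, view the successive transitions from $(s,a)$ as an i.i.d.\ Bernoulli sequence, apply an anytime empirical-Bernstein bound (the paper's \cref{thm:bernstein}, \cref{eq:anytime-bern-2}) to every prefix with $\delta_t \propto \delta/(|S|^2|A|t^2)$, and union-bound over all triples and prefix lengths. The ``main obstacle'' you identify --- converting the true-variance Bernstein into the empirical form --- is handled in the paper exactly as you suggest, via the quadratic-in-$\sqrt{\mu}$ trick inside the proof of \cref{thm:bernstein}.
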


\begin{proof}
    Fix a triplet $(s,a,s') \in S \times A \times S^+$. 
    Consider an infinite sequence $(Z_i)_{i=1}^\infty$ of draws from the distribution $P(\cdot \mid s,a)$ and let
    $X_i = \ind \{ Z_i = s' \}$.
    We apply \cref{eq:anytime-bern-2} of \cref{thm:bernstein} with $\delta_t = \tfrac{\delta}{4 |S|^2 |A| t^2}$ to
    a prefix of length $t$ of the sequence $(X_i)_{i=1}^\infty$.
    Then divide \cref{eq:anytime-bern-2} by $t$ and obtain that, after simplifying using the assumptions that $|S| \ge 2$ and $|A| \ge 2$, \cref{eq:bernsteinconcentration} holds with probability $1-\delta_t$.
    We repeat this argument for every prefix $(Z_i)_{i=1}^t$ of $(Z_i)_{i=1}^\infty$ and for every state-action-state triplet. 
    Then from the union bound we get that $\geventi{m}$ holds for all intervals $m$ simultaneously with probability at least $1 - \delta / 2$. 
\end{proof}

\subsubsection{Proof of \cref{lem:bern-reg-decomp}}
\label{sec:proof-bern-reg-decomp}

\begin{lemma*}[restatement of \cref{lem:bern-reg-decomp}]
    It holds that
    \begin{align*}
        \tregret{M}
        &= 
        \sum_{m=1}^\numintervals \Biggl( \sum_{h=1}^{\Hm} \optimisticctg{m}(s_h^m) - \optimisticctg{m}(s_{h+1}^m) \Biggr) \indgeventi{m}
        - 
        K \cdot \ctgopt(\sinit) \\
        &\qquad +
        \sum_{m=1}^\numintervals \Biggl( \sum_{h=1}^{\Hm} \optimisticctg{m}(s_{h+1}^m) - \sum_{s' \in S} \wt P_{m}(s' \mid s_h^m,a_h^m) \optimisticctg{m}(s') \Biggr) \indgeventi{m}.
    \end{align*}
\end{lemma*}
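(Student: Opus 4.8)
The plan is to start from the definition $\tregret{M} = \sum_{m=1}^\numintervals \sum_{h=1}^{\Hm} c(s_h^m, a_h^m) \indgeventi{m} - K \cdot \ctgopt(\sinit)$ and rewrite the instantaneous cost $c(s_h^m, a_h^m)$ using the optimistic Bellman equations. Recall that on the event $\geventi{m}$ the optimistic model $\wt P_m$ assigns positive probability of transitioning directly to the goal from every state-action pair, so all policies are proper in $(\wt P_m, c)$ and $\optimisticctg{m}$ is finite; moreover $\tilde\pi^m$ is its optimal policy, so by \cref{lem:bertsekas-proper}(ii) (the uniqueness of the cost-to-go of a proper policy) we have the exact identity
\[
    \optimisticctg{m}(s_h^m) = c(s_h^m, a_h^m) + \sum_{s' \in S} \wt P_m(s' \mid s_h^m, a_h^m) \optimisticctg{m}(s')
\]
since $a_h^m = \tilde\pi^m(s_h^m)$. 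Solving for $c(s_h^m, a_h^m)$ and substituting into the sum defining $\tregret{M}$, each term becomes $\bigl(\optimisticctg{m}(s_h^m) - \sum_{s'\in S} \wt P_m(s' \mid s_h^m, a_h^m)\optimisticctg{m}(s')\bigr)\indgeventi{m}$.

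Next I would add and subtract the observed next-state value $\optimisticctg{m}(s_{h+1}^m)$ inside each summand and split accordingly. The telescoping part, $\sum_{h=1}^{\Hm}\bigl(\optimisticctg{m}(s_h^m) - \optimisticctg{m}(s_{h+1}^m)\bigr)\indgeventi{m}$, is exactly $\tregret{m}^1$ in the statement, and the remaining part, $\sum_{h=1}^{\Hm}\bigl(\optimisticctg{m}(s_{h+1}^m) - \sum_{s'\in S}\wt P_m(s'\mid s_h^m,a_h^m)\optimisticctg{m}(s')\bigr)\indgeventi{m}$, is exactly $\tregret{m}^2$. Summing over all intervals $m = 1,\dots,\numintervals$ and carrying along the $-K\cdot\ctgopt(\sinit)$ term gives the claimed decomposition.

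One small bookkeeping point to handle carefully: the convention is that $\optimisticctg{m}(\ssink) = 0$, so when $s_{h+1}^m = \ssink$ (the last step of an interval that reaches the goal) the Bellman substitution and the telescoping still make sense — the sum over $s' \in S$ does not include $\ssink$, which is consistent with the convention that the goal-transition probability is $1 - \sum_{s'\in S}\wt P_m(s'\mid s,a)$ and contributes a zero term. Also note that $\optimisticctg{m} = \optimisticctg{i(m)}$ is constant within the epoch containing interval $m$, which is why the Bellman equation is valid at every time step $h$ of interval $m$ with the same function $\optimisticctg{m}$; no issue arises from policy changes mid-interval since intervals are contained in epochs.

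The main (very mild) obstacle is simply making sure the Bellman identity applies at every visited state-action pair of the interval — this requires that $\optimisticctg{m}$ be the genuine cost-to-go of $\tilde\pi^m$ under $\wt P_m$ (not merely the optimistic value), which holds because $\tilde\pi^m$ is computed as the optimal (hence proper) policy in the optimistic SSP and \cref{lem:bertsekas-proper}(ii) then gives the exact equation. Everything else is algebraic rearrangement, so the lemma follows.
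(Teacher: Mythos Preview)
Your proposal is correct and follows essentially the same route as the paper: both invoke the Bellman equations for $\tilde\pi^m$ in the optimistic model $\wt P_m$ (the paper packages this as a separate lemma, noting that every policy is proper there since $\wt P_m$ always leaves positive mass on $\ssink$), substitute $c(s_h^m,a_h^m)=\optimisticctg{m}(s_h^m)-\sum_{s'}\wt P_m(s'\mid s_h^m,a_h^m)\optimisticctg{m}(s')$, and then add and subtract $\optimisticctg{m}(s_{h+1}^m)$ to obtain the two sums. Your bookkeeping remarks about $\optimisticctg{m}(\ssink)=0$ and $\optimisticctg{m}=\optimisticctg{i(m)}$ are accurate and match the paper's conventions.
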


\begin{lemma}
    \label{lem:bellman-optimistic}
    Let $m$ be an interval. If $\geventi{m}$ holds then
    $\tilde \pi^m$ satisfies the Bellman equations in the optimistic model:
    \[
        \optimisticctg{m}(s) 
        =
        c(s,\tilde \pi^m(s)) 
        +
        \sum_{s' \in S} \wt P_m(s' \mid s, \tilde \pi^i(s)) \optimisticctg{m}(s'),
        \quad \forall s \in S.
    \]
\end{lemma}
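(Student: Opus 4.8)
The plan is to note that the optimistic transition function $\wt P_m$ built by \cref{alg:bern-o-ssp} puts strictly positive probability on jumping directly to the goal $\ssink$ from every state-action pair. Consequently every policy --- in particular $\tilde\pi^m$ --- is proper in the optimistic SSP model $(S,A,\wt P_m,c,\sinit)$, and the desired identity is then exactly the characterization of the cost-to-go of a proper policy given by \cref{lem:bertsekas-proper}, part (ii).

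First I would establish the structural property of $\wt P_m$. By \cref{eq:bern-optimistic-transition}, for every $(s,a)$ and $s'\in S$ we have $\wt P_m(s'\mid s,a) = \max\{\bar P_m(s'\mid s,a) - 28 A(s,a) - 4\sqrt{\bar P_m(s'\mid s,a)\,A(s,a)},\,0\}$ with $A(s,a) = \log(|S||A|N_+(s,a)/\delta)/N_+(s,a) > 0$ (using $|S|,|A|\ge 2$, $N_+(s,a)\ge 1$, and $\delta$ small, so $|S||A|N_+(s,a)/\delta > 1$). Hence $\wt P_m(s'\mid s,a)\le\bar P_m(s'\mid s,a)$ for all $s'$, with \emph{strict} inequality for any $s'$ with $\bar P_m(s'\mid s,a)>0$; summing over $s'\in S$ and using $\sum_{s'\in S}\bar P_m(s'\mid s,a)\le 1$ yields $\sum_{s'\in S}\wt P_m(s'\mid s,a) < 1$. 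Therefore the probability of reaching $\ssink$ in one step, $1-\sum_{s'\in S}\wt P_m(s'\mid s,a)$, is bounded below by $\rho := \min_{(s,a)}\bigl(1-\sum_{s'\in S}\wt P_m(s'\mid s,a)\bigr) > 0$ (a minimum over finitely many positive numbers). Under any policy the probability of not having reached $\ssink$ after $t$ steps of the optimistic chain is at most $(1-\rho)^t\to 0$, so every policy is proper in the optimistic model; in particular $\tilde\pi^m$ is, and its cost-to-go $\optimisticctg{m}$ under $\wt P_m$ is finite.

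Then I would apply \cref{lem:bertsekas-proper} to the optimistic SSP instance. Its hypotheses hold: \cref{ass:ex-prop} holds there (every policy is proper), and the requirement that every improper policy have infinite cost-to-go from some state holds vacuously. Since $\tilde\pi^m$ is proper in this model, part (ii) of the lemma says $\optimisticctg{m}$ is the unique solution of $\optimisticctg{m}(s) = c(s,\tilde\pi^m(s)) + \sum_{s'\in S}\wt P_m(s'\mid s,\tilde\pi^m(s))\,\optimisticctg{m}(s')$ for all $s\in S$, which is precisely the claimed identity. (The hypothesis $\geventi{m}$ plays no role here: it only guarantees the true $P$ lies in the confidence set, which is irrelevant to a Bellman identity internal to the optimistic model; it is simply harmless to carry along, matching the companion lemmas.)

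I do not expect a genuine obstacle --- the lemma is essentially a direct invocation of Bertsekas's characterization. The only point needing a (trivial) computation is the strict inequality $\sum_{s'\in S}\wt P_m(s'\mid s,a) < 1$, i.e.\ that the Bernstein confidence radii are strictly positive; this is in fact the same fact that justifies the algorithm's preceding step of ``computing the optimal policy w.r.t.\ $\wt P$'' via \cref{lem:bertsekas-optimal}. One should also keep the indexing straight, recalling that $\tilde\pi^m=\tilde\pi^{i(m)}$, $\optimisticctg{m}=\optimisticctg{i(m)}$ and $\wt P_m=\wt P_{i(m)}$, so that both sides of the identity refer to the same epoch's quantities.
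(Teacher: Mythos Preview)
Your proposal is correct and follows essentially the same approach as the paper: argue that under $\wt P_m$ every state--action pair has strictly positive probability of jumping to $\ssink$, so every policy is proper in the optimistic model, and then invoke Bertsekas. The only cosmetic difference is that the paper cites \cref{lem:bertsekas-optimal} (since $\tilde\pi^m$ is optimal in the optimistic model, the Bellman optimality equations specialize to the claimed identity), whereas you cite \cref{lem:bertsekas-proper}(ii) directly for the proper policy $\tilde\pi^m$; both routes are valid and equally short.
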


\begin{proof}
    Note that the Bellman equations hold in the optimistic model since as we defined this model, there is a nonzero probability of transition to the goal state by any action from every state. Thus in the optimistic model every policy is a proper policy and in particular \cref{lem:bertsekas-optimal} holds.
\end{proof}

\begin{proof}[Proof of \cref{lem:bern-reg-decomp}]
    By \cref{lem:bellman-optimistic}, we can use the Bellman equations in the optimistic model to have the following interpretation of the costs for every interval $m$ and time $h$:
    \begin{align}
        c(s_h^m, a_h^m) \indgeventi{m}
        &= 
        \Biggl( \optimisticctg{m}(s_h^m) - \sum_{s' \in S} \wt P_i(s' \mid s_h^m,a_h^m) \optimisticctg{m}(s') \Biggr) \indgeventi{m} \nonumber \\
        &= 
        \Biggl( \optimisticctg{m}(s_h^m) -  \optimisticctg{m}(s_{h+1}^m) \Biggr) \indgeventi{m}
        + 
        \Biggl( \optimisticctg{m}(s_{h+1}^m) - \sum_{s' \in S} \wt P_i(s' \mid s_h^m,a_h^m) \optimisticctg{m}(s') \Biggr) \indgeventi{m}. \label{eq:cost-formula}
    \end{align}
    We now write 
    $
        \tregret{M}
        =
        \sum_{m=1}^\numintervals \sum_{h=1}^{\Hm} c(s_h^m, a_h^m) \indgeventi{m} 
        - 
        K \cdot \ctgopt(\sinit),
    $ 
    and substitute for each cost using \cref{eq:cost-formula} to get the lemma.
\end{proof}

\subsubsection{Proof of \cref{lem:bern-first-reg-term-bound}}
\label{sec:proof-bern-first-reg-term-bound}

\begin{lemma*}[restatement of \cref{lem:bern-first-reg-term-bound}]
    $
        \sum_{m=1}^\numintervals \bigl( \sum_{h=1}^{\Hm} \optimisticctg{m}(s_h^m) - \optimisticctg{m}(s_{h+1}^m) \bigr) \indgeventi{m}
        - 
        K \cdot \ctgopt(\sinit)
        \le
        2 \costbound |S| |A| \log T.
    $
\end{lemma*}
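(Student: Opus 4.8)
The plan is to exploit the telescoping structure of the inner sum and then account for the boundary terms that arise between consecutive intervals, exactly as in the Hoeffding analysis (\cref{lem:hoff-switching-cost}) but with the bookkeeping adapted to the epoch structure of \cref{alg:bern-o-ssp}. First I would observe that for a fixed interval $m$ on which $\geventi{m}$ holds, the sum $\sum_{h=1}^{\Hm} \bigl( \optimisticctg{m}(s_h^m) - \optimisticctg{m}(s_{h+1}^m) \bigr)$ is telescopic and equals $\optimisticctg{m}(s_1^m) - \optimisticctg{m}(s_{\Hm+1}^m)$. Summing over all intervals $m = 1, \dots, M$ and regrouping the terms, the total becomes $\sum_{m=1}^{\numintervals} \bigl( \optimisticctg{m}(s_1^m) \indgeventi{m} - \optimisticctg{m-1}(s_{\Hm[m-1]+1}^{m-1}) \indgeventi{m-1} \bigr)$ up to the first and last boundary terms; so the quantity to control is the per-interval ``policy-switch gap'' $\optimisticctg{m}(s_1^m)\indgeventi{m} - \optimisticctg{m-1}(s_{\text{last}}^{m-1})\indgeventi{m-1}$.

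Next I would case on why interval $m-1$ ended. There are four possibilities by the interval definition in \cref{sec:bern-analysis}: (1) the accumulated cost in the interval reached $\costbound$; (2) an unknown state-action pair was reached; (3) the episode ended; (4) the epoch ended. The crucial point is that in cases (1) and (2) — and also whenever the episode did not end but the epoch did not end either — the \emph{policy does not change}: since the optimistic policy is recomputed only at the end of an epoch, we have $\tilde\pi^m = \tilde\pi^{i(m)}$, so if $i(m) = i(m-1)$ then $\optimisticctg{m} = \optimisticctg{m-1}$ and $s_1^m = s_{\text{last}}^{m-1}$, giving a gap of exactly $0$. Thus only two kinds of intervals contribute: those where an episode ended (there are at most $K$ of these, and each such boundary contributes at most $\optimisticctg{m}(\sinit)\indgeventi{m} \le \ctgopt(\sinit)$ by the optimism bound $\optimisticctg{m}(s)\le\ctgopt(s)$, which holds under $\geventi{m}$ — the Bernstein analogue of \cref{lem:opt-val-bound}), and those where an epoch ended (and hence a new policy may differ), contributing at most $\lVert \optimisticctg{m} \rVert_\infty \le \costbound$ each. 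Since an epoch ends precisely when some state-action count doubles, and each of the $|S||A|$ pairs can double at most $\log_2 \totaltime$ times, there are at most $|S||A|\log_2 \totaltime$ epoch endings, each contributing at most $\costbound$; being slightly loose to absorb the natural-log versus base-2 constant and the one leftover boundary term at $m=1$, this gives the $2\costbound |S||A|\log \totaltime$ term, while the episode-ending boundaries contribute at most $K\cdot\ctgopt(\sinit)$, which is exactly the term subtracted on the left-hand side. Summing these two contributions yields the claimed bound.

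The main obstacle I anticipate is purely organizational rather than mathematical: one must be careful that an interval can end for \emph{several} of the four reasons at once (e.g. the episode and the epoch both end), so the case analysis should be phrased as ``if the policy changed between $m-1$ and $m$, then either an epoch ended or an episode ended,'' and then bound each of those two events' counts independently and add — double-counting here only hurts the constant, which is why the factor $2$ is comfortable. A secondary technical point is that the telescoping and the optimism bound both require $\geventi{m}$ to hold, which is already built into the indicators $\indgeventi{m}$, so no additional high-probability event is needed for this lemma; it is a deterministic inequality once the confidence events are encoded in the indicators. I would also double-check the edge terms at $m=1$ (where $\optimisticctg{0}$ is undefined, so the incoming boundary term is simply dropped, contributing $0$) and at $m = M$ (where $-\optimisticctg{M}(s_{\Hm[M]+1}^M)\indgeventi{M} \le 0$ can only help), so neither endpoint affects the bound.
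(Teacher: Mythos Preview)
Your proposal is correct and follows essentially the same approach as the paper: telescope each interval, then case on the reason the previous interval ended, noting that when neither an episode nor an epoch ends the policy (and hence $\optimisticctg{m}$) is unchanged so the boundary term vanishes, while episode endings contribute at most $K\cdot\ctgopt(\sinit)$ via the optimism bound (the paper's \cref{lem:bern-opt-val-bound}) and epoch endings contribute at most $\costbound$ each with at most $2|S||A|\log T$ occurrences. The paper's proof is organized identically, grouping your cases (1) and (2) into a single ``same epoch'' case and handling the indicator issue exactly as you do (since $\geventi{m}$ depends only on the epoch $i(m)$, consecutive intervals within an epoch share the same indicator).
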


\begin{lemma}
\label{lem:bern-opt-val-bound}
Let $m$ be an interval. If $\geventi{m}$ holds then $\optimisticctg{m}(s) \leq \ctgopt(s) \leq \costbound$ for every $s \in S$.
\end{lemma}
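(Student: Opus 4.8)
The plan is to mirror the proof of \cref{lem:opt-val-bound}: exhibit $\piopt$ as a feasible competitor inside the optimistic model whose cost-to-go there is no larger than $\ctgopt$, and then invoke optimality of $\tilde\pi^m$ in the optimistic model. First I would record that, by construction \cref{eq:bern-optimistic-transition}, the optimistic model always assigns positive probability to transitioning directly to $\ssink$ from every state-action pair, so every policy is proper with respect to $\wt P_m$; in particular, by \cref{lem:bertsekas-optimal} the optimistic policy $\tilde\pi^m$ is the optimal policy in $\wt P_m$, so $\optimisticctg{m}(s)$ equals the minimum over all policies of their cost-to-go with respect to $\wt P_m$.

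The crux of the argument — and the step I expect to require the most care — is showing that on the event $\geventi{m}$ the optimistic transition function underestimates $P$ coordinate-wise on $S$: $\wt P_m(s'\mid s,a) \le P(s'\mid s,a)$ for every $(s,a) \in S \times A$ and every $s' \in S$. This should follow directly by rearranging the Bernstein confidence inequality \cref{eq:bernsteinconcentration} defining $\geventi{m}$ into $P(s'\mid s,a) \ge \bar P_m(s'\mid s,a) - 28 A(s,a) - 4\sqrt{\bar P_m(s'\mid s,a)A(s,a)}$ and combining it with $P(s'\mid s,a) \ge 0$ to match the two arguments of the $\max$ in \cref{eq:bern-optimistic-transition}. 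The delicate bookkeeping point is to check that the quantity $A(s,a)$ used by the algorithm in \cref{eq:bern-optimistic-transition} is exactly the one appearing in the definition of $\geventi{m}$; this holds because the visit counts $N$ (and hence the empirical model $\bar P$) are frozen throughout an epoch, so $\tilde\pi^m$, $\wt P_m$, $\bar P_m$ and the relevant counts are all those associated with the epoch $i(m)$.

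Given coordinate-wise domination, I would then take $\piopt$ with its true cost-to-go $\ctgopt$, which under \cref{ass:c-min} satisfies the true-model Bellman equation $\ctgopt(s) = c(s,\piopt(s)) + \sum_{s'\in S} P(s'\mid s,\piopt(s))\,\ctgopt(s')$ by \cref{lem:bertsekas-proper}(ii) (in the general, possibly-zero-cost case one first passes to the $\epsilon$-perturbed model as in the preliminaries and lets $\epsilon \to 0$). Since $\ctgopt \ge 0$ and $P(s'\mid s,a) - \wt P_m(s'\mid s,a) \ge 0$ on $S$, replacing $P$ by $\wt P_m$ can only decrease the right-hand side, so $\ctgopt(s) \ge c(s,\piopt(s)) + \sum_{s'\in S} \wt P_m(s'\mid s,\piopt(s))\,\ctgopt(s')$ for all $s$. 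Feeding this inequality into \cref{lem:bertsekas-proper}(i) applied inside the optimistic model (with policy $\piopt$ and test function $\ctgopt$) yields that the cost-to-go of $\piopt$ with respect to $\wt P_m$ is at most $\ctgopt$ pointwise, and hence $\optimisticctg{m}(s) \le \ctgopt(s) \le \costbound$ by optimality of $\tilde\pi^m$ in $\wt P_m$ and the definition of $\costbound$. The only genuine subtlety beyond routine algebra is the coordinate-wise domination of the second paragraph; everything else is the same monotonicity/optimism argument as in the Hoeffding analysis.
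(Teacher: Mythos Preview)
Your proposal is correct and follows essentially the same route as the paper's proof: establish coordinate-wise domination $\wt P_m(s'\mid s,a)\le P(s'\mid s,a)$ on $S$ from $\geventi{m}$ and \cref{eq:bern-optimistic-transition}, plug it into the true-model Bellman equation for $\piopt$ to obtain the super-Bellman inequality in the optimistic model, and invoke \cref{lem:bertsekas-proper}(i). The paper phrases the intermediate claim slightly more generally (for every proper $\pi$, $\optimisticctg{\pi}\le\ctg{\pi}$) and is terser about why domination holds, but the argument is the same.
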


\begin{proof}
    Denote by $\wt P$ the transition function computed by \cref{alg:bern-o-ssp} at the beginning of epoch $i(m)$, and by $\optimisticctg{}$ the cost-to-go with respect to $\wt P$.
    We claim that for every proper policy $\pi$ and state $s \in S$, $\optimisticctg{\pi}(s) \le \ctg{\pi}(s)$.
    Then, the lemma follows easily since $\optimisticctg{m}(s) \le \optimisticctg{\pi^\star}(s) \le \ctg{\pi^\star}(s) \le \costbound$.
    
    Indeed, let $s \in S$ and consider the Bellman equations of $\pi$ with respect to $P$:
    \begin{align*}
        \ctg{\pi}(s) 
        = 
        c(s,\pi(s)) + \sum_{s' \in S} P(s' \mid s,\pi(s)) \ctg{\pi}(s')
        \ge
        c(s,\pi(s)) + \sum_{s' \in S} \wt P(s' \mid s,\pi(s)) \ctg{\pi}(s'),
    \end{align*}
    where the inequality follows because $\wt P(s' \mid s,a) \le P(s' \mid s,a)$ for every $(s,a,s') \in S \times A \times S$. 
    This holds since $P$ is in the confidence set of \cref{eq:bernsteinconcentration} (as $\geventi{m}$ holds), and by the way $\wt P$ is computed in $\cref{alg:bern-o-ssp}$.
    Therefore, by \cref{lem:bertsekas-proper} we obtain that $\ctg{\pi} (s) \ge \optimisticctg{\pi}(s)$ for every $s \in S$ as required. 
\end{proof}

\begin{proof}[Proof of \cref{lem:bern-first-reg-term-bound}]
For every two consecutive intervals $m,m+1$, denoting $i = i(m)$, we have one of the following:
\begin{enumerate}[label=(\roman*)]
    \item If interval $m$ ended in the goal state then
    $
        \optimisticctg{i(m)}(s_{\Hm +1}^m) 
        =
        \optimisticctg{i(m)}(\ssink) = 0
    $ 
    and
    $
        \optimisticctg{i(m+1)}(s_1^{m+1}) 
        =
        \optimisticctg{i(m+1)}(\sinit).
    $
    Therefore, by \cref{lem:bern-opt-val-bound},
    \[
        \optimisticctg{i(m+1)}(s_1^{m+1}) \indgeventi{m+1}  
        - 
        \optimisticctg{i(m)}(s_{\Hm + 1}^m) \indgeventi{m} 
        = 
        \optimisticctg{i(m+1)}(\sinit) \indgeventi{m+1} 
        \leq 
        \ctgopt(\sinit).
    \]
    This happens at most $K$ times.
    
    \item If interval $m$ ended in an unknown state-action pair or since the cost reached $\costbound$, and we stay in the same epoch, then $i(m) = i(m+1) = i$ and $s_1^{m+1} = s_{\Hm +1}^m$. Thus
    \[
        \optimisticctg{i(m+1)}(s_1^{m+1}) \indgeventi{m+1}  
        - 
        \optimisticctg{i(m)}(s_{\Hm +1}^m) \indgeventi{m}
        =
        \optimisticctg{i}(s_1^{m+1}) \indgeventi{m} 
        - 
        \optimisticctg{i}(s_{\Hm +1}^m) \indgeventi{m} 
        = 
        0.
    \]
    
    \item If interval $m$ ended by doubling the visit count to some state-action pair, then we start a new epoch. Thus by \cref{lem:bern-opt-val-bound},
    \[
        \optimisticctg{i(m+1)}(s_1^{m+1}) \indgeventi{m+1}  
        - 
        \optimisticctg{i(m)}(s_{\Hm +1}^m) \indgeventi{m}
        \le 
        \optimisticctg{i+1}(s_1^{m+1}) \indgeventi{m+1} 
        \le 
        \costbound,
    \]
    This happens at most $2 |S| |A| \log \totaltime$ times.
\end{enumerate}

To conclude, we have
\begin{align*}
    \sum_{m=1}^\numintervals \Biggl( \sum_{h=1}^{\Hm} \optimisticctg{i(m)}(s_h^m) -  \optimisticctg{i(m)}(s_{h+1}^m) \Biggr) \indgeventi{m}
    -
    K \ctgopt(\sinit)
    &\le
    K \ctgopt(\sinit) + 2 \costbound |S| |A| \log \totaltime
    -
    K \ctgopt(\sinit)
    \\
    & =
    2 \costbound |S| |A| \log {T}. \qedhere
\end{align*}
\end{proof}

\subsubsection{Proof of \cref{lem:optimistic-val-func-diff-to-expected}}
\label{sec:proof-ptimistic-val-func-diff-to-expected}

\begin{lemma*}[restatement of \cref{lem:optimistic-val-func-diff-to-expected}]
    With probability at least $1 - \delta / 4$, the following holds for all $M = 1,2,\ldots$ simultaneously.
    \begin{align*}
        &\sum_{m=1}^\numintervals \Biggl( \sum_{h=1}^{\Hm} \optimisticctg{m}(s_{h+1}^m) - \sum_{s' \in S} \wt P_{m}(s' \mid s_h^m,a_h^m) \optimisticctg{m}(s') \Biggr) \indgeventi{m} \\
        & \qquad \le \sum_{m=1}^\numintervals \bbE \Biggl[ \Biggl( \sum_{h=1}^{\Hm} \optimisticctg{m}(s_{h+1}^m) - \sum_{s' \in S} \wt P_{m}(s' \mid s_h^m,a_h^m) \optimisticctg{m}(s') \Biggr) \indgeventi{m} \mid \trajconcat{m-1} \Biggr] 
        +
        3 \costbound \sqrt{\numintervals \log \frac{8 \numintervals}{\delta}}.
    \end{align*}
\end{lemma*}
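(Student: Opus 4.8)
The plan is to recognize $\sum_{m=1}^{\numintervals}\tregret{m}^2$ as a sum of martingale differences plus its predictable compensator, and then apply the anytime Azuma inequality (\cref{thm:azuma}); the only genuine work is a uniform, $\costbound$-sized bound on the individual terms $\tregret{m}^2$.

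First I would fix the filtration $\calF_m = \sigma(\trajconcat{m})$ and observe that $\tregret{m}^2$ is $\calF_m$-measurable, while the policy $\tilde\pi^m = \tilde\pi^{i(m)}$, the optimistic model $\wt P_m$, and the event $\geventi{m}$ are all determined by $\calF_{m-1}$. Hence $Y_m := \tregret{m}^2 - \bbE[\tregret{m}^2\mid\trajconcat{m-1}]$ is a martingale difference sequence with respect to $(\calF_m)_m$, and proving the lemma amounts to showing $\sum_{m=1}^{\numintervals} Y_m \le 3\costbound\sqrt{\numintervals\log(8\numintervals/\delta)}$ with probability $1-\delta/4$, simultaneously over all $\numintervals$.

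The key step is to bound $|\tregret{m}^2|$ by $O(\costbound)$. On $\geventi{m}$, the optimistic Bellman equations of \cref{lem:bellman-optimistic} give $\sum_{s'\in S}\wt P_m(s'\mid s_h^m,a_h^m)\optimisticctg{m}(s') = \optimisticctg{m}(s_h^m) - c(s_h^m,a_h^m)$, so the defining sum over $h$ in $\tregret{m}^2$ telescopes into
\[
    \tregret{m}^2 = \Bigl(\, \optimisticctg{m}(s_{\Hm+1}^m) - \optimisticctg{m}(s_1^m) + \textstyle\sum_{h=1}^{\Hm} c(s_h^m,a_h^m)\,\Bigr)\indgeventi{m}.
\]
By \cref{lem:bern-opt-val-bound}, $0 \le \optimisticctg{m}(s) \le \ctgopt(s) \le \costbound$ on $\geventi{m}$; combined with the optimistic Bellman equations this also shows that $\tilde\pi^m$ never plays an action of instantaneous cost exceeding $\costbound$, so since an interval is cut as soon as its accumulated cost reaches $\costbound$, the total cost in interval $m$ is at most $2\costbound$. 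Therefore $\tregret{m}^2 \in [-\costbound, 3\costbound]$, hence $|\tregret{m}^2|\le 3\costbound$ (and it vanishes off $\geventi{m}$); I would additionally note that $\bbE[\tregret{m}^2\mid\trajconcat{m-1}]\ge 0$, since $\wt P_m(s'\mid s,a)\le P(s'\mid s,a)$ pointwise (by the construction of the optimistic model in \cref{alg:bern-o-ssp}) and $\optimisticctg{m}\ge 0$, which keeps the martingale increments $Y_m$ bounded on both sides by a constant multiple of $\costbound$.

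Finally I would invoke the anytime version of Azuma's inequality (\cref{thm:azuma}) on $(Y_m)_m$ with failure probability $\delta/4$, performing the usual union over horizons $\numintervals = 1,2,\ldots$ with $\delta_{\numintervals}\propto\delta/\numintervals^2$ (already absorbed into \cref{thm:azuma}), to conclude that with probability at least $1-\delta/4$ one has $\sum_{m=1}^{\numintervals} Y_m\le 3\costbound\sqrt{\numintervals\log(8\numintervals/\delta)}$ for all $\numintervals$ simultaneously; rearranging $\sum_m\bbE[\tregret{m}^2\mid\trajconcat{m-1}]$ to the other side is exactly the statement. The main obstacle is the telescoping-plus-interval-cost argument that pins $|\tregret{m}^2|$ at $O(\costbound)$ uniformly over intervals; once that is in place, the concentration step is identical in spirit to the proof of \cref{lem:exp-traj}.
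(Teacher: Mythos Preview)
Your proposal is correct and follows essentially the same route as the paper: telescope $\tregret{m}^2$ via the optimistic Bellman equations (\cref{lem:bellman-optimistic}) into $\bigl(\optimisticctg{m}(s_{\Hm+1}^m)-\optimisticctg{m}(s_1^m)+\sum_h c(s_h^m,a_h^m)\bigr)\indgeventi{m}$, bound this by $3\costbound$ using \cref{lem:bern-opt-val-bound} and the $2\costbound$ cap on interval cost, and then apply the anytime Azuma inequality (\cref{thm:azuma}) with failure probability $\delta/4$. Your extra remark that $\bbE[\tregret{m}^2\mid\trajconcat{m-1}]\ge 0$ is true but unnecessary for the argument; the paper simply bounds $|X^m|\le 3\costbound$ and invokes Azuma directly.
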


\begin{proof}
    Consider the following martingale difference sequence $(X^m)_{m=1}^\infty$ defined by
    $$
        X^m 
        = 
        \sum_{h=1}^{\Hm} \bigl( \optimisticctg{m}(s_{h+1}^m) - \sum_{s' \in S} \wt P_{m}(s' \mid s_h^m,a_h^m) \optimisticctg{m}(s')  \bigr) \indgeventi{m}.
    $$
    The Bellman optimality equations of $\tilde{\pi}^m$ with respect to $\wt P_m$ (\cref{lem:bellman-optimistic}) obtains
    \begin{align*}
        |X^m|
        & = 
        \biggl| \biggl( \underbrace{\optimisticctg{m}(s_{\Hm +1}^m) - \optimisticctg{m}(s_{1}^m)}_{\le \costbound} + \underbrace{\sum_{h=1}^{\Hm} c(s_h^m,a_h^m)}_{\le 2 \costbound} \biggr) \indgeventi{m} \biggr|
        \le
        3 \costbound,
    \end{align*}
    where the inequality follows from \cref{lem:bern-opt-val-bound} and the fact that the total cost within each interval at most $2\costbound$ by construction.
    Therefore, we use anytime Azuma's inequality (\cref{thm:azuma}) to obtain that with probability at least $1-\delta / 4$:
    \begin{align*}
        \sum_{m=1}^M X^m 
        \le
        \sum_{m=1}^M \bbE \bigl[ X^m \mid \trajconcat{m-1} \bigr] 
        +
        3 \costbound \sqrt{M \log \frac{8 M}{\delta}}. \qquad 
        \label{eq:bern-azuma-deviations}
        \qedhere
    \end{align*}
\end{proof}

\subsubsection{Proof of \cref{lem:sum-bern-bounds}}
\label{sec:proof-sum-bern-bounds}

\begin{lemma*}[restatement of \cref{lem:sum-bern-bounds}]
    For every interval $m$ and time $h$, denote 
    $
        A_h^m 
        = 
        \tfrac{\log(|S| |A| N_+^{m}(s_{h}^m,a_{h}^m) / \delta)}{N_+^{m}(s_{h}^m,a_{h}^m)}.
    $
    Then,
    \begin{align*}
        &\bbE \Biggl[ \Biggl( \sum_{h=1}^{\Hm} \optimisticctg{m}(s_{h+1}^m) - \sum_{s' \in S} \wt P_{m}(s' \mid s_h^m,a_h^m) \optimisticctg{m}(s') \Biggr) \indgeventi{m} \mid \trajconcat{m-1} \Biggr] \\
        &\qquad \le 
        16 \cdot \bbE \Biggl[ 
        \sum_{h=1}^{\Hm} \sqrt{|S| \bbV_h^m A_h^m} \indgeventi{m} 
        \biggm| \trajconcat{m-1} \Biggr]
        + 
        272 \cdot \bbE \Biggl[ \sum_{h=1}^{\Hm} \costbound |S| A_h^m \indgeventi{m} 
        \biggm| \trajconcat{m-1} \Biggr],
    \end{align*}
    where $\bbV_h^m$ is the empirical variance defined as
    \[
        \bbV_h^m 
        = 
        \sum_{s' \in S^+} P(s' \mid s_{h}^m,a_{h}^m) \Biggl(\optimisticctg{m}(s') - \sum_{s'' \in S^+} P(s'' \mid s_{h}^m,a_{h}^m) \optimisticctg{m}(s'') \Biggr)^2.
    \]
\end{lemma*}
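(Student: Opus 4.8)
The plan is to bound $\bbE[\tregret{m}^2 \mid \trajconcat{m-1}]$ by expanding the one-step error $\optimisticctg{m}(s_{h+1}^m) - \sum_{s'\in S}\wt P_m(s'\mid s_h^m,a_h^m)\optimisticctg{m}(s')$ in two stages: first pass from the observed next state to the \emph{true} expectation under $P$, and then compare that true expectation under $P$ against the one under the optimistic model $\wt P_m$. For the first stage, $\bbE[\optimisticctg{m}(s_{h+1}^m)\mid \text{history up to }h] = \sum_{s'\in S^+} P(s'\mid s_h^m,a_h^m)\optimisticctg{m}(s')$ exactly (since $\optimisticctg{m}(\ssink)=0$), so that part contributes nothing in conditional expectation over the whole trajectory — or more precisely, it is absorbed by replacing $\optimisticctg{m}(s_{h+1}^m)$ with $\mu_h^m = \sum_{s'\in S^+}P(s'\mid s_h^m,a_h^m)\optimisticctg{m}(s')$ inside the conditional expectation, using the tower property and the Markov property. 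What remains to control is the deterministic (given the epoch's data) quantity $\sum_{s'\in S^+}\bigl(P(s'\mid s_h^m,a_h^m) - \wt P_m(s'\mid s_h^m,a_h^m)\bigr)\optimisticctg{m}(s')$ summed over $h$.

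\textbf{Bounding the transition-error term.} On the event $\geventi{m}$, the confidence bound \cref{eq:bernsteinconcentration} gives $|P(s'\mid s,a) - \bar P_m(s'\mid s,a)| \le 28 A_h^m + 4\sqrt{\bar P_m(s'\mid s,a)A_h^m}$ for each $s'$, and since $\wt P_m$ is obtained from $\bar P_m$ by the explicit formula \cref{eq:bern-optimistic-transition}, $|\wt P_m(s'\mid s,a) - \bar P_m(s'\mid s,a)|$ obeys the same style of bound. Combining via the triangle inequality yields $|P(s'\mid s,a) - \wt P_m(s'\mid s,a)| \lesssim A_h^m + \sqrt{\bar P_m(s'\mid s,a) A_h^m}$; one then converts $\bar P_m$ back to $P$ inside the square root (using the same concentration inequality, at the cost of another additive $A_h^m$-type term) so the bound reads $|P(s'\mid s,a)-\wt P_m(s'\mid s,a)| \lesssim A_h^m + \sqrt{P(s'\mid s,a)A_h^m}$. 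Multiplying by $\optimisticctg{m}(s')$, summing over $s'\in S^+$, and then — crucially — subtracting the constant $\mu_h^m$ from each $\optimisticctg{m}(s')$ (legitimate because $\sum_{s'}(P(s'\mid s,a)-\wt P_m(s'\mid s,a))$ is a fixed scalar times $\|\optimisticctg{m}\|_\infty \le \costbound$, which gets folded into the $\costbound|S|A_h^m$ term) lets us write $\sum_{s'}\sqrt{P(s'\mid s,a)A_h^m}\,|\optimisticctg{m}(s')-\mu_h^m| \le \sqrt{|S|A_h^m}\cdot\sqrt{\sum_{s'}P(s'\mid s,a)(\optimisticctg{m}(s')-\mu_h^m)^2} = \sqrt{|S|\bbV_h^m A_h^m}$ by Cauchy–Schwarz. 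The leftover $A_h^m$-linear pieces, together with the $\|\optimisticctg{m}\|_\infty$-type contributions, assemble into the $272\,\costbound|S|A_h^m$ term, and the constants $16$ and $272$ come from tracking the numerical factors ($4$, $28$, the Cauchy–Schwarz step, and the $\bar P \to P$ conversion).

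\textbf{The main obstacle} will be the bookkeeping around the variance term: the naive bound one gets directly involves the empirical variance of $\optimisticctg{m}$ under $\bar P_m$ or under $\wt P_m$, but the lemma is stated with $\bbV_h^m$ defined via the \emph{true} $P$ and centered at the true mean $\mu_h^m$. Reconciling these requires carefully using $\geventi{m}$ to swap $\bar P_m$ for $P$ inside both the probability weights and the centering, absorbing every mismatch into the $O(\costbound|S|A_h^m)$ error term rather than letting it inflate the variance term — this is where all the slack in the constant $272$ is spent. A secondary care point is that sums run over $S^+ = S\cup\{\ssink\}$ (so the goal state participates in the concentration and in $\bbV_h^m$) but $\optimisticctg{m}(\ssink)=0$, which must be used consistently. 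Once this lemma is in hand, the subsequent step (\cref{lem:bounded-V-h-i}) handles $\bbE[\sum_h \bbV_h^m\indgeventi{m}\mid\trajconcat{m-1}]\le 44\costbound^2$ by the law-of-total-variance argument of \citet{azar2017minimax}, and a Cauchy–Schwarz over $h$ then turns $\bbE[\sum_h\sqrt{|S|\bbV_h^m A_h^m}]$ into $\sqrt{|S|\cdot\bbE[\sum_h\bbV_h^m]\cdot\bbE[\sum_h A_h^m]}$-type quantities, giving \cref{lem:sum-bern-bounds-cont}.
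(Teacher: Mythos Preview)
Your plan is correct and matches the paper's proof: split off the martingale part (your ``tower property'' step is exactly the paper's application of optional stopping, which requires noting that $H^m$ is a stopping time bounded by $2\costbound/\cmin$), then bound $\sum_{s'\in S^+}(P-\wt P_m)(s'\mid s_h^m,a_h^m)\,\optimisticctg{m}(s')$ via the Bernstein confidence set and Cauchy--Schwarz.

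Two places where the paper's execution is simpler than your outline anticipates. First, the centering by $\mu_h^m$ costs \emph{nothing}: once you pass from $S$ to $S^+$ (using $\optimisticctg{m}(\ssink)=0$), both $P(\cdot\mid s,a)$ and $\wt P_m(\cdot\mid s,a)$ are probability distributions over $S^+$, so $\sum_{s'\in S^+}(P-\wt P_m)=0$ and subtracting any constant from $\optimisticctg{m}(s')$ is free --- there is no residual to fold into the $\costbound|S|A_h^m$ term. Second, your ``main obstacle'' (swapping $\bar P_m$ for $P$ inside the variance) does not arise: the paper performs the $\bar P_m\to P$ conversion once at the scalar level, obtaining $|P(s'\mid s,a)-\wt P_m(s'\mid s,a)|\le 8\sqrt{P(s'\mid s,a)A_h^m}+136A_h^m$ (their \cref{lem:relbernstein}), and only \emph{then} multiplies by $|\optimisticctg{m}(s')-\mu_h^m|$ and applies Cauchy--Schwarz, so $\bbV_h^m$ appears directly with the true $P$ and no variance-level bookkeeping is needed. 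The constants $16$ and $272$ are just $8$ and $136$ doubled for $|S^+|\le 2|S|$.
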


The next lemma gives a different interpretation to the confidence bounds of \cref{eq:bernsteinconcentration}, and will be useful in the proofs that follow.

\begin{lemma} 
\label{lem:relbernstein} 
Denote $A_h^m = \log (|S| |A| N_+^{m}(s,a) / \delta) / N_+^{m}(s,a)$.
When $\geventi{m}$ holds we have for any $(s,a,s') \in S \times A \times S^+$:
\[
    \bigl| P (s' \mid s,a) - \wt P_{m} (s' \mid s,a) \bigr|
    \le
    8 \sqrt{P(s' \mid s,a) A_h^m}
    +
    136 A_h^m.
\]
\end{lemma}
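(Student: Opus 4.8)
The plan is to route everything through the empirical transition estimate $\bar P_m$ and apply the triangle inequality together with the defining formula \cref{eq:bern-optimistic-transition} of the optimistic model. First I would write
\[
|P(s'\mid s,a) - \wt P_m(s'\mid s,a)| \le |P(s'\mid s,a) - \bar P_m(s'\mid s,a)| + |\bar P_m(s'\mid s,a) - \wt P_m(s'\mid s,a)|.
\]
The first summand is controlled directly by the event $\geventi{m}$ through \cref{eq:bernsteinconcentration}, which gives $|P - \bar P_m| \le 28 A_h^m + 4\sqrt{\bar P_m A_h^m}$. For the second summand I would inspect \cref{eq:bern-optimistic-transition} by cases: either $\wt P_m(s'\mid s,a) = \bar P_m(s'\mid s,a) - 28 A_h^m - 4\sqrt{\bar P_m A_h^m}$, in which case the difference equals $28 A_h^m + 4\sqrt{\bar P_m A_h^m}$; or the maximum is $0$, in which case $\bar P_m(s'\mid s,a) \le 28 A_h^m + 4\sqrt{\bar P_m A_h^m}$ and the difference is again bounded by this quantity. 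Summing the two bounds yields the intermediate estimate $|P - \wt P_m| \le 56 A_h^m + 8\sqrt{\bar P_m A_h^m}$.

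The one genuinely non-cosmetic step is replacing $\bar P_m$ by $P$ inside the square root. Rearranging the bound from $\geventi{m}$ gives $\bar P_m \le P + 28 A_h^m + 4\sqrt{A_h^m}\,\sqrt{\bar P_m}$, which is a quadratic inequality in the unknown $y = \sqrt{\bar P_m}$, namely $y^2 - 4\sqrt{A_h^m}\,y - (P + 28 A_h^m) \le 0$. Taking the larger root and using $\sqrt{a+b}\le\sqrt a+\sqrt b$ gives $\sqrt{\bar P_m} \le 2\sqrt{A_h^m} + \sqrt{P + 32 A_h^m} \le \sqrt{P} + (2+\sqrt{32})\sqrt{A_h^m} \le \sqrt{P} + 8\sqrt{A_h^m}$, hence $\sqrt{\bar P_m A_h^m} \le \sqrt{P A_h^m} + 8 A_h^m$. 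Plugging this into the intermediate estimate yields $|P - \wt P_m| \le 8\sqrt{P A_h^m} + (56+64) A_h^m = 8\sqrt{P A_h^m} + 120 A_h^m$, which is stronger than the claimed bound with constant $136$.

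I expect the main obstacle to be purely a matter of bookkeeping rather than ideas: checking that the "$\max$ with $0$" branch of \cref{eq:bern-optimistic-transition} causes no problem (it does not, since in that branch $\bar P_m$ is itself small), and tracking constants through the quadratic-inequality step so that the final constant stays below $136$. The slack between the $120$ I would obtain and the stated $136$ leaves room for a looser, simpler version of the square-root algebra if one prefers to avoid solving the quadratic exactly.
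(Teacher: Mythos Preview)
Your proposal is correct and follows essentially the same approach as the paper: route through $\bar P_m$, bound each piece by $28 A_h^m + 4\sqrt{\bar P_m A_h^m}$, and convert $\sqrt{\bar P_m}$ to $\sqrt{P}$ via the quadratic inequality in $\sqrt{\bar P_m}$ coming from \cref{eq:bernsteinconcentration}. The only cosmetic difference is ordering: the paper first solves the quadratic (using the cruder $x^2 \le ax+b \Rightarrow x \le a+\sqrt b$, obtaining $\sqrt{\bar P_m}\le \sqrt P + 10\sqrt{A_h^m}$), substitutes this into \emph{each} of the two $|\,\cdot - \bar P_m|$ bounds to get $4\sqrt{P A_h^m}+68 A_h^m$, and only then applies the triangle inequality, arriving at exactly $136$; your sharper quadratic step legitimately yields the tighter constant $120$.
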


\begin{proof}
    Since $\geventi{m}$ holds we have for all $(s,a,s') \in S \times A \times S^+$ that
    \[
        \Bar P_{m} (s' \mid s,a) - P (s' \mid s,a)
        \le 
        4 \sqrt{\bar{P}_m(s' \mid s,a) A_h^m}
        + 
        28 A_h^m.
    \]
    This is a quadratic inequality in $\sqrt{\bar{P}_m(s' \mid s,a)}$. Using the fact that $x^2 \le a \cdot x + b$ implies $x \le a + \sqrt{b}$ with $a = 4 \sqrt{A_h^m}$ and $b = P (s' \mid s,a) + 28 A_h^m$, we have
    \[
        \sqrt{\bar P_m(s' \mid s,a)}
        \le
        4 \sqrt{A_h^m} + \sqrt{P(s' \mid s,a) + 28 A_h^m}
        \le
        \sqrt{P(s' \mid s,a)}  + 10 \sqrt{A_h^m},
    \]
    where we used the inequality $\sqrt{x + y} \le \sqrt{x} + \sqrt{y}$ that holds for any $x \ge 0$ and $y \ge 0$.
    Substituting back into \cref{eq:bernsteinconcentration} obtains
    \[
        \bigl| P (s' \mid s,a) - \Bar P_{m} (s' \mid s,a) \bigr|
        \le
        4 \sqrt{P(s' \mid s,a) A_h^m}
        +
        68 A_h^m.
    \]
    From a similar argument
    \[
        \bigl| \wt P_m (s' \mid s,a) - \Bar P_{m} (s' \mid s,a) \bigr|
        \le
        4 \sqrt{P(s' \mid s,a) A_h^m}
        +
        68 A_h^m.
    \]
    Using the triangle inequality finishes the proof.
\end{proof}

\begin{proof}[Proof of \cref{lem:sum-bern-bounds}]
Denote $X^m = \bigl( \sum_{h=1}^{\Hm} \optimisticctg{m}(s_{h+1}^m) - \sum_{s' \in S} \wt P_{m}(s' \mid s_h^m,a_h^m) \optimisticctg{m}(s') \bigr) \indgeventi{m}$, and $Z_h^m = \bigl(\optimisticctg{m}(s_{h+1}^m) - \sum_{s' \in S} P(s' \mid s_h^m,a_h^m) \optimisticctg{m}(s') \bigr) \indgeventi{m}$. Think of the interval as an infinite continuous stochastic process, and note that, conditioned on $\trajconcat{m-1}$,
$
    \bigl(Z_h^m \bigr)_{h=1}^\infty
$
is a martingale difference sequence w.r.t $(\traj{h})_{h=1}^\infty$, where $\traj{h}$ is the trajectory of the learner from the beginning of the interval and up to and including time $h$. This holds since, by conditioning on $\trajconcat{m-1}$, $\geventi{m}$ is determined and is independent of the randomness generated during the interval. 
Note that $\Hm$ is a stopping time with respect to $(Z_h^m)_{h=1}^\infty$ which is bounded by $2 \costbound / \cmin$. 
Hence by the optional stopping theorem
$
    \bbE [ \sum_{h=1}^{\Hm} Z_h^m \mid \trajconcat{m-1}] 
    = 
    0,
$
which gets us
\begin{align*}
    \bbE [X^m \mid \trajconcat{m-1} ]
    &=
    \bbE \Biggl[ \sum_{h=1}^{\Hm} \biggl( \optimisticctg{m}(s_{h+1}^m) - \sum_{s' \in S} \wt P_{m}(s' \mid s_h^m,a_h^m) \optimisticctg{m}(s')  \biggr) \indgeventi{m} \mid \trajconcat{m-1} \Biggr] \\
    &=
    \bbE \Biggl[ \sum_{h=1}^{\Hm} Z_h^m \mid \trajconcat{m-1} \Biggr]
    +
    \bbE \Biggl[ \sum_{h=1}^{\Hm} \sum_{s' \in S} \bigl( P(s' \mid s_h^m,a_h^m) - \wt P_{,}(s' \mid s_h^m,a_h^m) \bigr) \optimisticctg{m}(s') \indgeventi{m} \mid \trajconcat{m-1} \Biggr] \\
    &=
    \bbE \Biggl[ \sum_{h=1}^{\Hm} \sum_{s' \in S} \bigl( P(s' \mid s_h^m,a_h^m) - \wt P_{m}(s' \mid s_h^m,a_h^m) \bigr) \optimisticctg{m}(s') \indgeventi{m} \mid \trajconcat{m-1} \Biggr].
\end{align*}

Furthermore, we have
\begin{align*}
    &\bbE \Biggl[ 
    \sum_{h=1}^{\Hm} \sum_{s' \in S} \bigl(P(s' \mid s_{h}^m,a_{h}^m) - \wt P_m(s' \mid s_{h}^m,a_{h}^m)\bigr) \optimisticctg{m}(s') \indgeventi{m}
    \mid \trajconcat{m-1} \Biggr] 
    \\
    &\quad =
    \bbE \Biggl[ 
    \sum_{h=1}^{\Hm} \sum_{s' \in S^+} \biggl(P(s' \mid s_{h}^m,a_{h}^m) - \wt P_m(s' \mid s_{h}^m,a_{h}^m)\biggr)
    \biggl(\optimisticctg{m}(s') - \sum_{s'' \in S^+} P(s'' \mid s_{h}^m,a_{h}^m)  \optimisticctg{m}(s'') \biggr) \indgeventi{m}
    \mid \trajconcat{m-1} \Biggr] 
    \\
    &\quad \le
    \bbE \Biggl[
    8 \sum_{h=1}^{\Hm} \sum_{s' \in S^+} \sqrt{A_h^m P(s' \mid s_{h}^m,a_{h}^m) \Biggl(\optimisticctg{m}(s') - \sum_{s'' \in S^+} P(s'' \mid s_{h}^m,a_{h}^m) \optimisticctg{m}(s'') \Biggr)^2} \indgeventi{m} \mid \trajconcat{m-1} \Biggr]
    \\
    & \quad \qquad +
    \bbE \Biggl[136 \sum_{h=1}^{\Hm} \sum_{s' \in S^+} A_h^m
    \Biggl| \optimisticctg{m}(s') - \sum_{s'' \in S^+} P(s'' \mid s_{h}^m,a_{h}^m) \optimisticctg{m}(s'') \Biggr| \indgeventi{m}
    \mid \trajconcat{m-1} \Biggr]
    \\
    &\quad \le
    \bbE \Biggl[ 
    16 \sum_{h=1}^{\Hm} \sqrt{|S| \bbV_h^m A_h^m } \indgeventi{m}
    + 
    272 |S|  \costbound A_h^m  \indgeventi{m}
    \mid \trajconcat{m-1} \Biggr],
\end{align*}
where the first equality follows since $\optimisticctg{m}(\ssink) = 0$, and $P(\cdot \mid s_h^m, a_h^m)$ and $\wt P_i(\cdot \mid s_h^m, a_h^m)$ are probability distributions over $S^+$ whence $\sum_{s'' \in S^+} P(s'' \mid s_{h}^m,a_{h}^m)  \optimisticctg{m}(s'')$ does not depend on $s'$.
The first inequality follows from \cref{lem:relbernstein}, and the second inequality from Jensen's inequality, \cref{lem:bern-opt-val-bound}, $|S^+| \le 2 |S|$, and the definition of $\bbV_h^m$.
\end{proof}

\subsubsection{Proof of \cref{lem:bounded-V-h-i}}
\label{sec:proof-bounded-V-h-i}

\begin{lemma*}[restatement of \cref{lem:bounded-V-h-i}]
    For any interval $m$,
    $
        \bbE \bigl[\sum_{h=1}^{\Hm} \bbV_h^m \indgeventi{m} \mid \trajconcat{m-1} \bigr]
        \le 
        44 \costbound^2.
    $
\end{lemma*}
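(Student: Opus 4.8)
The plan is to bound the expected sum of one-step variances $\sum_{h=1}^{\Hm} \bbV_h^m$ along an interval by a constant multiple of $\costbound^2$, following the ``law of total variance'' idea of \citet{azar2017minimax}. The starting point is the observation that $\bbV_h^m$ is the conditional variance of $\optimisticctg{m}(s_{h+1}^m)$ given $s_h^m, a_h^m$, computed with respect to the \emph{true} transition $P$. Writing $\mu_h^m = \sum_{s'\in S^+} P(s'\mid s_h^m,a_h^m)\optimisticctg{m}(s')$, one has the algebraic identity
\[
    \bbV_h^m
    =
    \bbE\bigl[ (\optimisticctg{m}(s_{h+1}^m))^2 \mid s_h^m,a_h^m \bigr]
    - (\mu_h^m)^2 .
\]
First I would relate $\mu_h^m$ to $\optimisticctg{m}(s_h^m)$. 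By the optimistic Bellman equations (\cref{lem:bellman-optimistic}) and \cref{lem:relbernstein}, the difference between $\sum_{s'} \wt P_m(s'\mid s_h^m,a_h^m)\optimisticctg{m}(s')$ and $\mu_h^m$ is controlled by the confidence width; since all but the first state-action pair of the interval are known (so $N_+^m$ is large there) and $\optimisticctg{m}\le\costbound$ by \cref{lem:bern-opt-val-bound}, this discrepancy is small — of order $c(s_h^m,a_h^m)$ or smaller. Hence on the event $\geventi{m}$ we get, up to lower-order terms, $\mu_h^m \ge \optimisticctg{m}(s_h^m) - c(s_h^m,a_h^m) - (\text{small})$.

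The core of the argument is then a telescoping-of-squares computation. Using $(\optimisticctg{m}(s_{h+1}^m))^2 - (\mu_h^m)^2$ and the fact that $(\optimisticctg{m}(s_{h+1}^m))_{h}$ forms (conditionally) a process whose increments average to roughly $-c(s_h^m,a_h^m)$, I would write
\[
    \bbV_h^m
    \le
    \bbE\bigl[(\optimisticctg{m}(s_{h+1}^m))^2 - (\optimisticctg{m}(s_h^m))^2 \mid s_h^m,a_h^m\bigr]
    + 2\costbound\, c(s_h^m,a_h^m) + (\text{small}),
\]
since $(\mu_h^m)^2 \ge (\optimisticctg{m}(s_h^m))^2 - 2\costbound\,c(s_h^m,a_h^m) - (\text{small})$ using $\optimisticctg{m}\le\costbound$ and the previous paragraph's bound. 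Taking $\bbE[\,\cdot\mid\trajconcat{m-1}\,]$ and summing over $h=1,\dots,\Hm$ (applying the optional stopping theorem to the martingale part, legitimate because $\Hm\le 2\costbound/\cmin$ is a bounded stopping time), the squared-value terms telescope to $\bbE[(\optimisticctg{m}(s_{\Hm+1}^m))^2 - (\optimisticctg{m}(s_1^m))^2]\le \costbound^2$, and the cost terms sum to $2\costbound\cdot\bbE[\sum_{h=1}^{\Hm} c(s_h^m,a_h^m)]\le 2\costbound\cdot 2\costbound = 4\costbound^2$ because the interval's total cost is at most $2\costbound$ by construction. Collecting the small error terms — which come from the confidence widths at the known pairs and are summable to $O(\costbound^2)$ after using $N_+^m\ge\alpha\cdot\numvisitsuntilknownbern$ on known pairs and accounting for the single unknown pair separately — yields the constant $44$.

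The main obstacle I anticipate is handling the approximation error carefully: $\optimisticctg{m}$ is the cost-to-go in the \emph{optimistic} model $\wt P_m$, not in the true model $P$, so the increments $\optimisticctg{m}(s_{h+1}^m)-\optimisticctg{m}(s_h^m)$ do not telescope cleanly against $-c(s_h^m,a_h^m)$ under $P$; the slack is exactly $\sum_{s'}(P-\wt P_m)(s'\mid s_h^m,a_h^m)\optimisticctg{m}(s')$, which must be bounded using \cref{lem:relbernstein} and then re-summed. Keeping these cross terms under control — in particular making sure they contribute only an additive $O(\costbound^2)$ and do not reintroduce a dependence on $\Hm$ or on $\cmin^{-1}$ — is the delicate part, and it is where the assumption that all but one state-action pair in the interval are known (hence visited $\gtrsim \costbound|S|/\cmin$ times) is essential. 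The isolated unknown pair contributes a single term bounded crudely by $O(\costbound^2)$, which is absorbed into the constant.
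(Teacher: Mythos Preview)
Your telescoping-of-squares framework is a reasonable alternative to what the paper actually does (the paper instead invokes the martingale identity $\bbE\bigl[\sum_h \bbV_h^m\bigr]=\bbE\bigl[(\sum_h Z_h^m)^2\bigr]$ from \cref{lem:martingalevariance} and then bounds $\bigl|\sum_h Z_h^m\bigr|$ pathwise). Both routes converge on the same bottleneck, and that is where your proposal has a genuine gap.

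The problematic claim is that the slack $e_h:=\sum_{s'}(P-\wt P_m)(s'\mid s_h^m,a_h^m)\,\optimisticctg{m}(s')$ is ``of order $c(s_h^m,a_h^m)$ or smaller'' on known pairs. It is not. With the Bernstein known-threshold $N_+^m\gtrsim \costbound|S|/\cmin$ (which is much smaller than the Hoeffding threshold $\costbound^2|S|/\cmin^2$), \cref{lem:relbernstein} gives only $A_h^m\lesssim \cmin/(\costbound|S|)$, so an uncentered H\"older bound yields $|e_h|\lesssim \costbound\sqrt{|S|A_h^m}\asymp \sqrt{\cmin\costbound}$, which is generically much larger than $c(s_h^m,a_h^m)\ge\cmin$. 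Summing $2\costbound|e_h|$ over the interval (of length $\Hm\le 2\costbound/\cmin$) then produces a term of order $\costbound^{5/2}/\sqrt{\cmin}$, not $O(\costbound^2)$, and the argument fails to close.

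The missing ingredient---and this is exactly what the paper exploits---is to \emph{center} the error: since both $P(\cdot\mid s_h^m,a_h^m)$ and $\wt P_m(\cdot\mid s_h^m,a_h^m)$ are distributions over $S^+$, one may write $e_h=\sum_{s'\in S^+}(P-\wt P_m)(s')\bigl(\optimisticctg{m}(s')-\mu_h^m\bigr)$. Now the Bernstein width $\sqrt{P(s')A_h^m}$ pairs with $|\optimisticctg{m}(s')-\mu_h^m|$, and Cauchy--Schwarz over $s'$ produces $\sqrt{\bbV_h^m}$ rather than $\costbound$. Concretely (this is \cref{lem:bern-known-state} in the paper), on known pairs one gets $|e_h|\le\tfrac14\sqrt{\cmin\bbV_h^m/\costbound}+\tfrac12 c(s_h^m,a_h^m)$. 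Summing over $h$ and applying Jensen with $\Hm\le 2\costbound/\cmin$ turns the first part into a constant times $\sqrt{\sum_h\bbV_h^m}$, yielding a \emph{self-referential} inequality of the form $\bbE[\sum_h\bbV_h^m]\le O(\costbound^2)+\tfrac14\,\bbE[\sum_h\bbV_h^m]$ (or a quadratic in $\sqrt{\bbE[\sum_h\bbV_h^m]}$), which one solves. Without this centering-plus-self-reference step, neither your route nor the paper's can avoid a $\cmin^{-1/2}$ blow-up.
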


\begin{lemma}
\label{lem:bern-known-state}
Let $m$ be an interval and $(s,a)$ be a known state-action pair. 
If $\geventi{m}$ holds then for every $s' \in S^+$
\[
    \bigl| \wt P_m \bigl(s' \mid s, a \bigr)
    -
    P \bigl(s' \mid s, a \bigr) \bigr|
    \le
    \frac{1}{8} \sqrt{\frac{\cmin \cdot P \bigl(s' \mid s, a \bigr)}{|S| \costbound}} 
    + 
    \frac{\cmin}{4 |S| \costbound}.
\]
\end{lemma}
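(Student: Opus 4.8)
The plan is to reduce the claim to the relative Bernstein bound of \cref{lem:relbernstein} and then use the fact that a known state-action pair has been sampled enough times to make its confidence radius proportional to $\cmin/(|S|\costbound)$. Since $\geventi{m}$ holds, \cref{lem:relbernstein} applied to the pair $(s,a)$ gives, for every $s'\in S^+$,
\[
    \bigl| P(s'\mid s,a) - \wt P_m(s'\mid s,a) \bigr|
    \le
    8\sqrt{P(s'\mid s,a)\, A_h^m}
    +
    136\, A_h^m,
    \qquad
    A_h^m = \frac{\log\bigl(|S||A| N_+^m(s,a)/\delta\bigr)}{N_+^m(s,a)},
\]
so it suffices to show $A_h^m \le \cmin/(4096\,|S|\costbound)$. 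Indeed, plugging this into the first term gives $8\sqrt{P(s'\mid s,a)A_h^m}\le 8\sqrt{P(s'\mid s,a)}\cdot\tfrac{1}{64}\sqrt{\cmin/(|S|\costbound)} = \tfrac18\sqrt{\cmin\,P(s'\mid s,a)/(|S|\costbound)}$, and into the second term gives $136\,A_h^m \le \tfrac{136}{4096}\cdot\cmin/(|S|\costbound) \le \cmin/(4|S|\costbound)$, which is exactly the asserted inequality.

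To establish $A_h^m \le \cmin/(4096\,|S|\costbound)$, I would invoke that $(s,a)$ is known, hence $N_+^m(s,a) \ge \alpha\cdot\numvisitsuntilknownbern$ where $\alpha$ is the universal constant fixed by the algorithm. Since $x\mapsto \log(cx)/x$ is decreasing over the relevant range of arguments, replacing $N_+^m(s,a)$ by the lower bound $N := \alpha\,\frac{\costbound|S|}{\cmin}\log\frac{\costbound|S||A|}{\delta\cmin}$ only enlarges $A_h^m$, so $A_h^m \le \log(|S||A| N/\delta)/N$. A short calculation bounds $|S||A| N/\delta$ by a fixed polynomial in $\costbound|S||A|/(\delta\cmin)$, whence $\log(|S||A| N/\delta) = O\bigl(\log\frac{\costbound|S||A|}{\delta\cmin}\bigr)$ --- the $\log\alpha$ and the doubly-logarithmic contributions are absorbed into this bound. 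The $\log\frac{\costbound|S||A|}{\delta\cmin}$ factor inside $N$ then cancels the numerator, leaving $A_h^m \le C\,\cmin/(\alpha\,\costbound|S|)$ for a universal constant $C$; taking $\alpha \ge 4096\,C$ (which is precisely how large $\alpha$ needs to be chosen in the algorithm) finishes the argument.

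The only genuine work here is the arithmetic in the second step --- verifying that the logarithm appearing in the numerator of $A_h^m$ evaluated at $N_+^m(s,a) = N$ is truly lower order than the $\log\frac{\costbound|S||A|}{\delta\cmin}$ factor sitting inside $N$, so that the two cancel and leave the clean $\cmin/(\costbound|S|)$ scale that the known-threshold constant $\alpha$ is designed to dominate. This is the same ``push the known-threshold constant up until the Bernstein radius is below any prescribed fraction of $\cmin/(\costbound|S|)$'' idea that motivates the definition of known pairs; here I would simply make the constants explicit. Beyond conditioning on $\geventi{m}$, no probabilistic reasoning is needed.
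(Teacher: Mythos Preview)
Your proposal is correct and follows essentially the same route as the paper: invoke \cref{lem:relbernstein} to get the bound $8\sqrt{P(s'\mid s,a)A_h^m}+136A_h^m$, then use that $(s,a)$ is known (hence $N_+^m(s,a)$ exceeds the threshold $\alpha\cdot\numvisitsuntilknownbern$) together with the monotonicity of $\log(x)/x$ to force $A_h^m$ below the required multiple of $\cmin/(|S|\costbound)$. The paper simply fixes $\alpha=30000$ and asserts the arithmetic, whereas you spell out explicitly why the logarithm in the numerator is absorbed by the $\log\frac{\costbound|S||A|}{\delta\cmin}$ factor in the denominator; the content is the same.
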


\begin{proof}
    By \cref{lem:relbernstein} we have that
    \begin{align*}
        \bigl| \wt P_m \bigl(s' \mid s, a \bigr)
        -
        P \bigl(s' \mid s, a \bigr) \bigr|
        & \le
        \csbern{m}{s}{a}{s'}
    \end{align*}
    which gives the required bound because $\log (x) / x$ is decreasing, and $(s,a)$ is a known state-action pair so $N_+^m(s,a) \ge 30000 \cdot \numvisitsuntilknownbern$.
\end{proof}

\begin{proof}[Proof of \cref{lem:bounded-V-h-i}]
Note that the first  state-action pair in the subinterval, $(s_{1}^m, a_{1}^m)$, might be unknown and that all state-action pairs that appear afterwards are known.
Thus, we bound
\begin{align*}
    \bbE \Biggl[\sum_{h=1}^{\Hm} \bbV_h^m \mid \trajconcat{m-1} \Biggr]
    &=
    \bbE \Biggl[\bbV_1^m \indgeventi{m} \mid \trajconcat{m-1} \Biggr] 
    +
    \bbE \Biggl[\sum_{h=2}^{\Hm} \bbV_h^m \indgeventi{m} \mid \trajconcat{m-1} \Biggr].
\end{align*}

The first summand is trivially bounded by $\costbound^2$ (\cref{lem:bern-opt-val-bound}).
We now upper bound 
$
    \bbE \bigl[\sum_{h=2}^{\Hm} \bbV_h^m \indgeventi{m} \mid \trajconcat{m-1} \bigr].
$
Denote $Z_h^m = \bigl(\optimisticctg{m}(s_{h+1}^m) - \sum_{s' \in S} P(s' \mid s_h^m,a_h^m) \optimisticctg{m}(s') \bigr) \indgeventi{m}$, and think of the interval as an infinite continuous stochastic process. Note that, conditioned on $\trajconcat{m-1}$,
$
    \bigl(Z_h^m \bigr)_{h=1}^\infty
$
is a martingale difference sequence w.r.t $(\traj{h})_{h=1}^\infty$, where $\traj{h}$ is the trajectory of the learner from the beginning of the interval and up to time $h$ and including. This holds since, by conditioning on $\trajconcat{m-1}$, $\geventi{m}$ is determined and is independent of the randomness generated during the interval. 
Note that $\Hm$ is a stopping time with respect to $(Z_h^m)_{h=1}^\infty$ which is bounded by $2 \costbound / \cmin$. 
Therefore, applying \cref{lem:martingalevariance} found below obtains
\begin{equation} 
\label{eq:timeslotvariance}
    \bbE \Biggl[\sum_{h=2}^{\Hm} \bbV_h^m \ind\{\geventi{m}\} \mid \trajconcat{m-1} \Biggr] 
    =
    \bbE \Biggl[\Biggl(\sum_{h=2}^{\Hm} Z_h^m \ind\{\geventi{m}\} \Biggr)^2  \mid \trajconcat{m-1} \Biggr].
\end{equation}

We now proceed by bounding $|\sum_{h=1}^{\Hm} Z_h^m |$ when $\geventi{m}$ occurs.
Therefore,
\begin{align}
    \Biggl| \sum_{h=2}^{\Hm} Z_h^m \Biggr|
    &=
    \Biggl|\sum_{h=2}^{\Hm} \optimisticctg{m}(s_{h+1}^m) - \sum_{s' \in S} P(s' \mid s_h^m,a_h^m) \optimisticctg{m}(s') \Biggr| 
    \nonumber \\
    &\le
    \Biggl|\sum_{h=2}^{\Hm} \optimisticctg{m}(s_{h+1}^m) - \optimisticctg{m}(s_{h}^m) \Biggr| 
    \label{eq:bern-var-bound-telescope} \\
    & \qquad + 
    \Biggl|\sum_{h=2}^{\Hm} \optimisticctg{m}(s_{h}^m) - \sum_{s' \in S} \wt P_m(s' \mid s_h^m,a_h^m) \optimisticctg{m}(s') \Biggr| 
    \label{eq:bern-var-bound-bellman} \\
    & \qquad +
    \Biggl| \sum_{h=2}^{\Hm} \sum_{s' \in S^+} \Bigl(  P(s' \mid s_h^m,a_h^m) - \wt P_m(s' \mid s_h^m,a_h^m) \Bigr) \Bigl(\optimisticctg{m}(s') - \sum_{s'' \in S^+} P(s'' \mid s_h^m,a_h^m) \optimisticctg{m}(s'') \Bigr) \Biggr|,
    \label{eq:bern-var-bound-err}
\end{align}
where \cref{eq:bern-var-bound-err} is given as $P(\cdot \mid s_h^m, a_h^m)$ and $\wt P_i(\cdot \mid s_h^m, a_h^m)$ are probability distributions over $S^+$, $\sum_{s'' \in S^+} P(s'' \mid s_h^m,a_h^m) \optimisticctg{m}(s'')$ is constant w.r.t $s'$, and $\optimisticctg{m}(\ssink) = 0$.

We now bound each of the three terms above individually.
\cref{eq:bern-var-bound-telescope} is a telescopic sum that is at most $\costbound$ on $\geventi{m}$ (\cref{lem:bern-opt-val-bound}).
For \cref{eq:bern-var-bound-bellman}, we use the Bellman  equations for $\tilde \pi^m$ on the optimistic model defined by the transitions $\wt P_m$ (\cref{lem:bellman-optimistic}) thus it is at most
$
    \sum_{h=2}^{H^m} c \bigl(s_h^m, a_h^m \bigr) \le 2 \costbound
$
(see text following \cref{lem:optimistic-val-func-diff-to-expected}).
For \cref{eq:bern-var-bound-err}, recall that all states-action pairs at times $h=2,\ldots,\Hm$ are known by definition of $\Hm$. Hence by \cref{lem:bern-known-state}, 
\begin{align*}
     &\Biggl| \sum_{s' \in S^+} \Bigl(\optimisticctg{m}(s') - \sum_{s'' \in S^+} P\bigl(s'' \mid s_h^m, a_h^m \bigr) \optimisticctg{m}(s'') \Bigr) 
    \Bigl( \wt P_m \bigl(s' \mid s_h^m, a_h^m \bigr) - P\bigl(s' \mid s_h^m, a_h^m\bigr) \Bigr) \Biggr| \nonumber \\
    &\qquad \leq 
    \frac{1}{8} \sum_{s' \in S^+}
    \sqrt{\frac{\cmin \cdot P \bigl(s' \mid s_h^m, a_h^m \bigr) \bigl(\optimisticctg{m}(s') - \sum_{s'' \in S^+} P\bigl(s'' \mid s_h^m, a_h^m \bigr) \optimisticctg{m}(s'') \bigr)^2}{|S| \costbound}} \\
    &\qquad\qquad + 
    \sum_{s' \in S^+}
    \frac{\cmin}{4 |S| \costbound} \cdot 
    \underbrace{\Bigl|\optimisticctg{m}(s') - \sum_{s'' \in S} P\bigl(s'' \mid s_h^m, a_h^m \bigr) \optimisticctg{m}(s'') \Bigr|}_{\le \costbound \text{ by \cref{lem:bern-opt-val-bound}}}
    \\
    &\qquad \leq 
    \frac{1}{4} \sqrt{\frac{\cmin \cdot \bbV_h^m}{\costbound}} 
    + 
    \frac{c \bigl(s_h^m, a_h^m \bigr)}{2}, 
    \tag{by Jensen's inequality, $\cmin \le c(s_h^m, a_h^m)$, $|S^+| \le 2 |S|$}
\end{align*}
and again by Jensen's inequality and that the total cost throughout the interval is at most $2 \costbound$, we have on $\geventi{m}$
\begin{align*}
    \sum_{h=2}^{\Hm} \frac{1}{4} \sqrt{\frac{\cmin \cdot \bbV_h^m}{\costbound}} 
    + 
    \frac{c \bigl(s_h^m, a_h^m \bigr)}{2}
    &\le
    \frac{1}{4} \sqrt{\underbrace{\Hm}_{\le 2\costbound/\cmin} \cdot \sum_{h=2}^{\Hm} \frac{\cmin \cdot \bbV_h^m}{\costbound}}
    + 
    \frac{1}{2} \underbrace{\sum_{h=2}^{\Hm} c \bigl(s_h^m, a_h^m \bigr)}_{\le 2 \costbound}
    \tag{Jensen's inequality} \\
    &\le
    \frac{1}{4} \sqrt{2 \sum_{h=2}^{\Hm} \bbV_h^m}
    + 
    \costbound.
\end{align*}

Plugging these bounds back into \cref{eq:timeslotvariance} gets us
\begin{align*}
    \bbE \Biggl[\sum_{h=2}^{\Hm} \bbV_h^m \indgeventi{m} \biggm| \trajconcat{m-1} \Biggr] \nonumber
    & \le
    \bbE \Biggl[
    \Biggl(
    4 \costbound 
    + 
    \frac{1}{4} \sqrt{2 \sum_{h=1}^{\Hm} \bbV_h^m \indgeventi{m}}
    \Biggr)^2 \biggm| \trajconcat{m-1} \Biggr] \\
    &\le
    32 \costbound^2
    +
    \frac{1}{4} \bbE \Biggl[\sum_{h=2}^{\Hm} \bbV_h^m \indgeventi{m} \biggm| \trajconcat{m-1} \Biggr], 
\end{align*}
where the last inequality is by the elementary inequality $(a+b)^2 \le 2(a^2 + b^2)$.
Rearranging gets us $\bbE \bigl[\sum_{h=2}^{\Hm} \bbV_h^m \indgeventi{m} \mid \trajconcat{m-1} \bigr] \le 43 \costbound^2$, and the lemma follows.
\end{proof}

\begin{lemma} \label{lem:martingalevariance}
    Let $(X_t)_{t=1}^\infty$ be a martingale difference sequence adapted to the filtration $(\calF_t)_{t=0}^\infty$. Let $Y_n = (\sum_{t=1}^n X_t)^2 - \sum_{t=1}^n \bbE[X_t^2 \mid \calF_{t-1}]$. Then $(Y_n)_{n=0}^\infty$ is a martingale, and in particular if $\tau$ is a stopping time such that $\tau \le c$ almost surely, then $\bbE[Y_\tau] = 0$.
\end{lemma}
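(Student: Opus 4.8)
The plan is to recognize $(Y_n)_{n\ge 0}$ as the compensated (square of the) partial-sum martingale $S_n := \sum_{t=1}^n X_t$ — i.e.\ the martingale part in the Doob decomposition of $(S_n^2)_{n\ge0}$ — and then invoke the optional stopping theorem for bounded stopping times. Throughout I assume $X_t \in L^2$, which holds in the application since the relevant increments $Z_h^m$ are bounded (and the number of them is controlled by the bounded stopping time $\Hm \le 2\costbound/\cmin$), so all conditional expectations and optional stopping are justified.

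First I would check adaptedness and integrability: both $S_n$ and $\sum_{t=1}^n \bbE[X_t^2\mid\calF_{t-1}]$ are $\calF_n$-measurable, and $Y_n\in L^1$ follows from $X_t\in L^2$. Then I would compute the one-step increment. Writing $S_{n+1} = S_n + X_{n+1}$,
\[
    Y_{n+1}-Y_n = S_{n+1}^2 - S_n^2 - \bbE[X_{n+1}^2\mid\calF_n] = 2 S_n X_{n+1} + X_{n+1}^2 - \bbE[X_{n+1}^2\mid\calF_n].
\]
Taking $\bbE[\,\cdot\mid\calF_n]$ and using that $S_n$ is $\calF_n$-measurable with $\bbE[X_{n+1}\mid\calF_n]=0$ kills the first term, while the remaining two cancel in conditional expectation; hence $\bbE[Y_{n+1}\mid\calF_n]=Y_n$. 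Since $Y_0 = 0$ by the empty-sum convention, $(Y_n)_{n\ge0}$ is a martingale started at $0$, so $\bbE[Y_n]=0$ for every $n$.

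For the stopped identity, since $\tau\le c$ almost surely I would use the decomposition $Y_\tau = \sum_{n=1}^{c}(Y_n-Y_{n-1})\ind\{\tau\ge n\}$ (the $Y_0$ term is $0$). Each event $\{\tau\ge n\} = \{\tau\le n-1\}^c$ lies in $\calF_{n-1}$, so $\bbE[(Y_n-Y_{n-1})\ind\{\tau\ge n\}] = \bbE[\ind\{\tau\ge n\}\,\bbE[Y_n-Y_{n-1}\mid\calF_{n-1}]] = 0$, and summing over the finitely many $n\le c$ yields $\bbE[Y_\tau]=0$; equivalently one may quote Doob's optional stopping theorem directly. I do not expect a genuine obstacle here — the argument is the standard quadratic-variation computation — and the only point needing care is that $L^2$-integrability of the increments and boundedness of $\tau$ are what make the conditional-expectation manipulations and the optional stopping step legitimate.
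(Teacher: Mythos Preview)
Your proof is correct and follows essentially the same route as the paper: verify the martingale property by expanding the square of the partial sum and using $\bbE[X_{n+1}\mid\calF_n]=0$, then apply optional stopping with the bounded stopping time. The only cosmetic difference is that you compute the increment $Y_{n+1}-Y_n$ whereas the paper computes $\bbE[Y_n\mid\calF_{n-1}]$ directly, and you spell out the stopped-sum decomposition explicitly while the paper simply invokes the optional stopping theorem.
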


\begin{proof}
    We first show that $(Y_n)_{n=1}^\infty$ is a martingale. Indeed,
    \begin{align*}
        \bbE[Y_n \mid \calF_{n-1}] 
        &=
        \bbE\Biggl[\Biggl(\sum_{t=1}^n X_t\Biggr)^2 - \sum_{t=1}^n \bbE[X_t^2 \mid \calF_{t-1}] \mid \calF_{n-1} \Biggr] \\
        &=
        \bbE\Biggl[\Biggl(\sum_{t=1}^{n-1} X_t\Biggr)^2 - 
        2 \Biggl(\sum_{t=1}^{n-1} X_t\Biggr) X_n 
        +
        X_n^2 - \sum_{t=1}^n \bbE[X_t^2 \mid \calF_{t-1}] \mid \calF_{n-1} \Biggr] \\
        &= 
        \Biggl(\sum_{t=1}^{n-1} X_t\Biggr)^2
        -
        2 \Biggl(\sum_{t=1}^{n-1} X_t\Biggr) \cdot 0
        + 
        \bbE[X_n^2 \mid \calF_{n-1}] 
        - 
        \sum_{t=1}^n \bbE[X_t^2 \mid \calF_{t-1}] 
        \tag{$\bbE[X_n \mid \calF_{n-1}] = 0$} \\
        &= 
        \Biggl(\sum_{t=1}^{n-1} X_t\Biggr)^2
        - 
        \sum_{t=1}^{n-1} \bbE[X_t^2 \mid \calF_{t-1}]
        =
        Y_{n-1}.
    \end{align*}
    
    We would now like to show that $\bbE[Y_\tau] = \bbE[Y_1] = 0$ using the optional stopping theorem. The latter holds since $\tau \le c$ almost surely and
    $
        \bbE[Y_1] = \bbE[X_1^2 - \bbE[X_1^2 \mid \calF_0]] = 0.
    $
\end{proof}

\subsubsection{Proof of \cref{lem:sum-bern-bounds-cont}}
\label{sec:proof-sum-bern-bounds-cont}

\begin{lemma*}[restatement of \cref{lem:sum-bern-bounds-cont}]
    With probability at least $1-\delta/4$, 
    \begin{align*}
        &\sum_{m=1}^\numintervals \bbE \Biggl[ \sum_{h=1}^{\Hm} \sum_{s' \in S} \bigl( P (s' \mid s_h^m,a_h^m) - \wt P_{m}(s' \mid s_h^m,a_h^m) \bigr) \optimisticctg{m}(s') \indgeventi{m} \mid \trajconcat{m-1} \Biggr] \\
        &\qquad \le
        614 \costbound \sqrt{\numintervals |S|^2 |A| \log^2 \frac{\totaltime |S| |A|}{\delta}}
        + 
        8160 \costbound |S|^2 |A| \log^2 \frac{\totaltime |S| |A|}{\delta}.
    \end{align*}
\end{lemma*}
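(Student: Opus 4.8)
The plan is to begin from \cref{lem:sum-bern-bounds} and sum it over all intervals $m=1,\dots,\numintervals$, which bounds the left-hand side by $16\sum_{m}\bbE\bigl[\sum_{h=1}^{\Hm}\sqrt{|S|\,\bbV_h^m A_h^m}\,\indgeventi{m}\bigm|\trajconcat{m-1}\bigr]+272\,\costbound|S|\sum_{m}\bbE\bigl[\sum_{h=1}^{\Hm}A_h^m\,\indgeventi{m}\bigm|\trajconcat{m-1}\bigr]$. The whole estimate then reduces to controlling the single quantity $W:=\sum_{m=1}^{\numintervals}\bbE\bigl[\sum_{h=1}^{\Hm}A_h^m\,\indgeventi{m}\bigm|\trajconcat{m-1}\bigr]$: I will argue that $W=O\bigl(|S||A|\log^2\tfrac{\totaltime|S||A|}{\delta}\bigr)$ on an event of probability at least $1-\delta/4$, and then plug this back into both terms.

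Reducing both terms to $W$ is where I would apply Cauchy--Schwarz twice. For the first (``variance'') term, within each interval $\sum_h\sqrt{\bbV_h^m A_h^m}\le\sqrt{\sum_h\bbV_h^m}\cdot\sqrt{\sum_h A_h^m}$; passing the conditional expectation through via $\bbE[\sqrt{XY}\mid\cdot]\le\sqrt{\bbE[X\mid\cdot]}\,\sqrt{\bbE[Y\mid\cdot]}$ and invoking \cref{lem:bounded-V-h-i} to replace $\bbE[\sum_h\bbV_h^m\indgeventi{m}\mid\trajconcat{m-1}]$ by $44\costbound^2$ gives a per-interval bound $\sqrt{44|S|}\,\costbound\sqrt{\bbE[\sum_h A_h^m\indgeventi{m}\mid\trajconcat{m-1}]}$, and a further Cauchy--Schwarz over $m$ turns the sum into $16\sqrt{44}\,\costbound\sqrt{|S|\,\numintervals\,W}$. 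The second (``bias'') term is exactly $272\,\costbound|S|\,W$. Substituting the bound on $W$ produces the two claimed terms of order $\costbound\sqrt{\numintervals|S|^2|A|\log^2(\cdot)}$ and $\costbound|S|^2|A|\log^2(\cdot)$, and carrying the (loose) constants through yields the stated $614$ and $8160$.

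To bound $W$, unfold $\sum_h A_h^m=\sum_{s,a}n_m(s,a)\,\tfrac{\log(|S||A|N_+^m(s,a)/\delta)}{N_+^m(s,a)}\le\log\tfrac{\totaltime|S||A|}{\delta}\sum_{s,a}\tfrac{n_m(s,a)}{N_+^m(s,a)}$. Since \cref{alg:bern-o-ssp} closes an epoch the moment some visit count doubles and intervals only refine epochs, $n_m(s,a)\le N_+^m(s,a)$ for every $m,s,a$ \emph{surely}; hence a standard doubling/telescoping estimate, $\sum_m\tfrac{n_m(s,a)}{N_+^m(s,a)}\le 2\ln(N_{\numintervals+1}(s,a)+1)$ (as in the argument behind Lemma~19 of \citealp{AuerUCRL}), shows that the \emph{realized} double sum $\sum_m\sum_h A_h^m$ is $O(|S||A|\log^2\tfrac{\totaltime|S||A|}{\delta})$ deterministically. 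To pass from this to the conditional-expectation sum $W$, I use that within any interval at most the first state--action pair is unknown (the interval ends when any other unknown pair is reached), while every subsequent pair is known, hence has $N_+^m(s,a)$ of order at least $\costbound|S|\log(\cdot)/\cmin$; together with $\Hm\le 2\costbound/\cmin$ (the interval's cost is at most $2\costbound$ by \cref{lem:bern-opt-val-bound} and each cost is $\ge\cmin$) this makes the known-pair contribution $O(\tfrac1{|S|}\log(\cdot))$, so $\sum_h A_h^m\indgeventi{m}=O(\log\tfrac{\totaltime|S||A|}{\delta})$ \emph{surely}. Consequently the martingale $\sum_m\bigl(\bbE[\sum_h A_h^m\indgeventi{m}\mid\trajconcat{m-1}]-\sum_h A_h^m\indgeventi{m}\bigr)$ has increments of order $\log\tfrac{\totaltime|S||A|}{\delta}$ and predictable quadratic variation of order $\log\tfrac{\totaltime|S||A|}{\delta}\cdot W$, so a Bernstein/Freedman-type martingale concentration (applied per $(s,a)$ and union-bounded, which is what consumes the $\delta/4$ budget) gives $W\le O(|S||A|\log^2(\cdot))+O(\sqrt{W\log^2(\cdot)})$; solving this quadratic in $\sqrt{W}$ yields the desired $W=O(|S||A|\log^2\tfrac{\totaltime|S||A|}{\delta})$.

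I expect the last step to be the main obstacle: transferring the visit-count telescoping bound from the realized sum to the sum of conditional expectations \emph{without} incurring a spurious $\sqrt{\numintervals}$ factor. A Hoeffding/Azuma-type deviation bound would add a term of order $\sqrt{\numintervals}\log(\cdot)$ to $W$, which after the $\sqrt{|S|\,\numintervals\,W}$ step degrades to $\numintervals^{3/4}$ and destroys the $\sqrt{\numintervals}$-scaling of the regret; it is essential to use a second-moment (Bernstein-type) inequality so that the smallness of $W$ bootstraps its own concentration. The remaining ingredients --- the two Cauchy--Schwarz passes and the appeals to \cref{lem:bounded-V-h-i,lem:sum-bern-bounds} --- are routine.
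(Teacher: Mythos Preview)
Your proposal is correct and follows essentially the same route as the paper: start from \cref{lem:sum-bern-bounds}, apply Cauchy--Schwarz twice (first inside the interval, then on the conditional expectation) together with \cref{lem:bounded-V-h-i} to reduce everything to $W=\sum_{m}\bbE\bigl[\sum_h A_h^m\indgeventi{m}\bigm|\trajconcat{m-1}\bigr]$, and then show $W=O\bigl(|S||A|\log^2\tfrac{\totaltime|S||A|}{\delta}\bigr)$ with probability $1-\delta/4$.

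The only noteworthy difference is in how the last step is executed. You propose Freedman's inequality (per $(s,a)$, with union bound), using that the predictable quadratic variation is at most $B\cdot W$ where $B=O(\log\tfrac{|S||A|}{\delta})$ bounds each $X^m=\sum_h A_h^m\indgeventi{m}$, and then solve the resulting quadratic in $\sqrt{W}$. The paper instead proves a one-sided ``multiplicative'' concentration lemma of the form $\sum_m\bbE[X^m\mid\trajconcat{m-1}]\le 2\sum_m X^m+O(B\log\tfrac{M}{\delta})$ and applies it \emph{once} to the aggregate $X^m$, which avoids both the per-$(s,a)$ union bound and the quadratic-solving step. Both approaches exploit the same crucial feature you identified---that the increments are small and nonnegative so a second-moment inequality bootstraps---and yield the same $O(|S||A|\log^2)$ bound; the paper's version is just a bit more direct. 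Your diagnosis that Azuma would introduce a fatal $\sqrt{\numintervals}$ term is exactly right.
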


\begin{proof}
Recall the following definitions:
\begin{align*}
    A_h^m 
    = 
    \frac{\log(|S| |A| N_+^{m}(s_{h}^m,a_{h}^m) / \delta)}{N_+^{m}(s_{h}^m,a_{h}^m)}.
    \qquad \qquad
    \bbV_h^m 
    = 
    \sum_{s' \in S^+} P(s' \mid s_{h}^m,a_{h}^m) \Biggl(\optimisticctg{m}(s') - \sum_{s'' \in S^+} P(s'' \mid s_{h}^m,a_{h}^m) \optimisticctg{m}(s'') \Biggr)^2.
\end{align*}
From \cref{lem:sum-bern-bounds} we have that 
\begin{align*}
    \bbE \Biggl[ 
    \sum_{h=1}^{\Hm} \sum_{s' \in S} \bigl(P(s' \mid s_{h}^m,a_{h}^m) 
    & - 
    \wt P_i(s' \mid s_{h}^m,a_{h}^m)\bigr) \optimisticctg{i}(s') \indgeventi{m}
    \mid \trajconcat{m-1} \Biggr]
    \\
    & \le
    \bbE \Biggl[ 
    16\sqrt{|S|} \sum_{h=1}^{\Hm}  \sqrt{ \bbV_h^m A_h^m} \indgeventi{m}
    + 
    272 \costbound |S| A_h^m \indgeventi{m}
    \mid \trajconcat{m-1} \Biggr].
\end{align*}
Moreover, by applying the Cauchy-Schwartz inequality twice, we get that
\begin{align*}
    \bbE \Biggl[ \sum_{h=1}^{\Hm}  \sqrt{\bbV_h^m A_h^m} \indgeventi{m} \biggm| \trajconcat{m-1} \Biggr]
    &\le 
    \bbE \Biggl[ \sqrt{\sum_{h=1}^{\Hm} \bbV_h^m \indgeventi{m}} \cdot \sqrt{\sum_{h=1}^{\Hm} A_h^m \indgeventi{m}} \biggm| \trajconcat{m-1} \Biggr] \\
    &\le
    \sqrt{\bbE \Biggl[ 
    \sum_{h=1}^{\Hm} A_h^m \indgeventi{m} \biggm| \trajconcat{m-1} \Biggr]} \cdot \sqrt{\bbE \Biggl[ 
    \sum_{h=1}^{\Hm} \bbV_h^m \indgeventi{m} \biggm| \trajconcat{m-1} \Biggr]} \\
    &\le
    7 \costbound
    \sqrt{\bbE \Biggl[ 
    \sum_{h=1}^{\Hm} A_h^m \indgeventi{m} \biggm| \trajconcat{m-1} \Biggr]}.
    \tag{\cref{lem:bounded-V-h-i}}
\end{align*}
We sum over all intervals to obtain
\begin{align*}
    &\sum_{m=1}^M \bbE \Biggl[ 
    \sum_{h=1}^{\Hm} \sum_{s' \in S} \bigl(P(s' \mid s_{h}^m,a_{h}^m) - \wt P_i(s' \mid s_{h}^m,a_{h}^m)\bigr) \optimisticctg{i}(s') \indgeventi{m}
    \mid \trajconcat{m-1} \Biggr]
    \le
    \\
    &\qquad \le
    112 \costbound \sum_{m=1}^\numintervals \sqrt{|S| \bbE \Biggl[ 
    \sum_{h=1}^{\Hm} A_h^m \indgeventi{m} \mid \trajconcat{m-1} \Biggr]}
    + 
    272 \costbound |S| \sum_{m=1}^\numintervals \bbE \Biggl[ \sum_{h=1}^{\Hm} A_h^m \indgeventi{m} \mid \trajconcat{m-1} \Biggr]
    \\
    &\qquad \le
    112 \costbound \sqrt{\numintervals |S| \sum_{m=1}^\numintervals \bbE \Biggl[ 
    \sum_{h=1}^{\Hm} A_h^m \indgeventi{m} \mid \trajconcat{m-1} \Biggr]}
    + 
    272 \costbound |S| \sum_{m=1}^\numintervals \bbE \Biggl[ \sum_{h=1}^{\Hm} A_h^m \indgeventi{m} \mid \trajconcat{m-1} \Biggr],
    \end{align*}
where the last inequality follows from Jensen's inequality.
We finish the proof using \cref{lem:bern-sum-visits} below.
\end{proof}

\begin{lemma}
\label{lem:bern-sum-visits}
With probability at least $1 - \delta / 4$, the following holds for $M=1,2,\ldots$ simultaneously.
\[
    \sum_{m=1}^\numintervals \bbE \Biggl[ \sum_{h=1}^{\Hm} A_h^m \indgeventi{m} \mid \trajconcat{m-1} \Biggr]
    \le
    O \biggl( |S| |A| \log^2 \frac{\totaltime |S| |A|}{\delta} \biggr).
\]
\end{lemma}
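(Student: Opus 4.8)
The plan is to reduce the statement to a \emph{deterministic} bound on the empirical quantity $S_M := \sum_{m=1}^{\numintervals}\sum_{h=1}^{\Hm} A_h^m\indgeventi{m}$, and then to transfer that bound to the sum of conditional expectations by a self‑bounding martingale argument. For the deterministic part, since $A_h^m\ge 0$ I would drop $\indgeventi{m}$ and reorganize the double sum by epochs: the counts $N_+^m(s,a)$ are frozen throughout an epoch $i$, so the contribution of epoch $i$ is $\sum_{(s,a)}\frac{n_i(s,a)\log(|S||A|N_+^i(s,a)/\delta)}{N_+^i(s,a)}$, where $n_i(s,a)$ is the number of visits to $(s,a)$ during epoch $i$. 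Bounding each logarithm by $\log\frac{|S||A|\totaltime}{\delta}$ and using the doubling rule — $n_i(s,a)\le N_+^i(s,a)$ for every pair in every epoch, which is exactly what triggers the end of an epoch — the residual factor $\sum_i\sum_{(s,a)}\frac{n_i(s,a)}{N_+^i(s,a)}$ is controlled, for each fixed $(s,a)$, by the textbook ``harmonic sum over a geometrically growing sequence'' estimate (using $1-N_+^i(s,a)/N_+^{i+1}(s,a)\le\ln(N_+^{i+1}(s,a)/N_+^i(s,a))$ together with $N_+^{i+1}(s,a)\le 2N_+^i(s,a)$), giving $O(\log\totaltime)$. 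Summing over the $|S||A|$ pairs yields $S_M=O\!\left(|S||A|\log^2\frac{|S||A|\totaltime}{\delta}\right)$, simultaneously for every $M$.

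Next I would show that each term $X^m:=\sum_{h=1}^{\Hm}A_h^m\indgeventi{m}$ is bounded by some $b=O\!\left(\log\frac{|S||A|\totaltime}{\delta}\right)$, i.e.\ a quantity that is only \emph{logarithmic} in $\totaltime$. This uses the interval structure: an interval has length $\Hm\le 2\costbound/\cmin$ (it ends once its accumulated cost reaches $\costbound$, and $\optimisticctg{m}\le\costbound$ by \cref{lem:bern-opt-val-bound}, so $\tilde\pi^m$ never plays an action of cost exceeding $\costbound$); only its first state–action pair may be unknown, contributing $A_1^m\le\log\frac{|S||A|\totaltime}{\delta}$; and every later pair is known, hence $N_+^m(s_h^m,a_h^m)\ge\alpha\cdot\numvisitsuntilknownbern$, so — since $\log(x)/x$ is decreasing — $A_h^m\le\cmin/(c_0\costbound|S|)$ for $h\ge 2$ with $c_0$ as large as we like (by choosing $\alpha$ large enough). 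Therefore $\sum_{h=2}^{\Hm}A_h^m\le(2\costbound/\cmin)\cdot\cmin/(c_0\costbound|S|)\le 1$, so $X^m\le b$ always, and in particular $\bbE[X^m\mid\trajconcat{m-1}]\le b$ as well.

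Finally I would transfer the deterministic bound on $S_M=\sum_{m=1}^{\numintervals}X^m$ to $\bar S_M:=\sum_{m=1}^{\numintervals}\bbE[X^m\mid\trajconcat{m-1}]$. The sequence $\bbE[X^m\mid\trajconcat{m-1}]-X^m$ is a martingale difference sequence with increments in $[-b,b]$ and conditional second moment at most $b\cdot\bbE[X^m\mid\trajconcat{m-1}]$; by a self‑bounding (Freedman/Bernstein‑type) concentration inequality, with probability at least $1-\delta/4$ and simultaneously for all $M$, one gets $\bar S_M-S_M\le\sqrt{2b\,\bar S_M\log(8M/\delta)}+O(b\log(M/\delta))$. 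Reading this as a quadratic inequality in $\sqrt{\bar S_M}$ and solving gives $\bar S_M\le 2S_M+O(b\log(M/\delta))$. Substituting $S_M=O(|S||A|\log^2\frac{|S||A|\totaltime}{\delta})$ and $b=O(\log\frac{|S||A|\totaltime}{\delta})$ (with $M$ polynomial in $\totaltime$ and $|S||A|\ge 4$) yields $\bar S_M=O(|S||A|\log^2\frac{|S||A|\totaltime}{\delta})$, which is the claim.

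The crux is the last step. A plain application of Azuma's inequality would only give $\bar S_M-S_M=O(b\sqrt{M\log(M/\delta)})$, a term that grows with the number of intervals — hence with $\totaltime$ — and would destroy the bound; the variance‑adaptive form is essential so that the deviation is governed by $\sqrt{b\,\bar S_M}$ and can be absorbed into $\bar S_M$. Making that effective, in turn, relies on the per‑interval estimate $X^m\le b$ with $b$ only logarithmic in $\totaltime$, which is exactly what the interval construction (at most one unknown pair per interval, and bounded length) provides.
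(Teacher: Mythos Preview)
Your proposal is correct and follows essentially the same three-step structure as the paper: (i) bound the empirical sum $S_M$ deterministically by reorganizing over epochs and using the doubling rule to control $\sum_i n_i(s,a)/N_+^i(s,a)=O(\log T)$; (ii) bound each per-interval term $X^m$ by a constant of order $\log(|S||A|/\delta)$ via the ``at most one unknown pair per interval'' argument; (iii) transfer the bound from $S_M$ to $\bar S_M$ using a self-bounding concentration inequality for nonnegative bounded summands. The only cosmetic differences are that the paper invokes its own multiplicative martingale lemma (\cref{lem:martingalte-multiplicative-bound}) directly yielding $\bar S_M\le 2S_M+O(b\log(M/\delta))$, whereas you go through Freedman and then solve the resulting quadratic; and the paper obtains the slightly sharper per-interval bound $X^m\le 3\log(|S||A|/\delta)$ (no $T$ inside), though your looser $b=O(\log(|S||A|T/\delta))$ does not affect the final $O(|S||A|\log^2(\cdot))$ conclusion.
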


\begin{proof}
Define the infinite sequence of random variables: $X^m = \sum_{h=1}^{\Hm} A_h^m \indgeventi{m}$ for which $|X^m| \le 3 \log (|S||A| / \delta)$ due to \cref{lem:sum-inverse-visit-count-in-interval-bound} below.
We apply \cref{eq:anytime-bern-3} of \cref{lem:martingalte-multiplicative-bound} to obtain with probability at least $1 - \delta / 4$, for all $M =1,2,\ldots$ simultaneously
\begin{align*}
    \sum_{m=1}^\numintervals \bbE \bigl[ X^m \mid \trajconcat{m-1} \bigr]
    \le
    2 \sum_{m=1}^\numintervals X^m
    +
    12 \log \biggl(\frac{|S||A|}{\delta}\biggr) \log\biggl( \frac{8 M}{\delta} \biggr).
\end{align*}
Now, we bound the sum over $X^m$ by rewriting it as a sum over epochs:
\[
    \sum_{m=1}^\numintervals X^m
    \le
    \sum_{m=1}^\numintervals \sum_{h=1}^{\Hm} \frac{\log(|S| |A| N_+^{i}(s_{h}^m,a_{h}^m) / \delta)}{N_+^i(s_{h}^m,a_{h}^m)}
    \le
    \log \frac{|S| |A| \totaltime}{\delta} \sum_{s \in S} \sum_{a \in A} \sum_{i=1}^\numepochs \frac{n_i(s,a)}{N_+^i(s,a)},
\]
where $E$ is the last epoch. Finally, from \cref{lem:sum-inverse-visit-count-bound} below we have that for every $(s,a) \in S \times A$,
\[
    \sum_{i=1}^{\numepochs} \frac{n_i(s,a)}{N_+^i(s,a)}
    \le
    2 \log N_{\numepochs+1} (s,a)
    \le
    2 \log T.
\]
We now plugin the resulting bound for $\sum_{m=1}^M X^m$ and simplify the acquired expression by using $M \le T$.
\end{proof}

\begin{lemma} \label{lem:sum-inverse-visit-count-in-interval-bound}
    For any interval $m$,
    $
        | \sum_{h=1}^{\Hm} A_h^m | \le 3 \log (|S||A|/\delta).
    $
\end{lemma}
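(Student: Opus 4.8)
The plan is to split $\sum_{h=1}^{\Hm} A_h^m$ into the contribution of the first time step of the interval and the contribution of all later time steps; note that every $A_h^m\ge 0$, so the absolute value is vacuous. The single elementary fact driving everything is that the map $x\mapsto \frac{\log(|S||A|x/\delta)}{x}$ is non-increasing on $[1,\infty)$: its derivative has the sign of $1-\log(|S||A|x/\delta)$, and by the standing assumptions $|S|,|A|\ge 2$ and $\delta\le 1$ we have $|S||A|/\delta\ge 4>e$, so the numerator is negative for every $x\ge 1$. Since $N_+^m(s,a)\ge 1$ for every pair, this at once yields the crude bound $A_h^m\le \log(|S||A|/\delta)$ for every $h$; in particular $A_1^m\le \log(|S||A|/\delta)$, which settles the first time step.

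For $h\ge 2$ I would use that the interval terminates as soon as an unknown state-action pair is reached, so every pair $(s_h^m,a_h^m)$ executed at a step $h\ge 2$ is \emph{known}, and hence $N_+^m(s_h^m,a_h^m)\ge \beta := \alpha\cdot \numvisitsuntilknownbern$, where $\alpha$ is the absolute constant from the definition of known pairs. By the same monotonicity, $A_h^m\le \frac{\log(|S||A|\beta/\delta)}{\beta}$ for each such $h$. It remains to bound how many such terms there are: since $\optimisticctg{m}(s)\le \costbound$ for all $s$ (\cref{lem:bern-opt-val-bound}), the optimistic policy never plays an action of instantaneous cost exceeding $\costbound$, and an interval terminates once its accumulated cost reaches $\costbound$; hence the total cost in an interval is at most $2\costbound$, and by \cref{ass:c-min} its length satisfies $\Hm\le 2\costbound/\cmin$ (exactly as used in the proof of \cref{lem:bounded-V-h-i}).

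Combining, $\sum_{h=2}^{\Hm} A_h^m\le \frac{2\costbound}{\cmin}\cdot\frac{\log(|S||A|\beta/\delta)}{\beta}$. Substituting $\beta=\alpha\cdot\numvisitsuntilknownbern$ the factor $\costbound/\cmin$ cancels, leaving a quantity of the form $\frac{2}{\alpha|S|}\cdot\frac{\log(|S||A|\beta/\delta)}{\log(\costbound|S||A|/(\delta\cmin))}$; since $\beta$ is polynomially bounded in $\costbound|S||A|/(\delta\cmin)$, this ratio of logarithms is an absolute constant, so for $\alpha$ as large as in the paper the whole expression is at most $1\le \log(|S||A|/\delta)$. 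Therefore $\sum_{h=1}^{\Hm} A_h^m = A_1^m+\sum_{h=2}^{\Hm} A_h^m \le 2\log(|S||A|/\delta)\le 3\log(|S||A|/\delta)$. The only genuine obstacle is this final bookkeeping step: checking that the residual logarithmic (and $\log(\costbound/\cmin)$) factors surviving the cancellation are dominated by the absolute constant $\alpha$, i.e.\ absorbed into the stated bound; the rest is the two-line monotonicity observation together with the interval-length bound.
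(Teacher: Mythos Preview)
Your proposal is correct and follows essentially the same route as the paper: split off $h=1$, use monotonicity of $x\mapsto\log(|S||A|x/\delta)/x$ for the crude bound $A_1^m\le\log(|S||A|/\delta)$, and for $h\ge 2$ use that the state-action pairs are known together with $\Hm\le 2\costbound/\cmin$. The paper fixes $\alpha=30000$, which makes your ``final bookkeeping step'' immediate: with this constant each known-pair term satisfies $A_h^m\le \cmin/\costbound$, so $\sum_{h\ge 2}A_h^m\le \cmin\Hm/\costbound\le 2$, and then $\log(|S||A|/\delta)+2\le 3\log(|S||A|/\delta)$ since $|S||A|/\delta\ge 4>e$.
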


\begin{proof}
    Note that all state-action pairs $(s_h^m,a_h^m)$ (except the first one $(s_1^m,a_1^m)$) are known.
    Hence, for $h \ge 2$, $N_+^i(s_{h}^m,a_{h}^m) \ge 30000 \cdot \numvisitsuntilknownbern$. Therefore, since $\log(x)/x$ is decreasing and since $|S| \ge 2$ and $|A| \ge 2$ by assumption,
    \begin{align*}
        \sum_{h=1}^{\Hm} \frac{\log(|S| |A| N_+^{i}(s_{h}^m,a_{h}^m) / \delta)}{N_+^i(s_{h}^m,a_{h}^m)}
        & \le
        \frac{\log(|S| |A| N_+^{i}(s_1^m,a_1^m) / \delta)}{N_+^i(s_1^m,a_1^m)} 
        + 
        \sum_{h=2}^{\Hm} \frac{\log(|S| |A| N_+^{i}(s_{h}^m,a_{h}^m) / \delta)}{N_+^i(s_{h}^m,a_{h}^m)}
        \\
        & \le 
        \log(|S| |A| / \delta) + \frac{\cmin \Hm}{\costbound} \\
        & \le 
        \log(|S| |A| / \delta) + 2 
        \tag{$\Hm \le \tfrac{2 \costbound}{\cmin}$ by definition.} \\
        & \le 
        3 \log(|S| |A| / \delta). \qedhere
    \end{align*}
\end{proof}

\begin{lemma} \label{lem:sum-inverse-visit-count-bound}
For any sequence of integers $z_1,\dots,z_n$ with $0 \leq z_k \leq Z_{k-1} := \max \{ 1 , \sum_{i=1}^{k-1} z_i \}$ and $Z_0 = 1$, it holds that
\[
\sum_{k=1}^n \frac{z_k}{Z_{k-1}} \leq 2 \log Z_n.
\]
\end{lemma}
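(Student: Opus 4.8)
The plan is to establish the pointwise inequality
\[
    \frac{z_k}{Z_{k-1}} \le 2\bigl(\log Z_k - \log Z_{k-1}\bigr)
\]
for every $k$ and then to sum it telescopically. Since every $z_k$ is nonnegative the sequence $(Z_k)$ is nondecreasing and, by definition, bounded below by $1$; moreover, whenever $\sum_{i=1}^{k-1} z_i \ge 1$ the maximum in the definition of $Z_{k-1}$ is inactive, so $Z_{k-1} = \sum_{i=1}^{k-1} z_i$ and hence $Z_k = Z_{k-1} + z_k$. The only indices for which this identity may fail lie in the initial stretch where all partial sums vanish; since the $z_k$ are nonnegative integers, in that stretch every $z_k$ equals $0$ except possibly for a single index, whose term is at most $z_k / Z_{k-1} = 1$. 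Thus it suffices to prove the pointwise bound on the indices where $Z_k = Z_{k-1} + z_k$, at the price of an additive constant that is harmless in the application (\cref{lem:bern-sum-visits} only uses this lemma up to the overall $O(\cdot)$).

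First I would verify the elementary inequality $r - 1 \le 2\log r$ for $r \in [1,2]$: setting $g(r) = 2\log r - (r-1)$ we have $g(1) = 0$ and $g'(r) = 2/r - 1 \ge 0$ on $[1,2]$, so $g \ge 0$ throughout. Now fix an index $k$ with $Z_k = Z_{k-1} + z_k$. The constraint $0 \le z_k \le Z_{k-1}$ gives $Z_{k-1} \le Z_k \le 2 Z_{k-1}$, so $r := Z_k / Z_{k-1} \in [1,2]$, and since $z_k / Z_{k-1} = r - 1$ the inequality above yields $z_k / Z_{k-1} \le 2\log(Z_k / Z_{k-1}) = 2(\log Z_k - \log Z_{k-1})$, as desired. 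Summing over $k = 1, \dots, n$, the right-hand side telescopes to $2(\log Z_n - \log Z_0) = 2\log Z_n$ because $Z_0 = 1$, which is the claimed bound.

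The only genuinely delicate point is the bookkeeping forced by the $\max\{1,\cdot\}$ truncation in the definition of the $Z_k$: the identity $Z_k = Z_{k-1} + z_k$, on which the telescoping relies, holds only after the running sum of the $z_i$ has reached $1$, so one must check (using that the $z_k$ are integers) that the transient initial segment contributes either nothing or a single bounded term. The remaining ingredients---the one-variable inequality $r - 1 \le 2\log r$ and the telescoping sum---are routine.
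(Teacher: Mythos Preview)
Your approach is essentially the paper's: the paper applies the inequality $x \le 2\log(1+x)$ for $x \in [0,1]$ to each ratio $z_k/Z_{k-1}$ and telescopes $\sum_k \log(Z_k/Z_{k-1}) = \log Z_n$, which is exactly your argument after the substitution $r = 1+x$. Your extra bookkeeping about the initial transient is in fact more careful than the paper, which silently uses $Z_k = Z_{k-1} + z_k$ for every $k$; as you observe, this identity can fail at the single index where the first nonzero $z_k$ equals $1$ (yielding $Z_{k-1}=Z_k=1$ but $Z_{k-1}+z_k=2$), so the stated inequality is off by at most an additive $1$, which is indeed absorbed in the $O(\cdot)$ of \cref{lem:bern-sum-visits}.
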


\begin{proof}
    We use the inequality $x \leq 2 \log (1+x)$ for every $0 \leq x \leq 1$ to obtain
    \[
    \sum_{k=1}^n \frac{z_k}{Z_{k-1}} 
    \le 
    2 \sum_{k=1}^n \log \biggl(1 + \frac{z_k}{Z_{k-1}} \biggr)
    = 
    2 \sum_{k=1}^n \log \frac{Z_{k-1} + z_k}{Z_{k-1}}
    = 
    2 \sum_{k=1}^n \log \frac{Z_k}{Z_{k-1}}
    =
    2 \log \prod_{k=1}^n \frac{Z_k}{Z_{k-1}}
    =
    2 \log Z_n. \qedhere
    \]
\end{proof}

\subsubsection{Proof of \cref{thm:bern-reg-bound}} \label{sec:proof-bern-reg-bound}

\begin{theorem*}[restatement of \cref{thm:bern-reg-bound}]
    Assume  that \cref{ass:c-min} holds. 
    With probability at least $1 - \delta$ the regret of \cref{alg:bern-o-ssp} is bounded as follows:
    \begin{align*}
        \regret
        =
        O \biggl( \costbound |S| \sqrt{ |A| K} \log \frac{K \costbound |S| |A|}{\delta \cmin } + \sqrt{\frac{\costbound^{3} |S|^4 |A|^2}{\cmin} } \log^2 \frac{K \costbound |S| |A|}{\delta \cmin } \biggr).
    \end{align*}
\end{theorem*}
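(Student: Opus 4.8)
The plan is to establish a bound on the surrogate regret $\tregret{M} = \sum_{m=1}^{\numintervals}\sum_{h=1}^{\Hm} c(s_h^m,a_h^m)\indgeventi{m} - K\ctgopt(\sinit)$ that holds for every number of intervals $\numintervals$, and only at the very end translate it into a bound on $\regret$. First I would take a union bound over the events of \cref{lem:bern-num-time-steps,lem:optimistic-val-func-diff-to-expected,lem:sum-bern-bounds-cont}, so that all three hold simultaneously with probability at least $1-\delta$. On this event \cref{lem:bern-num-time-steps} gives $\indgeventi{m}=1$ for all $m$, hence $\tregret{M}=R_M$ for every $\numintervals$, and in particular it suffices to bound $\tregret{M}$ for the (a priori possibly unbounded) number of intervals $M^\star$ during which the first $K$ episodes elapse.

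Next I would combine \cref{lem:bern-reg-decomp} with \cref{lem:bern-first-reg-term-bound} (for $\sum_m \tregret{m}^1$) and with \cref{lem:optimistic-val-func-diff-to-expected} followed by \cref{lem:sum-bern-bounds-cont} (for $\sum_m \tregret{m}^2$). The $K\ctgopt(\sinit)$ terms cancel and one is left with
\[
    \tregret{M}
    \le
    2\costbound|S||A|\log\totaltime
    +
    3\costbound\sqrt{\numintervals\log\tfrac{8\numintervals}{\delta}}
    +
    614\costbound\sqrt{\numintervals|S|^2|A|\log^2\tfrac{\totaltime|S||A|}{\delta}}
    +
    8160\costbound|S|^2|A|\log^2\tfrac{\totaltime|S||A|}{\delta},
\]
whose dominant contribution is $\costbound\sqrt{\numintervals|S|^2|A|}$ up to logarithms, so the remaining task is to control $\numintervals$ and $\totaltime$ in terms of quantities already under control.

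Then I would use \cref{obs:cost-bounds-intervals}: with $C_M$ the learner's cost over the first $\numintervals$ intervals, $\totaltime\le C_M/\cmin$ and $\numintervals\le C_M/\costbound + 2|S||A|\log\totaltime + K + O(\costbound|S|^2|A|\cmin^{-1}\logfactor)$. On the good event $C_M = \tregret{M} + K\ctgopt(\sinit)$, and since $\ctgopt(\sinit)\le\costbound$ we get $C_M/\costbound \le \tregret{M}/\costbound + K$, hence $\numintervals = O(\tregret{M}/\costbound + K + \costbound|S|^2|A|\cmin^{-1}\logfactor)$; moreover $\totaltime$, hence every $\log\totaltime$ above, is polynomially bounded in the problem parameters, so $\log\totaltime = O(\logfactor)$. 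Substituting and using $\sqrt{a+b+c}\le\sqrt a+\sqrt b+\sqrt c$, the term $614\costbound\sqrt{\numintervals|S|^2|A|}\,\logfactor$ splits into $O(\sqrt{\costbound|S|^2|A|}\,\logfactor)\cdot\sqrt{\tregret{M}}$, a term $O(\costbound|S|\sqrt{|A|K}\,\logfactor)$, and a term $O(\sqrt{\costbound^3|S|^4|A|^2/\cmin}\,\logfactor^2)$, with the other summands above of lower order (using $\costbound\ge\cmin$ and $K\ge|S|^2|A|$). This produces a quadratic inequality of the form $\tregret{M}\le b\sqrt{\tregret{M}}+a$ with $b=O(\sqrt{\costbound|S|^2|A|}\,\logfactor)$ and $a=O(\costbound|S|\sqrt{|A|K}\,\logfactor + \sqrt{\costbound^3|S|^4|A|^2/\cmin}\,\logfactor^2)$; solving it gives $\tregret{M}\le 2b^2+2a$, and since $b^2=\costbound|S|^2|A|\logfactor^2\le\sqrt{\costbound^3|S|^4|A|^2/\cmin}\,\logfactor^2$ (again because $\costbound\ge\cmin$) this is exactly the claimed bound.

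It remains to tie up two points. Because the bound on $C_M$ just obtained is uniform in $\numintervals$ and, by \cref{ass:c-min}, each instantaneous cost is at least $\cmin>0$, the total number of steps $\totaltime\le C_M/\cmin$ is uniformly bounded; hence all $K$ episodes terminate in finite time, $M^\star$ is well defined and finite, and $\regret = R_{M^\star} = \tregret{M^\star}$, which completes the proof. I expect the main obstacle to be the self-referential nature of this estimate --- $\numintervals$ is controlled by $C_M$, $C_M$ by $\tregret{M}$, and $\tregret{M}$ is the quantity being bounded --- so care is needed in routing the $K\ctgopt(\sinit)$ term through the inequality $C_M/\costbound\le \tregret{M}/\costbound + K$ rather than leaving it as an additive term (which would spuriously make the bound linear in $K$), in setting up the quadratic inequality correctly, and in verifying that the logarithmic factors collapse to $\logfactor$ without degrading the rate; the rest is routine bookkeeping of which term dominates.
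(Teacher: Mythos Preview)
Your plan follows the paper's proof closely: union bound over \cref{lem:bern-num-time-steps,lem:optimistic-val-func-diff-to-expected,lem:sum-bern-bounds-cont}, combine via \cref{lem:bern-reg-decomp,lem:bern-first-reg-term-bound}, substitute the bound on $\numintervals$ from \cref{obs:cost-bounds-intervals}, and solve a quadratic. The one real difference is that you set up the quadratic in $\tregret{M}$ while the paper sets it up in $C_M$; since $C_M = \tregret{M} + K\ctgopt(\sinit)$ on the good event, this is cosmetic.

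There is, however, a circularity in your treatment of $\log\totaltime$. You assert that ``$\totaltime$, hence every $\log\totaltime$ above, is polynomially bounded in the problem parameters, so $\log\totaltime = O(\logfactor)$'' \emph{before} solving the quadratic, and then later appeal to the resulting bound on $C_M$ to conclude that $\totaltime\le C_M/\cmin$ is finite. But at the point where you make the substitution you have no bound on $\totaltime$ yet: $\totaltime$ is controlled only through $C_M$, which depends on $\tregret{M}$, which is precisely what you are bounding. The paper avoids this by keeping $\log\totaltime$ as a free parameter throughout the quadratic step, obtaining a bound on $C_M$ that still contains $\log\totaltime$ (their \cref{eq:bern-regret-with-t}). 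It then crudely simplifies this to $C_M = O\bigl(\sqrt{\costbound^3 |S|^4 |A|^2 K\,\totaltime / (\cmin\delta)}\bigr)$, combines with $\totaltime\le C_M/\cmin$ to isolate $\totaltime = O(\costbound^3 |S|^4 |A|^2 K / (\cmin^3 \delta))$, and only then substitutes back to replace $\log\totaltime$ by $O(\logfactor)$. You flagged the self-referential nature of the argument as the main obstacle; this extra pass through $\totaltime\le C_M/\cmin$ is exactly the missing step that closes the loop.
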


\begin{proof}
    Let $C_M$ denote the cost of the learner after $M$ intervals.
    First, with probability at least $1 - \delta$, we have  \cref{lem:bern-num-time-steps,lem:sum-bern-bounds-cont,lem:optimistic-val-func-diff-to-expected} via a union bound.
    Now, as $\geventi{m}$ hold for all intervals, we have $\tregret{M} = R_M$ for any number of intervals $M$. Plugging in the bounds of \cref{lem:bern-first-reg-term-bound,lem:optimistic-val-func-diff-to-expected,lem:sum-bern-bounds-cont} into \cref{lem:bern-reg-decomp}, we have that for any number of intervals $M$: 
    \[
        C_M 
        =
        O \biggl(K \cdot \ctgopt(\sinit) 
        +
        \costbound \sqrt{\numintervals |S|^2 |A| \log^2 \frac{\totaltime |S| |A|}{\delta}}
        + 
        \costbound |S|^2 |A| \log^2 \frac{\totaltime |S| |A|}{\delta} \biggr).
    \]
    
We now plug in the bounds on $M$ and $T$ from \cref{obs:cost-bounds-intervals} into the bound above. First, we plug in the bound on $M$. As long as the $K$ episodes have not elapsed we have that
    $   
        M \le O \bigl(C_M / \costbound + K + 2 |S| |A| \log T + \frac{\costbound|S|^2 |A|}{\cmin} \log \frac{\costbound |S| |A|}{\delta \cmin} \bigr)
    $. 
    This gets after using the subadditivity of the square root to simplify the resulting expression,
    \begin{align*}
        C_M
        & =
        O \biggl(
        K \cdot \ctgopt(\sinit) 
        +
        \costbound \sqrt{K |S|^2 |A| \log^2 \frac{\totaltime |S| |A|}{\delta}}
        \\
        & \qquad \qquad +
        \sqrt{\costbound C_M |S|^2 |A| \log^2 \frac{\totaltime |S| |A|}{\delta}}
        + 
        \sqrt{\frac{\costbound^3 |S|^4 |A|^2}{\cmin} \log^4 \frac{T \costbound |S| |A|}{\cmin \delta}} 
        \biggr).
    \end{align*}
    
    From which, by solving for $C_M$ (using that $x \le a \sqrt{x} + b$ implies $x \le (a + \sqrt{b})^2$ for $a \ge 0$ and $b \ge 0$), and simplifying the resulting expression by applying $\ctgopt(\sinit) \le \costbound$ and our assumptions that $K \ge |S|^2 |A|$, $|S| \ge 2$, $|A| \ge 2$, we get that
    \begin{align} 
        C_M
        &=
        O \Biggl( \biggl( \sqrt{\costbound |S|^2 |A| \log^2 \frac{T |S| |A|}{\delta}} 
        \nonumber \\
        &\qquad +  
        \sqrt{K \cdot \ctgopt(\sinit)  
        +
        \costbound \sqrt{K |S|^2 |A| \log^2 \frac{T |S| |A|}{\delta}}
        +
        \sqrt{\frac{\costbound^3 |S|^4 |A|^2}{\cmin} \log^4 \frac{T \costbound |S| |A|}{\cmin \delta}}}
        \biggr)^2 \Biggr)
        \nonumber \\
        &=
        O \Biggl( \costbound |S|^2 |A| \log^2 \frac{T |S| |A|}{\delta} 
        \nonumber \\
        &\qquad +
        \sqrt{\costbound |S|^2 |A| \log^2 \frac{T |S| |A|}{\delta}} \cdot \sqrt{K \cdot \ctgopt(\sinit)  
        +
        \costbound \sqrt{K |S|^2 |A| \log^2 \frac{T |S| |A|}{\delta}}
        +
        \sqrt{\frac{\costbound^3 |S|^4 |A|^2}{\cmin} \log^4 \frac{T \costbound |S| |A|}{\cmin \delta}}} 
        \nonumber \\
        &\qquad +
        K \cdot \ctgopt(\sinit)  
        +
        \costbound \sqrt{K |S|^2 |A| \log^2 \frac{T |S| |A|}{\delta}}
        +
        \sqrt{\frac{\costbound^3 |S|^4 |A|^2}{\cmin} \log^4 \frac{T \costbound |S| |A|}{\cmin \delta}}
        \Biggr) 
        \nonumber \\
        &=
        O \Biggl( \costbound |S|^2 |A| \log^2 \frac{T |S| |A|}{\delta} 
        +
        \costbound \sqrt{K^{1/4} |S|^3 |A|^{3/2} \log^3 \frac{T |S| |A|}{\delta}}
        +
        \sqrt{\frac{\costbound^{5/2} |S|^4 |A|^2}{\cmin^{1/2}} \log^4 \frac{T \costbound |S| |A|}{\cmin \delta}}
        \nonumber \\
        &\qquad +
        K \cdot \ctgopt(\sinit)  
        +
        \costbound \sqrt{K |S|^2 |A| \log^2 \frac{T |S| |A|}{\delta}}
        +
        \sqrt{\frac{\costbound^3 |S|^4 |A|^2}{\cmin} \log^4 \frac{T \costbound |S| |A|}{\cmin \delta}}
        \Biggr) 
        \nonumber \\
        \nonumber \\
        &=
        O
        \Biggl(
        K \cdot \ctgopt(\sinit) 
        +
        \costbound \sqrt{K |S|^2 |A| \log^2 \frac{\totaltime |S| |A|}{\delta}} 
        + 
        \sqrt{\frac{\costbound^3 |S|^4 |A|^2}{\cmin} \log^4 \frac{T \costbound |S| |A|}{\cmin \delta}}
        \Biggr).
        \label{eq:bern-regret-with-t}
    \end{align}
    
    Note that in particular, by simplifying the bound above, we have
    $
        C_M
        =
        O \Bigl(\sqrt{\costbound^3 |S|^4 |A|^2 K \totaltime / \cmin \delta} \Bigr).
    $
    Next we combine this with the fact, stated in
    \cref{obs:cost-bounds-intervals} that $T\le C_M/ \cmin$. Isolating $T$ gets
    $
        \totaltime
        =
        O \Bigl(\tfrac{\costbound^3 |S|^4 |A|^2 K}{\cmin^3 \delta} \Bigr),
    $
    and plugging this bound back into \cref{eq:bern-regret-with-t} and simplifying gets us
    \begin{align*}
        C_M
        =
        O
        \biggl(
        K \cdot \ctgopt(\sinit) 
        +
        \costbound |S| \sqrt{|A| K \log^2 \frac{K \costbound |S| |A|}{\cmin \delta}}
        + 
        \sqrt{\frac{\costbound^3 |S|^4 |A|^2}{\cmin} \log^4 \frac{K \costbound |S| |A|}{\cmin \delta}}
        \biggr).
    \end{align*}
    
    Finally, we note that the bound above holds for any number of intervals $M$ as long as $K$ episodes do not elapse. As the instantaneous costs in the model are positive, this means that the learner must eventually finish the $K$ episodes from which we derive the bound for $\regret$ claimed by the theroem.
\end{proof}

\section{Lower Bound} \label{sec:lowerbound}

In this section we prove \cref{thm:lowerbound}.
At first glance, it is tempting to try and use the lower bound of \citet[Theorem 5]{AuerUCRL} on the regret suffered against learning average-reward MDPs by reducing any problem instance from an average-reward MDP to an instance of SSP.
However, it is unclear to us if such a reduction is possible, and if it is, how to perform it.\footnote{Even though a reduction in the reverse direction is fairly straight-forward in the unit-cost case \citep{tarbouriech2019noregret}.} We consequently prove the theorem here directly.

By Yao's minimax principle, in order to derive a lower bound on the learner's regret, it suffices to show a distribution over MDP instances that forces any deterministic learner to suffer a regret of $\Omega(\costbound \sqrt{|S| |A| K})$ in expectation.

To simplify our arguments, let us first consider the following simpler problem before considering the problem in its full generality.
Think of a simple MDP with two states: the initial state and a goal state.
The set of actions $A$ has a special action $a^\star$ chosen uniformly at random a-priori. 
Upon choosing the special action, the learner transitions to the goal state with probability $\approx 1 / \costbound$ and remains at $\sinit$ with the remaining probability.
Concretely $P(\ssink \mid a^\star) = 1 / \costbound$ and $P(\sinit \mid a^\star) = 1 - 1 / \costbound$, and for any other action $a \neq a^\star$ we have $P(\ssink \mid a) = (1 - \epsilon) / \costbound$ and $P(\sinit \mid a) = 1 - (1 - \epsilon) / \costbound$ for some $\epsilon \in (0,1/8)$.\footnote{For ease of notation and since there is only one state other than $\ssink$, we do not write this state as the origin state in the definition of the transition function.} The costs of all actions equal 1; i.e., $c(\sinit, a) = 1$ for all $a \in A$. Clearly, the optimal policy constantly plays $a^\star$ and therefore $\ctgopt(\sinit) = \costbound$.

Fix any deterministic learning algorithm, we shall now quantify the regret of the learner during a single episode in terms of the number of times that it chooses $a^\star$. 
Let $N_k$ denote the number of steps that the learner spends in $\sinit$ during episode $k$, and let $N^\star_k$ be the number of times the learner plays $a^\star$ at $\sinit$ during the episode. Note that $N_k$ is also the total cost that the learning algorithm suffered during episode $k$.
We have the following lemma.

\begin{lemma}
    \label{lem:lbregretrepresentation}
    $
        \bbE \bigl[N_k \bigr] - \ctgopt(\sinit)
        = 
        \epsilon \cdot \bbE \bigl[ N_k -  N^\star_k\bigr].
    $
\end{lemma}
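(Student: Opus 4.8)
The plan is to treat a single episode $k$ as a sequence of Bernoulli trials, one per step spent in $\sinit$, and to use that the learner is deterministic so that the action played on step $j$ — hence the probability of reaching $\ssink$ on that step — is a function of the history before step $j$. Let $\calF_{j-1}$ be the $\sigma$-algebra generated by the first $j-1$ steps of episode $k$, let $\tau = N_k$ be the number of steps the episode lasts, and let $a_j$ denote the action played on step $j$ (defined on $\{\tau \ge j\}$, arbitrary otherwise). Since every transition other than to $\ssink$ returns to $\sinit$, on $\{\tau \ge j\}$ the conditional probability of terminating exactly on step $j$ is $p_j = 1/\costbound$ when $a_j = a^\star$ and $p_j = (1-\epsilon)/\costbound$ otherwise; that is, $p_j = \tfrac{1}{\costbound}\bigl(1 - \epsilon\,\ind\{a_j \neq a^\star\}\bigr)$, and $p_j$ is $\calF_{j-1}$-measurable because the learner is deterministic.

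First I would prove the identity $\sum_{j=1}^{\infty} \bbE\bigl[\ind\{\tau \ge j\}\, p_j\bigr] = 1$. Writing $Z_j = \ind\{\tau \ge j\}$, which is $\calF_{j-1}$-measurable, and letting $X_j$ be the indicator of terminating on step $j$ (with $X_j = 0$ on $\{\tau < j\}$), we have $\bbE[X_j \mid \calF_{j-1}] = p_j Z_j$ and $Z_{j+1} = Z_j(1-X_j)$, so $\bbE[Z_{j+1}\mid\calF_{j-1}] = Z_j(1-p_j)$. Taking expectations gives $\Pr[\tau \ge j+1] = \Pr[\tau \ge j] - \bbE[Z_j p_j]$, i.e. $\bbE[Z_j p_j] = \Pr[\tau \ge j] - \Pr[\tau \ge j+1]$. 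Summing this telescoping series over $j \ge 1$ and using $\Pr[\tau \ge 1] = 1$ together with $\Pr[\tau \ge J] \to 0$ as $J \to \infty$ — which holds since $p_j \ge (1-\epsilon)/\costbound > 0$ deterministically, so the episode ends almost surely — yields the identity.

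Then I would substitute the formula for $p_j$ and rearrange. Since all terms are nonnegative, Tonelli's theorem lets us interchange sum and expectation, giving
\[
    1 = \sum_{j=1}^{\infty} \bbE\Bigl[\ind\{\tau\ge j\}\,\tfrac{1}{\costbound}\bigl(1-\epsilon\,\ind\{a_j\neq a^\star\}\bigr)\Bigr]
    = \tfrac{1}{\costbound}\,\bbE\Bigl[\textstyle\sum_{j=1}^{\tau} 1\Bigr] - \tfrac{\epsilon}{\costbound}\,\bbE\Bigl[\textstyle\sum_{j=1}^{\tau}\ind\{a_j\neq a^\star\}\Bigr].
\]
The first sum equals $\tau = N_k$, and the second counts exactly the steps of the episode on which the learner did not play $a^\star$, which is $N_k - N^\star_k$. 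Hence $\costbound = \bbE[N_k] - \epsilon\,\bbE[N_k - N^\star_k]$, and since $\ctgopt(\sinit) = \costbound$ this is the claimed equality after rearranging.

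The main obstacle is rigor rather than a deep idea: one must be careful that $p_j$ is genuinely $\calF_{j-1}$-measurable (this is exactly where determinism of the learner enters), that the episode terminates almost surely so that the tail term vanishes and the sum–expectation interchange is justified, and that the random sums $\sum_{j=1}^{\tau}(\cdot)$ are correctly identified with $N_k$ and $N_k - N^\star_k$. This argument is essentially a Wald/optional-stopping computation, so it could alternatively be phrased via the martingale $\sum_{j=1}^{n}(p_j - X_j)Z_j$ and optional stopping at $\tau$; I would keep the telescoping tail-sum version since it is self-contained.
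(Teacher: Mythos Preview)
Your proof is correct and follows essentially the same approach as the paper's. The paper writes $\bbE[N_k] = \sum_{t\ge 1} \Pr[s_t = \sinit]$ and decomposes each survival probability by conditioning on whether $a_{t-1} = a^\star$, whereas you organize the same one-step recursion around the complementary termination probabilities via the telescoping identity $\sum_{j\ge 1} \bbE[Z_j p_j] = 1$; both routes arrive at the linear equation $\costbound = \bbE[N_k] - \epsilon\,\bbE[N_k - N_k^\star]$ and then rearrange.
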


\begin{proof}   
    Let us denote by $s_1,s_2,\ldots$ and $a_1,a_2,\ldots$ the sequences of states and actions observed by the learner during the episode.
    We have,
    \begin{align*}
        \bbE [N_k]
        &=
        \sum_{t=1}^\infty \Pr[s_t = \sinit] \\
        &=
        1 + \sum_{t=2}^\infty \Pr[s_t = \sinit] \\
        &=
        1 
        + 
        \sum_{t=2}^\infty \Pr[s_t = \sinit \mid s_{t-1} = \sinit, a_{t-1} = a^\star] \Pr[s_{t-1} = \sinit, a_{t-1} = a^\star] \\
        &\qquad +
        \sum_{t=2}^\infty \Pr[s_t = \sinit \mid s_{t-1} = \sinit, a_{t-1} \neq a^\star] \Pr[s_{t-1} = \sinit, a_{t-1} \neq a^\star] \\
        &=
        1 
        + 
        \sum_{t=2}^\infty \biggl(1 - \frac{1}{\costbound} \biggr) \Pr[s_{t-1} = \sinit, a_{t-1} = a^\star]
        +
        \sum_{t=2}^\infty \biggl(1 - \frac{1-\epsilon}{\costbound} \biggr) \Pr[s_{t-1} = \sinit, a_{t-1} \neq a^\star] \\
        &=
        1 
        + 
        \biggl(1 - \frac{1}{\costbound} \biggr) \sum_{t=1}^\infty \Pr[s_{t} = \sinit, a_{t} = a^\star]
        +
        \biggl(1 - \frac{1-\epsilon}{\costbound} \biggr) \sum_{t=1}^\infty \Pr[s_{t} = \sinit, a_{t} \neq a^\star] \\
        &=
        1 
        + 
        \biggl(1 - \frac{1}{\costbound} \biggr) \bbE [N_k^\star]
        +
        \biggl(1 - \frac{1-\epsilon}{\costbound} \biggr) \bbE[N_k - N^\star_k].
    \end{align*}
    Rearranging using $\ctgopt(\sinit) = \costbound$ gives the Lemma's statement.
\end{proof}

By   \cref{lem:lbregretrepresentation} the
 overall regret of the learner over $K$ episodes is: 
$
    \bbE[\regret] = \epsilon\cdot \bbE \bigl[ N - N^\star \bigr],
$
where $N = \sum_{k=1}^K N_k$ and $N^\star = \sum_{k=1}^K N_k^\star$. 
In words, the regret of the learner is $\epsilon$ times the expected number of visits to $\sinit$ in which the learner did not play $a^\star$.

In the remainder of the proof we lower bound $N$ in expectation and upper bound the expected value of $N^\star$.
To upper bound $N^\star$, we use standard techniques from lower bounds of multi-armed bandits \citep{auer2002nonstochastic} that bound the total variation distance between the distribution of the sequence of states traversed by the learner in the original MDP and that generated in a ``uniform MDP'' in which all actions are identical.
However, we cannot apply this argument directly since it requires $N^\star$ to be bounded almost surely, yet here $N^\star$ depends on the total length of all $K$ episodes which is unbounded in general. 
We fix this issue by looking only on the first $T$ steps (where $T$ is to be determined) and showing that the regret is large even in these $T$ steps.

Formally, we view the run of the $K$ episodes as a continuous process in which when the learner reaches the goal state we transfer it to  $\sinit$ (at no cost) and let it restart from there. Furthermore, we \emph{cap} the learning process to consist of exactly $T$ steps as follows.
If the $K$ episodes are completed before $T$ steps are elapsed, the learner remains in $\ssink$ (until completing $T$ steps) without suffering any additional cost, and otherwise we stop the learner after $T$ steps before it completes its $K$ episodes.
In this capped process, we denote the number of visits in $\sinit$ by $N_-$ and the number of times the learner played $a^\star$ in $\sinit$ by $N_-^\star$. 
We have
\begin{equation}
    \label{eq:cappedregret}
    \bbE[\regret] 
    \ge 
    \epsilon \cdot \bigl( \bbE \bigl[ N_- \bigr] - \bbE \bigl[ N^\star_- \bigr] \bigr).
\end{equation}

The number of visits to $\sinit$ under this capping is lower bounded by the following lemma.

\begin{lemma}
    \label{lem:cappedvisitlb}
    For any deterministic learner, if $T \ge 2 K \costbound$ then
    we have that
    $
        \bbE \bigl[ N_- \bigr]
        \ge
        K \costbound / 4.
    $
\end{lemma}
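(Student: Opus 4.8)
The plan is to bound $\bbE[N_-]$ from below in two complementary ways, according to whether or not the learner completes all $K$ episodes within the first $T$ steps of the capped process, and then combine the two bounds. Let $p$ be the probability that all $K$ episodes are completed within the first $T$ steps. Let $\calF_t$ denote the $\sigma$-algebra generated by the first $t$ steps of the capped process (states, chosen actions, and transitions), let $Y_t = \ind\{\text{the learner is at }\sinit\text{ at the start of step }t\}$, so that $N_- = \sum_{t=1}^T Y_t$, and let $Z_t = \ind\{\text{the learner transitions from }\sinit\text{ to }\ssink\text{ at step }t\}$.

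First I would establish $\bbE[N_-] \ge \costbound K p$ by a martingale-type argument. Since the learner is deterministic, $Y_t$ is $\calF_{t-1}$-measurable, and whenever $Y_t = 1$ the action taken is a fixed function of the history and its probability of transitioning to $\ssink$ equals either $1/\costbound$ (if the action is $a^\star$) or $(1-\epsilon)/\costbound$ otherwise, both of which are at most $1/\costbound$; hence $\bbE[Z_t \mid \calF_{t-1}] \le Y_t/\costbound$. Summing over $t \le T$ and taking expectations gives $\bbE[\sum_{t=1}^T Z_t] \le \bbE[N_-]/\costbound$. On the other hand, on the event that all $K$ episodes are completed within $T$ steps there are exactly $K$ such goal transitions (after which the learner is parked at $\ssink$ and makes no further transitions), so $\sum_{t=1}^T Z_t = K$ on that event, while $\sum_{t=1}^T Z_t \ge 0$ always; thus $\bbE[\sum_{t=1}^T Z_t] \ge K p$, and combining the two inequalities yields $\bbE[N_-] \ge \costbound K p$.

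Second, on the complementary event that not all $K$ episodes are completed within $T$ steps, the learner is at $\sinit$ at the start of every one of the $T$ steps (it parks at $\ssink$ only after finishing all episodes), so $N_- = T \ge 2\costbound K$ on that event; hence $\bbE[N_-] \ge \bbE[N_- \ind\{\text{not all completed}\}] = T(1-p) \ge 2\costbound K (1-p)$. Putting the two bounds together gives $\bbE[N_-] \ge \max\{\costbound K p,\, 2\costbound K (1-p)\} \ge \tfrac{2}{3}\costbound K \ge \tfrac{1}{4}\costbound K$, since the right-hand side of the maximum is minimized at $p = 2/3$. The main obstacle is that $N_-$ is not a priori bounded by a deterministic quantity (the uncapped completion time has an unbounded, though light, tail), which is precisely why the process is capped at $T$ steps and why the argument must split on whether the $K$ episodes fit within the cap; once this split is in place, each case is routine — the first relying only on the uniform upper bound $1/\costbound$ on the per-step goal probability, the second on the choice $T \ge 2\costbound K$.
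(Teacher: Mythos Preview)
Your proof is correct and takes a genuinely different route from the paper's.

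The paper argues as follows: it first observes $N_- \ge \min\{T-K, N\}$, decomposes this across episodes via $\min\{T-K,\sum_k N_k\} \ge \sum_k \min\{T/K-1, N_k\}$, and then lower bounds each summand by $\costbound \cdot \Pr[N_k \ge \costbound]$; the latter probability is shown to be at least $(1-1/\costbound)^{\costbound-1} \ge 1/e \ge 1/4$ by noting that the per-step goal probability never exceeds $1/\costbound$.

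Your argument instead exploits the same fact (goal probability $\le 1/\costbound$) globally rather than per episode: the inequality $\bbE[\sum_t Z_t] \le \bbE[N_-]/\costbound$ is a Wald-type counting bound that directly converts the expected number of episode completions into a lower bound on $\bbE[N_-]$. Combined with the trivial bound on the non-completion event, you avoid the per-episode truncation and the tail-probability estimate entirely, and you actually end up with the sharper constant $2/3$ rather than $1/4$. The paper's route is slightly more robust to how one counts steps at $\ssink$ (it only needs $N_- \ge T-K$ on the non-completion event rather than $N_- = T$), but under the natural reading of the capped process your claim $N_- = T$ there is correct, and in any case the weaker bound $N_- \ge T-K \ge K\costbound$ would still make your two-case argument go through with constant $1/2$.
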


\begin{proof}
If the capped learner finished its $K$ episodes then
$N_- = N$. Otherwise, it visits the goal state less than $K$ times and therefore
 $N_- \ge T-K$. Hence
    $
        \bbE \bigl[ N_- \bigr]
        \ge
        \bbE \bigl[ \min\{T - K, N \} \bigr]
        \ge
        \sum_{k=1}^K \bbE \bigl[ \min\{T/K - 1, N_k\} \bigr].
    $
    Since $T \ge 2 K \costbound$, the lemma will follow if we show that $N_k \ge \costbound$ with probability at least $1/4$.
    We lower bound the probability that $N_k \ge \costbound$ by the probability of staying at $\sinit$ for $\costbound$ steps and  picking $a^\star$ in the first $\costbound-1$ steps. 
    Indeed, using $(1-1/x)^{x-1} \ge 1/e$ for $x \ge 1$, we get that
    $
        \Pr[N_k \ge \costbound] 
        \ge 
        \bigl(1-\frac{1}{\costbound}\bigr)^{\costbound - 1}
        \ge
        \frac{1}{4}.
    $
\end{proof}

We now introduce an additional distribution of the transitions which call  $\Pr_\text{unif}$.
$\Pr_\text{unif}$ is identical to $\Pr$ as  defined above, except that $P(\ssink \mid a) = (1-\epsilon) / \costbound$ for all actions $a.$
We denote expectations over $\Pr_\text{unif}$ by $\bbE_\text{unif}$.
The following lemma uses standard lower bound techniques used for multi-armed bandits (see, e.g., \citealp[Theorem 13]{AuerUCRL}) to bound the difference in the expectation of $N^\star_-$ 
when the learner plays in $\Pr$ compared to when it plays in $\Pr_\text{unif}$.

\begin{lemma}
    \label{lem:astarub}
    For any deterministic learner we have that    
    $
        \bbE \bigl[N^\star_- \bigr]
        \le
        \bbE_\text{unif} \bigl[N^\star_- \bigr]
        +
        \epsilon T \sqrt{ \bbE_\text{unif} [N^\star_- ]/B}.
    $
\end{lemma}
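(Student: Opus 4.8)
The plan is to run the standard change-of-measure argument from multi-armed bandit lower bounds. Since the learner is deterministic and the capped interaction consists of exactly $T$ steps, the whole trajectory---equivalently, the length-$T$ sequence of ``moved to $\ssink$'' / ``stayed at $\sinit$'' outcomes---is governed by a well-defined law over a fixed finite product space; denote it $\Pr$ under the true model and $\Pr_\text{unif}$ under the uniform model. The key structural observations are that $N^\star_-$ is a deterministic function of this length-$T$ trajectory and that $0 \le N^\star_- \le T$. Consequently, writing $\TV{\cdot}{\cdot}$ for total variation,
\[ \bbE[N^\star_-] - \bbE_\text{unif}[N^\star_-] \;\le\; T \cdot \TV{\Pr}{\Pr_\text{unif}}, \]
and it only remains to bound the right-hand side.

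First I would apply Pinsker's inequality, $\TV{\Pr}{\Pr_\text{unif}} \le \sqrt{\tfrac12 \KL{\Pr_\text{unif}}{\Pr}}$, and then evaluate $\KL{\Pr_\text{unif}}{\Pr}$ by the chain rule over the $T$ steps. The only situation in which the two models differ is when the learner sits at $\sinit$ and plays the special action $a^\star$: there the next-state law is $\mathrm{Ber}(1/\costbound)$ under $\Pr$ and $\mathrm{Ber}((1-\epsilon)/\costbound)$ under $\Pr_\text{unif}$, whereas for every other action and at $\ssink$ the laws coincide and contribute nothing. Since the event ``the learner plays $a^\star$ at $\sinit$ at step $t$'' is measurable with respect to the past trajectory, the chain rule collapses to
\[ \KL{\Pr_\text{unif}}{\Pr} \;=\; \bbE_\text{unif}[N^\star_-]\cdot \KL{\mathrm{Ber}\!\left(\tfrac{1-\epsilon}{\costbound}\right)}{\mathrm{Ber}\!\left(\tfrac{1}{\costbound}\right)}. \]
Using the elementary estimate $\KL{\mathrm{Ber}(p)}{\mathrm{Ber}(q)} \le (p-q)^2/(q(1-q))$ with $p=(1-\epsilon)/\costbound$ and $q=1/\costbound$, together with $\costbound \ge 2$, the per-step KL is at most $\epsilon^2/(\costbound-1) \le 2\epsilon^2/\costbound$. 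Substituting back gives $\KL{\Pr_\text{unif}}{\Pr} \le 2\epsilon^2\,\bbE_\text{unif}[N^\star_-]/\costbound$, hence $\TV{\Pr}{\Pr_\text{unif}} \le \epsilon\sqrt{\bbE_\text{unif}[N^\star_-]/\costbound}$, and combining with the first display yields the claimed bound.

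The main obstacle is the care needed in the chain-rule step: the number of ``informative'' steps (those where $a^\star$ is played at $\sinit$) is itself random, so one must justify pulling the per-step KL constant out of the sum via the tower property, and one must be explicit that the capping at exactly $T$ steps---which forces the trajectory into a fixed finite product space---is precisely what makes both $N^\star_- \le T$ and this KL decomposition legitimate; without the cap the horizon is random and the argument does not go through as stated. Everything else (Pinsker and the Bernoulli-KL bound) is routine.
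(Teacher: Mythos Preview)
Your proposal is correct and mirrors the paper's proof almost exactly: the paper likewise uses $0\le N^\star_-\le T$ to pass to total variation, applies Pinsker in the direction $\KL{\Pr_\text{unif}}{\Pr}$, decomposes by the chain rule so that only the steps where $a^\star$ is played at $\sinit$ contribute, and bounds the per-step KL by $\epsilon^2/(\costbound-1)$ before invoking $\costbound\ge 2$. The only cosmetic difference is that the paper bounds the Bernoulli KL directly via $\log(1+x)\le x$ rather than the $\chi^2$ inequality you quote, but both routes yield the identical $\epsilon^2/(\costbound-1)$ estimate.
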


\begin{proof}
    Fix any deterministic learner.
    Let us denote by $s^{(t)}$ the sequence of states observed by the learner up to time $t$ and including.
    Now, as $N_-^\star \le T$ and the fact that $N_-^\star$ is a function of $s^{(T)}$,
    $
        \bbE \bigl[N_-^\star \bigr]
        \le
        \bbE_\text{unif} \bigl[N^\star_- \bigr]
        + 
        T \cdot \TV{\Pr_\text{unif}[s^{(T)}]}{\Pr[s^{(T)}]},
    $
    and Pinsker's inequality yields
    \begin{equation}
        \label{eq:pinsker}
        \TV{\Pr_\text{unif}[s^{(T)}]}{\Pr[s^{(T)}]}
        \le 
        \sqrt{\frac{1}{2} \KL{\Pr_\text{unif}[s^{(T)}]}{\Pr[s^{(T)}]}}.
    \end{equation}
    Next, the chain rule of the KL divergence obtains
    \[
        \KL{\Pr_\text{unif}[s^{(T)}]}{\Pr[s^{(T)}]}
        =
        \sum_{t=1}^T \sum_{s^{(t-1)}} \Pr_\text{unif}[s^{(t-1)}] \cdot \KL{\Pr_\text{unif}[s_t \mid s^{(t-1)}]}{\Pr[s_t \mid s^{(t-1)}]}.
    \]
    
    Observe that at any time, since the learning algorithm is deterministic, the learner chooses an action given $s^{(t-1)}$ regardless of whether $s^{(t-1)}$ was generated under $\Pr$ or under $\Pr_\text{unif}$. Thus, the $\KL{\Pr_\text{unif}[s_t \mid s^{(t-1)}]}{\Pr[s_t \mid s^{(t-1)}]}$ is zero if $a_{t-1} \neq a_\star$, and otherwise
    \begin{align*}
        \KL{\Pr_\text{unif}[s_t \mid s^{(t-1)}]}{\Pr[s_t \mid s^{(t-1)}]}
        &=
        \sum_{s \in S} \Pr_\text{unif}[s_{t} \mid s_{t-1} = \sinit, a_{t-1} = a^\star] \log \frac{\Pr_\text{unif}[s_{t} \mid s_{t-1} = \sinit, a_{t-1} = a^\star]}{\Pr[s_{t} \mid s_{t-1} = \sinit, a_{t-1} = a^\star]} \\
        &=
        \frac{1-\epsilon}{\costbound} \cdot \log(1-\epsilon) + \biggl(1-\frac{1-\epsilon}{\costbound} \biggr) \log \biggl(1 + \frac{\epsilon}{ \costbound -1} \biggr) \\
        &\le
        \frac{\epsilon^2}{ \costbound -1}.
        \tag{using $\log(1+x) \le x$ for all $x > 0$}
    \end{align*}
    
    Plugging the above back into \cref{eq:pinsker} and using $\costbound \ge 2$ gives the lemma.
\end{proof}

In the following result, we combine the lemma above with standard techniques from lower bounds of multi-armed bandits (see \citealp[Thm. 5]{AuerUCRL} for example).

\begin{theorem}
    \label{thm:twostatelb}
    Suppose that $\costbound \ge 2$, $\epsilon \in (0,\frac18)$ and $|A| \ge 16$.
    For the problem described above we have that
    \[
        \bbE[\regret] \ge \epsilon K \costbound
        \biggl( \frac{1}{8} - 2 \epsilon \sqrt{\frac{2K}{|A|}} \biggr).
    \]
\end{theorem}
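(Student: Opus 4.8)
The plan is to run the capped process of the preceding construction with the specific choice $T = 2 \costbound K$, so that the hypothesis of \cref{lem:cappedvisitlb} is satisfied, and then to stitch together the three building blocks \cref{eq:cappedregret}, \cref{lem:cappedvisitlb} and \cref{lem:astarub}, taking an expectation over the uniformly random choice of the special action $a^\star$. First I would note that for this $T$, \cref{eq:cappedregret} together with \cref{lem:cappedvisitlb} gives, for every fixed value of $a^\star$,
\[
    \bbE[\regret] \ge \epsilon \Bigl( \tfrac{\costbound K}{4} - \bbE[N^\star_-] \Bigr),
\]
so that it only remains to upper bound $\bbE[N^\star_-]$ once we also average over $a^\star$. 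By \cref{lem:astarub}, for each fixed $a^\star$ we have $\bbE[N^\star_-] \le \bbE_\text{unif}[N^\star_-] + \epsilon T \sqrt{\bbE_\text{unif}[N^\star_-] / \costbound}$.

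The key observation is that under $\Pr_\text{unif}$ the dynamics do not depend on which action was designated $a^\star$; hence, pointwise, the number of visits to $\sinit$ at which a given action is played, summed over all $|A|$ actions, equals $N_-$, which is at most $T$. Taking expectations under $\Pr_\text{unif}$ and dividing by $|A|$, the average over the uniform prior on $a^\star$ of $\bbE_\text{unif}[N^\star_-]$ is at most $T/|A|$. I would then average the bound from \cref{lem:astarub} over $a^\star$, applying Jensen's inequality to the concave square-root term, $\bbE_{a^\star} \sqrt{\bbE_\text{unif}[N^\star_-]/\costbound} \le \sqrt{\bbE_{a^\star}\bbE_\text{unif}[N^\star_-]/\costbound} \le \sqrt{T/(|A|\costbound)}$, and plug in $T = 2\costbound K$ to obtain
\[
    \bbE_{a^\star}\bbE[N^\star_-] \le \frac{2\costbound K}{|A|} + 2 \epsilon \costbound K \sqrt{\frac{2K}{|A|}}.
\]
Combining this with the displayed lower bound and using $|A| \ge 16$ to absorb $2\costbound K/|A| \le \costbound K/8$ into the $\costbound K/4$ term, the average regret over $a^\star$ is at least $\epsilon \costbound K (\tfrac18 - 2\epsilon\sqrt{2K/|A|})$; since the section already invokes Yao's minimax principle, an average over $a^\star$ that large yields the claimed bound for a single hard instance.

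The only delicate point — and it is minor — is the averaging step: one must be careful that the expectation in $\bbE_{a^\star}\bbE_\text{unif}[N^\star_-]$ is over the prior on $a^\star$ rather than over the randomness of the process for a fixed $a^\star$, and that the identity "total visits to $\sinit$ $=$ sum over actions of visits where that action is played" holds pointwise, so its $\Pr_\text{unif}$-expectation is bounded by $T$ regardless of the (deterministic) learner. Everything else is substituting $T = 2\costbound K$ and the elementary arithmetic with $|A| \ge 16$.
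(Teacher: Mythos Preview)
Your proposal is correct and follows essentially the same route as the paper's own proof: set $T = 2\costbound K$, combine \cref{eq:cappedregret}, \cref{lem:cappedvisitlb}, and \cref{lem:astarub}, average over the uniform prior on $a^\star$ using the identity $\sum_a \bbE_\text{unif}[N^\star_-] \le T$ and Jensen's inequality on the square-root term, then simplify with $|A| \ge 16$. Your bookkeeping of the $\epsilon$ factor is in fact cleaner than the paper's intermediate display, which drops and later restores it.
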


\begin{proof}[Proof of \cref{thm:twostatelb}]
    Note that as under $\Pr_\text{unif}$ the transition distributions are identical for all actions, we have that
    
    \begin{equation}
        \label{eq:unifsumactions}
        \sum_{a \in A \mid a^\star = a} \bbE_\text{unif} \bigl[ N^\star_- \bigr] 
        =
        \bbE_\text{unif} \Biggl[ \sum_{a \in A \mid a^\star = a} N^\star_- \Biggr] 
        = 
        \bbE_\text{unif} \bigl[ N_- \bigr]
        \le 
        T.
    \end{equation}
    
    Suppose that $a^\star$ is sampled uniformly at random before the game starts. 
    Denote the probability and expectation with respect to the distribution induced by a specific choice of $a^\star = a$ by $\Pr_a$ and $\bbE_a$ respectively.
    Then for $T = 2K\costbound$,
    \begin{align*}
        \bbE[\regret]
        &=
        \frac{1}{|A|} \sum_{a \in A} \bbE_a[\regret] \\
        &\ge
        \frac{1}{|A|} \sum_{a \in A} \bbE_a[N_- - N^\star_-]
        \tag{\cref{eq:cappedregret}} \\
        &\ge
        \frac{1}{|A|} \sum_{a \in A \mid a_\star = a} \biggl(\frac{K \costbound}{4} - \bbE_\text{unif} [N^\star_-] - \epsilon T \sqrt{\frac{\bbE_\text{unif} [N^\star_-]}{\costbound}} \biggr)
        \tag{\cref{lem:cappedvisitlb,lem:astarub}} \\
        &\ge
        \frac{K \costbound}{4} - \frac{1}{|A|} \sum_{a \in A \mid a_\star = a} \bbE_\text{unif} [N^\star_-] - \epsilon T \sqrt{\frac{1}{\costbound} \cdot \frac{1}{|A|} \sum_{a \in A \mid a_\star = a} \bbE_\text{unif} [N^\star_-]}
        \tag{Jensen's inequality} \\
        &\ge
        \frac{K \costbound}{4} - \frac{T}{|A|} - \epsilon T \sqrt{\frac{T}{\costbound |A|}}
        \tag{\cref{eq:unifsumactions}} \\
        &=
        \epsilon \biggl( \frac{K \costbound}{4} - \frac{2K \costbound}{|A|} - 2\epsilon K \costbound \sqrt{\frac{2 K \costbound}{|A| \costbound}} \biggr) \\
        &= \epsilon K \costbound \biggl( \frac{1}{4} - \frac{2}{|A|} - 2\epsilon \sqrt{\frac{2 K }{|A| }} \biggr).
    \end{align*}
    The theorem follows from $|A| \ge 16$ and by rearranging.
\end{proof}

\begin{proof}[Proof of \cref{thm:lowerbound}]
    Consider the following MDP. 
    Let $S$ be the set of states disregarding $\ssink$. 
    The initial state is sampled uniformly at random from $S$. 
    Each $s \in S$ has its own special action $a^\star_s$. 
    The transition distributions are defined 
    $
        P(\ssink \mid a^\star_s, s) 
        =
        1 / \costbound,
    $ 
    $
        P(s \mid a^\star_s, s) 
        =
        1 - 1 / \costbound,
    $
        and 
    $
        P(\ssink \mid a, s) = (1 - \epsilon) / \costbound,
    $ 
    $
        P(s \mid a, s) = 1 - (1 - \epsilon) / \costbound
    $ 
    for any other action $a \in A \backslash \{a^\star_s\}$.
    
    Note that for each $s \in S$, the learner is faced with a simple problem as the one described above from which it cannot learn about from other states $s' \neq s$. 
    Therefore, we can apply \cref{thm:twostatelb} for each $s \in S$ separately and lower bound the learner's expected regret the sum of the regrets suffered at each $s \in S$, which would depend on the number of times $s \in S$ is drawn as the initial state. Since the states are chosen uniformly at random there are many states (constant fraction) that are chosen $\Theta(K/|S|)$ times. Summing the regret bounds of \cref{thm:twostatelb} over only these states and choosing $\epsilon$ appropriately gives the sought-after bound.
    
    Denote by $K_s$ the number of episodes that start in each state $s \in S$. 
    \begin{align}
        \bbE[\regret ]
        \ge 
        \sum_{s \in S} 
        \bbE
        \Biggl[\epsilon K_s \costbound \biggl( \frac{1}{8} - 2 \epsilon \sqrt{\frac{2K_s}{|A|}} \biggr) \Biggr]
        =
        \frac{\epsilon K \costbound}{8} - 2 \epsilon^2 \costbound \sqrt{\frac{2}{|A|}} \sum_{s \in S} \bbE[K_s^{3/2}]. 
        \label{eq:regretlb}
    \end{align}
    Taking expectation over the initial states and applying Cauchy-Schwartz inequality gives
    \[
        \sum_{s \in S} \bbE \bigl[K_s^{3/2} \bigr]
        \le
        \sum_{s \in S} \sqrt{\bbE [K_s]} \sqrt{\bbE [K_s^2]}
        =
        \sum_{s \in S} \sqrt{\bbE [K_s]} \sqrt{\bbE [K_s]^2 + \bbV[K_s]}
        =
        \sum_{s \in S} \sqrt{\frac{K}{|S|}} \sqrt{\frac{K^2}{|S|^2} + \frac{K (|S|-1)}{|S|^2}}
        \le
        K \sqrt{\frac{2K}{|S|}},
    \]
    where we have used the expectation and variance formulas of the Binomial distribution.
    The lower bound is now given by applying the inequality above in \cref{eq:regretlb} and choosing $\epsilon = \frac{1}{64} \sqrt{|A| |S|/K}$.
\end{proof}

\section{Concentration inequalities}

\begin{theorem}[Anytime Azuma] \label{thm:azuma}
    Let $(X_n)_{n=1}^\infty$ be a martingale difference sequence with respect to the filtration $(\calF_n)_{n=0}^\infty$ such that $|X_n| \le B$ almost surely. 
    Then with probability at least $1-\delta$,
    \[
        \Biggl|\sum_{i=1}^n X_i \Biggr| \le B \sqrt{n \log \frac{2 n}{\delta}}, \qquad \forall n \ge 1.
    \]
\end{theorem}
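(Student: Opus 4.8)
The plan is to reduce the anytime statement to the classical fixed‑horizon Azuma--Hoeffding inequality by a dyadic peeling (stratification) argument. First I would recall that for the martingale partial sums $S_k = \sum_{i=1}^k X_i$, the maximal form of the Azuma--Hoeffding inequality (obtained from the standard bounded‑differences inequality together with Doob's maximal inequality applied to $(S_k)$ and to $(-S_k)$) gives, for every fixed horizon $N$ and every $\delta' \in (0,1)$,
\[
    \Pr\Bigl[\max_{k \le N} |S_k| \ge B\sqrt{2 N \log(2/\delta')}\Bigr] \le \delta'.
\]
The maximal form is what lets the conclusion hold at the running value $S_n$ rather than only at a prescribed terminal time.

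Next I would partition the positive integers into dyadic epochs $\calE_j = \{ n : 2^j \le n < 2^{j+1} \}$ for $j = 0,1,2,\dots$, and apply the displayed inequality on the horizon $N_j = 2^{j+1}$ with a failure probability $\delta_j$ chosen so that $\sum_{j \ge 0} \delta_j \le \delta$ (for instance $\delta_j$ proportional to $\delta\,2^{-(j+1)}$, possibly with an extra absolute constant), and so that after simplification the resulting per‑epoch threshold $B\sqrt{2 N_j \log(2/\delta_j)}$ is at most $B\sqrt{n \log(2n/\delta)}$ for every $n \in \calE_j$. Here one uses $n \ge 2^j$, hence $N_j = 2^{j+1} \le 2n$, together with $\log(2/\delta_j) = O\bigl(\log(2^{j}/\delta)\bigr) = O\bigl(\log(2n/\delta)\bigr)$. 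A union bound over $j$ then shows that, with probability at least $1-\delta$, simultaneously for all $j$ and all $n \in \calE_j$ we have $|S_n| \le B\sqrt{n \log(2n/\delta)}$; since every $n \ge 1$ lies in exactly one epoch, this is the claim.

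The only delicate point — and the step I expect to require the most care — is the constant bookkeeping: one must pick the $\delta_j$ so that the summability $\sum_j \delta_j \le \delta$ holds while simultaneously keeping the constant under the square root at exactly $1$, exploiting that within epoch $j$ the horizon $N_j$ is at most $2n$ and that the logarithmic term loses only an additive $O(j)=O(\log n)$ relative to $\log(1/\delta)$. If a slightly larger absolute constant were acceptable the argument would be entirely routine, and even a crude union bound over every individual $n$ with $\delta_n$ proportional to $\delta/n^2$ would work at the cost of inflating the logarithmic factor; the dyadic stratification is precisely what keeps the bound at $\log(2n/\delta)$ because one pays the union bound over only $O(\log n)$ epochs rather than over all $n$.
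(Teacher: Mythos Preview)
The paper does not actually prove \cref{thm:azuma}; it is stated without proof in the concentration-inequalities appendix. However, the adjacent \cref{thm:bernstein} (Anytime Bernstein) \emph{is} proved there, and the argument is the elementary one you mention and then dismiss: apply the fixed-$n$ inequality with failure probability $\delta_n = \delta/(2n^2)$ and take a union bound over all $n \ge 1$. That is almost certainly the intended proof here as well.

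Your dyadic peeling argument is correct and would work, but it is more machinery than needed, and your reason for preferring it is off. You write that the crude per-$n$ union bound would ``inflate the logarithmic factor'' and that stratification is ``precisely what keeps the bound at $\log(2n/\delta)$.'' In fact the crude approach already produces the same logarithmic form: with $\delta_n = \delta/(2n^2)$ one gets $\log(2/\delta_n) = \log(4n^2/\delta) \le 2\log(2n/\delta)$, so the resulting bound is $B\sqrt{2n\log(4n^2/\delta)} \le 2B\sqrt{n\log(2n/\delta)}$. Your peeling argument, once the bookkeeping is done honestly (horizon $N_j \le 2n$, $\delta_j$ geometric in $j$), lands at essentially the same constant. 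Neither route recovers the constant $1$ in the statement as written---standard Azuma already has a $\sqrt{2}$ at fixed $n$---but the paper only ever uses this theorem inside $O(\cdot)$ bounds, so the discrepancy is harmless. The maximal-inequality step in your plan is fine but unnecessary: since you union-bound over epochs anyway, a union bound over individual $n$ costs nothing extra in the log.
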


\begin{theorem}[\citealp{weissman2003inequalities}] \label{thm:weissman}
    Let $p(\cdot)$ be a distribution over $m$ elements, and let $\bar{p}_t(\cdot)$ be the empirical distribution defined by $t$ iid samples from $p(\cdot)$.
    Then, with probability at least $1 - \delta$,
    \[
        \bigl\lVert \bar{p}_t(\cdot) - p(\cdot) \bigr\rVert_1
        \le
        2 \sqrt{\frac{ m \log \frac{1}{\delta}}{t}}.
    \]
\end{theorem}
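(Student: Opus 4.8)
The plan is to prove the bound by the classical reduction of the $L_1$ distance (twice the total-variation distance) to a supremum over subsets of $[m]$, followed by Hoeffding's inequality for a single subset and a union bound over all subsets. First I would record the identity
\[
    \bigl\lVert \bar p_t - p \bigr\rVert_1
    = 2 \max_{A \subseteq [m]} \bigl( \bar p_t(A) - p(A) \bigr)
    = 2 \max_{A \subseteq [m]} \bigl| \bar p_t(A) - p(A) \bigr| ,
\]
where $p(A) = \sum_{i \in A} p(i)$ and $\bar p_t(A) = \sum_{i \in A} \bar p_t(i)$. This holds because $\bar p_t - p$ sums to zero, so its positive part and its negative part carry the same total mass $\tfrac12\lVert \bar p_t - p\rVert_1$, and the maximizing set is $A^\star = \{ i : \bar p_t(i) > p(i)\}$.

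Next, fix a subset $A \subseteq [m]$ and write the $t$ i.i.d.\ samples as $Z_1,\dots,Z_t \sim p$. The indicators $\ind\{Z_j \in A\}$ are i.i.d.\ $\mathrm{Bernoulli}(p(A))$ random variables in $[0,1]$ whose empirical mean is $\bar p_t(A)$, so Hoeffding's inequality gives $\Pr[\,|\bar p_t(A) - p(A)| \ge \eta\,] \le 2 e^{-2 t \eta^2}$ for every $\eta > 0$. A union bound over the at most $2^m$ subsets $A$, together with the identity above applied with $\eta = \epsilon/2$, yields
\[
    \Pr\bigl[ \lVert \bar p_t - p \rVert_1 \ge \epsilon \bigr] \le 2^{m+1} e^{-t \epsilon^2 / 2} .
\]
Setting the right-hand side equal to $\delta$ and solving for $\epsilon$ gives $\epsilon = \sqrt{2\bigl((m+1)\log 2 + \log(1/\delta)\bigr)/t}$, and a short constant-chasing step (using $m \ge 1$ and that $\delta$ is small enough that $\log(1/\delta) \gtrsim 1$, as is always the case in our applications) bounds this by $2\sqrt{m\log(1/\delta)/t}$, which is the claim.

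The probabilistic content is entirely routine, and the only place that needs care is verifying that the exponential factor $2^{m+1}$ coming from the union bound is exactly absorbed by the $\log$ under the square root, so that the clean constant $2$ comes out; one can sharpen constants by trimming the family to the $2^m - 2$ nontrivial subsets (excluding $\emptyset$ and $[m]$, whose deviation is identically zero). As a fallback that avoids subsets altogether, one may instead apply McDiarmid's bounded-differences inequality to $(z_1,\dots,z_t) \mapsto \lVert \bar p_t - p\rVert_1$ (changing one coordinate moves it by at most $2/t$) combined with the expectation bound $\bbE\lVert \bar p_t - p\rVert_1 \le \sum_i \sqrt{p(i)/t} \le \sqrt{m/t}$ via Jensen and Cauchy--Schwarz; this produces a bound of the same form and would be the obstacle-free alternative if the constants in the subset argument proved inconvenient.
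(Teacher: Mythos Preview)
The paper does not prove this statement at all; it is quoted from \citet{weissman2003inequalities} as a black-box concentration inequality and used without proof (it is invoked once, in the proof of \cref{lem:hoff-CS-hold}). So there is no ``paper's own proof'' to compare against.

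Your argument is the standard one and is essentially correct: the reduction $\lVert \bar p_t - p\rVert_1 = 2\max_{A}\bigl(\bar p_t(A)-p(A)\bigr)$, Hoeffding on each subset, and a union bound over the $2^m$ (or $2^m-2$) subsets is exactly how the Weissman et~al.\ inequality is derived. One small remark on the constant: using the one-sided form $\Pr[\bar p_t(A)-p(A)\ge \eta]\le e^{-2t\eta^2}$ (which suffices since the maximum over all subsets is automatically nonnegative and covers complements) yields $\Pr[\lVert \bar p_t-p\rVert_1\ge\epsilon]\le (2^m-2)e^{-t\epsilon^2/2}$, and solving gives $\epsilon=\sqrt{2\bigl(\log(2^m-2)+\log(1/\delta)\bigr)/t}$. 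This is at most $2\sqrt{m\log(1/\delta)/t}$ provided $\delta\le 1/2$, so the clean constant $2$ in the statement does carry a mild (and in practice vacuous) restriction on~$\delta$; your hedge about ``$\delta$ small enough'' is exactly this. The McDiarmid alternative you sketch also works and gives the same order, though it naturally produces an additive $\sqrt{m/t}+\sqrt{\log(1/\delta)/t}$ form rather than the multiplicative $\sqrt{m\log(1/\delta)/t}$ form stated here.
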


\begin{theorem}[Anytime Bernstein] \label{thm:bernstein}
    Let $(X_n)_{n=1}^\infty$ be a sequence of i.i.d.\ random variables with expectation $\mu$.
    Suppose that $0 \le X_n \le B$ almost surely. Then with probability at least $1-\delta$, the following holds for all $n \ge 1$ simultaneously:
    \begin{align}
        \label{eq:anytime-bern-1}
        &\Biggl| \sum_{i=1}^n (X_i - \mu) \Biggr|
        \le
        2 \sqrt{B \mu n \log \frac{2n}{\delta}} + B \log \frac{2n}{\delta}. \\
        \label{eq:anytime-bern-2}
        &\Biggl| \sum_{i=1}^n (X_i - \mu) \Biggr|
        \le
        2 \sqrt{B \sum_{i=1}^n X_i \log \frac{2n}{\delta}} + 7 B \log \frac{2n}{\delta}.
    \end{align}
\end{theorem}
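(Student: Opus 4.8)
The plan is to derive \eqref{eq:anytime-bern-1} from a martingale version of Bernstein's inequality made uniform over the horizon, and then to obtain \eqref{eq:anytime-bern-2} from \eqref{eq:anytime-bern-1} by an algebraic step that trades the unknown mean $\mu$ for the observable empirical sum $\sum_{i=1}^n X_i$. \textbf{Step 1 (maximal Bernstein).} Because $0 \le X_1 \le B$, the variance obeys $\bbV[X_1] \le \bbE[X_1^2] \le B\,\bbE[X_1] = B\mu$. The partial sums $S_n = \sum_{i=1}^n (X_i-\mu)$ form a martingale with increments bounded by $B$ in absolute value and per-step conditional variance at most $B\mu$, so Freedman's inequality (equivalently, Bernstein's inequality together with Doob's maximal inequality) gives, for each fixed horizon $N$ and each $\delta' \in (0,1)$, that with probability at least $1-\delta'$,
\[
    \max_{n \le N}|S_n| \;\le\; 2\sqrt{B\mu N \log(2/\delta')} \,+\, B\log(2/\delta'),
\]
after using $\bbV[X_1]\le B\mu$ and absorbing the universal constants of the textbook form.

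\textbf{Step 2 (uniform over $n$).} Apply Step~1 on a geometric grid of horizons: partition $\{1,2,\dots\}$ into dyadic blocks $\{2^k,\dots,2^{k+1}-1\}$ and invoke the maximal bound on block $k$ with horizon $N = 2^{k+1}-1$ and failure probability $\delta' = \delta\,2^{-(k+1)}$. Since $\sum_{k\ge 0} 2^{-(k+1)} = 1$, a union bound over $k$ shows that, with probability at least $1-\delta$, the bound holds on every block simultaneously. For any $n$ lying in block $k$ we have $N \le 2n$ and $\log(2/\delta')$ is of order $\log(2n/\delta)$, so by monotonicity of the bound in $N$ and in $\log(2/\delta')$ one gets $|S_n| \lesssim \sqrt{B\mu n\log(2n/\delta)} + B\log(2n/\delta)$ for all $n\ge 1$ at once; a finer grid (or a direct union bound over every $n$ with $\delta_n \propto n^{-2}$) recovers the precise constants of \eqref{eq:anytime-bern-1}.

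\textbf{Step 3 (from $\mu$ to $\sum_i X_i$).} Condition on the event of Step~2. Inequality \eqref{eq:anytime-bern-1} gives in particular $n\mu - \sum_{i=1}^n X_i \le 2\sqrt{B\mu n\log(2n/\delta)} + B\log(2n/\delta)$. Setting $x = \sqrt{n\mu}$ and $L = B\log(2n/\delta)$ this reads $x^2 \le \sum_i X_i + 2\sqrt{L}\,x + L$, and the elementary implication ``$x^2 \le bx+c$ with $b,c\ge 0$ forces $x \le b + \sqrt{c}$'' (an inequality already used repeatedly in the paper) yields $\sqrt{n\mu} \le 2\sqrt{L} + \sqrt{\sum_i X_i + L} \le \sqrt{\sum_i X_i} + 3\sqrt{L}$. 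Hence $\sqrt{B\mu n\log(2n/\delta)} = \sqrt{L}\,\sqrt{n\mu} \le \sqrt{L\sum_i X_i} + 3L$, and substituting back into \eqref{eq:anytime-bern-1} produces $|S_n| \le 2\sqrt{B\sum_i X_i\log(2n/\delta)} + 7B\log(2n/\delta)$, which is exactly \eqref{eq:anytime-bern-2}.

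\textbf{Main obstacle.} The only genuinely delicate point is Step~2: keeping the logarithmic factor at $\log(2n/\delta)$ with the small constants stated requires the peeling/maximal-inequality device rather than a crude union bound over all $n$ (which would inflate the log by an extra $\log n$), and the maximal inequality is essential because $|S_n|$ must be controlled uniformly within each block, not merely at its right endpoint. Steps~1 and~3 are routine manipulations.
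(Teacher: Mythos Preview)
Your Step~3 is essentially identical to the paper's argument: it too uses the lower tail of \eqref{eq:anytime-bern-1} as a quadratic in $\sqrt{\mu}$, deduces $\sqrt{\mu} \le \sqrt{(1/n)\sum_i X_i} + 3\sqrt{B\log(2n/\delta)/n}$, and substitutes back to obtain \eqref{eq:anytime-bern-2} with the constant $7$.

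Where you diverge is in Steps~1--2, and there you are making things harder than necessary. The paper does \emph{not} use peeling or a maximal inequality; it simply applies the one-point Bernstein bound (citing \cite{cesa2006prediction}) at each fixed $n$ with confidence $\delta_n = \delta/(2n^2)$ and takes a union bound over all $n\ge 1$. This is exactly the parenthetical alternative you mention at the end of Step~2. Your assertion in the ``Main obstacle'' paragraph that this crude union bound would inflate the logarithm by an extra $\log n$ is mistaken: with $\delta_n = \delta/(2n^2)$ one gets $\log(2/\delta_n) = \log(4n^2/\delta) \le 2\log(2n/\delta)$ for $\delta\le 1$, and this factor of $2$ is precisely what turns the textbook Bernstein constants into the constants $2$ and $1$ appearing in \eqref{eq:anytime-bern-1}. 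So the peeling device and the maximal inequality buy you nothing here, and the paper's proof is considerably shorter as a result.
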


\begin{proof}
    Fix some $n \ge 1$. By Bernstein's concentration inequality (see for example, \citealp[Corollary A.3]{cesa2006prediction}), we have with probability at least $1-\tfrac{\delta}{2n^2}$ that \cref{eq:anytime-bern-1} holds. By a union bound, the inequality holds with probability at least $1-\delta$ for all $n \ge 1$ simultaneously.
    
    To show \cref{eq:anytime-bern-2}, note that in particular we have 
    \[
        \mu \cdot n - \sum_{i=1}^n X_i
        \le
        2 \sqrt{B \mu n \log \frac{2n}{\delta}} 
        + 
        B \log \frac{2n}{\delta}
    \]
    that is a quadratic inequality in $\mu$.
    This implies that 
    \[
        \sqrt{\mu}
        \le
        \sqrt{\frac{1}{n} \sum_{i=1}^n X_i}
        +
        3 \sqrt{\frac{B \log \frac{2 n}{\delta}}{n}}.
    \]
    Plugging this inequality back into the RHS of \cref{eq:anytime-bern-1} gets us \cref{eq:anytime-bern-2}.
\end{proof}

\begin{lemma} \label{lem:martingalte-multiplicative-bound}
    Let $(X_n)_{n=1}^\infty$ be a sequence of random variables with expectation adapted to the filtration $(\calF_n)_{n=0}^\infty$.
    Suppose that $0 \le X_n \le B$ almost surely. Then with probability at least $1-\delta$, the following holds for all $n \ge 1$ simultaneously:
    \begin{equation}
        \label{eq:anytime-bern-3}
        \sum_{i=1}^n \bbE[X_i \mid \calF_{i-1} ]
        \le
        2 \sum_{i=1}^n X_i + 4 B \log \frac{2n}{\delta}.
    \end{equation}
\end{lemma}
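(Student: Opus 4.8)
The plan is to adapt the exponential-supermartingale proof of the multiplicative Chernoff bound to the martingale setting, following the same fixed-$n$-plus-union-bound scheme used in the proof of \cref{thm:bernstein}. Write $\mu_i = \bbE[X_i \mid \calF_{i-1}]$, so that $0 \le \mu_i \le B$. The single analytic ingredient is that for every $\lambda > 0$, convexity of $t \mapsto e^{-\lambda t}$ on $[0,B]$ gives the pointwise bound $e^{-\lambda t} \le 1 - \tfrac{t}{B}\bigl(1 - e^{-\lambda B}\bigr)$; taking conditional expectations and using $1 - x \le e^{-x}$ yields
\[
    \bbE\!\left[e^{-\lambda X_i} \mid \calF_{i-1}\right] \le \exp\!\left(-\frac{1 - e^{-\lambda B}}{B}\,\mu_i\right).
\]

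Fix $\lambda = 1/B$ and define $Z_n = \exp\!\bigl(-\tfrac1B \sum_{i=1}^n X_i + \tfrac{1 - e^{-1}}{B} \sum_{i=1}^n \mu_i\bigr)$, with $Z_0 = 1$. The displayed inequality shows that $\bbE[Z_n \mid \calF_{n-1}] \le Z_{n-1}$, so $(Z_n)_{n \ge 0}$ is a nonnegative supermartingale, and in particular $\bbE[Z_n] \le 1$ for every $n$. Fixing $n$ and applying Markov's inequality with threshold $2n^2/\delta$ gives $\Pr[Z_n \ge 2n^2/\delta] \le \delta/(2n^2)$. On the complementary event, taking logarithms and multiplying through by $B$,
\[
    (1 - e^{-1}) \sum_{i=1}^n \mu_i < \sum_{i=1}^n X_i + B \log\frac{2n^2}{\delta},
\]
and dividing by $1 - e^{-1}$ and using $\tfrac{1}{1-e^{-1}} < 2$, $\tfrac{2}{1-e^{-1}} < 4$, together with $\log(2n^2/\delta) \le 2\log(2n/\delta)$ (valid since $\delta \le 1$), gives $\sum_{i=1}^n \mu_i < 2\sum_{i=1}^n X_i + 4B\log(2n/\delta)$ for this fixed $n$.

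It remains to make the bound hold for all $n \ge 1$ simultaneously, which a union bound over $n$ achieves at total cost $\sum_{n \ge 1} \delta/(2n^2) = (\pi^2/12)\,\delta < \delta$; this proves the lemma. (Alternatively, applying Ville's maximal inequality to the nonnegative supermartingale $(Z_n)$ bounds $\sup_n Z_n$ directly and yields the same conclusion for all $n$ at once, with $\log(2n/\delta)$ replaced by the smaller $\log(1/\delta)$.) There is no genuine obstacle here beyond keeping track of constants; the only step requiring a little care is the choice of $\lambda$, since one must simultaneously keep the coefficient of $\sum_i X_i$ at most $2$ and the coefficient of $B\log(2n/\delta)$ at most $4$, and $\lambda = 1/B$ is a convenient choice that does both.
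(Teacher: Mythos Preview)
Your proof is correct and follows essentially the same approach as the paper: both construct the exponential supermartingale $\exp\bigl(\tfrac{c}{B}\sum_i \mu_i - \tfrac{1}{B}\sum_i X_i\bigr)$, apply Markov's inequality with threshold $2n^2/\delta$ at each fixed $n$, and union-bound over $n$. The only cosmetic difference is in the elementary inequality used to verify the supermartingale step---the paper uses the Taylor bound $e^{-x} \le 1 - x + x^2/2$ together with $X_n^2 \le B X_n$ to obtain the coefficient $c = \tfrac12$ directly, whereas you use convexity of $t \mapsto e^{-t/B}$ on $[0,B]$ to get $c = 1 - e^{-1}$ (slightly sharper, but then relaxed to $\tfrac12$ via $1/(1-e^{-1}) < 2$ anyway).
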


\begin{proof}
    For all $n \ge 1$, we have
    \begin{align*}
        \bbE[e^{-X_n / B} \mid \calF_{n-1}]
        &\le
        \bbE\biggl[1 - \frac{X_n}{B} + \frac{X_n^2}{2B^2} \Bigm| \calF_{n-1}\biggr]
        \tag{$e^{-x} \le 1-x+\tfrac{x^2}{2}$ for all $x \ge 0$} \\
        &\le
        1 - \frac{\bbE[X_n \mid \calF_{n-1}]}{B} + \frac{\bbE[ X_n \mid \calF_{n-1}]}{2 B} 
        \tag{$X_n \le B$} \\
        &=
        1 - \frac{\bbE[X_n \mid \calF_{n-1}]}{2B} \\
        &\le
        e^{-\bbE[X_n \mid \calF_{n-1}] / 2B}.
        \tag{$1-x \le e^{-x}$ for all $x$}
    \end{align*}
    
    Hence, fix some $n \ge 1$, then
    \begin{align*}
        &\bbE\Biggl[ \exp\biggl(\frac{1}{B} \sum_{i=1}^n \biggl( \frac{1}{2} \bbE[X_i \mid \calF_{i-1}] - X_i \biggr) \biggr) \Biggr] \\
        &\qquad =
        \bbE\Biggl[ \exp\biggl(\frac{1}{B} \sum_{i=1}^{n-1} \biggl( \frac{1}{2} \bbE[X_i \mid \calF_{i-1}] - X_i \biggr) \biggr)
        \cdot 
        \underbrace{\bbE\Biggl[ \exp\biggl(\frac{1}{B}  \biggl( \frac{1}{2} \bbE[X_n \mid \calF_{n-1}] - X_n \biggr) \biggm| \calF_{n-1} \Biggr]}_{\le 1} \Biggr] \\
        &\qquad \le 
        \bbE\Biggl[ \exp\biggl(\frac{1}{B} \sum_{i=1}^{n-1} \biggl( \frac{1}{2} \bbE[X_i \mid \calF_{i-1}] - X_i \biggr) \biggr) \Biggr] \\
        &\qquad \le 1.
        \tag{by repeating the last argument inductively.}
    \end{align*}
    
    Therefore,
    \begin{align*}
        \Pr \Biggl[ \sum_{i=1}^n \biggl( \frac{1}{2} \bbE[X_i \mid \calF_{i-1}] - X_i \biggr) > 2 B \log \frac{2n}{\delta} \Biggr]
        &\le
        \Pr \Biggl[ \exp\biggl(\frac{1}{B} \sum_{i=1}^n \biggl( \frac{1}{2} \bbE[X_i \mid \calF_{i-1}] - X_i \biggr) \biggr) > \frac{2n^2}{\delta} \Biggr] \\
        &\le
        \bbE \Biggl[ \exp\biggl(\frac{1}{B} \sum_{i=1}^n \biggl( \frac{1}{2} \bbE[X_i \mid \calF_{i-1}] - X_i \biggr) \biggr) \Biggr] \cdot \frac{\delta}{2n^2}
        \tag{Markov inequality} \\
        &\le
        \frac{\delta}{2n^2}.
    \end{align*}
    Hence the above holds for all $n \ge 1$ via a union bound which provides the lemma.
\end{proof}

\end{document}